\newcommand{\reg}{\psi}
\newcommand{\mtd}{\mathcal{M}}
\newcommand{\Otilde}{\tilde{O}}
 \DeclareMathOperator*{\argmin}{arg\,min}
 \newcommand{\Acal}{\mathcal{A}}
\newcommand\peps{p^{\:\! \varepsilon}}
\newcommand\pepsk{p^{\:\! \varepsilon_k}}
\newcommand\gdelta{g^{\:\! \delta}}
\newcommand\gdeltak{g^{\:\! \delta_k}}
\newcommand\gdeltakp{g^{\:\! \delta_{k+1}}}
\newcommand\kmone{k - 1}
\newcommand{\R}{\mathbb{R}}
\newcommand{\E}{\mathbb{E}}
\newcommand{\p}{\mathbb{P}}
\renewcommand\leqslant\leq
\renewcommand\geqslant\geq
\renewcommand\epsilon\varepsilon
\renewcommand\ln\log
\newcommand\st{~~~{\text{s.t.}}~~~}
\renewcommand\star{*}
\def\defin{\triangleq}
\def\Real{{\mathbb{R}}}
\newcommand{\prox}{\mathrm{prox}}
\newcommand\custombox[1]{%
   \vspace*{0cm}
   \begin{mdframed}[leftmargin=0pt,innerleftmargin=6pt,innerrightmargin=6pt]
      \parindent=15pt#1
   \end{mdframed}
   \vspace*{0cm}
}
\newcommand\Ctrois{($\textsf{C3}$)~}
\newcommand\Cunstar{($\textsf{C1}^\star$)}
\begin{document}
\title{Catalyst Acceleration for First-order Convex Optimization: \\ from Theory to Practice}

\author{\name Hongzhou Lin  \email hongzhou@mit.edu \\
   \addr  Massachusetts Institute of Technology \\ 
   Computer Science and Artificial Intelligence Laboratory \\
   Cambridge, MA 02139, USA
   \AND
   \name Julien Mairal \email  julien.mairal@inria.fr \\
   \addr Univ. Grenoble Alpes, Inria, CNRS, Grenoble INP$\hspace*{0.03cm}^*$\hspace*{-0.10cm}, LJK, \\
   Grenoble, 38000, France
   \AND
   \name Zaid Harchaoui  \email zaid@uw.edu \\ %\texttt{zaid@uw.edu}
   \addr University of Washington \\
   Department of Statistics \\
   Seattle, WA 98195, USA
}
\editor{L\'eon Bottou}
\maketitle

% REQUIRED
\begin{abstract}%
We introduce a generic scheme for accelerating gradient-based optimization
methods in the sense of Nesterov. The approach, called Catalyst, builds upon the inexact
accelerated proximal point algorithm for minimizing a convex objective function, and
consists of approximately solving a sequence of well-chosen auxiliary problems,
leading to faster convergence.  
One of the keys to achieve acceleration in theory and in practice is to
solve these sub-problems with appropriate accuracy by using the right stopping
criterion and the right warm-start strategy. We give practical guidelines to use Catalyst and present a comprehensive analysis of its global complexity. 
We show that Catalyst applies to a large class of algorithms,
including gradient descent, block coordinate descent, incremental algorithms
such as SAG, SAGA, SDCA, SVRG, MISO/Finito, and their proximal variants. For
all of these methods, we establish faster rates using the Catalyst acceleration, for
strongly convex and 
non-strongly convex objectives.  We conclude with extensive experiments showing
that acceleration is useful in practice, especially for ill-conditioned
problems.
\end{abstract}

\begin{keywords}
     convex optimization, first-order methods, large-scale machine learning 
\end{keywords}

\def\thefootnote{\fnsymbol{footnote}}
\footnotetext[1]{Institute of Engineering Univ. Grenoble Alpes}
\def\thefootnote{\arabic{footnote}}

% REQUIRED

\section{Introduction}
A large number of machine learning and signal processing problems are
formulated as the minimization of a convex objective
function:
%~$f: \Real^p \to \Real$:
\begin{equation}
   \min_{x \in \Real^p} \left\{ f(x) \defin f_0(x) + \reg(x) \right\}, \label{eq:obj}
\end{equation}
where $f_0$ is convex and $L$-smooth,
and~$\reg$ is convex but may not be differentiable. We call a function $L$-smooth when it is differentiable and its gradient is $L$-Lipschitz continuous.

In statistics or machine learning, the variable $x$ may represent model parameters,
and the role of~$f_0$ is to ensure that the estimated parameters fit some observed data.
Specifically, $f_0$ is often a large sum of functions and~(\ref{eq:obj}) is
a regularized empirical risk which writes as
\begin{equation}\label{eq: intro problem}
\min_{x \in \R^p} \left \{ f(x) \defin \frac{1}{n} \sum_{i=1}^n f_i(x) + \psi(x) \right \}.
\end{equation}
Each term $f_i(x)$ measures the fit between~$x$ and a data point indexed
by~$i$, whereas the function~$\reg$ acts as a regularizer; it is typically chosen to be
the squared~$\ell_2$-norm, which is smooth, or to be a non-differentiable
penalty such as the~$\ell_1$-norm or another sparsity-inducing
norm~\citep{bach2012optimization}. 

We present a unified framework allowing one to 
accelerate gradient-based or first-order methods, with a particular focus on 
problems involving large sums of functions.
By ``accelerating'', we mean generalizing a
mechanism invented by~\citet{nesterov1983} that improves the
convergence rate of the gradient descent algorithm.
When
$\reg=0$, gradient descent steps produce iterates~$(x_k)_{k \geq 0}$ such that
$f(x_k) - f^\star \leq \varepsilon$ in $O(1/\varepsilon)$ iterations, where~$f^\star$ denotes the minimum value of~$f$.
Furthermore, when the objective~$f$ is $\mu$-strongly convex, the previous iteration-complexity becomes $O(\left(L/\mu)\log(1/\varepsilon)\right)$,
which is proportional to the condition number $L/\mu$.
However, these rates were shown to be suboptimal
for the class of first-order methods, and a simple strategy of taking the gradient step at a
well-chosen point different from $x_k$ yields the optimal complexity---$O(1/\sqrt{\varepsilon})$
for the convex case and $O(\sqrt{L/\mu}\log(1/\varepsilon))$ for the
$\mu$-strongly convex one \citep{nesterov1983}.  Later, this acceleration technique was extended to deal
with non-differentiable penalties~$\psi$ for which the proximal operator defined below is easy to compute~\citep{fista,nesterov2013gradient}.
\begin{equation}
   \prox_{\psi}(x) \defin \argmin_{z \in \R^p} \left\{ \psi(z) + \frac{1}{2}\|x-z\|^2 \right\}, \label{eq:prox_psi}
\end{equation}
where $\|.\|$ denotes the Euclidean norm.

For machine learning problems involving a large sum of~$n$ functions, a
recent effort has been devoted to developing fast incremental
algorithms such as SAG \citep{sag}, SAGA \citep{saga}, SDCA \citep{sdca}, SVRG \citep{svrg,proxsvrg}, 
or MISO/Finito \citep{miso,finito}, which can exploit the
particular structure~(\ref{eq: intro problem}). Unlike full gradient
approaches, which require computing and averaging 
$n$ gradients $(1/n) \sum_{i=1}^n\nabla f_i(x)$ at every iteration, incremental
techniques have a cost per-iteration that is independent of~$n$. 
The price to pay is the need to store a moderate amount of information
regarding past iterates, but the benefits may be significant in terms of
computational complexity. 
In order to achieve an $\varepsilon$-accurate solution for a
$\mu$-strongly convex objective, the number of gradient evaluations required by
 the methods mentioned above is bounded by $O\left (  \left( n+\frac{\bar{L}}{\mu} \right)
 \log(\frac{1}{\varepsilon})  \right )$, where $\bar{L}$ is either the maximum
 Lipschitz constant across the gradients $\nabla f_i$, or the average value, depending on the algorithm variant considered.
 Unless there is a big mismatch between $\bar{L}$ and $L$ (global Lipschitz constant for the sum of gradients), 
 incremental approaches significantly outperform the full gradient method, whose complexity in terms of gradient evaluations is bounded by $O\left ( n\frac{L}{\mu} \log(\frac{1}{\varepsilon})  \right )$.
 
Yet, these incremental approaches do not use Nesterov's extrapolation steps and
whether or not they could be accelerated was an important open question when
these methods were introduced. It was indeed only known to be the case for SDCA~\citep{accsdca} for strongly convex objectives. 
Later, other accelerated incremental algorithms were proposed such as Katyusha~\citep{accsvrg}, or the method of~\citet{conjugategradient}. 

We give here a positive answer to this open question. 
By analogy with substances that increase chemical reaction rates, we call our
approach ``Catalyst''.
Given an optimization method $\mtd$ as input, Catalyst outputs an accelerated
version of it, eventually the same algorithm if the method~$\mtd$ is already
optimal. The sole requirement on the method in order to achieve acceleration
is that it should have linear convergence rate for strongly convex problems.
This is the case for full gradient methods \citep{fista,nesterov2013gradient} and
block coordinate descent methods~\citep{nesterov2012,richtarik2014}, which already
have well-known accelerated variants. More importantly, it also applies
to the previou incremental methods, whose complexity 
is then bounded by $\tilde{O}\left (  \left( n+ \sqrt{n{\bar{L}}/{\mu}} \right)
\log(\frac{1}{\varepsilon})  \right )$ after Catalyst acceleration, where
$\tilde{O}$ hides some logarithmic dependencies on the condition number
$\bar{L}/\mu$. This improves upon the non-accelerated variants, when the condition number is larger than $n$.
Besides, acceleration occurs
regardless of the strong convexity of the objective---that is, even if $\mu=0$---which brings us to our second achievement.

Some approaches such as MISO, SDCA, or SVRG are only defined
for strongly convex objectives. A classical trick to apply them to general
convex functions is to add a small regularization term
$\varepsilon \|x\|^2$ in the objective~\citep{sdca}. 
The drawback of this strategy is that it requires choosing in advance the parameter~$\varepsilon$, 
which is related to the target accuracy.  The approach we present here provides 
a \emph{direct support for non-strongly convex objectives}, thus
removing the need of selecting~$\varepsilon$ beforehand. Moreover, we can immediately 
establish a faster rate for the resulting algorithm. 
Finally, some methods such as MISO are numerically unstable when they are applied
to strongly convex objective functions with small strong convexity constant. By defining
better con\-di\-tioned auxiliary subproblems, Catalyst also provides better numerical
stability to these methods.

A short version of this paper has been published at the NIPS
conference in 2015 \citep{catalyst}; in addition to simpler convergence proofs
and more extensive numerical evaluation, we
extend the conference paper with a new Moreau-Yosida smoothing interpretation
with significant theoretical and practical consequences as well 
as new practical stopping criteria and warm-start strategies. 

%\paragraph{Moreau-Yosida smoothing interpretation.} 
%Catalyst can be seen as an accelerated gradient descent method with
%inexact gradients, applied to the Moreau envelope of the objective
%function. While the link between the proximal point algorithm and the
%Moreau-Yosida smoothing was known~\citep[see][]{themelis2016forward}, the fact that smoothing may be useful
%to accelerate existing algorithms, as in Catalyst, is less intuitive and opens
%new perspectives. In particular, it was a key observation to our subsequent
%work to Catalyst, called QNing~\citep{lin2016quickening}, which relies on 
%limited memory Quasi-Newton principles instead of Nesterov's acceleration.
%Besides, the interpretation relying on Moreau-Yosida smoothing also leads to new
%practical strategies.
%
%\paragraph{New stopping criteria and warm start strategies.}
%Catalyst is a two-loop algorithm, which requires solving sub-problems
%in an inner-loop with enough accuracy. In this paper, we introduce new
%parameter-free stopping criteria for solving these sub-problems and new warm
%start strategies.

The paper is structured as follows.
We complete this introductory section with some related work in Section~\ref{subsec:related},
and give a short description of the two-loop Catalyst algorithm in Section~\ref{subsec:catalyst}.
Then, Section~\ref{sec:related} introduces the Moreau-Yosida smoothing
and its inexact variant.  In Section~\ref{sec:algorithm}, we introduce formally
the main algorithm, and its convergence analysis is presented in
Section~\ref{sec:theory}. Section~\ref{sec:exp} is devoted to numerical
experiments and Section~\ref{sec:ccl} concludes the paper.

\subsection{Related Work}\label{subsec:related}
Catalyst can be interpreted as a variant of the proximal point
algorithm~\citep{rockafellarppa,guler:1991}, which is a central concept in convex optimization,
underlying augmented Lagrangian approaches, and composite minimization schemes~\citep{bertsekas:2015,Parikh13}.
The proximal point algorithm consists of solving~(\ref{eq:obj}) by minimizing a
sequence of auxiliary problems involving a quadratic regularization term. In general,
these auxiliary problems cannot be solved
with perfect accuracy, and several notions of inexactness were proposed by~\citet{newppa,he2012accelerated} and~\citet{salzo2012inexact}.
The Catalyst approach hinges upon (i) an acceleration technique for the proximal
point algorithm originally introduced in the pioneer work of~\citet{newppa}; (ii)
a more practical inexactness criterion than those proposed in the
past.\footnote{Note that our inexact criterion was also studied, among others,
by~\citet{salzo2012inexact}, but their analysis led to
the conjecture that this criterion was too weak to warrant acceleration. Our
analysis refutes this conjecture.}
As a result, we are able to control the rate of convergence for approximately solving the
auxiliary problems with an optimization method~$\mtd$. In turn, we are also able to obtain
the computational complexity of the global procedure, which was not possible with previous
analysis~\citep{newppa,he2012accelerated,salzo2012inexact}.
When instantiated in different first-order optimization settings, our 
analysis yields systematic acceleration.

Beyond~\citet{newppa}, several works have inspired this work. In particular,
accelerated SDCA~\citep{accsdca} is an instance of an inexact accelerated
proximal point algorithm, even though this was not explicitly stated
in the original paper. Catalyst can be seen as a generalization of their 
algorithm, originally designed for stochastic dual coordinate ascent approaches.
Yet their proof of convergence relies on different tools than
ours. Specifically, we introduce an approximate sufficient descent condition,
which, when satisfied, grants acceleration to any optimization method,
whereas the direct proof of~\citet{accsdca}, in the
context of SDCA, does not extend to non-strongly convex objectives.
 Another useful methodological contribution was the convergence
analysis of inexact proximal gradient methods of~\citet{proxinexact} and~\citet{inexactnesterov}.
Finally, similar ideas appeared in the independent work~\citep{frostig}. Their results partially overlap 
with ours, but the two papers adopt rather different directions. Our analysis is more general, 
covering both strongly-convex and non-strongly convex objectives, 
and comprises several variants including an almost parameter-free variant. 

Then, beyond accelerated SDCA \citep{accsdca}, other accelerated incremental methods have been proposed, such as
APCG~\citep{apbcd}, SDPC~\citep{zhangxiao}, RPDG~\citep{conjugategradient},
Point-SAGA \citep{pointsaga} and Katyusha \citep{accsvrg}. Their
techniques are algorithm-specific and cannot be directly generalized into a
unified scheme. However, we should mention that the complexity obtained by
applying Catalyst acceleration to incremental methods matches the optimal bound up to a 
logarithmic factor, which may be the price to pay for a generic acceleration scheme. 

A related recent line of work has also combined smoothing techniques
with outer-loop algorithms such as Quasi-Newton
methods~\citep{themelis2016forward,giselsson2016nonsmooth}. Their
purpose was not to accelerate existing techniques, but rather to derive new
algorithms for nonsmooth optimization.

To conclude this survey, we mention the broad family of extrapolation methods~\citep{sidi}, 
which allow one to extrapolate to the limit sequences generated by iterative algorithms
for various numerical analysis problems. 
\citet{scieur} proposed such an approach for convex optimization problems 
with smooth and strongly convex objectives. The approach we present here allows us
to obtain global complexity bounds for strongly convex and non strongly convex objectives,
which can be decomposed into a smooth part and a non-smooth proximal-friendly part.

\subsection{Overview of Catalyst}\label{subsec:catalyst}
Before introducing Catalyst precisely in Section~\ref{sec:algorithm}, we give a quick overview of the algorithm and its main ideas.
Catalyst is a generic approach that wraps an algorithm $\mtd$ into an
accelerated one~$\Acal$, in order to achieve the same accuracy as $\mtd$ with
reduced computational complexity. The resulting method $\Acal$ is an inner-outer loop construct, presented in Algorithm~\ref{alg:catalyst0}, 
where in the \emph{inner loop} the method~$\mtd$  is called to solve an auxiliary strongly-convex optimization problem, 
and where in the \emph{outer loop} the sequence of iterates produced by $\mtd$ are \emph{extrapolated} for faster convergence. 
\begin{algorithm}[t!]
   \caption{Catalyst - Overview} \label{alg:catalyst0}
        \begin{algorithmic}[1]
           \INPUT initial estimate $x_0$ in $\R^p$, smoothing parameter $\kappa$, optimization method $\mtd$.
           \STATE Initialize $y_0 = x_0$.
           \WHILE {the desired accuracy is not achieved}
               \STATE Find $x_{k}$ using~$\mtd$
    \begin{equation}
       x_{k}  \approx \argmin_{x \in \Real^p}  \left \{ h_k(x) \defin f(x)+ \frac{\kappa}{2} \| x - y_{k-1}\|^2  \right \}. \label{eq:approx}
    \end{equation}
           \STATE Compute $y_k$ using an extrapolation step, with $\beta_k$ in $(0,1)$
    \begin{equation*} 
       y_{k} = x_{k}+ \beta_k (x_{k} - x_{k-1}) . \label{eq:yk}
    \end{equation*}
    \ENDWHILE
    \OUTPUT $x_k$ (final estimate).
	\end{algorithmic}
\end{algorithm}
There are therefore 
three main ingredients in Catalyst: 
a) a smoothing technique that produces strongly-convex sub-problems; 
b) an extrapolation technique to accelerate the convergence; 
c) a balancing principle to optimally tune the inner and outer computations.

\paragraph{Smoothing by infimal convolution}
Catalyst can be used on any algorithm $\mtd$ that enjoys a linear-convergence 
guarantee when minimizing strongly-convex objectives. However the objective at hand 
may be poorly conditioned or even might not be strongly convex. 
In Catalyst, we use $\mtd$ to \emph{approximately minimize} an auxiliary objective~$h_k$ at iteration~$k$, defined in~(\ref{eq:approx}), 
which is strongly convex and better conditioned than $f$. Smoothing by infimal convolution
allows one to build a well-conditioned convex function $F$
from a poorly-conditioned convex function $f$ (see Section~\ref{sec:algorithm} for a refresher on Moreau envelopes). 
We shall show in Section~\ref{sec:algorithm} that a notion of \emph{approximate Moreau envelope}
allows us to define precisely the information collected when
approximately minimizing the auxiliary objective.

\paragraph{Extrapolation by Nesterov acceleration}
Catalyst uses an extrapolation scheme `` \`{a} la Nesterov '' to build a sequence $(y_k)_{k \geq 0}$ updated as 
\begin{equation*} 
y_{k} = x_{k}+ \beta_k (x_{k} - x_{k-1}) \; ,
\end{equation*}
where $(\beta_k)_{k \geq 0}$ is a positive decreasing sequence, which we shall define in Section~\ref{sec:algorithm}. 

We shall show in Section~\ref{sec:theory} that we can get faster rates of convergence 
thanks to this extrapolation step when the smoothing parameter $\kappa$, the inner-loop stopping criterion, 
and the sequence $(\beta_k)_{k \geq 0}$ are carefully built. 

\paragraph{Balancing inner and outer complexities}
The optimal balance between inner loop and outer loop complexity 
derives from the complexity bounds established in Section~\ref{sec:theory}. 
Given an estimate about the condition number of $f$,
our bounds dictate a choice of $\kappa$ that gives the optimal setting for the inner-loop stopping criterion 
and all technical quantities involved in the algorithm. 
We shall demonstrate in particular the power of an appropriate warm-start strategy to achieve near-optimal complexity. 

\paragraph{Overview of the complexity results}
Finally, we provide in Table~\ref{tab:accel-summary} a brief overview of the
complexity results obtained from the Catalyst acceleration, when applied to various
optimization methods~$\mtd$ for minimizing a large finite sum of $n$ functions.
Note that the complexity results obtained with Catalyst are optimal, up to some
logarithmic factors~\citep[see][]{agarwal,tightbound_yossi,tightbound_blake}.
\begin{table}
   \centering
   \renewcommand{\arraystretch}{1.5}
   \begin{tabular}{|l|c|c||c|c|}
      \hline
      & \multicolumn{2}{c||}{Without Catalyst} &  \multicolumn{2}{c|}{With Catalyst} \\
      \hline
      & $\mu > 0$ &  $\mu = 0$ & $\mu > 0$ & $\mu = 0$ \\
      \hline
      \hline
      FG  & $O\!\left(n\frac{L}{\mu}\log\left(\frac{1}{\varepsilon}\right)\right)$ & $O\!\left(n\frac{L}{\varepsilon}\right)$ 
      &  $\Otilde\!\left(n\sqrt{\frac{L}{\mu}}\log\left(\frac{1}{\varepsilon}\right)\right)$ &  $\Otilde\!\left(n\sqrt{\frac{L}{\varepsilon}}\right)$\\
      \hline
      SAG/SAGA  & \multirow{4}{*}{$O\!\left(\left(n+\frac{\bar{L}}{\mu}\right) \log\left(\frac{1}{\varepsilon}\right)\right)$} & $O\!\left(n \frac{\bar{L}}{\varepsilon} \right)$ & \multirow{4}{*}{$\Otilde\!\left(\!\left(n+\sqrt{\frac{n \bar{L}}{\mu}} \right)\log\left(\frac{1}{\varepsilon}\right)\!\right)$}&  \multirow{4}{*}{$\Otilde\!\left(\!\sqrt{\frac{n \bar{L}}{\varepsilon}}\right)$}\\
      \cline{1-1}\cline{3-3}
      MISO &  & \multirow{3}{*}{not avail.} &  &  \\
      \cline{1-1}
      SDCA  &  &  & &  \\
      \cline{1-1}
      SVRG  &  &  &   &  \\
      \hline
      Acc-FG & $O\!\left(n\sqrt{\frac{L}{\mu}}\log\left(\frac{1}{\varepsilon}\right)\right)$ 
      & $O\!\left(n\frac{L}{{\sqrt{\varepsilon}}}\right)$ & \multicolumn{2}{c|}{\multirow{2}{*}{no acceleration}} \\
      \cline{1-3}
      Acc-SDCA & $\Otilde\!\left(\!\left(n+\sqrt{\frac{n\bar{L}}{\mu}}\right)\log\left(\frac{1}{\varepsilon}\right)\!\right)$ &  not avail.
      &  \multicolumn{2}{c|}{} \\
      \hline
   \end{tabular}
   \caption{Comparison of rates of convergence, before and after the Catalyst
   acceleration, in the strongly-convex and non strongly-convex cases, respectively. The notation $\Otilde$ hides logarithmic factors. The constant $L$ is the global Lipschitz constant of the gradient's objective, while $\bar{L}$ is the average Lipschitz constants of the gradients $\nabla f_i$, or the maximum value, depending on the algorithm's variants considered. }
   \label{tab:accel-summary}
\end{table}

\section{The Moreau Envelope and its Approximate Variant}\label{sec:related}
In this section, we recall a classical tool from convex analysis called
the Moreau envelope or Moreau-Yosida smoothing~\citep{moreau1962fonctions,yosida}, which plays a
key role for understanding the Catalyst acceleration.  This tool can be
seen as a smoothing technique, which can turn any 
convex lower semicontinuous function $f$ into a smooth function, 
and an ill-conditioned smooth convex function into a well-conditioned smooth convex function. 

The Moreau envelope results from the infimal
convolution of $f$ with a quadratic penalty:
\begin{equation} \label{eq:MY}
   F(x)  \defin \min_{z \in \R^p} \left \{  f(z) + \frac{\kappa}{2} \Vert z-x \Vert^2  \right  \}, 
\end{equation}
where $\kappa$ is a positive regularization parameter. The proximal operator  is then the unique minimizer of the problem---that is,
\begin{equation*} \label{eq:prox}
   p(x)  \defin \prox_{f/\kappa}(x) = \argmin_{z \in \R^p} \left \{  f(z) + \frac{\kappa}{2} \Vert z-x \Vert^2  \right  \}. 
\end{equation*}
Note that $p(x)$ does not admit a closed form in general. Therefore,
computing it requires to solve the sub-problem to high accuracy with some iterative algorithm.

\subsection{Basic Properties of the Moreau Envelope}

The smoothing effect of the Moreau regularization can be characterized by the next proposition \citep[see][for elementary proofs]{lemarechal1997practical}.
\begin{proposition}[\bfseries Regularization properties of the Moreau Envelope]\label{propMY}
   Given a convex continuous function $f$ and a regularization parameter $\kappa>0$, consider the Moreau envelope $F$ defined in~(\ref{eq:MY}). Then, 
\begin{enumerate}
\item $F$ is convex and minimizing $f$ and $F$ are equivalent in the sense that 
\begin{equation*}\label{eq:minF=minf}
\min_{x \in \R^p} F(x) = \min_{x \in \R^p} f(x) \; .
\end{equation*}
Moreover the solution set of the two above problems coincide with each other.
\item $F$ is continuously differentiable even when $f$ is not and 
   \begin{equation}\label{eq:gradF}
\nabla F(x) = \kappa(x- p(x)) \; .
\end{equation}
      Moreover the gradient $\nabla F$ is Lipschitz continuous with constant $L_{F}=\kappa$.
\item If $f$ is $\mu$-strongly convex, then $F$ is $\mu_{F}$-strongly convex with $\mu_{F}=\frac{\mu \kappa}{\mu+\kappa}.$  
\end{enumerate}
\end{proposition}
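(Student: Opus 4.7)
My plan is to handle the three claims in turn, with essentially self-contained convex-analytic arguments that use only the definition of $F$ and the optimality conditions for the unique minimizer $p(x)$.

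For claim (1), I would start by observing that $g(x,z) \defin f(z) + (\kappa/2)\|z-x\|^2$ is jointly convex on $\R^p \times \R^p$ (sum of a convex function of $z$ and a jointly convex quadratic). The marginal $F(x) = \inf_z g(x,z)$ is therefore convex as a standard partial-infimum operation. For the equivalence of minima, plugging $z = x$ gives $F(x) \leq f(x)$, while $F(x) \geq \inf_z f(z)$ since the quadratic is nonnegative; hence $\inf F = \inf f$. If $x^\star$ minimizes $f$, then $z = x^\star$ is the unique minimizer of $g(x^\star,\cdot)$, so $p(x^\star) = x^\star$ and $F(x^\star) = f(x^\star) = \inf f$. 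Conversely, any minimizer $x^\star$ of $F$ must satisfy $f(p(x^\star)) + (\kappa/2)\|p(x^\star)-x^\star\|^2 = \inf f$, forcing $p(x^\star) = x^\star$ and $x^\star$ to minimize $f$.

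For claim (2), I would first note that $p(x)$ is single-valued because the inner problem is strongly convex in $z$ with modulus $\kappa$, and write down the optimality condition $\kappa(x - p(x)) \in \partial f(p(x))$. To obtain the gradient formula I would combine two sandwich inequalities: from the trial point $z = p(x)$ one has $F(y) \leq f(p(x)) + (\kappa/2)\|p(x) - y\|^2 = F(x) + (\kappa/2)(\|p(x)-y\|^2 - \|p(x)-x\|^2)$, and symmetrically with the roles of $x$ and $y$ swapped. Expanding via the identity $\|a-y\|^2-\|a-x\|^2 = \langle 2a-x-y, x-y\rangle$, specialising to $y = x + \varepsilon h$, dividing by $\varepsilon$, and taking $\varepsilon \to 0^+$ in both bounds — using continuity of $p$ which drops out of the firm-nonexpansiveness argument below — squeezes the directional derivative at $x$ in direction $h$ to $\kappa\langle x - p(x), h\rangle$. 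This yields differentiability with $\nabla F(x) = \kappa(x-p(x))$. Lipschitz continuity of $\nabla F$ then reduces to firm nonexpansiveness of $p$: adding the subgradient inequalities at $p(x)$ and $p(y)$ coming from $\kappa(x-p(x)) \in \partial f(p(x))$ and $\kappa(y-p(y)) \in \partial f(p(y))$ yields $\langle x-y, p(x)-p(y)\rangle \geq \|p(x)-p(y)\|^2$, from which $\|\nabla F(x) - \nabla F(y)\|^2 = \kappa^2\|(x-y)-(p(x)-p(y))\|^2 \leq \kappa^2\|x-y\|^2$ follows by straight expansion.

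For claim (3), which I expect to be the main technical obstacle because it requires the right choice of trial point and a hidden optimization, I would proceed as follows. For $\lambda \in [0,1]$, set $x_\lambda = \lambda x + (1-\lambda)y$ and choose the interpolated trial $z_\lambda = \lambda p(x) + (1-\lambda)p(y)$, so that
\begin{equation*}
F(x_\lambda) \leq f(z_\lambda) + \tfrac{\kappa}{2}\|z_\lambda - x_\lambda\|^2.
\end{equation*}
Applying $\mu$-strong convexity of $f$ to bound $f(z_\lambda)$ and the identity $\|\lambda a + (1-\lambda) b\|^2 = \lambda\|a\|^2 + (1-\lambda)\|b\|^2 - \lambda(1-\lambda)\|a-b\|^2$ to $z_\lambda - x_\lambda = \lambda(p(x)-x) + (1-\lambda)(p(y)-y)$, I would collect terms to obtain
\begin{equation*}
F(x_\lambda) \leq \lambda F(x) + (1-\lambda) F(y) - \tfrac{\lambda(1-\lambda)}{2}\bigl(\mu\|u\|^2 + \kappa\|u-v\|^2\bigr),
\end{equation*}
with $u = p(x)-p(y)$ and $v = x-y$. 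The crux is then to lower bound $\mu\|u\|^2 + \kappa\|u-v\|^2$ uniformly in $u$: a one-line minimization in $u$ gives the value $\tfrac{\mu\kappa}{\mu+\kappa}\|v\|^2 = \mu_F\|v\|^2$ at $u = \kappa v/(\mu+\kappa)$, which is precisely the claimed strong-convexity inequality for $F$. The harmonic-mean form is sanity-checking: either $\mu \to 0$ or $\kappa \to 0$ kills strong convexity of $F$, as expected.
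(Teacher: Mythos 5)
Your proof is correct, and all three parts are rigorous: the partial-infimum argument and the sandwich $\inf f \le F \le f$ for claim (1); the two-sided trial-point inequalities plus firm nonexpansiveness of $p$ (obtained from monotonicity of $\partial f$ applied at the optimality conditions $\kappa(x-p(x)) \in \partial f(p(x))$) for claim (2); and the interpolated trial point $z_\lambda = \lambda p(x) + (1-\lambda)p(y)$ with the exact minimization of $u \mapsto \mu\|u\|^2 + \kappa\|u-v\|^2$ for claim (3). Note, however, that the paper does not prove this proposition at all: it states it and defers to \citet{lemarechal1997practical} for ``elementary proofs,'' so there is no internal argument to compare against. Your write-up is therefore a self-contained replacement for that citation. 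It is worth knowing that the classical route in the cited literature goes through Fenchel conjugacy, namely $F^\star = f^\star + \frac{1}{2\kappa}\|\cdot\|^2$, from which claim (2) follows because $F^\star$ is $\frac{1}{\kappa}$-strongly convex (smoothness--strong-convexity duality) and claim (3) follows because $f^\star$ is $\frac{1}{\mu}$-smooth, making $F^\star$ $\left(\frac{1}{\mu}+\frac{1}{\kappa}\right)$-smooth and hence $F$ strongly convex with modulus $\left(\frac{1}{\mu}+\frac{1}{\kappa}\right)^{-1} = \frac{\mu\kappa}{\mu+\kappa}$. Your primal argument buys self-containedness (no conjugacy machinery, no closedness hypotheses on $f$ beyond continuity) at the cost of the slightly delicate squeeze in claim (2); the forward reference to nonexpansiveness there is harmless since that lemma is derived independently of differentiability, but for a clean linear exposition you should state and prove the firm-nonexpansiveness inequality before the differentiability argument that invokes it.
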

Interestingly, $F$ is friendly from an optimization point of view as it is convex and differentiable. Besides, $F$ is $\kappa$-smooth with condition number $\frac{\mu+\kappa}{\mu}$ when $f$ is $\mu$-strongly convex. Thus $F$ can be made arbitrarily well conditioned by choosing a small~$\kappa$. Since both functions $f$ and $F$ admit the same
solutions, a naive approach to minimize a non-smooth function~$f$ is to first construct its Moreau envelope $F$ and then apply a smooth optimization method on it. As we will see next, Catalyst can be seen as an accelerated 
gradient descent technique applied to $F$ with inexact gradients.

\subsection{A Fresh Look at Catalyst}
First-order methods applied to~$F$ provide us several well-known algorithms.

\paragraph{The proximal point algorithm.} Consider gradient descent steps on~$F$:
\begin{equation*}\label{eq:gd on MY}
x_{k+1} = x_k - \frac{1}{L_{F}} \nabla F(x_k).
\end{equation*}
By noticing that $\nabla F(x_k) = \kappa (x_k - p(x_k))$ and $L_f =\kappa$, we obtain in fact
\begin{equation*}\label{eq:ppa} 
   x_{k+1} = p(x_k) = \argmin_{z \in \R^p} \left \{ f(z) + \frac{\kappa}{2} \Vert z- x_k \Vert^2 \right \},
\end{equation*}
which is exactly the proximal point algorithm~\citep{martinet1970,rockafellarppa}.

\paragraph{Accelerated proximal point algorithm.}
If gradient descent steps on~$F$ yields the proximal point algorithm, it is then natural to consider the following sequence
\begin{equation*}\label{eq:nesterov on F}
  x_{k+1} = y_k - \frac{1}{L_{F}} \nabla F(y_k)  \quad \text{ and } \quad  y_{k+1} = x_{k+1} + \beta_{k+1} (x_{k+1} -x_k),
\end{equation*}
where $\beta_{k+1}$ is Nesterov's extrapolation parameter~\citep{nesterov}.
Again, by using the closed form of the gradient, this is equivalent to the update
 \begin{equation*}\label{eq:newppa}
  x_{k+1} = p(y_k)  \quad \text{ and } \quad  y_{k+1} = x_{k+1} + \beta_{k+1} (x_{k+1} -x_k),
 \end{equation*}
which is known as the accelerated proximal point algorithm of~\citet{newppa}.

While these algorithms are conceptually elegant, they suffer from a major
drawback in practice: each update requires to evaluate the proximal
operator $p(x)$.  Unless a closed form is available, which is almost
never the case, we are not able to evaluate $p(x)$ exactly. Hence an iterative algorithm is required for each evaluation of the proximal
operator which leads to the inner-outer construction (see Algorithm~\ref{alg:catalyst0}).
Catalyst can then be interpreted as an accelerated proximal point algorithm 
that calls an optimization method~$\mtd$ to compute inexact solutions to the sub-problems. 
The fact that such a strategy could be used to solve non-smooth optimization 
problems was well-known, but the fact that it could be used for acceleration
is more surprising.
The main challenge that will be addressed in Section~\ref{sec:algorithm}
is how to control the complexity of the inner-loop
minimization.

\subsection{The Approximate Moreau Envelope}\label{subsec:ME}
Since Catalyst uses inexact gradients of the Moreau envelope, we start with specifying the inexactness criteria.

\paragraph{Inexactness through absolute accuracy.}
Given a proximal center $x$, a smoothing parameter $\kappa$, and an accuracy $\varepsilon \!>\! 0$, we denote the set of $\varepsilon$-approximations of the proximal operator $p(x)$ by
\begin{equation}
   \peps(x) \defin \left\{z \in \mathbb{R}^p \st \,  h(z) - h^\star \leq \varepsilon \right\}~~~\text{where}~~~ h(z) = f(z)+\frac{\kappa}{2}\|x-z\|^2,\tag{\textsf{C1}}\label{C1}
\end{equation}
and $h^\star$ is the minimum function value of $h$.

Checking whether $h(z) - h^\star \leq \varepsilon$ may be impactical since $h^\star$ is unknown in many situations. We may then replace $h^*$ by a lower bound that can be computed more easily. We may use the Fenchel conjugate for instance. Then, given a point $z$ and a lower-bound $d(z) \leq h^\star$, we can guarantee $z \in \peps(x)$ if~$h(z)-d(z) \leq \varepsilon$. 
There are other choices for the lower bounding function $d$ which result from the specific construction of the optimization algorithm. For instance, dual type algorithms such as SDCA~\citep{sdca} or MISO~\citep{miso} maintain a lower bound along the iterations, allowing one to compute $h(z)-d(z) \leq \varepsilon$. 

When none of the options mentioned above are available, we can use the following fact, based on the notion of gradient mapping; see Section 2.3.2 of \citep{nesterov}. The intuition comes from the smooth case: when $h$ is smooth, the strong convexity yields 
\begin{displaymath}
	h(z) - \frac{1}{2\kappa} \Vert \nabla h(z) \Vert^2 \leq h^\star.
\end{displaymath}
In other words, the norm of the gradient provides enough information to assess how far we are from the optimum. From this perspective, the gradient mapping can be seen as an extension of the gradient for the composite case where the objective decomposes as a sum of a smooth part and a non-smooth part~\citep{nesterov}. 
\begin{lemma}[\bfseries Checking the absolute accuracy criterion]\label{prop:abs}
   Consider a proximal center~$x$, a smoothing parameter $\kappa$ and an accuracy $\varepsilon > 0$. Consider an objective with the composite form~(\ref{eq:obj}) and we set function~$h$ as
   \begin{displaymath}
   	h(z) =f(z) + \frac{\kappa}{2}\|x-z\|^2 = \underbrace{f_0(z) + \frac{\kappa}{2}\|x-z\|^2}_{\defin \,\, h_0} + \psi(x).
   \end{displaymath}
   For any $z \in \R^p$, we define
      \begin{equation}
         [z]_\eta = \prox_{\eta \psi }\left( z -  \eta \nabla h_0\left (z \right ) \right), \quad \text{with} \quad  \eta=\frac{1}{\kappa+L}. \label{eq:gradient_mapping} 
      \end{equation}
           Then, the gradient mapping of $h$ at $z$ is defined by $\frac{1}{\eta} (z - [z]_\eta)$ and  
       \begin{equation*}
          \frac{1}{\eta}\left\|z - [z]_\eta \right\| \leq \sqrt{2\kappa \varepsilon}  ~~~\text{implies}~~~ [z]_\eta \in \peps(x). %\tag{\textsf{C2}}\label{C2}
       \end{equation*}
\end{lemma}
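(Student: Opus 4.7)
The plan is to obtain the sharper PL-type inequality
\[
h([z]_\eta) - h^\star \;\leq\; \frac{1}{2\kappa}\bigl\|G(z)\bigr\|^2, \qquad G(z) \defin \tfrac{1}{\eta}\bigl(z - [z]_\eta\bigr),
\]
from which the lemma follows immediately by substitution: $\|G(z)\|\leq\sqrt{2\kappa\varepsilon}$ gives $h([z]_\eta) - h^\star \leq \varepsilon$, i.e.\ $[z]_\eta \in \peps(x)$. So the whole job reduces to that one inequality.

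The key observation is that $h_0$ enjoys \emph{both} regularity properties simultaneously because of the quadratic term: $\nabla h_0$ is Lipschitz with constant $L+\kappa = 1/\eta$, and $h_0$ is $\kappa$-strongly convex. First I would exploit the optimality condition of the prox to extract a subgradient: since $[z]_\eta = \prox_{\eta\psi}(z-\eta\nabla h_0(z))$, one obtains $G(z) - \nabla h_0(z) \in \partial\psi([z]_\eta)$. Combining the descent lemma applied to $h_0$ (using smoothness with step $\eta$), the strong-convexity lower bound $h_0(z) \leq h_0(y) + \langle\nabla h_0(z),z-y\rangle - \tfrac{\kappa}{2}\|y-z\|^2$, and convexity of $\psi$ against that subgradient, the cross terms involving $\nabla h_0(z)$ telescope and I would arrive at the strengthened descent inequality
\[
h([z]_\eta) \;\leq\; h(y) + \langle G(z),\, z-y\rangle - \frac{\eta}{2}\bigl\|G(z)\bigr\|^2 - \frac{\kappa}{2}\|y-z\|^2 \qquad \forall y\in\R^p.
\]

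With this inequality in hand, the rest is routine. Setting $y = z^\star$ (a minimizer of $h$, which exists and is unique by $\kappa$-strong convexity), Young's inequality gives
\[
\langle G(z), z - z^\star\rangle \;\leq\; \frac{1}{2\kappa}\bigl\|G(z)\bigr\|^2 + \frac{\kappa}{2}\|z-z^\star\|^2,
\]
so the $\kappa$-strong-convexity quadratic terms cancel exactly, leaving $h([z]_\eta) - h^\star \leq \tfrac{1}{2\kappa}\|G(z)\|^2 - \tfrac{\eta}{2}\|G(z)\|^2 \leq \tfrac{1}{2\kappa}\|G(z)\|^2$, as required.

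The main (modest) obstacle is the bookkeeping in deriving the strengthened descent inequality: one must carefully combine the three ingredients (smoothness of $h_0$ at $z\mapsto[z]_\eta$, strong convexity of $h_0$ at $z\mapsto y$, and convexity of $\psi$ at $[z]_\eta\mapsto y$) with the subgradient relation $G(z)-\nabla h_0(z) \in \partial\psi([z]_\eta)$, and verify that the inner products involving $\nabla h_0(z)$ indeed cancel to leave only $\langle G(z),z-y\rangle$. A naive bound using only convexity of $h_0$ (not strong convexity) would yield $h([z]_\eta) - h^\star \leq \tfrac{2}{\kappa}\|G(z)\|^2$, which is off by a factor of $4$ and would force the threshold $\tfrac{1}{2}\sqrt{\kappa\varepsilon/2}$ instead of $\sqrt{2\kappa\varepsilon}$; the strong convexity of $h_0$ inherited from the quadratic proximal term is therefore essential to get the stated constant.
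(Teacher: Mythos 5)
Your proof is correct, and it takes a genuinely different route from the paper's. Both arguments start from the same prox optimality condition $G(z)-\nabla h_0(z) \in \partial\psi([z]_\eta)$, but then diverge. The paper turns this into an explicit subgradient $u \defin G(z)-\nabla h_0(z)+\nabla h_0([z]_\eta) \in \partial h([z]_\eta)$, bounds $\|u\| \leq \|G(z)\|$ via the co-coercivity inequality $\|\nabla h_0(z)-\nabla h_0([z]_\eta)\|^2 \leq \frac{1}{\eta}\langle z-[z]_\eta,\,\nabla h_0(z)-\nabla h_0([z]_\eta)\rangle$ for the $(1/\eta)$-smooth function $h_0$ (Nesterov's Theorem 2.1.5), and concludes with the strongly convex subgradient inequality $h([z]_\eta)-h^\star \leq \frac{1}{2(\kappa+\mu)}\|u\|^2$. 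You instead assemble a composite descent inequality valid for every $y$ (descent lemma on $h_0$, strong-convexity lower bound on $h_0$, convexity of $\psi$ against the extracted subgradient), and your bookkeeping is right: the $\nabla h_0(z)$ cross terms do cancel, leaving $h([z]_\eta) \leq h(y) + \langle G(z), z-y\rangle - \frac{\eta}{2}\|G(z)\|^2 - \frac{\kappa}{2}\|y-z\|^2$, after which $y=z^\star$ and Young's inequality give $h([z]_\eta)-h^\star \leq \frac{1}{2\kappa}\|G(z)\|^2$, exactly what the threshold $\sqrt{2\kappa\varepsilon}$ requires. As for what each approach buys: the paper's subgradient argument is shorter and gives the marginally sharper constant $\frac{1}{2(\kappa+\mu)}$ when $f$ is itself $\mu$-strongly convex (immaterial for the stated lemma, where $\frac{1}{2\kappa}$ suffices); your argument avoids co-coercivity entirely, using only the descent lemma, and your intermediate inequality, valid for all $y$, is a reusable tool of the same family as the FISTA-type inequality (Lemma 2.3 of Beck and Teboulle) that the paper itself invokes later in the proof of Proposition~\ref{prop:inner_loop:c2}. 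You are also right that the strong convexity of $h_0$ inherited from the quadratic term is what delivers the stated constant; dropping it would degrade the threshold.
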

The proof is given in Appendix~\ref{appendix:proofs}. The lemma shows that it is sufficient to check the norm of the gradient mapping to ensure condition (\ref{C1}). However, this requires an additional full gradient step and proximal step at each iteration.

As soon as we have an approximate proximal operator $z$ in $\peps(x)$ in hand, we can define an approximate gradient of the Moreau envelope, 
\begin{equation}
   g(z) \defin \kappa (x-z), \label{eq:approx_gradient}
\end{equation}
by mimicking the exact gradient formula $\nabla F(x) = \kappa (x - p(x))$. 
As a consequence, we may immediately draw a link 
\begin{equation}
   z \in \peps(x) \quad \Longrightarrow \quad \|z-p(x)\| \leq \sqrt{\frac{2\varepsilon}{\kappa}} \quad \Longleftrightarrow \quad \Vert g(z) - \nabla F(x) \Vert \leq \sqrt{2\kappa \varepsilon}, \label{eq:equiv}
\end{equation}
where the first implication is a consequence of the strong convexity of $h$ at its minimum~$p(x)$. We will then apply the approximate gradient $g$ instead of $\nabla F$ to build the inexact proximal point algorithm. Since the inexactness of the approximate gradient can be bounded by an absolute value $\sqrt{2\kappa \varepsilon}$, we call (\ref{C1}) the absolute accuracy criterion.

\paragraph{Relative error criterion.}
Another natural way to bound the gradient approximation is by using a relative error, namely in the form $\Vert g(z) - \nabla F(x) \Vert \leq \delta' \|\nabla F(x)\|$ for some~$\delta' > 0$. This leads us to the following inexactness criterion.

Given a proximal center~$x$, a smoothing parameter $\kappa$ and a relative accuracy $\delta$ in $[0, 1)$, we denote the set of $\delta$-relative approximations by
\begin{equation}
   \gdelta(x) \defin \left\{z \in \mathbb{R}^p \st  h(z) - h^\star \leq \frac{\delta\kappa}{2}\|x-z\|^2 \right\},\label{C2}\tag{\textsf{C2}}
\end{equation}
At a first glance, we may interpret the criterion~(\ref{C2}) as~(\ref{C1}) by setting $\varepsilon = \frac{\delta\kappa}{2}\|x-z\|^2$. But we should then notice that the accuracy depends on the point~$z$, which is is no longer an absolute constant. In other words, the accuracy varies from point to point, which is proportional to the squared distance between $z$ and $x$. First one may wonder whether $\gdelta(x)$ is an empty set. Indeed, it is easy to see that $p(x) \in \gdelta(x)$ since $h(p(x))-h^* = 0 \leq \frac{\delta\kappa}{2}\|x-p(x)\|^2$. Moreover, by continuity, $\gdelta(x)$ is closed set around $p(x)$. 
Then, by following similar steps as in~(\ref{eq:equiv}), we have 
\begin{displaymath}
  z \in \gdelta(x) \quad \implies \quad  \|z-p(x)\| \leq \sqrt{\delta}\|x-z\| \leq \sqrt{\delta}(\|x-p(x)\| + \|p(x)-z\|).
\end{displaymath}
By defining the approximate gradient in the same way $g(z)=\kappa(x-z)$ yields, 
\begin{displaymath}
   z \in \gdelta(x) \quad \Longrightarrow \quad \Vert g(z) - \nabla F(x) \Vert \leq \delta' \|\nabla F(x)\| ~~~\text{with}~~~\delta' = \frac{\sqrt{\delta}}{1-\sqrt{\delta}},
\end{displaymath}
which is the desired relative gradient approximation.

Finally, the discussion about bounding $h(z)-h^\star$ still holds here. In particular, Lemma~\ref{prop:abs} may be used by setting the value $\varepsilon = \frac{\delta \kappa}{2}\|x-z\|^2$. The price to pay is as an additional gradient step and an additional proximal step per iteration.

\paragraph{A few remarks on related works.}
Inexactness criteria with respect to subgradient norms have been investigated
in the past, starting from the pioneer work of \citet{rockafellarppa} in the
context of the inexact proximal point algorithm. Later, different
works have been dedicated to more practical inexactness criteria \citep{auslender1987numerical,correa1993convergence,unifiedppa,fuentes2012}. 
These criteria include duality gap, $\varepsilon$-subdifferential, or
decrease in terms of function value.
Here, we present a more intuitive point of view using the Moreau
envelope.

While the proximal point algorithm has caught a lot of attention, very few
works have focused on its accelerated variant. The first accelerated 
proximal point algorithm with inexact gradients was proposed by \citet{newppa}.
Then, \citet{salzo2012inexact} proposed a more rigorous convergence analysis,
and more inexactness criteria, which are typically stronger than ours. In the
same way, a more general inexact oracle framework has been proposed later
by \citet{inexactnesterov}. To achieve the Catalyst acceleration, our main
effort was to propose and analyze criteria that allow us to control the
complexity for finding approximate solutions of the sub-problems.

\section{Catalyst Acceleration}\label{sec:algorithm}

Catalyst is presented in Algorithm~\ref{alg:catalyst1}. As discussed in Section~\ref{sec:related},
this scheme can be interpreted as an inexact accelerated proximal point
algorithm, or equivalently as an accelerated gradient descent method applied to
the Moreau envelope of the objective with inexact gradients. Since an
overview has already been presented in Section~\ref{subsec:catalyst}, we now
present important details to obtain acceleration in theory and in practice.

 \begin{algorithm}[hbtp!]
    \caption{Catalyst} \label{alg:catalyst1}
         \begin{algorithmic}[1]
            \INPUT Initial estimate $x_0$ in $\R^p$, smoothing parameter $\kappa$, strong convexity parameter~$\mu$, optimization method $\mtd$ and a stopping criterion based on a sequence of accuracies $(\varepsilon_k)_{k \geq 0}$, or $(\delta_k)_{k \geq 0}$, or a fixed budget $T$.
            \STATE Initialize $y_0 = x_0$, $q=\frac{\mu}{\mu+\kappa}$. If $\mu>0$, set $\alpha_0 =\sqrt{q}$, otherwise $\alpha_0=1$.
            \WHILE {the desired accuracy is not achieved}
                \STATE Compute an approximate solution of the following problem with~$\mtd$
     \begin{equation*}
        x_{k}  \approx \argmin_{x \in \Real^p}  \left \{ h_k(x) \defin f(x)+ \frac{\kappa}{2} \| x - y_{k-1}\|^2  \right \},
     \end{equation*}
            using the warm-start strategy of Section~\ref{sec:algorithm} and one of the following stopping criteria:
            \begin{itemize}[itemsep=0.1cm]
               \item[(a)] {\it absolute accuracy:} find $x_k$ in $\pepsk(y_{k-1})$ by using criterion~(\ref{C1});
               \item[(b)] {\it relative accuracy:} find $x_k$ in $\gdeltak(y_{k-1})$ by using criterion~(\ref{C2});
               \item[(c)] {\it fixed budget:} run~$\mtd$ for $T$ iterations and output $x_k$.
            \end{itemize}
            \STATE Update $\alpha_k$ in $(0,1)$ by solving the equation 
            \begin{equation}\label{eq:alpha}
           	 \alpha_k^2 = (1-\alpha_k)\alpha_{k-1}^2 + q \alpha_k.
            \end{equation}
            \STATE Compute $y_k$ with Nesterov's extrapolation step
 	\begin{equation}\label{def:yk}
 	 y_k = x_k + \beta_k(x_k-x_{k-1}) \quad \text{with} \quad \beta_k =  \frac{\alpha_{k-1}(1-\alpha_{k-1})}{\alpha_{k-1}^2+\alpha_k}.
 	\end{equation}
     \ENDWHILE
     \OUTPUT $x_k$ (final estimate).
 	\end{algorithmic}
\end{algorithm}

\paragraph{Requirement: linear convergence of the method $\mtd$.}
One of the main characteristic of Catalyst is to apply the method $\mtd$ to
strongly-convex sub-problems, without requiring strong convexity of the
objective~$f$. As a consequence, Catalyst provides direct support for convex
but non-strongly convex objectives to $\mtd$, which may be useful to extend the scope
of application of techniques that need strong convexity to operate. Yet, Catalyst requires solving these
sub-problems efficiently enough in order to control the complexity of the
inner-loop computations. When applying~$\mtd$ to minimize
a strongly-convex function~$h$, we assume that~$\mtd$ is able to
produce a sequence of iterates $(z_t)_{t \geq 0}$  such that
\begin{equation}\label{eq:assumption}
   h(z_t) - h^\star \leq C_\mtd (1- \tau_{\mtd})^t (h(z_0)- h^\star),
\end{equation}
where~$z_0$ is the initial point given to~$\mtd$, and $\tau_{\mtd}$ in $(0,1)$,
$C_{\mtd} > 0$ are two constants. In such a case, we say that $\mtd$ admits
a linear convergence rate. The quantity~$\tau_{\mtd}$ controls the speed of convergence
for solving the sub-problems: the larger is $\tau_{\mtd}$, the
faster is the convergence. For a given algorithm $\mtd$, the quantity
$\tau_{\mtd}$ depends usually on the condition number of~$h$.
For instance, for the proximal gradient method and many first-order algorithms, we simply have $\tau_{\mtd}=O((\mu+\kappa)/(L+\kappa))$, as $h$ is $(\mu+\kappa)$-strongly convex and $(L+\kappa)$-smooth.
Catalyst can also be applied to randomized methods~$\mtd$ that
satisfy~(\ref{eq:assumption}) in expectation:
\begin{equation}\label{eq:assumption_random}
   \E[h(z_t) - h^\star] \leq C_\mtd (1- \tau_{\mtd})^t (h(z_0)- h^\star),
\end{equation}
Then, the complexity
results of Section~\ref{sec:theory} also hold in expectation. This allows us to apply Catalyst to randomized block
coordinate descent algorithms~\citep[see][and references therein]{richtarik2014}, and some
incremental algorithms such as SAG, SAGA, or SVRG. For other methods that admit a linear convergence
rates in terms of duality gap, such as SDCA, MISO/Finito, Catalyst can also be applied  as explained in Appendix~\ref{appendix:miso}.

\paragraph{Stopping criteria.} \label{para:stop}
Catalyst may be used with three types of stopping criteria for solving the inner-loop problems.
We now detail them below.

\custombox{
         \begin{itemize}
            \item[(a)] \emph{absolute accuracy}: we predefine a sequence
               $(\varepsilon_k)_{k \geq 0}$ of accuracies, and stop the
               method~$\mtd$ by using the absolute stopping criterion (\ref{C1}).  
               Our analysis suggests
               \begin{itemize}
                  \item if $f$ is $\mu$-strongly convex, 
                     \begin{displaymath}
                        \varepsilon_k = \frac{1}{2}(1-\rho)^{k}(f(x_0)-f^\star)~~~\text{with}~~~\rho < \sqrt{q} \; .
                     \end{displaymath}
                  \item if $f$ is convex but not strongly convex, 
                     \begin{displaymath}
                        \varepsilon_k = \frac{f(x_0) -f^*}{2(k+2)^{4+\gamma}} ~~~\text{with}~~~\gamma > 0 \; .
                     \end{displaymath}
               \end{itemize}
               Typically, $\gamma=0.1$ and $\rho=0.9\sqrt{q}$ are reasonable choices, both in theory and in
               practice. Of course, the quantity $f(x_0)-f^\star$ is unknown and we need to upper bound it by a duality gap or by Lemma~\ref{prop:abs} as discussed in Section~\ref{subsec:ME}.
            \item[(b)] \emph{relative accuracy}: To use the relative stopping criterion (\ref{C2}), our analysis suggests the following choice for the sequence $(\delta_k)_{k \geq 0}$:
               \begin{itemize}
                  \item if $f$ is $\mu$-strongly convex, 
                     \begin{displaymath}
                        \delta_k = \frac{\sqrt{q}}{2-\sqrt{q}} \; .
                     \end{displaymath}
                  \item if $f$ is convex but not strongly convex, 
                     \begin{displaymath}
                        \delta_k = \frac{1}{(k+1)^2} \; .
                     \end{displaymath}
               \end{itemize}
            \item[(c)] \hypertarget{fixbudget}{\emph{fixed budget}}: Finally, the simplest way of using
               Catalyst is to fix in advance the number~$T$ of iterations of
               the method~$\mtd$ for solving the sub-problems
               without checking any optimality criterion. Whereas our analysis provides 
               theoretical budgets that are compatible with this strategy, we found
               them to be pessimistic and impractical. Instead, we propose an
               aggressive strategy for incremental methods that simply consists
               of setting $T=n$. This setting was called the ``one-pass''
               strategy in the original Catalyst paper~\citep{catalyst}.
         \end{itemize}
      }

\paragraph{Warm-starts in inner loops.}
Besides linear convergence rate, an adequate
warm-start strategy needs to be used to guarantee that the sub-problems will be solved in reasonable computational time. The intuition is that the previous solution may still be a good approximation of the current subproblem.  Specifically,
the following choices arise from the convergence analysis that will be detailed in Section~\ref{sec:theory}. 

\custombox{
Consider the minimization of the $(k+1)$-th subproblem $h_{k+1}(z) = f(z)+ \frac{\kappa}{2} \| z - y_{k}\|^2$, we warm start the optimization method $\mtd$ at $z_0$ as following: 
         \begin{itemize}
            \item[(a)] when using criterion~(\ref{C1}) to find $x_{k+1}$ in $\pepsk(y_{k})$, 
               \begin{itemize}
                  \item if $f$ is smooth ($\psi=0$), then choose $z_0 = x_{k} + \frac{\kappa}{\kappa+\mu}(y_{k} - y_{k-1})$.
                  \item if $f$ is composite as in~(\ref{eq:obj}), then define $w_0 = x_{k} + \frac{\kappa}{\kappa+\mu}(y_{k} - y_{k-1})$ and 
                     \begin{equation*}
                        z_0 = [w_0]_\eta = \text{prox}_{\eta \psi} \left( w_0 - \eta g \right ) \text{ with }  \eta = \frac{1}{L+\kappa} \text{ and } g= \nabla f_0(w_0) + \kappa (w_0-y_k)  .
                     \end{equation*}
               \end{itemize}
            \item[(b)] when using criteria~(\ref{C2}) to find $x_{k+1}$ in $\gdeltak(y_{k})$, 
               \begin{itemize}
                  \item if $f$ is smooth ($\psi=0$), then choose $z_0 = y_{k}$.
                  \item if $f$ is composite as in~(\ref{eq:obj}), then choose
                     \begin{equation*}
                        z_0 = [y_k]_\eta = \text{prox}_{\eta \psi} ( y_{k}- \eta \nabla f_0(y_{k})) \quad \text{with} \quad \eta = \frac{1}{L+\kappa}.
                     \end{equation*}
               \end{itemize}
            \item[(c)] when using a fixed budget $T$, choose the same warm start strategy as in (b).
         \end{itemize}
      }
Note that the earlier conference paper~\citep{catalyst} considered the
 the warm start rule~$z_0=x_{k-1}$. That variant is also theoretically validated but it does not perform as well as the ones proposed here in practice. 

\paragraph{Optimal balance: choice of parameter~$\kappa$.} 
Finally, the last ingredient is to find an optimal
balance between the inner-loop (for solving each sub-problem) and outer-loop
computations. To do so, we minimize our global complexity bounds with
respect to the value of~$\kappa$. As we shall see in Section~\ref{sec:exp},
this strategy turns out to be reasonable in practice. Then, as shown in the
theoretical section, the resulting rule of thumb is 
\custombox{
   \centering
   We select $\kappa$ by
      maximizing the ratio $\tau_{\mtd}/\sqrt{\mu+\kappa}$. 
   }
   We recall that $\tau_{\mtd}$ characterizes how fast $\mtd$ solves the
   sub-problems, according to~(\ref{eq:assumption}); typically, $\tau_{\mtd}$
   depends on the condition number $\frac{L+\kappa}{\mu+\kappa}$ and is a
   function of~$\kappa$.\footnote{ Note that the rule for the non strongly
      convex case, denoted here by $\mu=0$, slightly differs
   from~\citet{catalyst} and results from a tighter complexity analysis.}
In Table~\ref{table:choice_kappa}, we illustrate the choice of~$\kappa$ for different methods. Note that the resulting rule for incremental methods is very simple for the pracitioner: select~$\kappa$ such that the condition number $\frac{\bar{L}+\kappa}{\mu+\kappa}$ is of the order of $n$; then, the inner-complexity becomes $O(n \log(1/\varepsilon))$.
\begin{table}[hbtp!]
   \centering
   \renewcommand{\arraystretch}{1.5}
   \begin{tabular}{|c|c|c|c|}
      \hline
      Method $\mtd$ & Inner-complexity & $\tau_{\mtd}$ & Choice for $\kappa$   \\
      \hline
      FG &  $O\left(n\frac{L+\kappa}{\mu+\kappa}\log\left(\frac{1}{\varepsilon}\right)\right)$ & $\propto \frac{\mu+\kappa}{L+\kappa}$ &  $L-2\mu$ \\
      \hline
      SAG/SAGA/SVRG &  $O\left(\left(n+\frac{\bar{L}+\kappa}{\mu+\kappa}\right)\log\left(\frac{1}{\varepsilon}\right)\right)$ & $\propto \frac{\mu+\kappa}{n(\mu+\kappa)+\bar{L}+\kappa} $ 
      & $\frac{\bar{L}-\mu}{n+1} - \mu$ \\
      \hline
   \end{tabular}
   \caption{Example of choices of the parameter~$\kappa$ for the full gradient (FG) and incremental methods SAG/SAGA/SVRG. See Table~\ref{tab:accel-summary} for details about the complexity.}\label{table:choice_kappa}
\end{table}

\section{Convergence and Complexity Analysis}\label{sec:theory}
We now present the complexity analysis of Catalyst.
In Section~\ref{subsec:outer_loop}, we analyze the convergence rate of the
outer loop, regardless of the complexity for solving the sub-problems.
Then, we analyze the complexity of the inner-loop computations for our various
stopping criteria and warm-start strategies in Section~\ref{subsec:inner_loop}.
Section~\ref{subsec:global_complexity} combines the outer- and inner-loop
analysis to provide the global complexity of Catalyst applied to a given
optimization method~$\mtd$. 

\subsection{Complexity Analysis for the Outer-Loop}\label{subsec:outer_loop}
The complexity analysis of the first variant of Catalyst we presented
in~\citep{catalyst} used a tool called ``estimate sequence'', which was
introduced by~\citet{nesterov}.  Here, we provide a simpler proof.
We start with criterion~(\ref{C1}), before extending the result to~(\ref{C2}).

\subsubsection{Analysis for Criterion~(\ref{C1})}
The next theorem describes how the errors $(\varepsilon_k)_{k \geq 0}$ accumulate in Catalyst.
\begin{theorem}[\bfseries Convergence of outer-loop for criterion (\ref{C1})]\label{thm:outer-loop-12}
   Consider the sequences $(x_k)_{k \geq 0}$ and $(y_k)_{k \geq 0}$ produced by Algorithm~\ref{alg:catalyst1}, assuming that $x_k$ is in $\pepsk(y_{k-1})$ for all $k \geq 1$,
   Then, 
   \begin{displaymath}
      f(x_k)-f^\star \leq  A_{k-1} \left( \sqrt{(1-\alpha_0)(f(x_0) -f^*) +   \frac{\gamma_0}{2} \Vert x^*-x_0 \Vert^2}  + 3 \sum_{j=1}^k \sqrt{\frac{{\varepsilon_{j}}}{A_{j-1}}}\right)^2,
   \end{displaymath}
   where
   \begin{equation}
      \gamma_0 = (\kappa+\mu)\alpha_0(\alpha_0-q)~~~\text{and}~~~ A_k = \prod_{j=1}^{k}(1-\alpha_{j}) ~~\text{with } A_0 =1 \; .\label{eq:Ak}
   \end{equation}
\end{theorem}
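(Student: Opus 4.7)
The plan is to view Catalyst as an inexact accelerated proximal point method, equivalently, as Nesterov's accelerated gradient descent applied to the Moreau envelope $F$ with an inexact proximal oracle. In the exact case where $x_k = p(y_{k-1})$, the classical analysis of the accelerated proximal point algorithm~\citep{newppa} produces a potential function of the form
\begin{equation*}
   \Phi_k \defin (1-\alpha_{k-1})\bigl(f(x_k) - f^*\bigr) + \frac{\gamma_k}{2}\|v_k - x^*\|^2,
\end{equation*}
where $v_k$ is an auxiliary Nesterov-type sequence and $\gamma_k$ obeys the recursion $\gamma_k = (1-\alpha_{k-1})\gamma_{k-1} + \alpha_{k-1}(\kappa+\mu)q$. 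One can then show $\Phi_k \leq (1-\alpha_{k-1})\Phi_{k-1}$, and unrolling gives $f(x_k)-f^* \leq A_{k-1}\Phi_0$, which matches the statement when all $\varepsilon_j = 0$.

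The first step is therefore to derive the analogue of this one-step decrease in the inexact case. Using $x_k \in \pepsk(y_{k-1})$ together with the $(\mu+\kappa)$-strong convexity of $h_k$ at its unique minimizer $p(y_{k-1})$, I would bound
\begin{equation*}
   h_k(x_k) - h_k^\star \leq \varepsilon_k, \qquad \|x_k - p(y_{k-1})\|^2 \leq \frac{2\varepsilon_k}{\mu+\kappa}.
\end{equation*}
Substituting $x_k = p(y_{k-1}) + e_k$ into the standard Nesterov descent identity (expanding $\|v_k - x^*\|^2$ and using convexity of $f$ at the exact prox point) introduces two error contributions: a direct term proportional to $\varepsilon_k$ coming from $h_k(x_k) - h_k^\star$, and a cross term proportional to $\|e_k\|\cdot\|v_k - x^*\|$ coming from the quadratic expansion. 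After rearrangement, one obtains
\begin{equation*}
   \Phi_k \leq (1-\alpha_{k-1})\Phi_{k-1} + \varepsilon_k + C\sqrt{\varepsilon_k}\cdot\sqrt{\Phi_k},
\end{equation*}
for an explicit constant $C$ whose exact value is what eventually yields the ``$3$'' in the theorem.

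The second step is to turn this recursion into the claimed bound. Dividing by $A_{k-1} = \prod_{j=1}^{k-1}(1-\alpha_j)$ and setting $u_k \defin \Phi_k/A_{k-1}$ gives an inequality of the form
\begin{equation*}
   u_k \leq u_{k-1} + \frac{\varepsilon_k}{A_{k-1}} + C\sqrt{\frac{\varepsilon_k}{A_{k-1}}}\cdot\sqrt{u_k}.
\end{equation*}
Iterating over $k$ yields $u_k \leq u_0 + \sum_{j=1}^k \bigl(\frac{\varepsilon_j}{A_{j-1}} + C\sqrt{\tfrac{\varepsilon_j}{A_{j-1}}}\sqrt{u_j}\bigr)$. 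I would then invoke the standard ``square-root lemma'' (a discrete Bihari/Schmidt-type inequality: if $u_k \leq S + 2\sum_{j\leq k} a_j\sqrt{u_j}$ with $a_j\geq 0$, then $\sqrt{u_k} \leq \sum a_j + \sqrt{(\sum a_j)^2 + S}$) to conclude
\begin{equation*}
   \sqrt{u_k} \leq \sqrt{u_0} + 3\sum_{j=1}^k \sqrt{\frac{\varepsilon_j}{A_{j-1}}},
\end{equation*}
after absorbing the pure $\varepsilon_k/A_{k-1}$ terms and the constant $C$ into the factor $3$. Recognizing $u_0 = \Phi_0 = (1-\alpha_0)(f(x_0)-f^*) + \tfrac{\gamma_0}{2}\|x_0-x^*\|^2$ and multiplying through by $A_{k-1}$ yields the stated bound.

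\paragraph{Main obstacle.} The delicate step is the per-iteration descent inequality. Unlike the exact case, one must handle the cross term $\langle e_k, v_k - x^*\rangle$ carefully so that it gets absorbed as $\sqrt{\varepsilon_k \cdot \Phi_k}$ rather than into a looser $\varepsilon_k\|v_k-x^*\|^2$ term that would prevent the square-root trick from closing. Keeping track of the precise constant in front of $\sqrt{\varepsilon_k}$ (so that the final bound carries the clean factor $3$) and ensuring that the scaling by $A_{k-1}$ makes the telescoping compatible with both the strongly convex ($\alpha_0 = \sqrt{q}$) and non-strongly convex ($\alpha_0 = 1$) initializations is the real technical work.
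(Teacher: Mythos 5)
Your proposal follows essentially the same route as the paper's own proof: the paper likewise builds a Nesterov-type Lyapunov function $S_k = (1-\alpha_k)(f(x_k)-f^\star) + \alpha_k\frac{\kappa \eta_k}{2}\Vert x^\star - v_k\Vert^2$, derives a per-iteration approximate descent inequality in which the inexactness enters as a direct $\varepsilon_k$ term plus a cross term $\sqrt{2\varepsilon_k(\mu+\kappa)}\,\alpha_{k-1}\Vert x^\star - v_k\Vert$ (obtained there via a $\theta_k$-parametrized quadratic inequality that is algebraically equivalent to your Cauchy--Schwarz treatment of the error vector $e_k$), and then closes the argument exactly as you propose, by dividing by $A_k$ and applying the square-root lemma (Lemma~\ref{lemma:sequences}). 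The only differences are bookkeeping conventions (your potential carries $\alpha_{k-1}$ where the paper's $S_k$ carries $\alpha_k$, which shifts indices but not the substance), so the plan is sound and matches the paper's argument.
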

Before we prove this theorem, we note that by setting $\varepsilon_k = 0$ for
all $k$, the speed of convergence of $f(x_k)-f^\star$ is driven by the
sequence~$(A_k)_{k \geq 0}$. Thus we first show the speed of $A_k$ by recalling the Lemma 2.2.4 of \citet{nesterov}.
\begin{lemma}[\bfseries Lemma 2.2.4 of~\citealt{nesterov}]\label{lem:nesterov Ak}
   Consider the quantities~$\gamma_0, A_k$ defined in~(\ref{eq:Ak}) and the~$\alpha_k$'s defined in Algorithm~\ref{alg:catalyst1}. Then, if $\gamma_0 \geq \mu$,
   \begin{displaymath}
      A_k \leq \min \left\{ \left( 1- \sqrt{q} \right)^k, \frac{4}{\left(2+k\sqrt{\frac{\gamma_0}{\kappa}}\right)^2} \right\}.
   \end{displaymath}
\end{lemma}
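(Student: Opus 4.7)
The plan is to prove the two bounds in the minimum separately. For the first bound $A_k \leq (1-\sqrt{q})^k$, I would argue by induction that $\alpha_k \geq \sqrt{q}$ for every $k \geq 0$. The base case is immediate: if $\mu > 0$ then $\alpha_0 = \sqrt{q}$ by construction, while if $\mu = 0$ then $q = 0$ and any $\alpha_0 \in (0,1]$ qualifies. For the induction step, if $\alpha_{k-1}^2 \geq q$ then the defining recursion yields
\begin{equation*}
\alpha_k^2 = (1-\alpha_k)\alpha_{k-1}^2 + q\alpha_k \geq (1-\alpha_k)q + q\alpha_k = q,
\end{equation*}
so $\alpha_k \geq \sqrt{q}$ as well. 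Since $A_k = \prod_{j=1}^{k}(1-\alpha_j)$, this immediately gives $A_k \leq (1-\sqrt{q})^k$.

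For the second bound, I would work with the substitution $b_k \defin 1/\sqrt{A_k}$, aiming to show that $b_k - b_{k-1} \geq \frac{1}{2}\sqrt{\gamma_0/\kappa}$ so that summation yields $b_k \geq 1 + (k/2)\sqrt{\gamma_0/\kappa}$, hence the announced bound after inverting. From $A_k = (1-\alpha_k)A_{k-1}$ I obtain $b_k^2 - b_{k-1}^2 = \alpha_k/A_k = \alpha_k b_k^2$, and factoring the difference of squares together with $b_{k-1} \leq b_k$ gives
\begin{equation*}
b_k - b_{k-1} \;=\; \frac{\alpha_k b_k^2}{b_k + b_{k-1}} \;\geq\; \frac{\alpha_k b_k}{2}.
\end{equation*}
It therefore remains to show $\alpha_k b_k \geq \sqrt{\gamma_0/\kappa}$, i.e., $\alpha_k^2/A_k \geq \gamma_0/\kappa$.

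To that end, I would divide the recursion for $\alpha_k^2$ by $A_k$ and use $A_k = (1-\alpha_k)A_{k-1}$ to rewrite it as $\alpha_k^2/A_k = \alpha_{k-1}^2/A_{k-1} + q\alpha_k/A_k$, which shows that the sequence $u_k \defin \alpha_k^2/A_k$ is non-decreasing. So $u_k \geq u_0 = \alpha_0^2$, and it suffices to verify $\alpha_0^2 \geq \gamma_0/\kappa$. A direct computation, exploiting the identity $(\kappa+\mu)q = \mu$ that comes from the definition $q = \mu/(\mu+\kappa)$, reduces this to
\begin{equation*}
\alpha_0^2 - \gamma_0/\kappa \;=\; (\mu/\kappa)\,\alpha_0(1-\alpha_0) \;\geq\; 0,
\end{equation*}
which holds since $\alpha_0 \in (0,1]$. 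Chaining these inequalities then delivers the claimed telescoping bound on $b_k$.

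I do not expect a serious obstacle in this proof: the whole argument is driven by the right change of variables, and once one decides to track $b_k = 1/\sqrt{A_k}$ together with the monotone quantity $\alpha_k^2/A_k$, each step is short. The one place requiring some care is the base inequality $\alpha_0^2 \geq \gamma_0/\kappa$, which is where the specific form of $\gamma_0 = (\kappa+\mu)\alpha_0(\alpha_0-q)$ (rather than, say, $(\kappa+\mu)\alpha_0^2$) earns its keep. Note that my argument does not appear to use the stated hypothesis $\gamma_0 \geq \mu$; I would double-check this in case the hypothesis is implicitly used to ensure that $\alpha_k \in (0,1)$ is the correct root of the quadratic defining the recursion, or to align the second bound with the first one so that their minimum is informative.
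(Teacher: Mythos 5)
Your proof is correct, and every step checks out against the algorithm's definitions: the induction giving $\alpha_k \geq \sqrt{q}$ is sound (it uses only $\alpha_k \in (0,1)$ and the recursion), the identity $b_k^2 - b_{k-1}^2 = \alpha_k/A_k$ follows from $A_k = (1-\alpha_k)A_{k-1}$, the monotonicity of $u_k = \alpha_k^2/A_k$ is immediate from dividing the recursion by $A_k$, and the base computation $\alpha_0^2 - \gamma_0/\kappa = (\mu/\kappa)\,\alpha_0(1-\alpha_0) \geq 0$ is exactly right given $\gamma_0 = (\kappa+\mu)\alpha_0(\alpha_0 - q)$ and $(\kappa+\mu)q = \mu$.

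One structural point of comparison: the paper does not prove this lemma at all --- it is quoted verbatim as Lemma 2.2.4 of Nesterov's book, and the surrounding text only uses it (via Lemma~\ref{lemma:Ak}) in the case $\mu = 0$, $\alpha_0 = 1$, $\gamma_0 = \kappa$. So your proposal supplies a self-contained argument where the paper relies on a citation. Your route is essentially Nesterov's classical one (track $1/\sqrt{A_k}$ and lower-bound its increments by $\tfrac{1}{2}\sqrt{\gamma_0/\kappa}$), but your way of certifying the key inequality $\alpha_k^2/A_k \geq \gamma_0/\kappa$ --- through the non-decreasing quantity $u_k$ with $u_0 = \alpha_0^2$ --- replaces Nesterov's induction on the estimate-sequence parameters $\gamma_k \geq \mu$, and this is precisely why your hypothesis bookkeeping differs. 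Your closing suspicion is correct and worth stating affirmatively: the hypothesis $\gamma_0 \geq \mu$ is genuinely not needed for the Catalyst recursion, because the algorithm enforces $\alpha_0 \geq \sqrt{q}$ by construction (which drives the first bound) and the specific form of $\gamma_0$ makes $\alpha_0^2 \geq \gamma_0/\kappa$ automatic (which drives the second). In Nesterov's general estimate-sequence setting, $\gamma_0$ is a free parameter and the hypothesis is what guarantees $\alpha_k \geq \sqrt{q}$ there; in the paper's setting the hypothesis actually \emph{fails} in the strongly convex instantiation, since $\alpha_0 = \sqrt{q}$ gives $\gamma_0 = \mu(1-\sqrt{q}) < \mu$ --- the paper sidesteps this by never invoking the lemma in that case (there $\alpha_k \equiv \sqrt{q}$ and $A_k = (1-\sqrt{q})^k$ exactly). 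Your proof shows the conclusion holds unconditionally, which is a small but genuine strengthening.
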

For non-strongly convex objectives, $A_k$ follows the classical accelerated
$O(1/k^2)$ rate of convergence, whereas it achieves a linear convergence rate for
the strongly convex case. Intuitively, we are applying an inexact Nesterov method on the Moreau envelope~$F$, thus the convergence rate naturally depends on the inverse of its condition number, which is $q= \frac{\mu}{\mu+\kappa}$. We now provide the proof of the theorem below.\\

\begin{proof}
   We start by defining an approximate sufficient descent condition inspired by
   a remark of~\citet{chambolle2015remark} regarding accelerated
   gradient descent methods. A related condition was also used
   by~\citet{ncvxcatalyst} in the context of non-convex optimization.

   \paragraph{Approximate sufficient descent condition.}
   Let us define the function 
   $$h_{k}(x) = f(x) + \frac{\kappa}{2} \Vert x-y_{k-1}\Vert^2.$$
   Since $p(y_{k-1})$ is the unique minimizer of
   $h_{k}$, the strong convexity of $h_k$ yields: for any $k \geq 1$, for all $x$ in $\R^p$ and any $\theta_{k} > 0$,
   \begin{equation*}
      \begin{split}
         h_{k}(x) & \geq  h_{k}^\star + \frac{\kappa+\mu}{2} \Vert x-p(y_{k-1}) \Vert^2  \\
                                 & \geq  h_{k}^\star + \frac{\kappa+\mu}{2} \left (1-\theta_k \right ) \Vert x- x_{k} \Vert^2 + \frac{\kappa+\mu}{2} \left (1- \frac{1}{\theta_k}\right ) \Vert x_k - p(y_{k-1}) \Vert^2  \\
                                 & \geq  h_k(x_k) -\epsilon_k + \frac{\kappa+\mu}{2} \left (1-\theta_k \right ) \Vert x- x_k \Vert^2  + \frac{\kappa+\mu}{2} \left (1-\frac{1}{\theta_k}\right ) \Vert x_k - p(y_{k-1}) \Vert^2 ,  \\
      \end{split} 
   \end{equation*}   
   where the $(\mu+\kappa)$-strong convexity of $h_k$ is used in the first  inequality; Lemma~\ref{lemma:quadratic} is used in the second inequality, and the last one 
   uses the relation $h_k(x_k)-h_k^\star \leq \varepsilon_k$. 
   Moreover, when~$\theta_k \geq 1$, the last term is positive and we have 
   $$h_k(x)  \geq  h_k(x_k) -\epsilon_k + \frac{\kappa+\mu}{2} \left (1-\theta_k \right ) \Vert x- x_k \Vert^2 . $$
   If instead $\theta_k \leq 1$, the coefficient $\frac{1}{\theta_k} -1$ is non-negative and we have 
   $$ - \frac{\kappa+\mu}{2} \left (\frac{1}{\theta_k}-1\right ) \Vert x_k - p(y_{k-1}) \Vert^2  \geq - \left ( \frac{1}{\theta_k}-1 \right ) (h_k(x_k)-h_k^\star) \geq - \left ( \frac{1}{\theta_k}-1 \right )  \epsilon_k. $$
   In this case, we have 
      $$h_k(x)  \geq  h_k(x_k) -\frac{\epsilon_k}{\theta_k} + \frac{\kappa+\mu}{2} \left (1-\theta_k \right ) \Vert x- x_k \Vert^2 . $$
   As a result, we have for all value of~$\theta_k > 0$, 
         $$h_k(x)  \geq  h_k(x_k) + \frac{\kappa+\mu}{2} \left (1-\theta_k \right ) \Vert x- x_k \Vert^2 -\frac{\epsilon_k}{\min\{ 1,\theta_k\}}. $$
   After expanding the expression of~$h_k$, we then obtain the approximate descent condition
   \begin{equation}
      f(x_k) + \frac{\kappa}{2}\|x_k-y_{k-1}\|^2 + \frac{\kappa+\mu}{2} \left (1-\theta_k \right ) \Vert x- x_k \Vert^2 \leq f(x) + \frac{\kappa}{2}\|x-y_{k-1}\|^2 + \frac{\varepsilon_k}{\min\{1,\theta_k\}}. \label{eq:inexact}
   \end{equation}
   
   \paragraph{Definition of the Lyapunov function.} 
   We introduce a sequence~$(S_k)_{k\geq 0}$ that will act as a
   Lyapunov function, with 
   \begin{equation}
   	S_k = (1-\alpha_k)(f(x_k) -f^*) + \alpha_k  \frac{\kappa \eta_k}{2} \Vert x^*-v_k \Vert^2. \label{eq:Sk} 
   \end{equation}
   where $x^\star$ is a minimizer of~$f$,  
   $(v_k)_{k\geq 0}$ is a sequence defined by $v_0=x_0$ and
   \begin{equation*}
      v_k=x_k+\frac{1-\alpha_{k-1}}{\alpha_{k-1}} (x_k-x_{k-1})~~~\text{for $k \geq 1$}, 
   \end{equation*} 
   and $(\eta_k)_{k\geq 0}$ is an auxiliary quantity defined by 
   $$ \eta_k = \frac{\alpha_k -q}{1-q} . $$
   The way we introduce these variables allow us to write the following relationship, 
   $$
y_k = \eta_k v_k + (1-\eta_k)x_k,~~~\text{for all $k\geq 0$,}
$$
which follows from a simple calculation.   
Then by setting $z_k = \alpha_{k-1} x^\star+ (1-\alpha_{k-1})x_{k-1}$ the following relations hold for all $k \geq 1$.
   \begin{displaymath}
      \begin{split}
         f(z_k) & \leq \alpha_{k-1} f^*+ (1- \alpha_{k-1}) f(x_{k-1}) -\frac{\mu \alpha_{k-1}(1-\alpha_{k-1})}{2} \Vert x^* - x_{k-1} \Vert^2, \\
         z_k - x_{k} & =\alpha_{k-1}(x^*-v_{k}), 
      \end{split}
\end{displaymath}
and also the following one
\begin{equation*}
   \begin{split}
      \Vert z_k  - y_{k-1} \Vert^2 & = \Vert (\alpha_{k-1}-\eta_{k-1})(x^*-x_{k-1})+\eta_{k-1} (x^*-v_{k-1}) \Vert^2 \\
      & = \alpha_{k-1}^2 \left\| \left(1-\frac{\eta_{k-1}}{\alpha_{k-1}}\right)(x^*-x_{k-1})+\frac{\eta_{k-1}}{\alpha_{k-1}} (x^*-v_{k-1}) \right\|^2 \\
      & \leq \alpha_{k-1}^2 \left(1-\frac{\eta_{k-1}}{\alpha_{k-1}}\right)\Vert x^*-x_{k-1} \Vert^2 +\alpha_{k-1}^2 \frac{\eta_{k-1}}{\alpha_{k-1}} \Vert x^*-v_{k-1} \Vert^2\\
      & =  \alpha_{k-1} (\alpha_{k-1}-\eta_{k-1})  \Vert x^*-x_{k-1} \Vert^2 +  \alpha_{k-1} \eta_{k-1} \Vert x^*-v_{k-1} \Vert^2,
   \end{split}
\end{equation*}
where we used the convexity of the norm and the fact that $\eta_k \leq \alpha_k$.
   Using the previous relations in (\ref{eq:inexact}) with $x=z_k = \alpha_{k-1} x^\star+ (1-\alpha_{k-1})x_{k-1}$, gives for all $k \geq 1$,
\begin{multline*}
   f(x_{k}) + \frac{\kappa}{2}\|x_{k}-y_{\kmone}\|^2 + \frac{\kappa+\mu}{2} \left (1-\theta_{k} \right )\alpha_{\kmone}^2 \Vert x^\star- v_{k} \Vert^2  \\
    \leq \alpha_{\kmone} f^*+ (1-\alpha_{\kmone}) f(x_{\kmone}) -\frac{\mu}{2} \alpha_{\kmone}(1-\alpha_{\kmone}) \Vert x^* - x_{\kmone} \Vert^2  \\
                        + \frac{\kappa \alpha_{\kmone}(\alpha_{\kmone}-\eta_{\kmone})}{2} \Vert x^* - x_{\kmone} \Vert^2 + \frac{\kappa \alpha_{\kmone}\eta_{\kmone}}{2} \Vert x^* -v_{\kmone} \Vert^2  + \frac{\epsilon_{k}}{\min\{1, \theta_{k}\}}. 
\end{multline*}
Remark that for all $k \geq 1$,
$$  \alpha_{k-1} -\eta_{k-1} = \alpha_{k-1} - \frac{\alpha_{k-1} -q}{1-q} = \frac{q(1-\alpha_{k-1})}{1-q} = \frac{\mu}{\kappa} (1-\alpha_{k-1}), $$
and the quadratic terms involving $x^* -x_{k-1}$ cancel each other. Then, after noticing that for all $k \geq 1$,
   $$\eta_{k} \alpha_{k} = \frac{\alpha_{k}^2 - q \alpha_{k} }{1-q}= \frac{(\kappa+\mu)(1-\alpha_{k})\alpha_{k-1}^2}{\kappa},$$
   which allows us to write 
   \begin{equation}\label{eq:Sk/1-alphak}
   	f(x_k)-f^* + \frac{\kappa+\mu}{2} \alpha_{k-1}^2\Vert x^*-v_k \Vert^2 = \frac{S_k}{1-\alpha_k} .
   \end{equation}
   We are left, for all $k \geq 1$, with
\begin{equation}
   \frac{1}{1-\alpha_k} S_{k} \leq  S_{\kmone}  +\frac{\varepsilon_{k}}{\min \{ 1,\theta_{k}\}} -\frac{\kappa}{2}\|x_{k}-y_{\kmone}\|^2 + \frac{(\kappa+\mu)\alpha_{\kmone}^2\theta_{k}}{2}\|x^\star- v_{k}\|^2. \label{eq:recursion}
\end{equation}

   \paragraph{Control of the approximation errors for criterion~(\ref{C1}).}
   Using the fact that 
   $$ \frac{1}{\min\{ 1,\theta_k\}} \leq 1+ \frac{1}{\theta_k}, $$
   we immediately derive from equation (\ref{eq:recursion}) that 
   \begin{equation}\label{eq:recursion C1}
   \frac{1}{1-\alpha_k}	S_{k} \leq  S_{\kmone}  + \varepsilon_k+ \frac{\varepsilon_{k}}{\theta_{k}} -\frac{\kappa}{2}\|x_{k}-y_{\kmone}\|^2 + \frac{(\kappa+\mu)\alpha_{\kmone}^2\theta_{k}}{2}\|x^\star- v_{k}\|^2. 
   \end{equation}      
   By minimizing the right-hand side of~(\ref{eq:recursion C1}) with respect to $\theta_{k}$, we obtain the following inequality
\begin{equation*}
   \frac{1}{1-\alpha_k}S_{k} \leq S_{\kmone}  + \varepsilon_k +\sqrt{2\varepsilon_{k}(\mu+\kappa)}\alpha_{\kmone}\|x^\star-v_{k}\|,
\end{equation*}
and after unrolling the recursion,
\begin{equation*}
   \frac{S_k}{A_k} \leq  S_0 + \sum_{j=1}^k \frac{\varepsilon_j}{A_{j-1}}  + \sum_{j=1}^k \frac{\sqrt{2\varepsilon_{j}(\mu+\kappa)}\alpha_{j-1}\|x^\star-v_{j}\|}{A_{j-1}}.
\end{equation*}
From Equation (\ref{eq:Sk/1-alphak}), the lefthand side is larger than $\frac{(\mu+\kappa) \alpha_{k-1}^2 \Vert x^* - v_k \Vert^2}{2A_{k-1}}$. We may now define $u_j = \frac{\sqrt{(\mu+\kappa)}\alpha_{j-1}\|x^\star-v_{j}\|}{\sqrt{2 A_{j-1}}}$ and $a_j = 2\frac{\sqrt{\varepsilon_{j}}}{\sqrt{A_{j-1}}}$, 
and we have 
$$u_k^2 \leq S_0 + \sum_{j=1}^k \frac{\varepsilon_j}{A_{j-1}} + \sum_{j=1}^k a_j u_j ~~~\text{for all~$k \geq 1$.}$$ This allows us to apply Lemma~\ref{lemma:sequences}, which yields
\begin{equation*}
\begin{split}
	   \frac{S_{k}}{A_k} & \leq   \left( \sqrt{S_0 + \sum_{j=1}^k \frac{\varepsilon_j}{A_{j-1}}}  + 2 \sum_{j=1}^k \sqrt{\frac{{\varepsilon_{j}}}{{A_{j-1}}}}\right)^2, \\
	   & \leq \left( \sqrt{S_0 }  + 3 \sum_{j=1}^k \sqrt{\frac{{\varepsilon_{j}}}{{A_{j-1}}}}\right)^2
\end{split}
\end{equation*}
which provides us the desired result given that $f(x_k)-f^\star \leq \frac{S_k}{1-\alpha_k}$ and that $v_0 = x_0$.
\end{proof}

We are now in shape to state the convergence rate of the Catalyst algorithm with criterion~(\ref{C1}),
without taking into account yet the cost of solving the sub-problems. The next
two propositions specialize Theorem~\ref{thm:outer-loop-12} to the strongly convex case and non strongly convex cases, respectively.
Their proofs are provided in Appendix~\ref{appendix:proofs}.
\begin{proposition}[\bfseries $\mu$-strongly convex case, criterion~(\ref{C1})]~\label{prop:convergence}\newline
In Algorithm~\ref{alg:catalyst1}, choose  $\alpha_0 = \sqrt{q}$ and
$$  \epsilon_k = \frac{2}{9} (f(x_0) -f^*) (1- \rho)^{k} \quad  \text{ with } \quad  \rho < \sqrt{q}.$$
Then, the sequence of iterates $(x_k)_{k \geq 0}$
satisfies
\begin{equation*} 
f(x_{k}) -f^* \leqslant \frac{8}{(\sqrt{q} -\rho)^2}(1-\rho)^{k+1} (f(x_0) -f^*).
\end{equation*}
\end{proposition}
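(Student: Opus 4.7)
The plan is to specialize Theorem~\ref{thm:outer-loop-12} to the strongly convex setting with the particular choices of $\alpha_0$ and $\varepsilon_k$ prescribed by the proposition, and then manipulate the resulting geometric sum by a telescoping trick so that the claimed rate $(1-\rho)^{k+1}$ falls out cleanly. First I would observe that with $\alpha_0 = \sqrt{q}$, the recursion $\alpha_k^2 = (1-\alpha_k)\alpha_{k-1}^2 + q\alpha_k$ admits the constant solution $\alpha_k \equiv \sqrt{q}$, so $A_k = (1-\sqrt{q})^k$ \emph{exactly}, not just as an upper bound. A direct calculation then gives $\gamma_0 = (\kappa+\mu)\sqrt{q}(\sqrt{q}-q) = \mu(1-\sqrt{q})$, and $\mu$-strong convexity of $f$ yields $\frac{\gamma_0}{2}\|x^\star - x_0\|^2 \leq (1-\sqrt{q})(f(x_0)-f^\star)$. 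Therefore the initial quantity inside the square root in Theorem~\ref{thm:outer-loop-12} is bounded by $2(1-\sqrt{q})(f(x_0)-f^\star)$.

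Next I would plug the prescribed $\varepsilon_j = \frac{2}{9}(f(x_0)-f^\star)(1-\rho)^j$ into the sum. Setting $C_0 = f(x_0)-f^\star$ and $\sigma = \sqrt{(1-\rho)/(1-\sqrt{q})} > 1$, one computes
\[
\sqrt{\varepsilon_j/A_{j-1}} = \sqrt{\tfrac{2C_0(1-\rho)}{9}}\,\sigma^{j-1},
\]
so the geometric series evaluates to $3\sum_{j=1}^k\sqrt{\varepsilon_j/A_{j-1}} = \sqrt{2C_0(1-\rho)}\cdot \frac{\sigma^k-1}{\sigma-1}$. The key algebraic step, which I expect to be the pivotal move, is the identity $\sqrt{1-\rho} = \sigma\sqrt{1-\sqrt{q}}$, which lets me factor out a common $\sqrt{2C_0(1-\sqrt{q})}$ and telescope:
\[
\sqrt{S_0} + 3\sum_{j=1}^k\sqrt{\tfrac{\varepsilon_j}{A_{j-1}}} \leq \sqrt{2C_0(1-\sqrt{q})}\Bigl[1 + \sigma\cdot\tfrac{\sigma^k-1}{\sigma-1}\Bigr] = \sqrt{2C_0(1-\sqrt{q})}\cdot\tfrac{\sigma^{k+1}-1}{\sigma-1}.
\]
This perfectly shifts the exponent from $k$ to $k+1$, which is precisely what is needed to match the target bound.

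Finally, I would translate $\sigma - 1$ back into $\sqrt{q}-\rho$ via the identity $\sigma - 1 = (\sqrt{q}-\rho)/\bigl((1-\sqrt{q})(\sigma+1)\bigr)$, multiply the squared bracket by $A_{k-1} = (1-\sqrt{q})^{k-1}$, and use the elementary bound $(\sigma^{k+1}-1)(\sigma+1) \leq 2\sigma^{k+2}$ together with the cancellation $(1-\sqrt{q})^{k+2}\sigma^{2(k+2)} = (1-\rho)^{k+2}$. Collecting constants yields
\[
f(x_k) - f^\star \leq \frac{8 C_0 (1-\rho)^{k+2}}{(\sqrt{q}-\rho)^2} \leq \frac{8(1-\rho)^{k+1}}{(\sqrt{q}-\rho)^2}(f(x_0)-f^\star),
\]
which is the claimed inequality.

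The main obstacle I anticipate is not conceptual but notational: keeping track of the interplay between $\sigma$, $\sqrt{q}$, and $\rho$ carefully enough so that the telescoping step is visible. A naive use of $\sqrt{1-\sqrt{q}} \leq \sqrt{1-\rho}$ loses the factor needed to recover the exponent $k+1$ and produces a constant worse than $8$, so it is essential to exploit the exact relation $\sqrt{1-\rho} = \sigma\sqrt{1-\sqrt{q}}$ before bounding.
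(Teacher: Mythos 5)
Your proposal is correct and follows essentially the same route as the paper's own proof: specialize Theorem~\ref{thm:outer-loop-12} with $\alpha_k \equiv \sqrt{q}$ so that $A_k = (1-\sqrt{q})^k$ exactly, bound the initial term by $2(1-\sqrt{q})(f(x_0)-f^\star)$ via strong convexity, and sum the geometric series with ratio $\sigma=\sqrt{(1-\rho)/(1-\sqrt{q})}$ (the paper's $\eta$). The only difference is cosmetic, in the endgame: the paper bounds $\frac{\eta^{k+1}-1}{\eta-1}\leq\frac{\eta^{k+1}}{\eta-1}$ and then uses monotonicity of $x\mapsto\sqrt{1-x}+x/2$ to get $\sqrt{1-\rho}-\sqrt{1-\sqrt{q}}\geq(\sqrt{q}-\rho)/2$, whereas you use the conjugate factorization $\sigma-1=(\sqrt{q}-\rho)/\bigl((1-\sqrt{q})(\sigma+1)\bigr)$ together with $(\sigma^{k+1}-1)(\sigma+1)\leq 2\sigma^{k+2}$; both yield the same constant $8/(\sqrt{q}-\rho)^2$.
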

\begin{proposition}[\bfseries Convex case, criterion~(\ref{C1})]~\label{prop:convergence cvx}\newline
   When~$\mu=0$, choose $\alpha_0= 1$ and 
$$  \epsilon_k =  \frac{2(f(x_0) -f^*)}{9(k+1)^{4+\gamma}}  \quad  \text{ with } \quad \gamma >0 .$$
Then, Algorithm~\ref{alg:catalyst1} generates iterates $(x_k)_{k \geq 0}$ such that
\begin{equation*} 
f(x_{k}) -f^* \leqslant \frac{8}{(k+1)^2} \left ( \frac{\kappa}{2} \Vert x_0 - x^*\Vert^2 + \frac{4}{\gamma^2} (f(x_0) -f^*) \right ).
\end{equation*}
\end{proposition}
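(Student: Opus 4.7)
The plan is to specialize Theorem~\ref{thm:outer-loop-12} to the non-strongly convex case by setting $\mu=0$ (hence $q=0$) and $\alpha_0=1$. Under these choices, the term $(1-\alpha_0)(f(x_0)-f^\star)$ vanishes and $\gamma_0 = (\kappa+\mu)\alpha_0(\alpha_0-q)=\kappa$, so the theorem collapses to
\begin{displaymath}
f(x_k)-f^\star \leq A_{k-1}\left(\sqrt{\tfrac{\kappa}{2}}\,\|x_0-x^\star\| + 3\sum_{j=1}^k \sqrt{\varepsilon_j/A_{j-1}}\right)^2.
\end{displaymath}
Lemma~\ref{lem:nesterov Ak} (with $\gamma_0/\kappa=1$) immediately gives the upper bound $A_{k-1}\leq 4/(k+1)^2$, which will produce the $8/(k+1)^2$ prefactor in the final bound once the two contributions inside the square are separated via $(a+b)^2\leq 2a^2+2b^2$.

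The heart of the argument is controlling the series $\sum_{j=1}^k \sqrt{\varepsilon_j/A_{j-1}}$, which requires a \emph{lower} bound on $A_{j-1}$ complementing Lemma~\ref{lem:nesterov Ak}. Setting $u_k=1/\alpha_k$ and exploiting the recursion $\alpha_k^2 = (1-\alpha_k)\alpha_{k-1}^2$ (which telescopes to $A_k=\alpha_k^2$ since $\alpha_0=1$), I would rewrite this recursion as $u_k^2 - u_k = u_{k-1}^2$. Bounding $u_k = (1+\sqrt{1+4u_{k-1}^2})/2 \leq 1 + u_{k-1}$ and iterating from $u_0=1$ yields $u_{j-1}\leq j$, equivalently $1/\sqrt{A_{j-1}}\leq j$ for all $j\geq 1$.

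Plugging in the prescribed $\varepsilon_j = 2(f(x_0)-f^\star)/(9(j+1)^{4+\gamma})$ then gives
\begin{displaymath}
\sqrt{\varepsilon_j/A_{j-1}} \;\leq\; \tfrac{\sqrt{2}}{3}\sqrt{f(x_0)-f^\star}\,\frac{j}{(j+1)^{2+\gamma/2}} \;\leq\; \tfrac{\sqrt{2}}{3}\sqrt{f(x_0)-f^\star}\,(j+1)^{-(1+\gamma/2)},
\end{displaymath}
and an integral comparison bounds the tail series $\sum_{j\geq 1}(j+1)^{-(1+\gamma/2)}$ by a quantity of order $1/\gamma$. Combining these estimates through $(a+b)^2\leq 2a^2+2b^2$ and the upper bound $A_{k-1}\leq 4/(k+1)^2$ produces a bound of the announced shape $\tfrac{C_1}{(k+1)^2}\bigl(\tfrac{\kappa}{2}\|x_0-x^\star\|^2 + \tfrac{C_2}{\gamma^2}(f(x_0)-f^\star)\bigr)$. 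The only delicate point—and the main obstacle—is calibrating the absolute constants to land on exactly the coefficients $8$ and $4/\gamma^2$ appearing in the statement; this is a routine but slightly finicky bookkeeping exercise that parallels the strongly convex proof of Proposition~\ref{prop:convergence}, the difference being that $(\varepsilon_k)$ now decays polynomially rather than geometrically, so the key sum is controlled by an integral rather than a geometric series.
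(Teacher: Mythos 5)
Your strategy is structurally the same as the paper's: specialize Theorem~\ref{thm:outer-loop-12} with $\alpha_0=1$ and $q=0$ (so the $(1-\alpha_0)$ term vanishes and $\gamma_0=\kappa$), pair the upper bound $A_{k-1}\leq 4/(k+1)^2$ with a lower bound on $A_{j-1}$, control the resulting series by the integral $\int_1^\infty x^{-(1+\gamma/2)}\,\mathrm{d}x = 2/\gamma$, and conclude with $(a+b)^2\leq 2(a^2+b^2)$. The genuine gap is in your lower bound on $A_{j-1}$: the estimate $1/\sqrt{A_{j-1}}\leq j$ obtained from $u_k\leq 1+u_{k-1}$ is asymptotically a factor $\sqrt{2}$ too weak, and this loss is not recoverable by ``bookkeeping''. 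Running your own numbers: $3\sum_{j=1}^k\sqrt{\varepsilon_j/A_{j-1}} \leq 3\cdot\frac{\sqrt{2}}{3}\sqrt{f(x_0)-f^\star}\sum_{j=1}^k (j+1)^{-(1+\gamma/2)} \leq \frac{2\sqrt{2}}{\gamma}\sqrt{f(x_0)-f^\star}$, and after squaring you land on $\frac{8}{(k+1)^2}\left(\frac{\kappa}{2}\Vert x_0-x^\star\Vert^2 + \frac{8}{\gamma^2}(f(x_0)-f^\star)\right)$, i.e.\ $8/\gamma^2$ where the proposition claims $4/\gamma^2$. Since your bound on the error series is essentially tight as $\gamma \to 0$, no rearrangement of constants starting from $A_{j-1}\geq 1/j^2$ can reach the stated bound; as written you prove a strictly weaker inequality.

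The fix is the sharper bound $A_{j-1}\geq 2/(j+1)^2$, which is exactly the paper's Lemma~\ref{lemma:Ak}. The paper proves it by a different mechanism than your recursion: from $A_k=\alpha_k^2\leq 4/(k+2)^2$ (Lemma~\ref{lem:nesterov Ak}) it deduces $\alpha_i\leq 2/(i+2)$ and then telescopes the product, $A_k=\prod_{i=1}^k(1-\alpha_i)\geq\prod_{i=1}^k\left(1-\frac{2}{i+2}\right)=\frac{2}{(k+1)(k+2)}\geq\frac{2}{(k+2)^2}$. Alternatively, your recursion $u_k^2-u_k=u_{k-1}^2$ does yield the same constant if you strengthen the induction hypothesis from $u_k\leq k+1$ to $u_k\leq(k+2)/\sqrt{2}$: the inductive step reduces to checking $\sqrt{1+2(k+1)^2}\leq\sqrt{2}(k+2)-1$, equivalently $\sqrt{2}(k+2)\leq 2k+3$, which holds for all $k\geq 0$, and the base case is $1\leq\sqrt{2}$. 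With $1/\sqrt{A_{j-1}}\leq(j+1)/\sqrt{2}$ in place of $1/\sqrt{A_{j-1}}\leq j$, your remaining steps produce the error term $\frac{2}{\gamma}\sqrt{f(x_0)-f^\star}$ and hence exactly the claimed constants $8$ and $4/\gamma^2$.
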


\subsubsection{Analysis for Criterion~(\ref{C2})}
Then, we may now analyze the convergence of Catalyst under criterion~(\ref{C2}), which offers similar guarantees as~(\ref{C1}), as far as the outer loop is concerned.
\begin{theorem}[\bfseries Convergence of outer-loop for criterion (\ref{C2})]\label{thm:outer-loop-c2}
   Consider the sequences $(x_k)_{k \geq 0}$ and $(y_k)_{k \geq 0}$ produced by Algorithm~\ref{alg:catalyst1}, assuming that $x_k$ is in $\gdeltak(y_{k-1})$ for all $k \geq 1$ and $\delta_k$ in $(0,1)$.
   Then, 
   \begin{displaymath}
      f(x_k)-f^\star \leq \frac{A_{k-1}}{\prod_{j=1}^k \left(1-\delta_j \right)  } \left( (1-\alpha_0)(f(x_0)-f^\star) + \frac{\gamma_0}{2}\|x_0-x^\star\|^2\right),
   \end{displaymath}
   where $\gamma_0$ and $(A_k)_{k \geq 0}$ are defined in (\ref{eq:Ak}) in Theorem~\ref{thm:outer-loop-12}.
\end{theorem}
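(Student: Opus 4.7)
The plan is to reuse essentially all of the outer-loop machinery already built for criterion (\ref{C1}) in the proof of Theorem~\ref{thm:outer-loop-12}, and then exploit the special structure of (\ref{C2}) for a clean cancellation. Note that the approximate sufficient descent inequality (\ref{eq:inexact}) and the resulting one-step recursion (\ref{eq:recursion})
\begin{displaymath}
   \frac{S_k}{1-\alpha_k} \leq S_{k-1} + \frac{\varepsilon_k}{\min\{1,\theta_k\}} - \frac{\kappa}{2}\|x_k - y_{k-1}\|^2 + \frac{(\kappa+\mu)\alpha_{k-1}^2 \theta_k}{2}\|x^\star - v_k\|^2
\end{displaymath}
were derived using only the fact that $h_k(x_k) - h_k^\star \leq \varepsilon_k$, without using any specific form of $\varepsilon_k$, and with $\theta_k > 0$ a free parameter. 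So I inherit this recursion with the generic $\varepsilon_k \defin h_k(x_k) - h_k^\star$.

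Under (\ref{C2}) the inner-loop error is not an absolute constant but is proportional to $\|y_{k-1}-x_k\|^2$, namely $\varepsilon_k \leq \tfrac{\delta_k\kappa}{2}\|y_{k-1}-x_k\|^2$ with $\delta_k\in(0,1)$. The key idea is to choose $\theta_k = \delta_k \leq 1$, so that $\min\{1,\theta_k\}=\delta_k$ and the two quadratic-in-$\|x_k-y_{k-1}\|$ terms in the recursion cancel exactly:
\begin{displaymath}
   \frac{\varepsilon_k}{\delta_k} - \frac{\kappa}{2}\|x_k - y_{k-1}\|^2 \;\leq\; \frac{\kappa}{2}\|x_k-y_{k-1}\|^2 - \frac{\kappa}{2}\|x_k-y_{k-1}\|^2 \;=\; 0 .
\end{displaymath}
What remains of the recursion is the clean inequality $\frac{S_k}{1-\alpha_k} \leq S_{k-1} + \frac{(\kappa+\mu)\alpha_{k-1}^2\,\delta_k}{2}\|x^\star - v_k\|^2$.

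Next I would absorb the last residual term using the identity (\ref{eq:Sk/1-alphak}), which gives $\tfrac{(\kappa+\mu)\alpha_{k-1}^2}{2}\|x^\star-v_k\|^2 \leq \tfrac{S_k}{1-\alpha_k}$. Plugging this in yields
\begin{displaymath}
   \frac{S_k}{1-\alpha_k} \;\leq\; S_{k-1} + \delta_k \,\frac{S_k}{1-\alpha_k}, \qquad \text{that is,}\qquad S_k \;\leq\; \frac{1-\alpha_k}{1-\delta_k}\, S_{k-1} .
\end{displaymath}
Unrolling this geometric recursion produces $S_k \leq \frac{A_k}{\prod_{j=1}^k(1-\delta_j)}\, S_0$, and using $f(x_k)-f^\star \leq \frac{S_k}{1-\alpha_k} = \frac{S_k}{A_k/A_{k-1}}$ together with the explicit value $S_0 = (1-\alpha_0)(f(x_0)-f^\star) + \tfrac{\gamma_0}{2}\|x_0-x^\star\|^2$ (here I use $v_0=x_0$ and the identity $\alpha_0\kappa\eta_0 = \gamma_0$, which follows from the definitions of $\eta_0$ and $\gamma_0$) delivers the stated bound.

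The only delicate point I foresee is bookkeeping the consistency of $\theta_k=\delta_k\in(0,1)$ with the two cases ($\theta_k\leq 1$ vs $\theta_k\geq 1$) of the approximate sufficient descent step in the proof of Theorem~\ref{thm:outer-loop-12}; everything else is algebraic manipulation that carries over verbatim. The resulting bound is strikingly simpler than the (\ref{C1}) case because, under the relative criterion, the inner error is automatically scaled by the object $\|y_{k-1}-x_k\|^2$ that appears with the right sign in the descent inequality, which removes the need for the Young-type square-root balancing used in Theorem~\ref{thm:outer-loop-12}.
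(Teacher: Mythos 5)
Your proposal is correct and takes essentially the same route as the paper's own proof: both reinterpret criterion (\ref{C2}) as (\ref{C1}) with the adaptive error $\varepsilon_k = \frac{\delta_k\kappa}{2}\|x_k-y_{k-1}\|^2$, plug this into the recursion (\ref{eq:recursion}), choose $\theta_k=\delta_k<1$ so that the $\|x_k-y_{k-1}\|^2$ terms cancel, absorb the residual $\frac{(\kappa+\mu)\alpha_{k-1}^2\delta_k}{2}\|x^\star-v_k\|^2$ term via the identity (\ref{eq:Sk/1-alphak}), and unroll the geometric recursion $S_k \leq \frac{1-\alpha_k}{1-\delta_k}S_{k-1}$. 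Your worry about the two cases $\theta_k\leq 1$ versus $\theta_k\geq 1$ is a non-issue, since (\ref{eq:recursion}) was derived uniformly in $\theta_k>0$ through the $\min\{1,\theta_k\}$ factor, exactly as you use it.
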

\begin{proof}
   Remark that $x_k$ in~$\gdeltak(y_{k-1})$ is equivalent to $x_k$ in~$\pepsk(y_{k-1})$ with an adaptive error $\varepsilon_k = \frac{\delta_k \kappa}{2}\|x_k-y_{k-1}\|^2$. All steps of the proof of Theorem~\ref{thm:outer-loop-12} hold for such values of $\varepsilon_k$ and from~(\ref{eq:recursion}), we may deduce
   \begin{equation*}
   \frac{S_{k}}{1-\alpha_k} - \frac{(\kappa+\mu)\alpha_{\kmone}^2\theta_{k}}{2}\|x^\star- v_{k}\|^2 \leq  S_{\kmone}  +\left(\frac{\delta_{k} \kappa}{2\min\{1,\theta_{k}\}} -\frac{\kappa}{2}\right)\|x_{k}-y_{\kmone}\|^2 . 
\end{equation*}
   Then, by choosing $\theta_k=\delta_k < 1$, the quadratic term on the right disappears and the left-hand side is greater than $\frac{1-\delta_k}{1-\alpha_k}S_k$. Thus,
   \begin{equation*}
      S_{k} \leq  \frac{1-\alpha_{k}}{1-\delta_k} S_{\kmone}  \leq \frac{A_k}{\prod_{j=1}^{k}\left(1-\delta_j\right)} S_0,
\end{equation*}
which is sufficient to conclude since $(1-\alpha_k)(f(x_k)-f^*) \leq S_k$.
\end{proof}

The next propositions specialize Theorem~\ref{thm:outer-loop-c2} for specific choices of sequence $(\delta_k)_{k \geq 0}$ in the strongly and non strongly convex cases.
\begin{proposition}[\bfseries $\mu$-strongly convex case, criterion~(\ref{C2})]~\label{prop:convergence2}\newline
In Algorithm~\ref{alg:catalyst1}, choose  $\alpha_0 = \sqrt{q}$ and
$$  \delta_k = \frac{\sqrt{q}}{2-\sqrt{q}}.$$
Then, the sequence of iterates $(x_k)_{k \geq 0}$
satisfies
\begin{equation*} 
f(x_{k}) -f^* \leqslant 2\left (1 - \frac{\sqrt{q}}{2} \right )^k \left (f(x_0)-f^*\right ). % + \frac{\mu}{2} \Vert x_0 -x^* \Vert^2  \right ).
\end{equation*}
\end{proposition}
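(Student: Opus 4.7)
The statement follows by specializing Theorem~\ref{thm:outer-loop-c2} to the chosen values $\alpha_0=\sqrt{q}$ and $\delta_k=\frac{\sqrt{q}}{2-\sqrt{q}}$, and then bounding separately the three ingredients that appear on the right-hand side: the factor $A_{k-1}$, the product $\prod_{j=1}^{k}(1-\delta_j)$, and the constant $(1-\alpha_0)(f(x_0)-f^\star) + \frac{\gamma_0}{2}\|x_0-x^\star\|^2$. None of the individual pieces look hard; the only thing to watch out for is that the constants line up cleanly to give the clean prefactor $2$ in front of $(1-\tfrac{\sqrt q}{2})^k$.

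\textbf{First step: compute $\gamma_0$ and bound $A_{k-1}$.} With $\alpha_0=\sqrt{q}$, the definition $\gamma_0=(\kappa+\mu)\alpha_0(\alpha_0-q)$ simplifies, using $(\kappa+\mu)q=\mu$, to $\gamma_0=\mu(1-\sqrt{q})$. In particular $\gamma_0\leq\mu$, so Lemma~\ref{lem:nesterov Ak} applies and gives the linear bound $A_{k-1}\leq(1-\sqrt{q})^{k-1}$.

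\textbf{Second step: compute $\prod_{j=1}^k(1-\delta_j)$.} Since $\delta_k$ is constant, $1-\delta_k=\frac{2(1-\sqrt q)}{2-\sqrt q}$ and hence $\prod_{j=1}^k(1-\delta_j)=\bigl(\tfrac{2(1-\sqrt q)}{2-\sqrt q}\bigr)^k$. Combining with the first step,
\begin{equation*}
\frac{A_{k-1}}{\prod_{j=1}^k(1-\delta_j)} \;\leq\; \frac{(1-\sqrt q)^{k-1}(2-\sqrt q)^k}{2^k(1-\sqrt q)^k} \;=\; \frac{1}{1-\sqrt q}\left(1-\frac{\sqrt q}{2}\right)^{k}.
\end{equation*}

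\textbf{Third step: bound the initial constant.} Using $1-\alpha_0=1-\sqrt{q}$, the expression $\gamma_0=\mu(1-\sqrt q)$, and the $\mu$-strong convexity inequality $\frac{\mu}{2}\|x_0-x^\star\|^2\leq f(x_0)-f^\star$, one gets
\begin{equation*}
(1-\alpha_0)(f(x_0)-f^\star)+\tfrac{\gamma_0}{2}\|x_0-x^\star\|^2 \;\leq\; 2(1-\sqrt q)(f(x_0)-f^\star).
\end{equation*}
Plugging the bounds of steps two and three into Theorem~\ref{thm:outer-loop-c2} yields $f(x_k)-f^\star\leq 2\bigl(1-\tfrac{\sqrt q}{2}\bigr)^{k}(f(x_0)-f^\star)$, as claimed. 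The $(1-\sqrt q)$ factor cancels exactly, which is really the reason the answer looks as clean as it does; this cancellation is the only place where a miscalculation would hurt, but it is forced by $\gamma_0=\mu(1-\sqrt q)$ and the form of $1-\delta_k$, so there is no substantive obstacle.
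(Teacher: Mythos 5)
Your route is the same as the paper's: specialize Theorem~\ref{thm:outer-loop-c2} to $\alpha_0=\sqrt{q}$ and the constant $\delta_k$, compute $\gamma_0=\mu(1-\sqrt{q})$, bound the initial constant by $2(1-\sqrt{q})(f(x_0)-f^\star)$ via strong convexity, and let the factor $(1-\sqrt{q})$ cancel. Your second and third steps and the final combination are all correct. The one genuine flaw is in your first step: you justify $A_{k-1}\leq(1-\sqrt{q})^{k-1}$ by invoking Lemma~\ref{lem:nesterov Ak} ``since $\gamma_0\leq\mu$'', but the hypothesis of that lemma is $\gamma_0\geq\mu$, not $\gamma_0\leq\mu$, and here $\gamma_0=\mu(1-\sqrt{q})<\mu$ whenever $\mu,\kappa>0$. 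So the lemma is not applicable, and as written this step fails.

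The fix is immediate and is exactly what the paper does: with $\alpha_0=\sqrt{q}$, the recursion~(\ref{eq:alpha}) admits the constant solution $\alpha_k=\sqrt{q}$ for all $k\geq 0$ (check: $q=(1-\sqrt{q})q+q\sqrt{q}$), hence
\begin{equation*}
A_{k-1}=\prod_{j=1}^{k-1}(1-\alpha_j)=(1-\sqrt{q})^{k-1}
\end{equation*}
holds exactly, with no appeal to Lemma~\ref{lem:nesterov Ak}; this is also how the paper argues in the analogous strongly convex case of Proposition~\ref{prop:convergence}. With this substitution, your computation $\frac{A_{k-1}}{\prod_{j=1}^{k}(1-\delta_j)}=\frac{1}{1-\sqrt{q}}\left(1-\frac{\sqrt{q}}{2}\right)^{k}$ and your bound on the initial constant combine to give the stated inequality, so the rest of your argument goes through unchanged.
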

\begin{proof}
	This is a direct application of Theorem~\ref{thm:outer-loop-c2} by remarking  that $\gamma_0 = (1-\sqrt{q})\mu$ and
	$$S_0 = (1-\sqrt{q})\left (f(x_0)-f^*+ \frac{\mu}{2} \Vert x^*-x_0 \Vert^2 \right ) \leq 2(1-\sqrt{q})(f(x_0)-f^*). $$
	And $\alpha_k = \sqrt{q}$ for all $k\geq 0$ leading to
	$$\frac{1-\alpha_k}{1- {\delta_k}} = 1- \frac{\sqrt{q}}{2}$$
\end{proof}
\begin{proposition}[\bfseries Convex case, criterion~(\ref{C2})]~\label{prop:convergence cvx2}\newline
   When~$\mu=0$, choose $\alpha_0= 1$ and 
$$  \delta_k =  \frac{1}{(k+1)^2}.$$
Then, Algorithm~\ref{alg:catalyst1} generates iterates $(x_k)_{k \geq 0}$ such that
\begin{equation} 
   f(x_{k}) -f^* \leqslant \frac{4\kappa \Vert x_0 -x^* \Vert^2 }{(k+1)^2}. \label{eq:convc2}
\end{equation}
\end{proposition}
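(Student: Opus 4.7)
The plan is to obtain the bound as a direct corollary of Theorem~\ref{thm:outer-loop-c2}, specialized to $\mu = 0$, by evaluating the three factors that appear on its right-hand side under the prescribed choices $\alpha_0 = 1$ and $\delta_k = 1/(k+1)^2$.

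First I would simplify the bracket in Theorem~\ref{thm:outer-loop-c2}. With $\alpha_0 = 1$ the term $(1-\alpha_0)(f(x_0) - f^\star)$ vanishes, and since $q = \mu/(\mu+\kappa) = 0$ the initial coefficient becomes $\gamma_0 = (\kappa + \mu)\alpha_0(\alpha_0 - q) = \kappa$. Thus Theorem~\ref{thm:outer-loop-c2} reduces to
\begin{equation*}
f(x_k) - f^\star \;\leq\; \frac{A_{k-1}}{\prod_{j=1}^{k}(1-\delta_j)} \cdot \frac{\kappa}{2}\,\|x_0 - x^\star\|^2.
\end{equation*}

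Next I would bound $A_{k-1}$ via Lemma~\ref{lem:nesterov Ak}. Since $\gamma_0 = \kappa \geq 0 = \mu$, the lemma applies and gives $A_{k-1} \leq 4/(2 + (k-1)\sqrt{\gamma_0/\kappa})^2 = 4/(k+1)^2$. For the denominator, the choice $\delta_j = 1/(j+1)^2$ makes the product telescope:
\begin{equation*}
\prod_{j=1}^{k}\left(1 - \frac{1}{(j+1)^2}\right) \;=\; \prod_{j=1}^{k} \frac{j(j+2)}{(j+1)^2} \;=\; \frac{1}{k+1} \cdot \frac{k+2}{2} \;=\; \frac{k+2}{2(k+1)},
\end{equation*}
so in particular the product is at least $1/2$.

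Combining these two estimates yields
\begin{equation*}
f(x_k) - f^\star \;\leq\; \frac{4/(k+1)^2}{(k+2)/(2(k+1))} \cdot \frac{\kappa}{2}\,\|x_0 - x^\star\|^2 \;=\; \frac{4\kappa\,\|x_0 - x^\star\|^2}{(k+1)(k+2)},
\end{equation*}
which is even slightly stronger than the stated bound \eqref{eq:convc2}, and implies it since $k+2 \geq k+1$. There is no real obstacle here: everything reduces to plugging the chosen parameters into Theorem~\ref{thm:outer-loop-c2} and recognizing the telescoping structure of $\prod(1-1/(j+1)^2)$; the only subtle point to double-check is that the choice $\delta_k = 1/(k+1)^2 \in (0,1)$ satisfies the hypothesis of the theorem, which it clearly does for every $k \geq 1$.
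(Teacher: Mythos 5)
Your proof is correct and follows essentially the same route as the paper's: specialize Theorem~\ref{thm:outer-loop-c2} with $\alpha_0=1$ (so $\gamma_0=\kappa$ and the $f(x_0)-f^\star$ term vanishes), bound $A_{k-1}\leq 4/(k+1)^2$ via Lemma~\ref{lem:nesterov Ak}, and evaluate the telescoping product $\prod_{j=1}^{k}\bigl(1-1/(j+1)^2\bigr)=\frac{k+2}{2(k+1)}$. The only difference is cosmetic: you keep the exact product value and obtain the marginally tighter denominator $(k+1)(k+2)$, whereas the paper lower-bounds the product by $1/2$ to get the stated $(k+1)^2$.
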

\begin{proof}
 This is a direct application of Theorem~\ref{thm:outer-loop-c2} by remarking that $\gamma_0 = \kappa$, $A_k \leq \frac{4}{(k+2)^2}$ (Lemma~\ref{lem:nesterov Ak}) and 
 $$\prod_{i=1}^k \left (1 - \frac{1}{(i+1)^2} \right ) = \prod_{i=1}^k \frac{i(i+2)}{(i+1)^2}  = \frac{k+2}{2(k+1)} \geq \frac{1}{2}.   $$ 
\end{proof}
\begin{remark}
   In fact, the choice of $\delta_k$ can be improved by taking $\delta_k = \frac{1}{(k+1)^{1+\gamma}}$ for any $\gamma>0$, which comes at the price of a larger constant in~(\ref{eq:convc2}). 
\end{remark}

\subsection{Analysis of Warm-start Strategies for the Inner Loop}\label{subsec:inner_loop}
In this section, we study the complexity of solving the subproblems with the
proposed warm start strategies. The only assumption we make on the
optimization method $\mtd$ is that it enjoys linear convergence when
solving a strongly convex problem---meaning, it satisfies either~(\ref{eq:assumption}) or
its randomized variant~(\ref{eq:assumption_random}). 
Then, the following lemma gives us a relation between the accuracy required to solve the sub-problems
and the corresponding complexity.

\begin{lemma}[\bfseries Accuracy vs. complexity]\label{lemma:accuracy}
   Let us consider a strongly convex objective $h$ and  a linearly convergent method $\mtd$ generating a sequence of iterates $(z_t)_{t\geq 0}$ for minimizing~$h$. Consider the complexity $T(\varepsilon) = \inf \{t \geq 0, h(z_t)-h^* \leq \varepsilon \}$, where $\varepsilon>0$ is the target accuracy and $h^\star$ is the minimum value of~$h$. Then,
\begin{enumerate}
   \item If $\mtd$ is deterministic and satisfies~(\ref{eq:assumption}), we have 
$$ T(\varepsilon) \leq \frac{1}{\tau_{\mtd}} \log \left ( \frac{C_{\mtd} (h(z_0) -h^*)}{\varepsilon}  \right ). $$ 
\item If $\mtd$ is randomized and satisfies~(\ref{eq:assumption_random}), we have
$$  \mathbb{E}[T(\varepsilon)] \leq \frac{1}{\tau_{\mtd}} \log \left ( \frac{2C_{\mtd} (h(z_0) -h^*)}{\tau_{\mtd}  \varepsilon}  \right ) +1$$
\end{enumerate}
\end{lemma}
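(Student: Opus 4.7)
For the deterministic case, I would invert the linear convergence bound directly. The inequality $C_\mtd(1-\tau_\mtd)^t(h(z_0)-h^\star) \leq \varepsilon$ suffices to ensure $h(z_t) - h^\star \leq \varepsilon$, and solving for $t$ gives
\begin{equation*}
t \geq \frac{\log\bigl(C_\mtd(h(z_0)-h^\star)/\varepsilon\bigr)}{-\log(1-\tau_\mtd)}.
\end{equation*}
The elementary inequality $-\log(1-\tau_\mtd) \geq \tau_\mtd$ for $\tau_\mtd \in (0,1)$ then immediately yields the stated bound on $T(\varepsilon)$.

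For the randomized case, I would first turn the expectation bound~(\ref{eq:assumption_random}) into a tail bound by Markov's inequality: since $h(z_t) - h^\star \geq 0$,
\begin{equation*}
\mathbb{P}\bigl(h(z_t) - h^\star > \varepsilon\bigr) \leq \frac{\mathbb{E}[h(z_t) - h^\star]}{\varepsilon} \leq \frac{C_\mtd (1-\tau_\mtd)^t (h(z_0)-h^\star)}{\varepsilon}.
\end{equation*}
The event $\{T(\varepsilon) > t\}$ is contained in $\{h(z_t)-h^\star > \varepsilon\}$ (if $T(\varepsilon)>t$, then in particular the iterate at step $t$ has not yet reached the target accuracy), so the same bound applies to $\mathbb{P}(T(\varepsilon) > t)$. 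I would then expand $\mathbb{E}[T(\varepsilon)] = \sum_{t=0}^\infty \mathbb{P}(T(\varepsilon) > t)$ and split the sum at an integer threshold $N$ chosen near $\tau_\mtd^{-1} \log(2C_\mtd(h(z_0)-h^\star)/(\tau_\mtd \varepsilon))$: the first $N$ terms are bounded trivially by $1$, and the tail is bounded by the Markov estimate, which forms a geometric series of ratio $(1-\tau_\mtd)$. The threshold $N$ is tuned so that this tail sums to at most $1/2$, and the additive $+1$ then absorbs the ceiling slack incurred when rounding $N$ up to an integer.

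The main obstacle is calibrating the constants cleanly. One must track the ceiling when replacing the real-valued threshold by an integer, and handle the $\tau_\mtd$ in the denominator of $\sum_{t \geq 0}(1-\tau_\mtd)^t = 1/\tau_\mtd$. The factor of $2$ inside the logarithm in the stated bound is precisely what makes the tail contribution at most $1/2$ rather than $1$, leaving just enough room to absorb the integer rounding and recover the clean additive constant $+1$.
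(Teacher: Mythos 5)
Your proposal follows the same route as the paper: the deterministic half inverts the linear rate via $-\log(1-\tau_\mtd)\geq\tau_\mtd$, and the randomized half combines Markov's inequality, the inclusion $\{T(\varepsilon)>t\}\subseteq\{h(z_t)-h^\star>\varepsilon\}$, the tail-sum formula $\E[T(\varepsilon)]=\sum_{t\geq0}\p(T(\varepsilon)>t)$, and a split into a trivially-bounded block plus a geometric tail. All of those steps are sound (your inclusion is in fact stated more carefully than the paper's, which writes it as an equality).

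The gap is in the final accounting, which does not close as you describe. With your budget---tail at most $1/2$, plus ceiling slack of up to $1$ when rounding the threshold up to an integer---you obtain
\begin{equation*}
\E[T(\varepsilon)] \;\leq\; \lceil x\rceil + \tfrac{1}{2} \;\leq\; x + \tfrac{3}{2},
\qquad
x = \frac{1}{\tau_\mtd}\log\Bigl(\frac{2C_\mtd(h(z_0)-h^\star)}{\tau_\mtd\,\varepsilon}\Bigr),
\end{equation*}
since the two allowances sum to $3/2$, not $1$; so the argument as calibrated proves the lemma only with $+3/2$ in place of the stated $+1$. Your reading of where the factor $2$ comes from is also not the one that makes the constants work. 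In the paper's proof, the bound $(1-\tau_\mtd)^t\leq e^{-t\tau_\mtd}$ is applied first, the split point is chosen as $T_0=\frac{1}{\tau_\mtd}\log\bigl(\frac{1}{1-e^{-\tau_\mtd}}\frac{C_\mtd(h(z_0)-h^\star)}{\varepsilon}\bigr)$ so that the geometric tail (ratio $e^{-\tau_\mtd}$, sum $\frac{1}{1-e^{-\tau_\mtd}}$) equals exactly $1$---this is the entire source of the ``$+1$''---and only afterwards is $\frac{1}{1-e^{-\tau_\mtd}}\leq\frac{2}{\tau_\mtd}$ (i.e., $1-e^{-\tau}\geq\tau/2$ on $(0,1)$) used to turn $T_0$ into $\frac{1}{\tau_\mtd}\log\bigl(\frac{2C_\mtd(h(z_0)-h^\star)}{\tau_\mtd\varepsilon}\bigr)$. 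So the $2$ is the price of replacing $\frac{1}{1-e^{-\tau}}$ by $\frac{2}{\tau}$, not a device for halving the tail. If you substitute that choice of threshold, your argument goes through essentially verbatim and recovers the stated constants, up to the same treat-the-threshold-as-an-integer informality that the paper's own proof commits.
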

The proof of the deterministic case is straightforward and the proof of the randomized case is provided in Appendix~\ref{appendix:accuracy}.
From the previous result, a good initialization is essential for fast convergence. More precisely, it suffices to control the initialization
$  \frac{h(z_0)-h^*}{\varepsilon} $ in order to bound the number of iterations
$T(\varepsilon)$. For that purpose, we analyze the quality of various warm-start
strategies.

\subsubsection{Warm Start Strategies for Criterion~(\ref{C1})}
The next proposition characterizes the quality of initialization for~(\ref{C1}).
\begin{proposition}[\bfseries Warm start for criterion~(\ref{C1})] \label{prop:inner_loop:c1}
   Assume that $\mtd$ is linearly convergent for strongly convex problems with parameter $\tau_\mtd$ according to~(\ref{eq:assumption}), or according to~(\ref{eq:assumption_random}) in the randomized case.
   At iteration $k+1$ of Algorithm~\ref{alg:catalyst1}, given the previous iterate $x_k$ in~$\pepsk(y_{k-1})$, we consider the following function
   $$h_{k+1}(z)  = f(z) + \frac{\kappa}{2} \Vert z-y_k \Vert^2,$$
   which we minimize with~$\mtd$, producing a sequence $(z_t)_{t \geq 0}$. 
 Then, 
\begin{itemize}
	\item when $f$ is smooth, choose $z_0 = x_k + \frac{\kappa}{\kappa+\mu}(y_k-y_{k-1})$;
	\item when $f = f_0 +\psi$ is composite, choose $z_0 = [w_0]_\eta = \prox_{\eta \psi }(w_0 - \eta \nabla h_0(w_0))$ with $w_0 = x_k + \frac{\kappa}{\kappa+\mu}(y_k-y_{k-1})$, $\eta = \frac{1}{L+\kappa}$ and $h_0 =f_0+\frac{\kappa}{2} \Vert \cdot - y_k \Vert^2$.
\end{itemize}
   We also assume that we choose $\alpha_0$ and $(\varepsilon_k)_{k \geq 0}$ according to Proposition~\ref{prop:convergence} for $\mu > 0$, or Proposition~\ref{prop:convergence cvx} for $\mu=0$. Then,
   \begin{enumerate}
      \item if $f$ is $\mu$-strongly convex, $h_{k+1}(z_0)-h_{k+1}^\star \leq C \varepsilon_{k+1}$ where, 
         \begin{equation}
            C =  \frac{L+\kappa}{\kappa+\mu} \left(\frac{2}{1-\rho} + \frac{2592(\kappa+\mu)}{(1-\rho)^2(\sqrt{q}-\rho)^2 \mu}\right)~~~\text{if $f$ is smooth},  \label{eq:warmstart}
         \end{equation}
         or 
         \begin{equation}
            C = \frac{L+\kappa}{\kappa+\mu}\left(\frac{2}{1-\rho} + \frac{23328(L+\kappa)}{(1-\rho)^2(\sqrt{q}-\rho)^2 \mu}\right)~~~\text{if $f$ is composite}.  \label{eq:warmstart3}
         \end{equation}
      \item if $f$ is convex with bounded level sets, there exists a constant $B > 0$ that only depends on $f, x_0$ and~$\kappa$ such that
         \begin{equation}
            h_{k+1}(z_0)-h_{k+1}^\star \leq B.  \label{eq:warmstart2}
         \end{equation}
   \end{enumerate}
\end{proposition}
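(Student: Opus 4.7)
The plan is to bound $h_{k+1}(z_0) - h_{k+1}^\star$ by controlling the distance between the warm-start point and the exact proximal point $p(y_k) \defin \argmin_z h_{k+1}(z)$, then invoke the smoothness of $h_{k+1}$ (or an analog for the composite case). Since $h_{k+1}$ is $(L+\kappa)$-smooth in the smooth case, we have the classical quadratic upper bound $h_{k+1}(z_0) - h_{k+1}^\star \leq \tfrac{L+\kappa}{2}\|z_0 - p(y_k)\|^2$. In the composite case, the point $z_0$ is a proximal gradient step from $w_0$, so the standard descent lemma combined with convexity of $h_0$ gives $h_{k+1}(z_0) - h_{k+1}^\star \leq \tfrac{L+\kappa}{2}\|w_0 - p(y_k)\|^2$. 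In both cases the problem reduces to bounding the distance between the ``candidate'' extrapolated point and the true proximal point.

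The key decomposition is
\begin{equation*}
w_0 - p(y_k) \;=\; \bigl(x_k - p(y_{k-1})\bigr) \;+\; \Bigl(p(y_{k-1}) + \tfrac{\kappa}{\kappa+\mu}(y_k - y_{k-1}) - p(y_k)\Bigr).
\end{equation*}
I would bound the first term using criterion~(\ref{C1}) and $(\mu+\kappa)$-strong convexity of $h_k$: $\|x_k - p(y_{k-1})\|^2 \leq \tfrac{2\varepsilon_k}{\mu+\kappa}$. For the second term, I would exploit the fact that the proximal operator $p$ of $f/\kappa$ is $\tfrac{\kappa}{\mu+\kappa}$-Lipschitz when $f$ is $\mu$-strongly convex (and $1$-Lipschitz when $\mu=0$), yielding $\|p(y_k) - p(y_{k-1})\| \leq \tfrac{\kappa}{\mu+\kappa}\|y_k - y_{k-1}\|$, so the second term is bounded in norm by $\tfrac{2\kappa}{\mu+\kappa}\|y_k - y_{k-1}\|$ via the triangle inequality. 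After squaring using $(a+b)^2 \leq 2a^2 + 2b^2$, this gives $\|w_0 - p(y_k)\|^2 \leq \tfrac{4\varepsilon_k}{\mu+\kappa} + \tfrac{8\kappa^2}{(\mu+\kappa)^2}\|y_k - y_{k-1}\|^2$.

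The main obstacle is translating $\|y_k - y_{k-1}\|^2$ into a bound proportional to $\varepsilon_{k+1}$. In the strongly convex case, I would apply the triangle inequality $\|y_k - y_{k-1}\| \leq \|y_k - x^\star\| + \|y_{k-1} - x^\star\|$, then express each $\|y_j - x^\star\|$ in terms of $\|x_j - x^\star\|$ and $\|x_{j-1} - x^\star\|$ using the Nesterov extrapolation $y_j = x_j + \beta_j(x_j - x_{j-1})$ with $\beta_j \in (0,1)$. Strong convexity of $f$ gives $\|x_j - x^\star\|^2 \leq 2(f(x_j) - f^\star)/\mu$, and Proposition~\ref{prop:convergence} provides geometric decay $f(x_j) - f^\star \leq \tfrac{8(1-\rho)^{j+1}}{(\sqrt{q}-\rho)^2}(f(x_0)-f^\star)$. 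Combining these, $\|y_k - y_{k-1}\|^2$ is bounded by a constant times $(1-\rho)^{k-1}(f(x_0)-f^\star)/\mu$, and since $\varepsilon_{k+1} \propto (1-\rho)^{k+1}(f(x_0)-f^\star)$ we can re-express the bound as a multiple of $\varepsilon_{k+1}/((1-\rho)^2\mu)$. Assembling the pieces and multiplying by $(L+\kappa)/2$ yields the constant $C$ displayed in~(\ref{eq:warmstart}); the composite case~(\ref{eq:warmstart3}) proceeds identically but carries an additional $(L+\kappa)/(\mu+\kappa)$ factor reflecting the extra prox-gradient step.

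For the non-strongly convex case, the same decomposition applies but without geometric decay. I would instead invoke Proposition~\ref{prop:convergence cvx} to conclude $f(x_k) \to f^\star$, so that all iterates $x_k$ remain in the (bounded) sublevel set $\{f \leq f(x_0)\}$. This yields uniform bounds on $\|x_k - x^\star\|$, hence on $\|y_k - y_{k-1}\|$, as well as a uniform bound on $\varepsilon_k \leq \tfrac{2(f(x_0)-f^\star)}{9}$. The resulting upper bound on $h_{k+1}(z_0)-h_{k+1}^\star$ depends only on $f$, $x_0$, and $\kappa$, providing the constant $B$ in~(\ref{eq:warmstart2}).
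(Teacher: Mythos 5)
Your proposal is correct in substance and follows the same architecture as the paper's proof: reduce the warm-start gap to the squared distance between the extrapolated point and $p(y_k)$, split that distance into $\|x_k - p(y_{k-1})\|$ (controlled by criterion~(\ref{C1}) and the $(\mu+\kappa)$-strong convexity of $h_k$) plus a proximal-displacement term, and convert $\|y_k-y_{k-1}\|^2$ into a multiple of $\varepsilon_{k+1}$ via Proposition~\ref{prop:convergence} (or, for $\mu=0$, into a uniform constant via Proposition~\ref{prop:convergence cvx} and the bounded level sets). Two local deviations deserve comment. First, for the displacement term you use the $\frac{\kappa}{\kappa+\mu}$-Lipschitz continuity of $p$ plus a triangle inequality, which after squaring yields $\frac{4\kappa^2}{(\kappa+\mu)^2}\|y_k-y_{k-1}\|^2$; the paper instead expands the square and cancels the cross term using the coerciveness inequality $\frac{\kappa}{\kappa+\mu}\langle y-x,\, p(y)-p(x)\rangle \geq \|p(y)-p(x)\|^2$, obtaining the bound $\|y_k-y_{k-1}\|^2$ with no factor $4$. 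Your route therefore multiplies the second term of the constant in~(\ref{eq:warmstart}) by $\frac{4\kappa^2}{(\kappa+\mu)^2}$, which exceeds $1$ whenever $\kappa > \mu$ (the typical regime), so the statement with its literal constant $2592$ is not recovered---the result still holds with a constant at most four times larger, and nothing downstream (the $\Otilde(1/\tau_{\mtd})$ inner complexity) is affected, but to get the paper's constant you must replace Lipschitz-plus-triangle by the cross-term cancellation. Second, in the composite case your use of Lemma 2.3 of \citet{fista} to obtain $h_{k+1}(z_0)-h_{k+1}^\star \leq \frac{L+\kappa}{2}\|w_0 - p(y_k)\|^2$ is valid and is in fact simpler and tighter than the paper's argument, which goes through the gradient mapping and incurs the larger constant $23328$ and the extra $(L+\kappa)$ factor in~(\ref{eq:warmstart3}); your bound implies~(\ref{eq:warmstart3}), and it is the same tool the paper uses later for criterion~(\ref{C2}). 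This makes your closing remark that the composite case ``carries an additional $(L+\kappa)/(\kappa+\mu)$ factor'' inconsistent with your own method---no such factor arises from the descent lemma. One last small point: you assert $\beta_j \in (0,1)$ without justification, while your bound on $\|y_k-y_{k-1}\|$ relies on it; the paper verifies $\beta_k \leq 1$ by a one-line computation from the recursion~(\ref{eq:alpha}), which you should include.
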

\begin{proof}
   We treat the smooth and composite cases separately.
 \paragraph{Smooth and strongly-convex case.}
 When $f$ is smooth, by the gradient Lipschitz assumption, 
$$ h_{k+1}(z_0) -h_{k+1}^* \leq \frac{(L+\kappa)}{2} \Vert z_0 - p(y_k)\Vert^2.$$
Moreover, 
\begin{align*}
\Vert z_0 -  p(y_k) \Vert^2 & = \left \Vert x_k+ \frac{\kappa}{\kappa+\mu}(y_k-y_{k-1}) -p(y_k) \right \Vert^2  \\
				    & = \left \Vert x_k -p(y_{k-1})+ \frac{\kappa}{\kappa+\mu}(y_k-y_{k-1}) - (p(y_k)-p(y_{k-1})) \right \Vert^2 \\
				    & \leq  2  \Vert x_k -p(y_{k-1}) \Vert^2 + 2\left \Vert \frac{\kappa}{\kappa+\mu}(y_k-y_{k-1}) - (p(y_k)-p(y_{k-1})) \right \Vert^2.
\end{align*}
Since $x_k$ is in $p^{\varepsilon_k} (y_{k-1})$, we may control the first quadratic term on the right by noting that
   $$\Vert x_k - p(y_{k-1}) \Vert^2 \leq \frac{2}{\kappa+\mu}(h_k(x_k)-h_k^\star) \leq  \frac{2\epsilon_k}{\kappa+\mu}. $$
Moreover, by the coerciveness property of the proximal operator, 
$$ \left \Vert \frac{\kappa}{\kappa+\mu} (y_k-y_{k-1}) - (p(y_k)-p(y_{k-1})) \right \Vert^2 \leq \Vert y_k -y_{k-1} \Vert^2, $$
see Appendix~\ref{appendix:nonexpansive} for the proof. As a consequence,
\begin{equation}\label{eq:hk}
\begin{split}
	 h_{k+1}(z_0) -h_{k+1}^* & \leq \frac{(L+\kappa)}{2} \Vert z_0 - p(y_k)\Vert^2  \\ 
	 %& \leq (L+\kappa) (\Vert x_k -p(y_{k-1}) \Vert^2 + \Vert y_k-y_{k-1} - (p(y_k)-p(y_{k-1})) \Vert^2 ),  \\
         & \leq 2 \frac{L+\kappa}{\mu+\kappa} \epsilon_k + (L+\kappa)  \Vert y_k-y_{k-1}\Vert^2, 
\end{split}
\end{equation}
   Then, we need to control the term $\|y_k-y_{k-1}\|^2$. Inspired by the proof of accelerated SDCA of~\citet{accsdca}, 
   \begin{equation*}
   \begin{split}
\Vert y_k - y_{k-1} \Vert  &= \Vert x_k + \beta_k (x_{k} - x_{k-1}) - x_{k-1} - \beta_{k-1} (x_{k-1} - x_{k-2}) \Vert \\
                               &\leqslant (1+\beta_{k}) \Vert x_{k} - x_{k-1} \Vert + \beta_{k-1} \Vert x_{k-1} - x_{k-2} \Vert \\
				     &\leqslant 3 \max \left \{ \Vert x_{k} - x_{k-1} \Vert, \Vert x_{k-1} - x_{k-2} \Vert  \right \},
   \end{split}
\end{equation*}
The last inequality was due to the fact that~$\beta_k \leq 1$.  
In fact,
\begin{displaymath}
   \beta_k^2 = \frac{\left(\alpha_{k-1} - \alpha_{k-1}^2\right)^2}{\left(\alpha_{k-1}^2 + \alpha_k\right)^2} = 
\frac{\alpha_{k-1}^2 + \alpha_{k-1}^4 - 2\alpha_{k-1}^3}{\alpha_{k}^2  + 2\alpha_k \alpha_{k-1}^2 + \alpha_{k-1}^4} = 
\frac{\alpha_{k-1}^2 + \alpha_{k-1}^4 - 2\alpha_{k-1}^3}{\alpha_{k-1}^2 + \alpha_{k-1}^4 + q\alpha_k + \alpha_k \alpha_{k-1}^2} \leq 1,
\end{displaymath}
where the last equality uses the relation $\alpha_{k}^2 + \alpha_{k} \alpha_{k-1}^2= \alpha_{k-1}^2 + q \alpha_{k}$ from (\ref{eq:alpha}).
Then,
$$ \Vert x_{k} - x_{k-1} \Vert \leqslant  \Vert x_{k}-x^* \Vert + \Vert x_{k-1}-x^* \Vert,$$
and by strong convexity of~$f$
$$  \frac{\mu}{2}\Vert x_k - x^* \Vert^2 \leqslant f(x_k) -f^* \leqslant \frac{36}{(\sqrt{q}-\rho)^2} \epsilon_{k+1},$$
where the last inequality is obtained from Proposition~\ref{prop:convergence}. As a result, 
\begin{align*}
\Vert y_{k} - y_{k-1} \Vert^2 & \leqslant 9 \max \left \{ \Vert x_{k} - x_{k-1} \Vert^2, \Vert x_{k-1} - x_{k-2} \Vert^2  \right \} \\
					& \leqslant 36 \max \left \{ \Vert x_{k} - x^* \Vert^2, \Vert x_{k-1} - x^* \Vert^2, \Vert x_{k-2} - x^* \Vert^2   \right \} \\
                                        & \leqslant \frac{2592\,  \epsilon_{k-1} }{(\sqrt{q}-\rho)^2 \mu}  .
\end{align*}
   Since $\epsilon_{k+1}=(1-\rho)^2\epsilon_{k-1}$, we may now obtain~(\ref{eq:warmstart}) from~(\ref{eq:hk}) and the previous bound.

   \paragraph{Smooth and convex case.} When $\mu=0$, Eq.~(\ref{eq:hk}) is still valid but we need to control $\|y_k-y_{k-1}\|^2$ in a different way.
   From Proposition~\ref{prop:convergence cvx}, the sequence $(f(x_k))_{k \geq 0}$ is bounded by a constant that only depends on $f$ and $x_0$; therefore, by the bounded level set assumption, there exists $R > 0$ such that 
 $$\Vert x_k -x^* \Vert \leq R, \quad \text{for all } k \geq 0. $$
 Thus, following the same argument as the strongly convex case, we have 
 $$\Vert y_k -y_{k-1} \Vert \leq 36 R^2 \quad \text{for all } k \geq 1, $$
   and we obtain~(\ref{eq:warmstart2}) by combining the previous inequality with~(\ref{eq:hk}).

 \paragraph{Composite case.}
   By using the notation of gradient mapping introduced in~(\ref{eq:gradient_mapping}), we have~$z_0 = [w_0]_\eta$. By following similar steps as in the proof of Lemma~\ref{prop:abs}, the gradient mapping satisfies the following relation
   $$ h_{k+1}(z_0) - h_{k+1}^* \leq \frac{1}{2(\kappa+\mu)} \left \Vert \frac{1}{\eta} (w_0 - z_0) \right \Vert^2,$$
and it is sufficient to bound $\Vert w_0 -z_0 \Vert = \Vert w_0-[w_0]_\eta \Vert$. For that, we introduce 
$$[x_k]_\eta = \prox_{\eta \psi} ( x_k - \eta(\nabla f_0(x_k)+ \kappa(x_k - y_{k-1}) ) ). $$
Then,  
   \begin{equation}
      \Vert w_0 - [w_0]_\eta \Vert \leq  \Vert w_0 -x_k\Vert + \Vert x_k- [x_k]_\eta \Vert + \Vert [x_k]_\eta - [w_0]_\eta \Vert , \label{eq:hk1}
   \end{equation}
and we will bound each term on the right. By construction 
$$ \Vert w_0 -x_k \Vert = \frac{\kappa}{\kappa+\mu}\Vert y_k -y_{k-1} \Vert \leq \Vert y_k -y_{k-1} \Vert.$$    
Next, it is possible to show that the gradient mapping satisfies the following relation~\citep[see][]{nesterov2013gradient}, 
   $$ \frac{1}{2\eta}\Vert x_k - [x_k]_\eta \Vert^2 \leq h_k(x_k) - h_k^* \leq \epsilon_k.$$
And then since $[x_k]_\eta = \prox_{\eta \psi} ( x_k - \eta(\nabla f_0(x_k)+ \kappa(x_k - y_{k-1}) ) )$ and 
   $ [w_0]_\eta = \prox_{\eta \psi}(w_0 - \eta(\nabla f_0(w_0) + \kappa(w_0 - y_k)) ) $. 
   From the non expansiveness of the proximal operator, we have
   \begin{align*}
   \Vert [x_k]_\eta - [w_0]_\eta \Vert & \leq \Vert  x_k - \eta(\nabla f_0(x_k)+ \kappa(x_k - y_{k-1}) ) - \left ( w_0 - \eta(\nabla f_0(w_0) + \kappa(w_0 - y_k))  \right ) \Vert  \\
   			& \leq  \Vert  x_k - \eta(\nabla f_0(x_k)+ \kappa(x_k - y_{k-1}) ) - \left ( w_0 - \eta(\nabla f_0(w_0) + \kappa(w_0 - y_{k-1}))  \right ) \Vert \\ 
                        & ~~~~~~~~~+ \eta \kappa \Vert y_k-y_{k-1} \Vert \\
			& \leq \Vert x_k -w_0 \Vert + \eta \kappa \Vert y_k-y_{k-1} \Vert \\
			& \leq 2 \Vert y_k-y_{k-1} \Vert.
   \end{align*}
 We have used the fact that $ \Vert x - \eta \nabla h(x) - (y - \eta \nabla h(y)) \Vert \leq \Vert x-y \Vert$. By combining the previous inequalities with~(\ref{eq:hk1}), we finally have 
 $$ \Vert w_0 - [w_0]_\eta \Vert \leq \sqrt{ 2\eta \epsilon_k} +3\Vert y_k-y_{k-1}\Vert.   $$ 
Thus, by using the fact that $(a+b)^2 \leq 2a^2 + 2 b^2$ for all $a,b$,
$$ h_{k+1}(z_0) - h_{k+1}^* \leq \frac{L+\kappa}{\kappa+\mu} \left ( 2\epsilon_k + 9(L+\kappa) \Vert y_k -y_{k-1} \Vert^2 \right ), $$
and we can obtain~(\ref{eq:warmstart3}) and~(\ref{eq:warmstart2}) by upper-bounding $\|y_k-y_{k-1}\|^2$ in a similar way as in the smooth case, both when $\mu > 0$ and $\mu=0$.

\end{proof}

Finally, the complexity of the inner loop can be obtained directly by combining the previous proposition with Lemma~\ref{lemma:accuracy}.
\begin{corollary}[\bfseries Inner-loop Complexity for Criterion~(\ref{C1})]\label{cor:inner complexity1}
   Consider the setting of Proposition~\ref{prop:inner_loop:c1}; then, the sequence~$(z_t)_{t \geq 0}$ minimizing $h_{k+1}$ is such that 
   the complexity $T_{k+1} = \inf \{t \geq 0, h_{k+1}(z_t)-h_{k+1}^* \leq \varepsilon_{k+1} \}$ satisfies
   \begin{displaymath}
      T_{k+1} \leq \frac{1}{\tau_\mtd} \log \left ( C_\mtd C \right ) ~~~\text{if $\mu>0$} ~~~\Longrightarrow~~~ T_{k+1} = \Otilde\left(\frac{1}{\tau_{\mtd}}\right),
   \end{displaymath}
where $C$ is the constant defined in (\ref{eq:warmstart}) or in (\ref{eq:warmstart3}) for the composite case; and
   \begin{displaymath}
      T_{k+1} \leq  \frac{1}{\tau_\mtd} \log \left ( \frac{9C_\mtd (k+2)^{4+\eta} B}{2(f(x_0)-f^*)} \right )~~~\text{if $\mu=0$}~~~\Longrightarrow~~~ T_{k+1} = \Otilde\left(\frac{\log(k+2)}{\tau_{\mtd}}\right),
   \end{displaymath}
   where $B$ is the uniform upper bound in (\ref{eq:warmstart2}). Furthermore, when $\mtd$ is randomized, the expected complexity $\E[T_{k+1}]$ is similar, up to a factor $2/\tau_{\mtd}$ in the logarithm---see Lemma~\ref{lemma:accuracy}, and we have $\E[T_{k+1}]=\Otilde(1/\tau_{\mtd})$ when $\mu > 0$ and $\E[T_{k+1}]=\Otilde(\log(k+2)/\tau_{\mtd})$.
     Here, $\Otilde(.)$ hides logarithmic dependencies in parameters $\mu, L, \kappa, C_{\mtd}$, $\tau_{\mtd}$ and $f(x_0)-f^*$.
\end{corollary}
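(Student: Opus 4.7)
The plan is to combine the two ingredients already proved: Proposition~\ref{prop:inner_loop:c1}, which bounds the initial gap $h_{k+1}(z_0) - h_{k+1}^\star$ under the warm-start rules, and Lemma~\ref{lemma:accuracy}, which converts an initial gap into a complexity bound through the linear convergence rate $\tau_{\mtd}$. Since each step is essentially a substitution, no new estimates are needed; the work is just unfolding notation and plugging in the right quantities.

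First, for the strongly convex case, I would apply Proposition~\ref{prop:inner_loop:c1} to obtain $h_{k+1}(z_0) - h_{k+1}^\star \leq C\,\varepsilon_{k+1}$, where $C$ is the constant of~(\ref{eq:warmstart}) (smooth case) or~(\ref{eq:warmstart3}) (composite case). Substituting $h(z_0)-h^\star \leq C\varepsilon_{k+1}$ and target accuracy $\varepsilon = \varepsilon_{k+1}$ into the deterministic bound of Lemma~\ref{lemma:accuracy} yields
\begin{equation*}
T_{k+1} \leq \frac{1}{\tau_{\mtd}}\log\!\left(\frac{C_{\mtd}\,C\,\varepsilon_{k+1}}{\varepsilon_{k+1}}\right) = \frac{1}{\tau_{\mtd}}\log(C_{\mtd}\,C),
\end{equation*}
which is $\widetilde O(1/\tau_{\mtd})$ since $C$ depends only polynomially (in fact, at worst polynomially in $L,\kappa,\mu$) on the parameters hidden by $\widetilde O$.

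For the non-strongly convex case, the bounded level set assumption in Proposition~\ref{prop:inner_loop:c1} gives the uniform bound $h_{k+1}(z_0) - h_{k+1}^\star \leq B$. The target accuracy is now $\varepsilon_{k+1} = \frac{2(f(x_0)-f^\star)}{9(k+2)^{4+\gamma}}$ by the choice of Proposition~\ref{prop:convergence cvx}. Plugging into Lemma~\ref{lemma:accuracy} gives
\begin{equation*}
T_{k+1} \leq \frac{1}{\tau_{\mtd}}\log\!\left(\frac{C_{\mtd}\,B}{\varepsilon_{k+1}}\right) = \frac{1}{\tau_{\mtd}}\log\!\left(\frac{9\,C_{\mtd}\,(k+2)^{4+\gamma}\,B}{2(f(x_0)-f^\star)}\right),
\end{equation*}
which is $\widetilde O(\log(k+2)/\tau_{\mtd})$ since only the $(k+2)^{4+\gamma}$ factor depends on $k$ and it enters logarithmically.

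Finally, for the randomized variant I would repeat the same substitutions but with the second (randomized) bound of Lemma~\ref{lemma:accuracy}; the only difference is that $C_{\mtd}$ is replaced by $2C_{\mtd}/\tau_{\mtd}$ inside the logarithm and there is an extra additive $+1$, both of which are absorbed into the $\widetilde O(\cdot)$ notation. There is no real obstacle here: all nontrivial work has been done in Proposition~\ref{prop:inner_loop:c1} and Lemma~\ref{lemma:accuracy}, and the only minor care needed is to verify that the constants $C$ and $B$ indeed have the polynomial/parameter dependence claimed so they can be legitimately swept into $\widetilde O(\cdot)$.
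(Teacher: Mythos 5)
Your proposal is correct and is exactly the paper's own argument: the paper proves this corollary by directly combining Proposition~\ref{prop:inner_loop:c1} (the warm-start bounds $C\varepsilon_{k+1}$ and $B$ on the initial gap) with Lemma~\ref{lemma:accuracy}, precisely the substitutions you carry out, including the randomized case via the second bound of that lemma. The only cosmetic remark is that the exponent written as $(k+2)^{4+\eta}$ in the corollary statement is the $(k+2)^{4+\gamma}$ of Proposition~\ref{prop:convergence cvx}, which you correctly use.
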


\subsubsection{Warm Start Strategies for Criterion~(\ref{C2})}\label{subsection:warm start c2}
We may now analyze the inner-loop complexity for criterion~(\ref{C2}) leading to upper bounds with
smaller constants and simpler proofs.  Note also that in the convex case, the
bounded level set condition will not be needed, unlike for criterion~(\ref{C1}).
To proceed, we start with a simple lemma that gives us a sufficient condition for~(\ref{C2}) to 
be satisfied.
\begin{lemma}[\bfseries Sufficient condition for criterion (\ref{C2})] \label{lemma:inner technique}
If a point $z$ satisfies 
   $$h_{k+1}(z)-h_{k+1}^* \leq \frac{\delta_{k+1}\kappa}{8} \Vert p(y_k)- y_{k} \Vert^2,$$ 
   then $z$ is in $\gdeltakp(y_k)$.
\end{lemma}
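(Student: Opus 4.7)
The plan is to unwind the two definitions and show that the hypothesized bound in terms of $\|p(y_k) - y_k\|^2$ implies the (C2) bound in terms of $\|z - y_k\|^2$. Since criterion (C2) reads $h_{k+1}(z) - h_{k+1}^\star \leq \frac{\delta_{k+1}\kappa}{2}\|y_k - z\|^2$ while the hypothesis gives $h_{k+1}(z) - h_{k+1}^\star \leq \frac{\delta_{k+1}\kappa}{8}\|p(y_k) - y_k\|^2$, it suffices to prove the geometric inequality $\|p(y_k) - y_k\| \leq 2\|z - y_k\|$.

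First I would exploit strong convexity of $h_{k+1}$ at its minimizer $p(y_k)$. Since $h_{k+1}$ is at least $\kappa$-strongly convex (in fact $(\mu+\kappa)$-strongly convex), we have $\tfrac{\kappa}{2}\|z-p(y_k)\|^2 \leq h_{k+1}(z) - h_{k+1}^\star$. Combining with the hypothesis yields
\begin{equation*}
\|z - p(y_k)\|^2 \;\leq\; \frac{2}{\kappa}\cdot \frac{\delta_{k+1}\kappa}{8}\|p(y_k)-y_k\|^2 \;=\; \frac{\delta_{k+1}}{4}\|p(y_k)-y_k\|^2.
\end{equation*}
Using $\delta_{k+1} \in (0,1)$ this gives $\|z - p(y_k)\| \leq \tfrac{1}{2}\|p(y_k)-y_k\|$.

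Next I would apply the triangle inequality $\|p(y_k)-y_k\| \leq \|p(y_k)-z\| + \|z-y_k\|$ together with the bound just obtained to get $\|p(y_k)-y_k\| \leq \tfrac{1}{2}\|p(y_k)-y_k\| + \|z-y_k\|$, i.e., $\|p(y_k)-y_k\| \leq 2\|z-y_k\|$. Squaring and plugging into the hypothesis yields
\begin{equation*}
h_{k+1}(z) - h_{k+1}^\star \;\leq\; \frac{\delta_{k+1}\kappa}{8}\,\|p(y_k)-y_k\|^2 \;\leq\; \frac{\delta_{k+1}\kappa}{8}\cdot 4\|z-y_k\|^2 \;=\; \frac{\delta_{k+1}\kappa}{2}\|z-y_k\|^2,
\end{equation*}
which is exactly the condition defining $z \in \gdeltakp(y_k)$.

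There is no real obstacle here; the factor of $8$ in the hypothesis is precisely calibrated so that the strong-convexity bound gives $\|z-p(y_k)\| \leq \tfrac{1}{2}\|p(y_k)-y_k\|$, leaving enough room in the triangle inequality to absorb a factor of $2$. The only point of care is to use the weaker constant $\kappa$ (not $\mu+\kappa$) in the strong-convexity bound, which is conservative but matches the statement and keeps the argument valid when $\mu=0$.
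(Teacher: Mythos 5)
Your proof is correct and takes essentially the same route as the paper's: both combine the strong convexity of $h_{k+1}$ at its minimizer $p(y_k)$ (to bound $\Vert z - p(y_k)\Vert^2$ by the function gap) with a triangle-type inequality relating the three points $y_k$, $z$, and $p(y_k)$, with the factor $8$ calibrated so that $\delta_{k+1}<1$ makes everything close. The only difference is bookkeeping: the paper splits $\Vert p(y_k)-y_k\Vert^2 \leq 2\Vert p(y_k)-z\Vert^2 + 2\Vert z-y_k\Vert^2$ and rearranges an inequality in which the gap $h_{k+1}(z)-h_{k+1}^\star$ appears on both sides, whereas you first isolate the clean geometric bound $\Vert p(y_k)-y_k\Vert \leq 2\Vert z-y_k\Vert$ and then substitute it back, which is the same argument in slightly different packaging.
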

\begin{proof}
\begin{align*}
   h_{k+1}(z) -h_{k+1}^* & \leq \frac{\delta_{k+1}\kappa}{8} \Vert p(y_k) - y_{k} \Vert^2 \\
                 & \leq \frac{\delta_{k+1}\kappa}{4} \left(\Vert p(y_k) -z \Vert^2+ \Vert z - y_{k} \Vert^2 \right) \\
                 & \leq \frac{\delta_{k+1}\kappa}{4} \left ( \frac{2}{\mu+\kappa} (h_{k+1}(z)-h_{k+1}^*)+ \Vert z - y_{k} \Vert^2 \right )\\
                 & \leq \frac{1}{2} \left (h_{k+1}(z)-h_{k+1}^*\right)+  \frac{\delta_{k+1}\kappa}{4} \Vert z - y_{k} \Vert^2.\\
\end{align*}
Rearranging the terms gives the desired result.
\end{proof}
With the previous result, we can control the complexity of the inner-loop minimization with Lemma~\ref{lemma:accuracy} 
by choosing $\varepsilon = \frac{\delta_{k+1}\kappa}{8} \Vert p(y_k)- y_{k} \Vert^2$. However, to obtain a meaningful upper bound, 
we need to control the ratio 
\begin{displaymath}
   \frac{h_{k+1}(z_0)-h_{k+1}^\star}{\epsilon} =
   \frac{8(h_{k+1}(z_0)-h_{k+1}^\star)}{\delta_{k+1}\kappa \|p(y_k)-y_k\|^2}.
\end{displaymath}

\begin{proposition}[\bfseries Warm start for criterion~(\ref{C2})] \label{prop:inner_loop:c2}
   Assume that $\mtd$ is linearly convergent for strongly convex problems with parameter $\tau_\mtd$ according to~(\ref{eq:assumption}), or according to~(\ref{eq:assumption_random}) in the randomized case.
   At iteration $k+1$ of Algorithm~\ref{alg:catalyst1}, given the previous iterate $x_k$ in~$\gdeltak(y_{k-1})$, we consider the following function
   $$h_{k+1}(z)  = f(z) + \frac{\kappa}{2} \Vert z-y_k \Vert^2,$$
   which we minimize with~$\mtd$, producing a sequence $(z_t)_{t \geq 0}$. 
 Then, 
\begin{itemize}
	\item when $f$ is smooth, set $z_0 = y_k$;
	\item when $f = f_0 +\psi$ is composite, set $z_0 = [y_k]_\eta = \prox_{\eta \psi }(y_k - \eta \nabla f_0(y_k))$ with $\eta = \frac{1}{L+\kappa}$.
\end{itemize}
   Then,
   \begin{equation}
      h_{k+1}(z_0)-h_{k+1}^\star  \leq \frac{L+\kappa}{2}\|p(y_k)-y_k\|^2. \label{eq:warmstart4}
   \end{equation}
\end{proposition}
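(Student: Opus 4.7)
The two cases reduce to standard one-step descent estimates for $h_{k+1}$, the function being minimized in the inner loop. In both cases I would apply the estimate at the point $x=p(y_k)$, the unique minimizer of $h_{k+1}$, which makes the linear terms vanish.

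\textbf{Smooth case.} Since $\psi=0$ and $f=f_0$ is $L$-smooth, the function $h_{k+1}(z)=f(z)+\tfrac{\kappa}{2}\|z-y_k\|^2$ is $(L+\kappa)$-smooth. The plan is just to invoke the descent lemma of a smooth function at its minimizer: for any $z$,
\[
h_{k+1}(z)\le h_{k+1}^\star+\langle \nabla h_{k+1}(p(y_k)),z-p(y_k)\rangle+\tfrac{L+\kappa}{2}\|z-p(y_k)\|^2.
\]
Since $\nabla h_{k+1}(p(y_k))=0$, specializing to $z=z_0=y_k$ gives exactly $h_{k+1}(z_0)-h_{k+1}^\star\le\tfrac{L+\kappa}{2}\|y_k-p(y_k)\|^2$, which is (\ref{eq:warmstart4}).

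\textbf{Composite case.} Write $h_{k+1}=h_0+\psi$ where $h_0(z)=f_0(z)+\tfrac{\kappa}{2}\|z-y_k\|^2$ is $(L+\kappa)$-smooth. The key observation is that $\nabla h_0(y_k)=\nabla f_0(y_k)+\kappa(y_k-y_k)=\nabla f_0(y_k)$, so the prescribed warm start
\[
z_0=\prox_{\eta\psi}\!\bigl(y_k-\eta\nabla f_0(y_k)\bigr)=\prox_{\eta\psi}\!\bigl(y_k-\eta\nabla h_0(y_k)\bigr),\qquad \eta=\tfrac{1}{L+\kappa},
\]
is exactly one proximal gradient step on $h_{k+1}$ starting from $y_k$, with step size matched to the smoothness constant of $h_0$. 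I would then invoke the standard proximal gradient descent lemma, which states that for any $x$,
\[
h_{k+1}(z_0)\le h_{k+1}(x)+\tfrac{1}{2\eta}\bigl(\|y_k-x\|^2-\|z_0-x\|^2\bigr).
\]
(If needed, this is derived in two lines by combining the descent inequality for $h_0$ with the subgradient inequality coming from the optimality of $z_0$ in the proximal step, and then using the three-point identity $\langle y_k-z_0,z_0-x\rangle=\tfrac12(\|y_k-x\|^2-\|z_0-x\|^2-\|y_k-z_0\|^2)$.) Plugging in $x=p(y_k)$ makes the first term equal to $h_{k+1}^\star$ and drops the $-\|z_0-p(y_k)\|^2$ term, yielding (\ref{eq:warmstart4}).

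\textbf{Where the work lies.} Neither case is truly delicate; the only subtlety is the composite case, where one must recognize that $z_0$ is a bona fide proximal gradient iterate on $h_{k+1}$ (not on $f$) and that the step size $\eta=1/(L+\kappa)$ correctly matches the smoothness of $h_0$, not of $f_0$. Once that identification is made, the standard one-step prox-gradient inequality, evaluated at the minimizer $p(y_k)$, closes the argument with no additional estimates.
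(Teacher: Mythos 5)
Your proof is correct and takes essentially the same route as the paper's: both cases hinge on the one-step descent/prox-gradient inequality evaluated at the minimizer $p(y_k)$, where the optimality condition makes the linear term vanish. The only differences are cosmetic --- you apply the descent lemma directly to the $(L+\kappa)$-smooth function $h_{k+1}$ where the paper uses $L$-smoothness of $f$ plus $\nabla f(p(y_k)) = -\kappa(p(y_k)-y_k)$, and in the composite case you use the three-point (telescoping-distances) form of the prox-gradient inequality and drop the term $-\|z_0-p(y_k)\|^2$, whereas the paper applies Lemma~2.3 of \citet{fista} in its inner-product form and then completes the square; the two are algebraically equivalent.
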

\begin{proof}
   When $f$ is smooth, the optimality conditions of $p(y_k)$ yield $\nabla h_{k+1}(p(y_k)) = \nabla f(p(y_k)) + \kappa (p(y_k) - y_k) = 0$. As a result,  
\begin{equation*}
   \begin{split}
      h_{k+1}(z_0) -h_{k+1}^* & =  f(y_k) - \left(f(p(y_k)) + \frac{\kappa}{2} \left\| p(y_k) - y_k \right\|^2 \right) \\
   & \leq  f(p(y_k)) + \langle \nabla f(p(y_k)), y_k - p(y_k) \rangle + \frac{L}{2} \| y_k-p(y_k) \|^2 \\
      & ~~~~~~~~~~~~- \left(f(p(y_k)) + \frac{\kappa}{2} \| p(y_k) - y_k \|^2 \right) \\
  & =  \frac{L+\kappa}{2} \| p(y_k) - y_k \|^2. 
   \end{split}
\end{equation*} 
When $f$ is composite, we use the inequality in Lemma 2.3 of \cite{fista}: for any~$z$,
 \begin{equation*}
    h_{k+1}(z) - h_{k+1}(z_0) \geq \frac{L+\kappa}{2} \Vert z_0 - y_k \Vert^2 + (L+\kappa) \langle z_0 -y_k, y_k-z \rangle,
 \end{equation*}
Then, we apply this inequality with $z = p(y_k)$, and thus,
 \begin{displaymath}
 \begin{split}
 	 h_{k+1}(z_0) -h_{k+1}^*  & \leq - \frac{L+\kappa}{2} \Vert z_0 - y_k \Vert^2 - (L+\kappa) \langle z_0 -y_k,y_k-p(y_k) \rangle \\
& \leq  \frac{L+\kappa}{2} \Vert p(y_k) - y_k \Vert^2 .
 \end{split}
 \end{displaymath}
\end{proof}
 
We are now in shape to derive a complexity bound for criterion~(\ref{C2}), which is obtained by
combining directly Lemma~\ref{lemma:accuracy} with the value $\varepsilon =
\frac{\delta_{k+1}\kappa}{8} \Vert p(y_k)- y_{k} \Vert^2$,
Lemma~\ref{lemma:inner technique}, and the previous proposition.
 \begin{corollary}[\bfseries Inner-loop Complexity for Criterion~(\ref{C2})] \label{cor:inner complexity2}
    Consider the setting of Proposition~\ref{prop:inner_loop:c2} when $\mtd$ is deterministic; assume further that $\alpha_0$ and $(\delta_k)_{k \geq 0}$ are chosen
    according to Proposition~\ref{prop:convergence2} for $\mu > 0$, or Proposition~\ref{prop:convergence cvx2} for $\mu=0$.
    
   Then, the sequence~$(z_t)_{t \geq 0}$  is such that 
   the complexity $T_{k+1} = \inf \{t \geq 0, z_t \in \gdeltakp(y_k) \}$ satisfies
    \begin{displaymath}
       T_{k+1}  \leq  \frac{1}{\tau_{\mtd}}\log \left( 4 C_{\mtd} \frac{(L+\kappa)}{\kappa} \frac{2-\sqrt{q}}{\sqrt{q}} \right)~~~\text{when $\mu > 0$},
    \end{displaymath}
    and
    \begin{displaymath}
       T_{k+1}  \leq  \frac{1}{\tau_{\mtd}}\log \left( 4 C_{\mtd} \frac{(L+\kappa)}{\kappa} (k+2)^2\right)~~~\text{when $\mu = 0$}.
    \end{displaymath}
    When $\mtd$ is randomized, the expected complexity is similar, up to a factor $2/\tau_{\mtd}$ in the logarithm---see Lemma~\ref{lemma:accuracy}, and we have $\E[T_{k+1}]=\Otilde(1/\tau_{\mtd})$ when $\mu > 0$ and $\E[T_{k+1}]=\Otilde(\log(k+2)/\tau_{\mtd})$.
\end{corollary}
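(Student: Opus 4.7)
The plan is to combine three results already established in the excerpt: the sufficient condition of Lemma \ref{lemma:inner technique}, the warm-start bound of Proposition \ref{prop:inner_loop:c2}, and the accuracy-vs-complexity bound of Lemma \ref{lemma:accuracy}. The key observation that makes the argument clean is that both the sufficient condition for reaching $\gdeltakp(y_k)$ and the upper bound on the initial suboptimality $h_{k+1}(z_0)-h_{k+1}^\star$ are proportional to the same quantity $\|p(y_k)-y_k\|^2$, so this (potentially awkward) quantity cancels in the ratio that enters the logarithm.

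More concretely, first I would invoke Lemma \ref{lemma:inner technique} to reduce the problem of reaching $\gdeltakp(y_k)$ to the problem of driving $h_{k+1}(z_t)-h_{k+1}^\star$ below the absolute accuracy $\varepsilon \defin \frac{\delta_{k+1}\kappa}{8}\|p(y_k)-y_k\|^2$. Next, I would apply Lemma \ref{lemma:accuracy} in the deterministic case, which gives
\begin{equation*}
   T_{k+1} \leq \frac{1}{\tau_{\mtd}}\log\!\left(\frac{C_{\mtd}(h_{k+1}(z_0)-h_{k+1}^\star)}{\varepsilon}\right).
\end{equation*}
Inserting the warm-start bound $h_{k+1}(z_0)-h_{k+1}^\star \leq \frac{L+\kappa}{2}\|p(y_k)-y_k\|^2$ from Proposition \ref{prop:inner_loop:c2} and the definition of $\varepsilon$, the factor $\|p(y_k)-y_k\|^2$ cancels and the argument of the logarithm reduces to $\frac{4C_{\mtd}(L+\kappa)}{\delta_{k+1}\kappa}$.

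It then remains only to substitute the prescribed choice of $\delta_{k+1}$. For $\mu>0$, we have $\delta_{k+1}=\frac{\sqrt{q}}{2-\sqrt{q}}$ from Proposition \ref{prop:convergence2}, so that $\frac{1}{\delta_{k+1}}=\frac{2-\sqrt{q}}{\sqrt{q}}$, yielding the first stated bound. For $\mu=0$, we have $\delta_{k+1}=\frac{1}{(k+2)^2}$ from Proposition \ref{prop:convergence cvx2}, yielding the second bound with the factor $(k+2)^2$ inside the logarithm. The randomized case follows identically, just by invoking the second (expectation-form) bound of Lemma \ref{lemma:accuracy}, which contributes the extra factor $2/\tau_{\mtd}$ inside the logarithm and an additive $+1$; the resulting expressions are $\Otilde(1/\tau_{\mtd})$ and $\Otilde(\log(k+2)/\tau_{\mtd})$ respectively.

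There is essentially no hard step: the whole argument is a substitution. The only mild care needed is making sure the cancellation of $\|p(y_k)-y_k\|^2$ is legitimate. If $p(y_k)=y_k$, then $y_k$ is already optimal for $h_{k+1}$, so $z_0=y_k$ (smooth case) or $z_0=[y_k]_\eta=y_k$ (composite case) is itself a minimizer and belongs trivially to $\gdeltakp(y_k)$, giving $T_{k+1}=0$; in all other cases $\|p(y_k)-y_k\|^2>0$ and the ratio is well defined. Hence the stated bounds cover both situations.
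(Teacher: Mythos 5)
Your proof is correct and follows exactly the route the paper itself takes: the corollary is stated in the paper as a direct combination of Lemma~\ref{lemma:accuracy} with $\varepsilon = \frac{\delta_{k+1}\kappa}{8}\Vert p(y_k)-y_k\Vert^2$, Lemma~\ref{lemma:inner technique}, and Proposition~\ref{prop:inner_loop:c2}, with the factor $\Vert p(y_k)-y_k\Vert^2$ cancelling in the ratio just as you describe. Your handling of the degenerate case $p(y_k)=y_k$ (where $z_0$ is itself a minimizer and $T_{k+1}=0$) is a small but legitimate refinement that the paper leaves implicit.
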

The inner-loop complexity is asymptotically similar with criterion~(\ref{C2}) as with criterion~(\ref{C1}), but the constants are significantly better.

\subsection{Global Complexity Analysis}\label{subsec:global_complexity}
In this section, we combine the previous outer-loop and inner-loop convergence results to derive a global complexity bound. 
We treat here the strongly convex $(\mu > 0)$ and convex~$(\mu=0)$ cases separately.

\subsubsection{Strongly Convex Case}
When the problem is strongly convex, we remark that the subproblems are solved in a constant number of iterations $T_k = T=\Otilde \left (\frac{1}{\tau_\mtd} \right )$ for both criteria (\ref{C1}) and~(\ref{C2}). This means that the iterate $x_k$ in Algorithm~\ref{alg:catalyst1} is obtained after $s = kT$ iterations of the method~$\mtd$. Thus, the true convergence rate of Catalyst applied to~$\mtd$ is of the form
\begin{equation}
   f_s -f^* = f\left (x_{\frac{s}{T}} \right ) -f^* \leq C'(1-\rho)^{\frac{s}{T}}(f(x_0)-f^*) \leq C' \left ( 1 - \frac{\rho}{T} \right )^s(f(x_0)-f^*), \label{eq:rate_mup}
\end{equation}
where $f_s = f(x_k)$ is the function value after $s$ iterations of $\mtd$. 
Then, choosing~$\kappa$ consists of maximizing the rate of
convergence~(\ref{eq:rate_mup}). In other words, we want to maximize
$\sqrt{q}/T = \Otilde(\sqrt{q}\tau_{\mtd})$. Since $q =
\frac{\mu}{\mu+\kappa}$, this naturally lead to the maximization of
$\tau_{\mtd}/\sqrt{\mu+\kappa}$.  
We now state more formally the global convergence result in terms of complexity.

\begin{proposition}[\bfseries Global Complexity for strongly convex objectives]\label{prop:global mu>0}
   When $f$ is $\mu$-strongly convex and all parameters are chosen according to Propositions \ref{prop:convergence} and \ref{prop:inner_loop:c1} when using criterion~(\ref{C1}), 
   or Propositions~\ref{prop:convergence2} and~\ref{prop:inner_loop:c2} for~(\ref{C2}), then
   Algorithm~\ref{alg:catalyst1} finds a solution~$\hat{x}$ such that $f(\hat{x})-f^\star \leq \epsilon$ in at most $N_\mtd$ iterations of a deterministic method $\mtd$ with 
\begin{enumerate}
	\item when criterion (\ref{C1}) is used, 
	$$N_\mtd \leq \frac{1}{\tau_\mtd \rho} \log \left ( C_\mtd C \right ) \cdot \log \left ( \frac{8(f(x_0)-f^*)}{(\sqrt{q}-\rho)^2 \varepsilon} \right ) = \Otilde \left ( \frac{1}{\tau_\mtd \sqrt{q}} \log \left ( \frac{1}{\epsilon} \right ) \right ), $$
	where $\rho =0.9 \sqrt{q}$ and $C$ is the constant defined in (\ref{eq:warmstart}) or (\ref{eq:warmstart3}) for the composite case;
	\item when criterion (\ref{C2}) is used, 
	$$N_\mtd \leq \frac{2}{\tau_\mtd \sqrt{q}} \log \left ( 4C_\mtd \frac{L+\kappa}{\kappa} \frac{2-\sqrt{q}}{\sqrt{q}} \right ) \cdot \log \left ( \frac{2(f(x_0)-f^*)}{ \varepsilon} \right ) = \Otilde \left ( \frac{1}{\tau_\mtd \sqrt{q}} \log \left ( \frac{1}{\epsilon} \right ) \right ). $$
\end{enumerate}
   Note that similar results hold in terms of expected number of iterations
   when the method~$\mtd$ is randomized (see the end of
   Proposition~\ref{prop:inner_loop:c1}).
\end{proposition}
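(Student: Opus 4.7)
The plan is to combine the outer-loop linear convergence rates (Propositions \ref{prop:convergence} and \ref{prop:convergence2}) with the per-outer-iteration inner-loop budgets (Corollaries \ref{cor:inner complexity1} and \ref{cor:inner complexity2}), exploiting the crucial fact that in the strongly convex case the inner complexity $T_k$ is bounded by a constant $T$ independent of~$k$. The total count of $\mtd$-iterations is then simply $N_\mtd = \sum_{k=1}^K T_k \leq K T$, so the proof reduces to inverting an outer-loop geometric bound for $K$ and multiplying by $T$.

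For criterion~(\ref{C1}), I would first invoke Proposition~\ref{prop:convergence}, which gives $f(x_K)-f^\star \leq \tfrac{8}{(\sqrt{q}-\rho)^2}(1-\rho)^{K+1}(f(x_0)-f^\star)$. Setting the right-hand side equal to $\varepsilon$ and using $\log(1/(1-\rho)) \geq \rho$, I obtain
\begin{equation*}
K \;\leq\; \frac{1}{\rho}\log\!\left(\frac{8(f(x_0)-f^\star)}{(\sqrt{q}-\rho)^2\,\varepsilon}\right).
\end{equation*}
Then I plug in Corollary~\ref{cor:inner complexity1}, which in the strongly convex case gives $T_k \leq \tfrac{1}{\tau_\mtd}\log(C_\mtd C)$ uniformly in $k$, with $C$ the warm-start constant of~(\ref{eq:warmstart}) (smooth case) or~(\ref{eq:warmstart3}) (composite case). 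Multiplying yields the first claimed bound, and substituting $\rho = 0.9\sqrt{q}$ (as suggested in Section~\ref{para:stop}) gives the $\Otilde(1/(\tau_\mtd\sqrt{q}))$ rate after absorbing logarithmic factors in $\mu,L,\kappa$.

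For criterion~(\ref{C2}) the argument is analogous but cleaner. Proposition~\ref{prop:convergence2} provides $f(x_K)-f^\star \leq 2(1-\sqrt{q}/2)^K(f(x_0)-f^\star)$, and inverting with $\log(1/(1-\sqrt{q}/2)) \geq \sqrt{q}/2$ gives $K \leq \tfrac{2}{\sqrt{q}}\log\!\bigl(\tfrac{2(f(x_0)-f^\star)}{\varepsilon}\bigr)$. Corollary~\ref{cor:inner complexity2} then delivers $T_k \leq \tfrac{1}{\tau_\mtd}\log\!\bigl(4C_\mtd\tfrac{L+\kappa}{\kappa}\tfrac{2-\sqrt{q}}{\sqrt{q}}\bigr)$, and the product is the stated bound. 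In the randomized case, the same argument applies after taking expectation and using the randomized variants of Lemma~\ref{lemma:accuracy}, which only inflate the logarithmic factor by a $2/\tau_\mtd$ term.

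The main technical obstacle is not conceptual—it is just ensuring the logarithmic inversion of $(1-\rho)^K$ (resp.\ $(1-\sqrt{q}/2)^K$) is tight enough to match the stated constants, and verifying that the inner budget $T$ truly does not depend on~$k$. The latter is essential: it holds because, for $\mu>0$, the accuracy sequence $\varepsilon_k$ in (\ref{C1}) decays geometrically at the same rate $(1-\rho)$ as the warm-start error bound in Proposition~\ref{prop:inner_loop:c1}, so the ratio $h_{k+1}(z_0)-h_{k+1}^\star$ divided by $\varepsilon_{k+1}$ is uniformly bounded by a constant $C$; analogously for~(\ref{C2}), where Proposition~\ref{prop:inner_loop:c2} combined with Lemma~\ref{lemma:inner technique} eliminates any $k$-dependence. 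Once this uniformity is in place, the rest is routine arithmetic of logarithms.
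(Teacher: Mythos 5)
Your proposal is correct and follows essentially the same route as the paper's own proof: bound the number of outer iterations $K$ by inverting the linear rates of Propositions~\ref{prop:convergence} and~\ref{prop:convergence2}, then multiply by the $k$-independent inner-loop budget $T$ from Corollaries~\ref{cor:inner complexity1} and~\ref{cor:inner complexity2}, so that $N_\mtd \leq KT$. Your additional remarks (the inversion via $\log(1/(1-\rho)) \geq \rho$ and the justification that the warm-start constant makes $T$ uniform in $k$) are exactly the ingredients the paper relies on implicitly.
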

\begin{proof}
	Let $K$ be the number of iterations of the outer-loop algorithm required to obtain an $\epsilon$-accurate solution. From Proposition~\ref{prop:convergence}, using (\ref{C1}) criterion yields
		$$ K \leq \frac{1}{\rho} \log \left( \frac{8(f(x_0)-f^*)}{(\sqrt{q} -\rho)^2 \varepsilon}\right ). $$
	From Proposition~\ref{prop:convergence2}, using (\ref{C2}) criterion yields
	$$ K \leq \frac{2}{\sqrt{q}} \log \left ( \frac{2(f(x_0)-f^*)}{\varepsilon} \right ). $$
	Then since the number of runs of $\mtd$ is constant for any inner loop, the total number $N_\mtd$ is given by $KT$ where $T$ is respectively given by Corollaries~\ref{cor:inner complexity1} and~\ref{cor:inner complexity2}.
\end{proof}

\subsubsection{Convex, but not Strongly Convex Case}

When~$\mu=0$, the number of iterations for solving each subproblems grows logarithmically, which means that the iterate $x_k$ in Algorithm~\ref{alg:catalyst1} is obtained after $s = \leq kT\log(k+2)$ iterations of the method $\mtd$, where $T$ is a constant. By using the global iteration counter $s = kT \log(k+2)$, we finally have
\begin{equation}
f_s -f^* \leq C' \frac{\log^2(s)}{s^2} \left (f(x_0)-f^* + \frac{\kappa}{2} \Vert x_0-x^* \Vert^2  \right ).
\end{equation}
This rate is \textit{near-optimal}, up to a logarithmic factor, when compared to the optimal rate $O(1/s^2)$. This may be the price to pay for using a generic acceleration scheme. 
As before, we detail the global complexity bound for convex objectives in the next proposition.

\begin{proposition}[\bfseries Global complexity for convex objectives]\label{prop:global mu=0}
When $f$ is convex and all parameters are chosen according to Propositions \ref{prop:convergence cvx} and \ref{prop:inner_loop:c1} when using criterion~(\ref{C1}), 
   or Propositions~\ref{prop:convergence cvx2} and~\ref{prop:inner_loop:c2} for criterion~(\ref{C2}), then
   Algorithm~\ref{alg:catalyst1} finds a solution~$\hat{x}$ such that $f(\hat{x})-f^\star \leq \epsilon$ in at most $N_\mtd$ iterations of a deterministic method $\mtd$ with 

\begin{enumerate}
	\item when criterion (\ref{C1}) is applied
	$$N_\mtd \leq \frac{1}{\tau_\mtd} K \log \left (  \frac{9C_\mtd B K^{4+\gamma}}{ 2(f(x_0)-f^*)} \right ) = \Otilde \left (\frac{1}{\tau_\mtd} \sqrt{\frac{\kappa}{\varepsilon}} \log \left ( \frac{1}{\varepsilon}\right ) \right ),$$
	where,
	$$ K_\varepsilon = \sqrt{\frac{8\left ( \frac{\kappa}{2} \Vert x_0 -x^* \Vert^2 + \frac{4}{\gamma^2}(f(x_0)-f^*) \right )}{\varepsilon}};$$
	\item when criterion (\ref{C2}) is applied, 
	\begin{equation*}
		\begin{split}
			N_\mtd & \leq \frac{1}{\tau_\mtd} \sqrt{\frac{4 \kappa \Vert x_0 -x^* \Vert^2 }{\varepsilon}} \log \left (  \frac{16 C_\mtd (L+\kappa) \Vert x_0 -x^* \Vert^2 }{\varepsilon}\right )\\
			& = \Otilde \left ( \frac{1}{\tau_\mtd} \sqrt{\frac{\kappa}{\epsilon}} \log\left( \frac{1}{\epsilon} \right ) \right ).
		\end{split}
	\end{equation*} 
\end{enumerate}
   Note that similar results hold in terms of expected number of iterations
   when the method~$\mtd$ is randomized (see the end of
   Proposition~\ref{prop:inner_loop:c2}).
\end{proposition}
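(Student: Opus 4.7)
The plan is to combine the outer-loop convergence rate with the per-subproblem inner-loop cost, exactly as was done for the strongly convex case in Proposition~\ref{prop:global mu>0}, but now accounting for the fact that in the convex setting the inner-loop complexity $T_k$ is no longer constant: it grows logarithmically with the outer index~$k$. The global iteration count is then bounded as $N_\mtd \leq \sum_{k=1}^{K_\varepsilon} T_k$, where $K_\varepsilon$ is the number of outer iterations needed to reach accuracy~$\varepsilon$.

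First I would determine $K_\varepsilon$ from the outer-loop rate. For criterion~(\ref{C1}), Proposition~\ref{prop:convergence cvx} gives $f(x_K)-f^\star \leq \tfrac{8}{(K+1)^2}\bigl(\tfrac{\kappa}{2}\|x_0-x^\star\|^2+\tfrac{4}{\gamma^2}(f(x_0)-f^\star)\bigr)$, so imposing that this be less than $\varepsilon$ and inverting yields the stated expression for $K_\varepsilon$. For criterion~(\ref{C2}), Proposition~\ref{prop:convergence cvx2} gives $f(x_K)-f^\star \leq \tfrac{4\kappa\|x_0-x^\star\|^2}{(K+1)^2}$, which yields $K_\varepsilon \leq \sqrt{4\kappa\|x_0-x^\star\|^2/\varepsilon}$.

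Next I would plug in the per-subproblem bounds from Corollaries~\ref{cor:inner complexity1} and~\ref{cor:inner complexity2}. In both cases $T_{k+1} \leq \tfrac{1}{\tau_\mtd}\log(a(k+2)^b)$ for some constants $a,b$ depending on the criterion: for (\ref{C1}) we have $T_{k+1}\leq \tfrac{1}{\tau_\mtd}\log\!\bigl(\tfrac{9C_\mtd B(k+2)^{4+\gamma}}{2(f(x_0)-f^\star)}\bigr)$, using the uniform bound $B$ from Proposition~\ref{prop:inner_loop:c1}; for (\ref{C2}) we have $T_{k+1}\leq \tfrac{1}{\tau_\mtd}\log\!\bigl(4C_\mtd\tfrac{L+\kappa}{\kappa}(k+2)^2\bigr)$. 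Since both logarithms are monotonically increasing in $k$, I would use the coarse but sufficient bound $\sum_{k=1}^{K_\varepsilon} T_k \leq K_\varepsilon \cdot T_{K_\varepsilon}$, obtaining $N_\mtd \leq \tfrac{K_\varepsilon}{\tau_\mtd}\log(a K_\varepsilon^b)$, which matches the two displays in the statement.

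Finally, I would derive the $\tilde{O}$ simplification by absorbing the logarithmic factors into $\tilde{O}$ and substituting $K_\varepsilon = O(\sqrt{\kappa/\varepsilon})$ (with constants depending on $\|x_0-x^\star\|$, $f(x_0)-f^\star$, and $\gamma$): this gives $N_\mtd = \tilde{O}\!\bigl(\tfrac{1}{\tau_\mtd}\sqrt{\kappa/\varepsilon}\log(1/\varepsilon)\bigr)$ in both cases. The randomized case follows by substituting the expectation bound from Lemma~\ref{lemma:accuracy} in place of the deterministic one, which only changes the constant inside the logarithm by a factor $2/\tau_\mtd$. The only mildly delicate point is ensuring that the inner bound from Proposition~\ref{prop:inner_loop:c1} is valid for every~$k$ with the same constant~$B$ (which requires the bounded level set assumption there); this is already established in the cited proposition, so no additional work is needed here.
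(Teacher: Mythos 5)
Your proposal is correct and follows essentially the same route as the paper's proof: bound the number of outer iterations $K_\varepsilon$ by inverting the rates of Propositions~\ref{prop:convergence cvx} and~\ref{prop:convergence cvx2}, use the monotonicity of the inner-loop cost to write $N_\mtd = \sum_{i=1}^{K_\varepsilon} T_i \leq K_\varepsilon T_{K_\varepsilon}$, and substitute the bounds of Corollaries~\ref{cor:inner complexity1} and~\ref{cor:inner complexity2}. The only differences are cosmetic: you spell out the $\Otilde$ simplification, the randomized case, and the role of the uniform bound~$B$, which the paper leaves implicit.
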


\begin{proof}
Let $K$ denote the number of outer-loop iterations required to achieve an $\epsilon$-accurate solution. From Proposition~\ref{prop:convergence cvx}, when (\ref{C1}) is applied, we have 
$$K \leq \sqrt{\frac{8\left ( \frac{\kappa}{2} \Vert x_0 -x^* \Vert^2 + \frac{4}{\gamma^2}(f(x_0)-f^*) \right )}{\varepsilon}}. $$
From Proposition~\ref{prop:convergence cvx2}, when (\ref{C2}) is applied, we have 
$$K \leq \sqrt{\frac{4\kappa \Vert x_0-x^* \Vert^2}{\varepsilon}}. $$
Since the number of runs in the inner loop is increasing, we have 
$$ N_\mtd = \sum_{i=1}^K T_i \leq KT_K.$$
Respectively apply $T_K$ obtained from Corollary~\ref{cor:inner complexity1} and Corollary~\ref{cor:inner complexity2} gives the result.
\end{proof}

\paragraph{Theoretical foundations of the choice of $\bm \kappa$.} 
The parameter $\kappa$ plays an important rule in the global complexity result. The linear convergence parameter $\tau_\mtd$ depends typically on $\kappa$ since it controls the strong convexity parameter of the subproblems. The natural way to choose $\kappa$ is to minimize the global complexity given by Proposition~\ref{prop:global mu>0} and Proposition~\ref{prop:global mu=0}, which leads to the following rule 
\custombox{
\centering
\textbf{Choose $\bm \kappa$ to maximize $ \bm \displaystyle \frac{\tau_\mtd}{\sqrt{\mu+\kappa}}$,}
      }
where $\mu=0$ when the problem is convex but not strongly convex. We now illustrate two examples when applying Catalyst to the classical gradient descent method and to the incremental approach SVRG. 

   \paragraph{\bf Gradient descent.} When $\mtd$ is the gradient descent method, we have 
	$$\tau_\mtd = \frac{\mu+\kappa}{L+\kappa}. $$
	Maximizing the ratio $\displaystyle \frac{\tau_\mtd}{\sqrt{\mu+\kappa}}$ gives $$\kappa = L-2\mu, ~~~~\text{when $L>2\mu$}.$$
	Consequently, the complexity in terms of gradient evaluations for minimizing the finite sum~(\ref{eq: intro problem}), where each iteration of~$\mtd$ cost $n$ gradients,
        is given by
\[ N_\mtd = \left\{
\begin{array}{cc}
      \Otilde \left ( n \sqrt{\frac{L}{\mu}} \log \left( \frac{1}{\varepsilon}\right )\right ) & \text{when $\mu>0$;} \vspace*{0.2cm} \\
      \Otilde \left( n \sqrt{\frac{L}{\varepsilon}} \log \left ( \frac{1}{\varepsilon}\right ) \right ) & \text{when $\mu=0$.} \\
\end{array} 
\right. \]
These rates are near-optimal up to logarithmic constants according to the first-order lower bound \citep{nemirovskii1983problem, nesterov}.

\paragraph{\bf SVRG.} For SVRG \citep{proxsvrg} applied to the same finite-sum objective, 
$$ \tau_\mtd  = \frac{1}{n+\frac{\bar{L}+\kappa}{\mu+\kappa}}. $$  
Thus, maximizing the corresponding ratio gives 
$$ \kappa = \frac{\bar{L}-\mu}{n+1} - \mu, ~~~~\text{when $\bar{L}> (n+2)\mu$}.$$
Consequently, the resulting global complexity, here in terms of expected number of gradient evaluations, is given by
\[ \E[N_\mtd] = \left\{
\begin{array}{cc}
      \Otilde \left ( \sqrt{n\frac{\bar{L}}{\mu}} \log \left( \frac{1}{\varepsilon}\right )\right ) & \text{when $\mu>0$;} \vspace*{0.2cm} \\
      \Otilde \left(  \sqrt{\frac{n\bar{L}}{\varepsilon}} \log \left ( \frac{1}{\varepsilon}\right ) \right ) & \text{when $\mu=0$.} \\
\end{array} 
\right. \]
Note that we treat here only the case $\bar{L} > (n+2)\mu$ to simplify, see Table~\ref{tab:accel-summary} for a general results.
We also remark that Catalyst can be applied to similar incremental algorithms such as SAG/SAGA \citep{sag,saga} or dual-type algorithm MISO/Finito \citep{miso,finito} or SDCA \cite{sdca}. 
Moreover, the resulting convergence rates are near-optimal up to logarithmic constants according to the first-order lower bound \citep{tightbound_blake,tightbound_yossi}. 

\subsubsection{Practical Aspects of the Theoretical Analysis}

So far, we have not discussed the fixed budget criterion mentioned in Section~\ref{sec:algorithm}. The idea is quite natural and simple to implement: we predefine the number of iterations to run for solving each subproblems and stop worrying about the stopping condition. For example, when $\mu>0$ and $\mtd$ is deterministic, we can simply run $T_\mtd$ iterations of $\mtd$ for each subproblem where $T_\mtd$ is greater than the value given by Corollaries~\ref{cor:inner complexity1} or~\ref{cor:inner complexity2}, then the criterions~(\ref{C1}) and~(\ref{C2}) are guaranteed to be satisfied.
Unfortunately, the theoretical bound of $T_\mtd$ is relatively poor and does not lead to a practical strategy. On the other hand, using a more aggressive strategy such as $T_\mtd =n$ for incremental algorithms, meaning one pass over the data, seems to provide outstanding results, as shown in the experimental part of this paper.

Finally, one could argue that choosing $\kappa$ according to a worst-case convergence analysis is not necessarily a good choice. In particular, the convergence rate of the method~$\mtd$, driven by the parameter~$\tau_{\mtd}$ is probably often under estimated in the first place. This suggests that using a smaller value for~$\kappa$ than the one we have advocated earlier is a good thing. In practice, we have observed that indeed Catalyst is often robust to smaller values of~$\kappa$ than the theoretical one, but we have also observed that the theoretical value performs reasonably well, as we shall see in the next section.

\section{Experimental Study}\label{sec:exp}
In this section, we conduct various experiments to study the effect of the
Catalyst acceleration and its different variants, showing in particular how to
accelerate SVRG, SAGA, and MISO.
In Section~\ref{dataset}, we describe the data sets and formulations  considered for our evaluation,
and in Section~\ref{subsec:variants}, we present the different variants of Catalyst.
Then, we study different questions: which variant of Catalyst should we use for incremental
approaches? (Section~\ref{subsec:comparison_catalyst}); how do various incremental methods compare
when accelerated with Catalyst? (Section~\ref{subsec:comparison_methods}); what is the effect
of Catalyst on the test error when Catalyst is used to minimize a regularized empirical risk? (Section~\ref{subsec:comparison_test});
is the theoretical value for~$\kappa$ appropriate? (Section~\ref{subsec:comparison_kappa}). The code used for
      all our experiments is available at
      \url{https://github.com/hongzhoulin89/Catalyst-QNing/}.

\subsection{Data sets, Formulations, and Metric}\label{dataset}
\paragraph{Data sets.}
We consider six machine learning data sets with different characteristics in
terms of size and dimension to cover a variety of situations.
\vspace*{0.1cm}
 \begin{center}
    \begin{tabular}{|l|c|c|c|c|c|c|c|}
       \hline
       name & \textsf{covtype} & \textsf{alpha} & \textsf{real-sim} & \textsf{rcv1} & \textsf{MNIST-CKN} & \textsf{CIFAR-CKN} \\
       \hline
       $n$ & $581\,012$ & $250\,000$ & $72\,309$ & $781\,265$ & $60\,000$ & $50\,000$ \\
       \hline
       $d$ & $54$ & $500$ & $20\,958$ & $47\, 152$ & $2\,304$ & $9\,216$ \\
       \hline
       \end{tabular}
 \end{center}
\vspace*{0.1cm}
While the first four data sets are standard ones that were used in previous work
about optimization methods for machine learning, the last two are coming from a
computer vision application. MNIST and CIFAR-10 are two image classification
data sets involving 10 classes. The feature representation of each image was computed 
using an unsupervised convolutional kernel network~\citet{mairal2016end}. 
We focus here on the the task of classifying class \#1 vs. the rest of the data set. 

\paragraph{Formulations.} We consider three common optimization problems in
machine learning and signal processing, which admit a particular structure
(large finite sum, composite, strong convexity). For each formulation, 
we also consider a training set $(b_i,a_i)_{i=1}^n$ of $n$ data points, where
the $b_i$'s are scalars in $\{-1,+1\}$ and the~$a_i$ are feature vectors in~$\Real^p$.
Then, the goal is to fit a linear model $x$ in~$\Real^p$ such that the scalar $b_i$ can be well
predicted by the inner-product $\approx a_i^\top x$, or by its sign.
Specifically, the three formulations we consider are listed below.
\begin{itemize}
   \item {\bfseries $\ell_2^2$-regularized Logistic Regression}:
    \begin{equation*}
    \min_{x \in \R^p} \quad \frac{1}{n} \sum_{i=1}^n \log\left(1+\exp(-b_i \,a_i^{T} x)\right) + \frac{\mu}{2} \Vert x \Vert^2,
    \end{equation*}
      which leads to a $\mu$-strongly convex smooth optimization problem.   \item {\bfseries $\ell_1$-regularized Linear Regression (LASSO)}: 
  \begin{equation*}
    \min_{x \in \R^p} \quad \frac{1}{2n} \sum_{i=1}^n   ( b_i -a_i^{T}x)^2 + \lambda \Vert x \Vert_1,
 \end{equation*}
      which is non smooth and convex but not strongly convex.
  \item {\bfseries $\ell_1-\ell_2^2$-regularized Linear Regression (Elastic-Net)}:
 \begin{equation*}
    \min_{x \in \R^p} \quad \frac{1}{2n} \sum_{i=1}^n ( b_i -a_i^{T}x )^2 + \lambda \Vert x \Vert_1+  \frac{\mu}{2} \Vert x \Vert^2,
 \end{equation*}
      which is based on the Elastic-Net regularization~\citep{zou2005regularization} and leading to a strongly-convex optimization problem.
\end{itemize}
Each feature vector~$a_i$ is normalized, and a natural upper-bound on the
Lipschitz constant~$L$ of the un-regularized objective can be easily obtained
with $L_{\text{logistic}} = 1/4$ and
$L_{\text{lasso}} =1$. The regularization parameter $\mu$ and $\lambda$ are choosing in the following way:
\begin{itemize}
        \item For \textbf{Logistic Regression}, we find an optimal
           regularization parameter $\mu^\star$ by 10-fold cross validation for each
           data set on a logarithmic grid $2^i/n$, with $i \in [-12,3]$.
           Then, we set $\mu = \mu^\star/2^3$ which corresponds to a
           small value of the regularization parameter and a relatively ill-conditioned problem. 
	\item For \textbf{Elastic-Net}, we set $\mu =0.01/n$ to simulate the ill-conditioned situation and add a small $l_1$-regularization penalty with $\lambda = 1/n$ that produces sparse solutions.
	\item For the \textbf{Lasso problem}, we consider a logarithmic grid $10^i/n$, with $i=-3,-2,\ldots,3$, and we select the parameter~$\lambda$ that provides a sparse optimal solution closest to $10\%$ non-zero coefficients, which leads to $\lambda = 10/n$ or $100/n$.
\end{itemize}
Note that for the strongly convex problems, the regularization parameter~$\mu$
yields a lower bound on the strong convexity parameter of the problem. 

\paragraph{Metric used.}
In this chapter, and following previous work about incremental
methods~\citep{sag}, we plot objective values as a function of the number of
gradients evaluated during optimization, which appears to be the computational
bottleneck of all previously mentioned algorithms. Since no metric is perfect for
comparing algorithms' speed, we shall make the two following remarks, such that the 
reader can interpret our results and the limitations of our study with no
difficulty. 
\begin{itemize}
   \item Ideally, CPU-time is the gold standard but CPU time is
      implementation-dependent and hardware-dependent.
   \item We have chosen to count only gradients computed with random data
      access. Thus, computing $n$ times a gradient $f_i$ by picking each time
      one function at random counts as ``$n$ gradients'', whereas we ignore the cost
      of computing a full gradient $(1/n)\sum_{i=1}^n \nabla f_i$ at once, where the $f_i$'s can 
      be accessed in sequential order.  Similarly, we ignore the cost of
      computing the function value $f(x)= (1/n)\sum_{i=1}^n f_i(x)$, which is
      typically performed every pass on the data when computing a duality gap.
      While this assumption may be inappropriate in some contexts, the cost of
      random gradient computations was significantly dominating the cost of sequential access in our
      experiments, where (i) data sets fit into memory; (ii) computing full
      gradients was done in C++ by calling BLAS2 functions exploiting multiple
      cores.
\end{itemize}

\subsection{Choice of Hyper-parameters and Variants}\label{subsec:variants}
Before presenting the numerical results, we discuss the choice of default
parameters used in the experiments as well as different variants.  

\paragraph{Choice of method~$\mtd$.} We consider the acceleration of incremental algorithms which are able to adapt to the problem structure we consider: large sum of functions and possibly non-smooth regularization penalty. 
\begin{itemize}
	\item The proximal SVRG algorithm of~\citet{proxsvrg} with stepsize $\eta = 1/L$.
	\item The SAGA algorithm~\citet{saga} with stepsize $\eta = 1/3L$.
	\item The proximal MISO algorithm of \citet{catalyst}.
\end{itemize}

\paragraph{Choice of regularization parameter $\kappa$.} As suggested by the theoretical analysis, we take $\kappa$ to minimize the global complexity, leading to the choice 

$$ \displaystyle \kappa = \frac{L-\mu}{n+1}-\mu.$$

\paragraph{Stopping criteria for the inner loop.} 
The choice of the accuracies are driven from the theoretical analysis described in paragraph~\ref{para:stop}. Here, we specify it again for the clarity of presentation:
\begin{itemize}
	\item \textbf{Stopping criterion (\ref{C1}).} Stop when $h_k(z_t)-h_k^*\leq \varepsilon_k$, where  
	\[ \varepsilon_k = \footnotemark[5] \left\{
\begin{array}{cc}
       \frac{1}{2} (1- \rho)^k f(x_0)  \,\,\, \text{with} \,\,\, \rho = 0.9 \sqrt{\frac{\mu}{\mu+\kappa}} & \text{when $\mu>0$;} \vspace*{0.2cm} \\
      \frac{f(x_0)}{2(k+1)^{4.1}} & \text{when $\mu=0$.} \\
\end{array} 
\right.  \]
\footnotetext[5]{Here we upper bound $f(x_0)-f^*$ by $f(x_0)$ since $f$ is always positive in our models.}\stepcounter{footnote}
The duality gap $h(w_t)-h^*$ can be estimated either by evaluating the Fenchel conjugate function or by computing the squared norm of the gradient.
\item \textbf{Stopping criterion (\ref{C2}).}  Stop when $h_k(z_t)-h_k^*\leq \delta_k \cdot \frac{\kappa}{2} \Vert z_t - y_{k-1} \Vert^2$, where 
	\[ \delta_k = \left\{
\begin{array}{cc}
       \frac{\sqrt{q}}{2-\sqrt{q}} \,\,\, \text{with} \,\,\, q = \frac{\mu}{\mu+\kappa} & \text{when $\mu>0$;} \vspace*{0.2cm} \\
      \frac{1}{(k+1)^2} & \text{when $\mu=0$.} \\
\end{array} 
\right. \]
\item \textbf{Stopping criterion \Ctrois.} Perform exactly
one pass over the data in the inner loop without checking any stopping
criteria.\footnote{This stopping criterion is heuristic since one pass may not be enough to achieve the required accuracy. What we have shown is that with a large enough $T_\mtd$, then the convergence will be guaranteed. Here we take heuristically $T_\mtd$ as one pass.}
\end{itemize}

\paragraph{Warm start for the inner loop.} This is an important point to achieve acceleration which was not highlighted in the conference paper~\citep{catalyst}. At iteration~$k+1$, we consider the minimization of
$$ h_{k+1}(z) = f_0(z) + \frac{\kappa}{2}\Vert z -y_{k} \Vert^2 +\psi(z).$$
We warm start according to the strategy defined in Section~\ref{sec:algorithm}. 
Let $x_{k}$ be the approximate minimizer of $h_{k}$, obtained from the last iteration.
\begin{itemize}
	\item \textbf{Initialization for (\ref{C1}).} Let us define $\eta = \frac{1}{L+\kappa}$, then initialize at 
\begin{equation*}
	z_0^{C1} =  \left\{
\begin{array}{cc}
   w_0 \defin x_{k} +  \frac{\kappa}{\kappa+\mu}(y_{k} -y_{k-1}) & \text{if $\psi=0$; } \vspace*{0.2cm} \\ \relax
       [w_0]_{\eta}  & \text{otherwise.} \\
\end{array} 
	\right. 
\end{equation*} 
where $[w_0]_{\eta} = \prox_{\eta\psi}(w_0 -\eta g) $  with $g= \nabla f_0(w_0)+\kappa(w_0-y_{k})$.
	\item \textbf{Initialization for (\ref{C2}).} Intialize at 
	\[ z_0^{C2} =  \left\{
\begin{array}{cc}
   y_{k} & \text{if $\psi=0$; } \vspace*{0.2cm} \\ \relax
     [y_k]_\eta = \prox_{\eta \psi} (y_{k} - \eta \nabla f_0(y_{k}))  & \text{otherwise.} \\
\end{array} 
	\right. \]
     \item \textbf{Initialization for (\textsf{C3}).} Take the best initial point among $x_{k}$ and $z_0^{C1}$
	 $$z_0^{C3} \,\, \text{ such that } \,\, h_{k}(z_0^{C3})=\min \{ h_{k}(x_{k-1}), h_{k}(z_0^{C1}) \}. $$
      \item \textbf{Initialization for (\textsf{C1}$^*$).} Use the strategy (\ref{C1}) with $z_0^{C3}$.
\end{itemize}
The warm start at $z_0^{C3}$ requires to choose the best point between the last iterate $x_{k}$ and the point $z_0^{C1}$. The motivation is that since the one-pass strategy is an aggressive heuristic, the solution of the subproblems may not be as accurate as the ones obtained with other criterions. Allowing using the iterate $x_{k}$ turned out to be significantly more stable in practice. Then, it is also natural to use a similar strategy for criterion~(\ref{C1}), which we call~\Cunstar. Using a similar strategy for (\ref{C2}) turned out not to provide any benefit in practice and is thus omitted from the list here.

\graphicspath{{./results/}}
\subsection{Comparison of Stopping Criteria and Warm-start Strategies}\label{subsec:comparison_catalyst}
First, we evaluate the performance of the previous strategies when applying Catalyst to SVRG, SAGA and MISO. The results are presented in Figures~\ref{catalyst:fig:svrg}, \ref{catalyst:fig:saga}, and \ref{catalyst:fig:miso}, respectively.
\begin{figure}[hbtp]
   \centering
   ~~\includegraphics[width=0.31\linewidth]{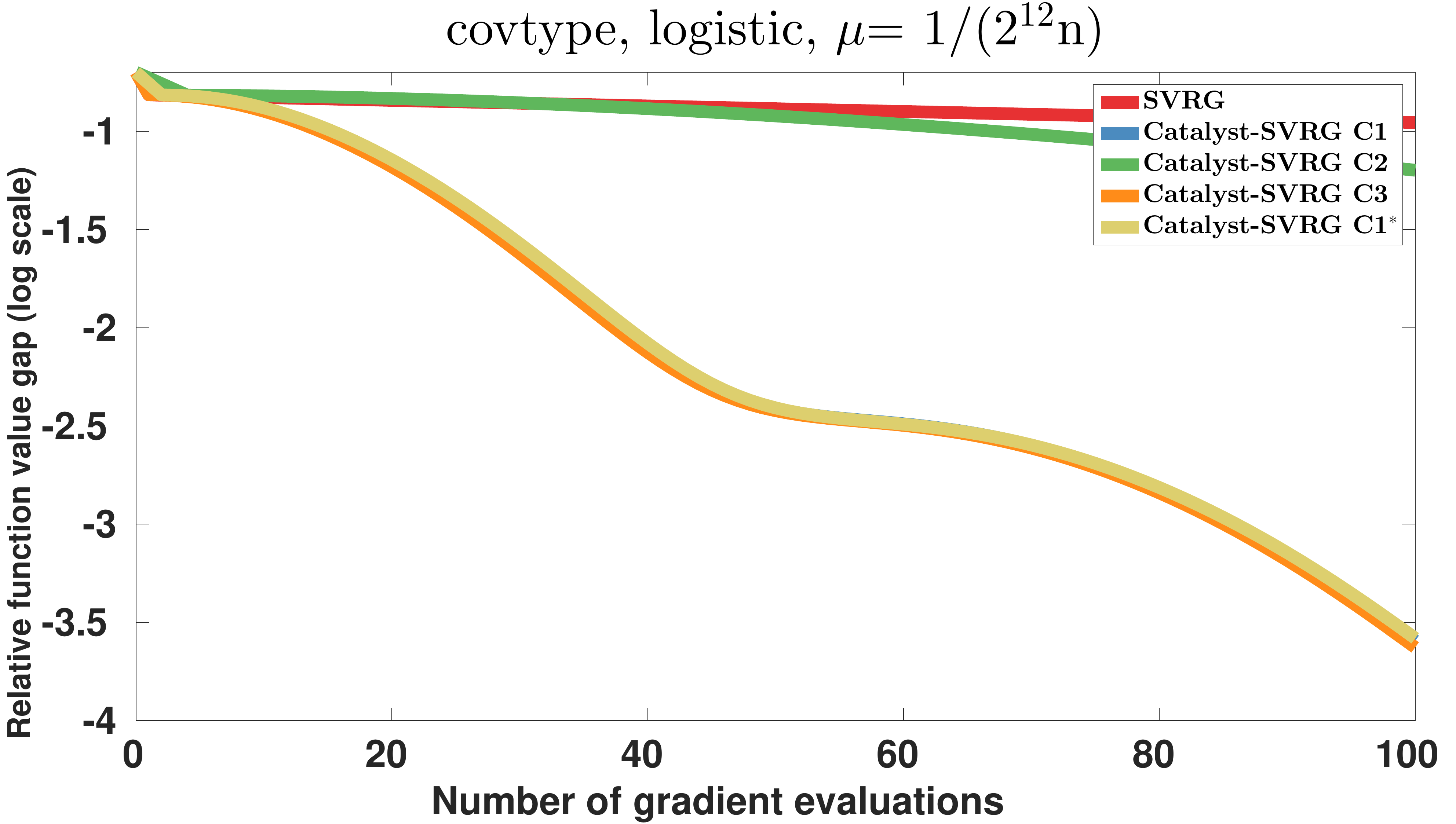}~ 
   ~~\includegraphics[width=0.31\linewidth]{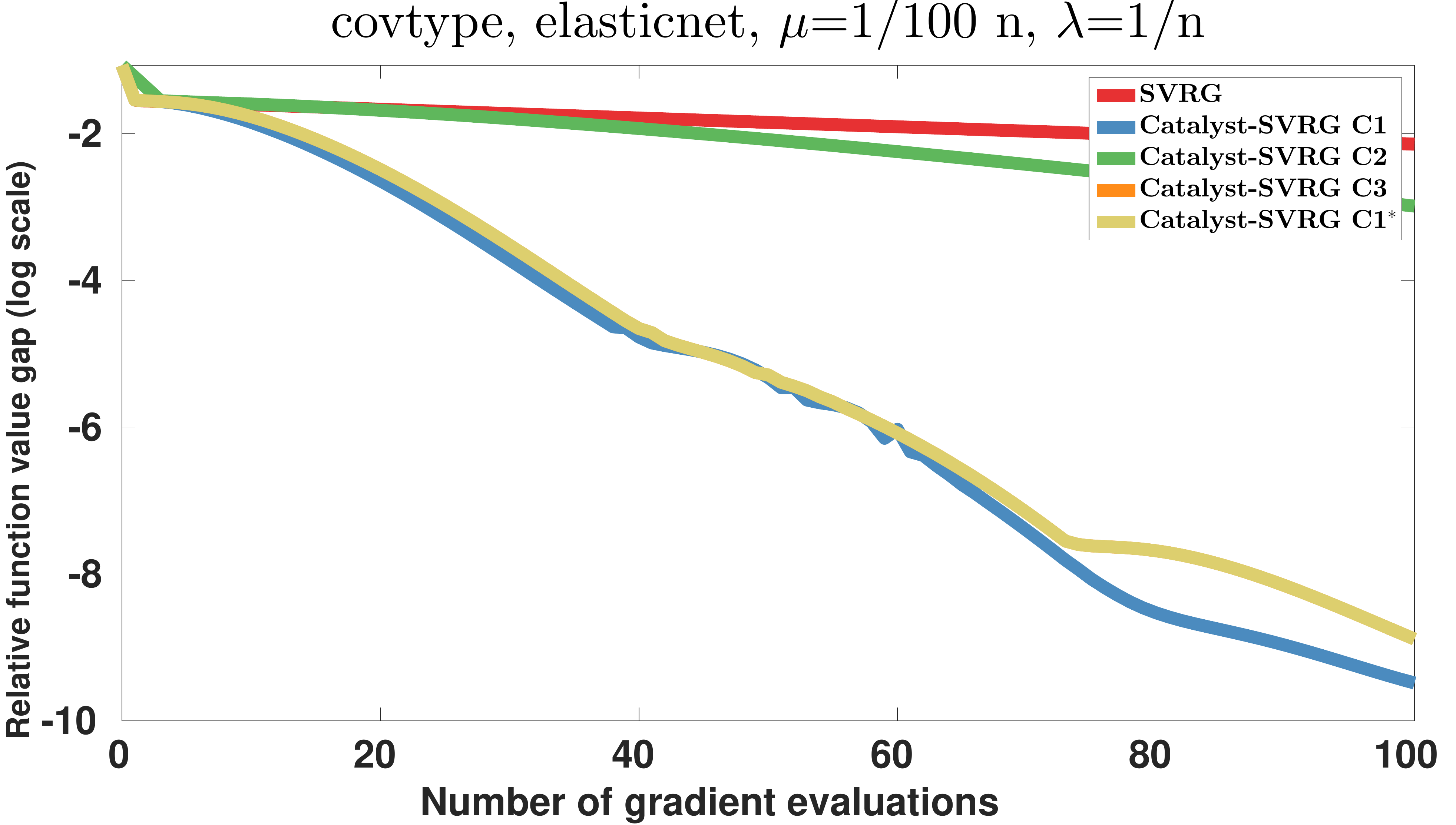}~ 
   ~~\includegraphics[width=0.31\linewidth]{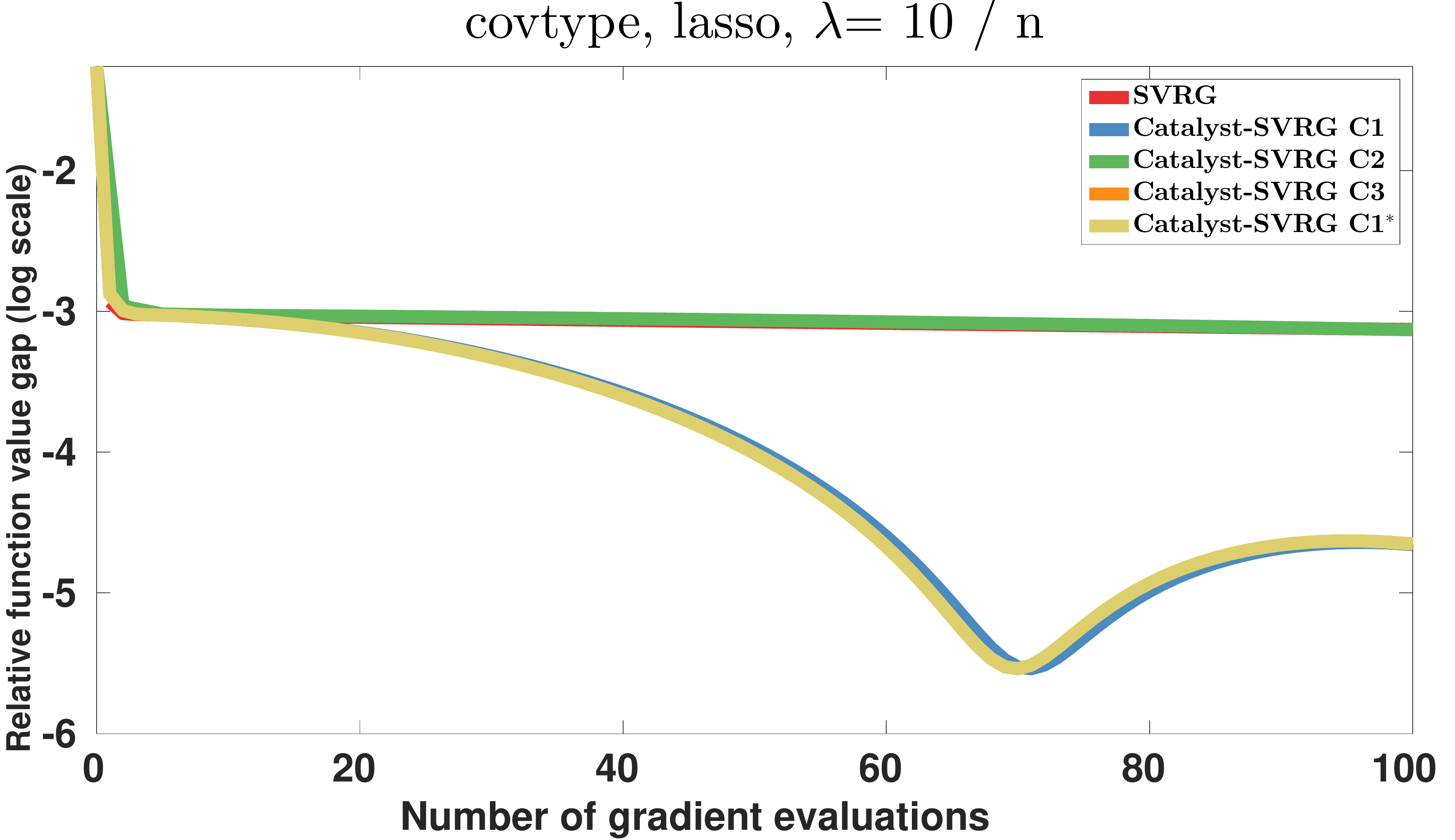}\\
   ~~\includegraphics[width=0.31\linewidth]{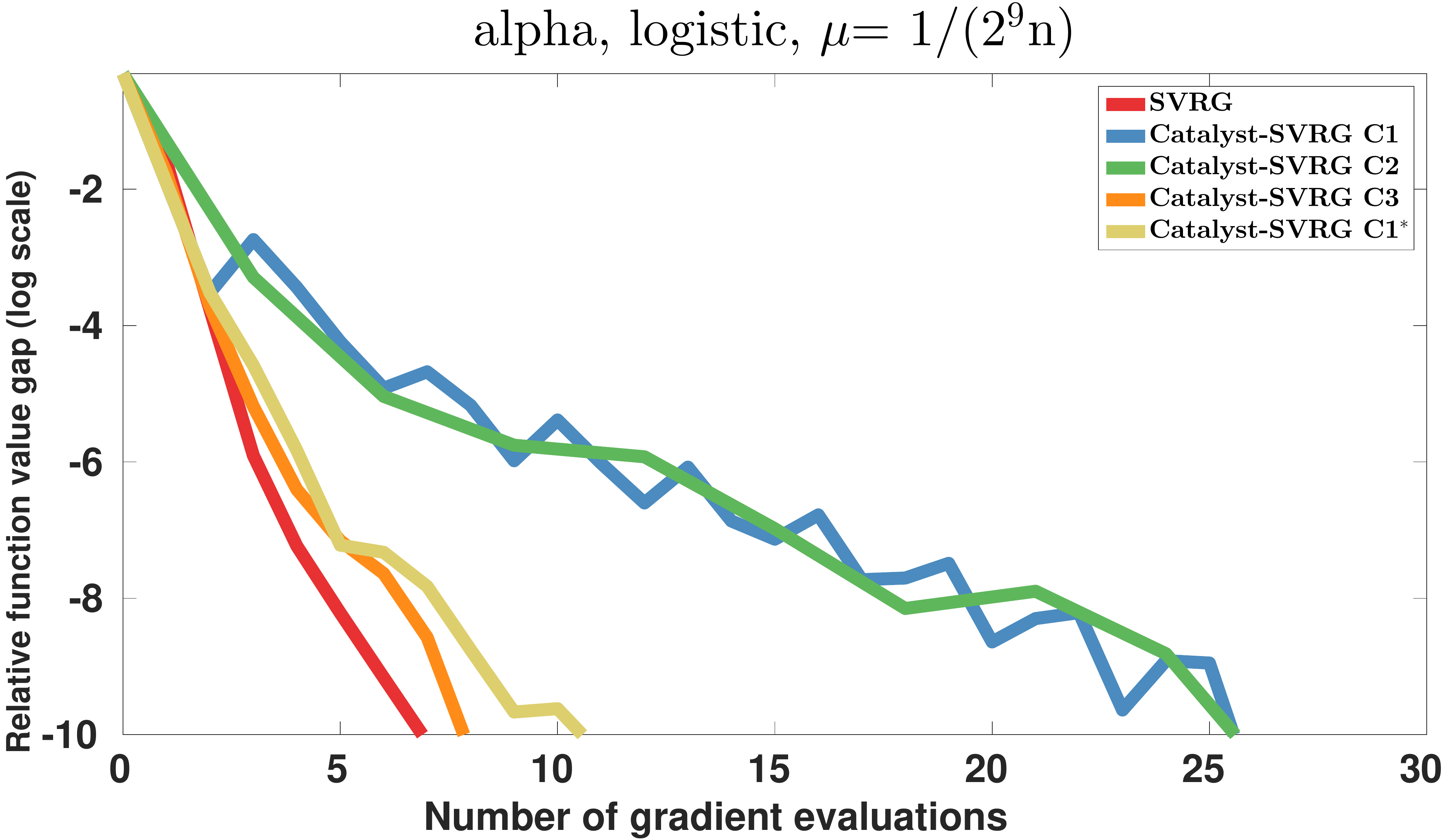}~ 
   ~~\includegraphics[width=0.31\linewidth]{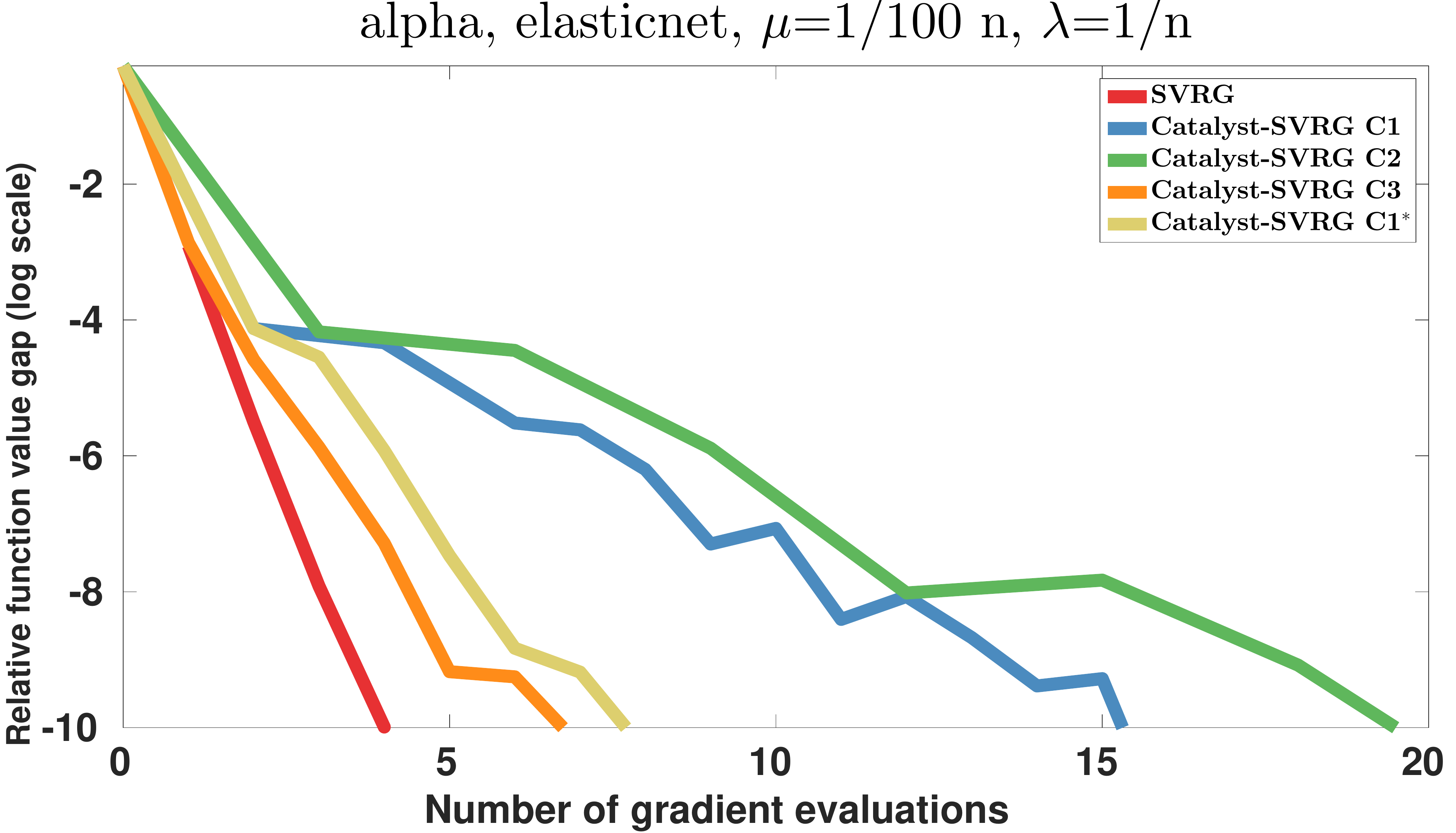}~ 
   ~~\includegraphics[width=0.31\linewidth]{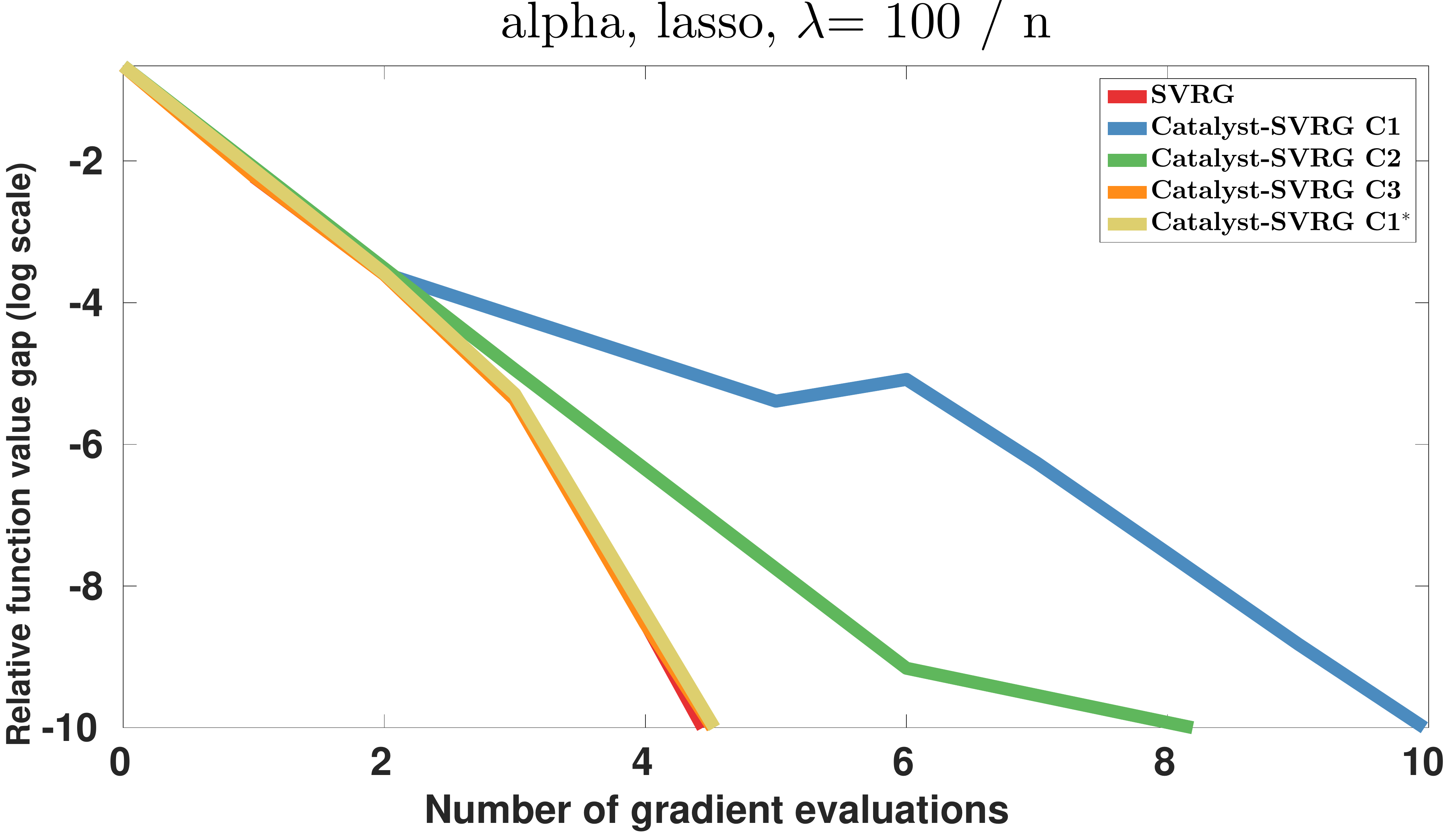}\\
   ~~\includegraphics[width=0.31\linewidth]{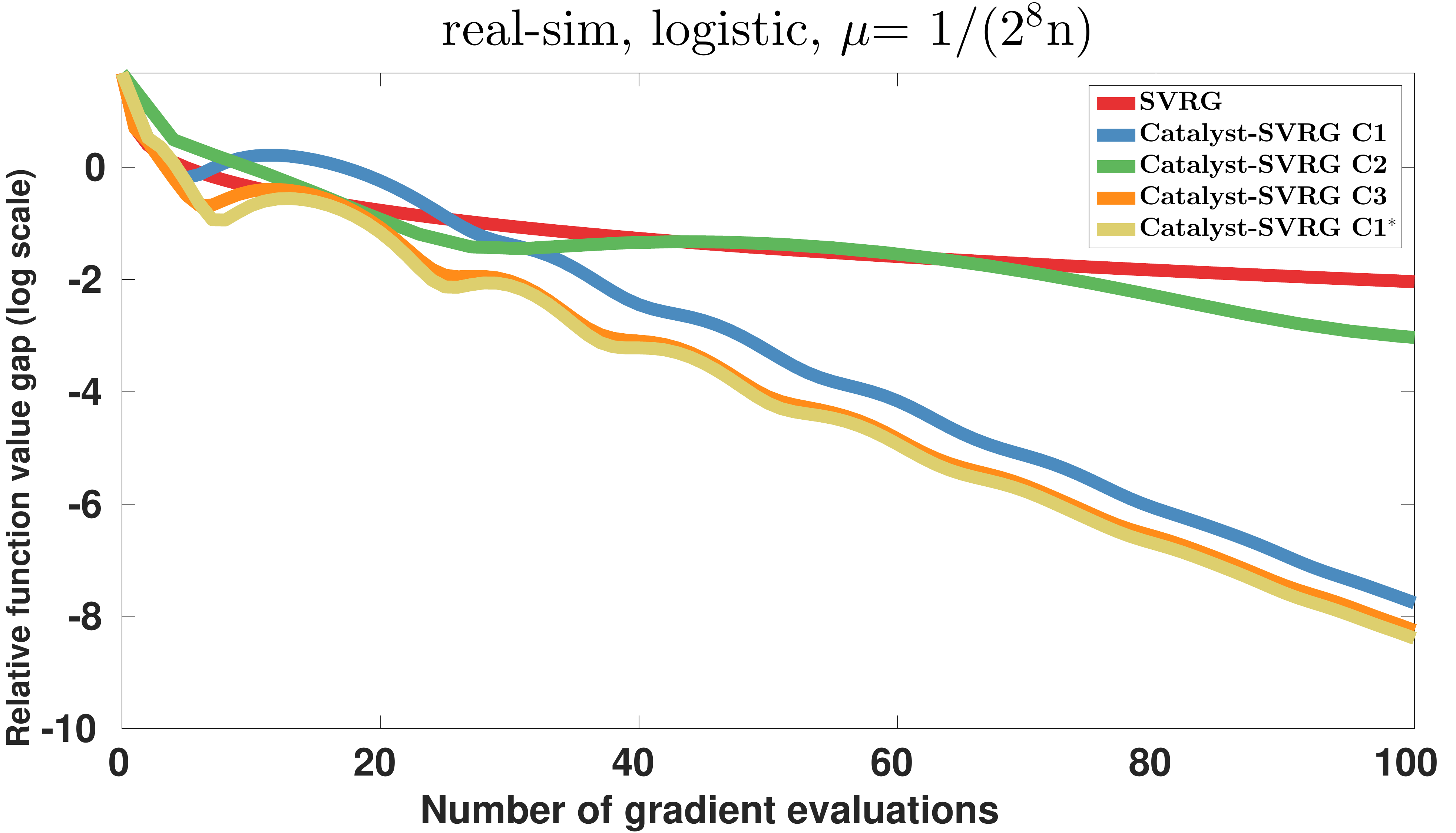}~ 
   ~~\includegraphics[width=0.31\linewidth]{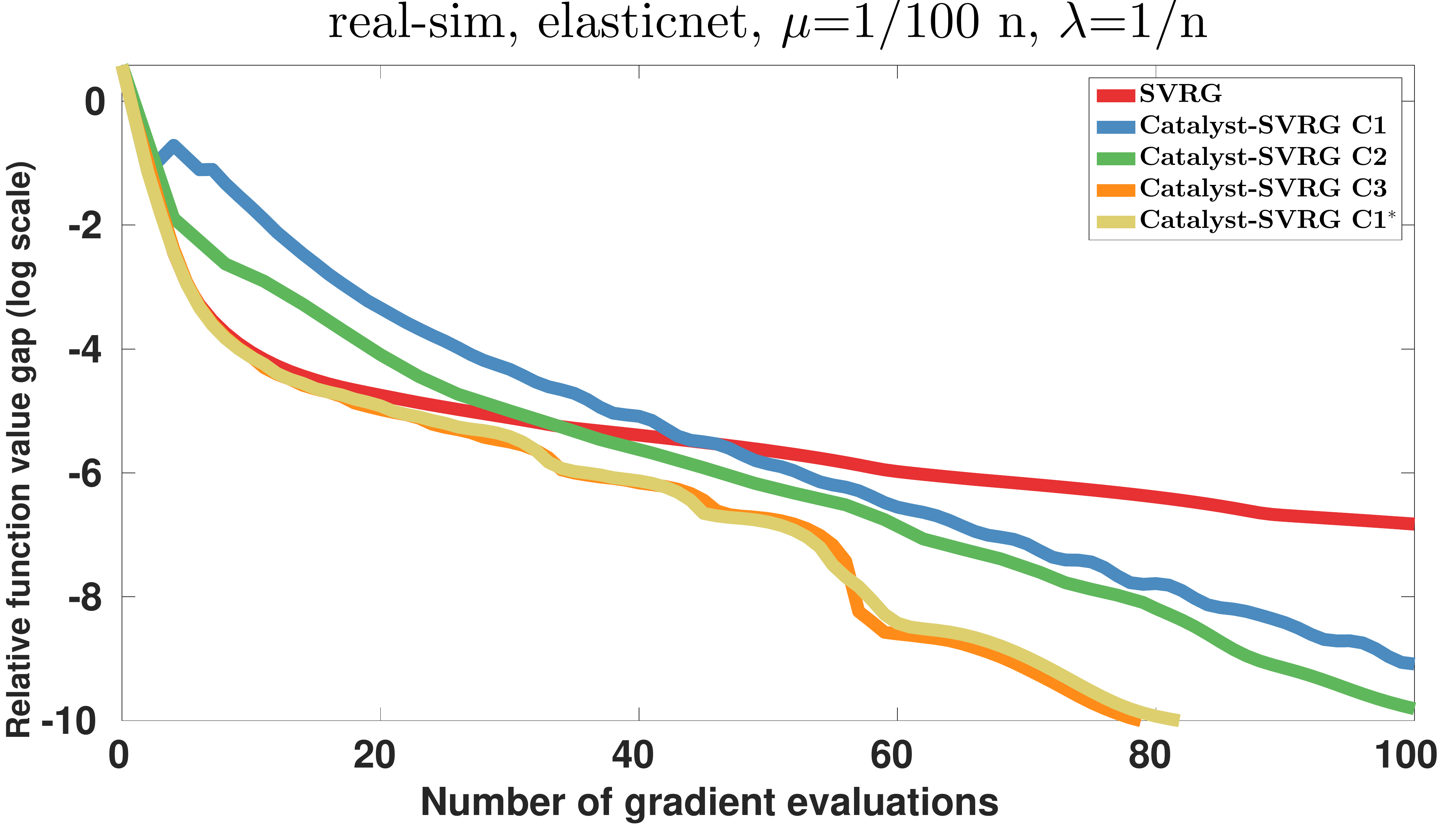}~ 
   ~~\includegraphics[width=0.31\linewidth]{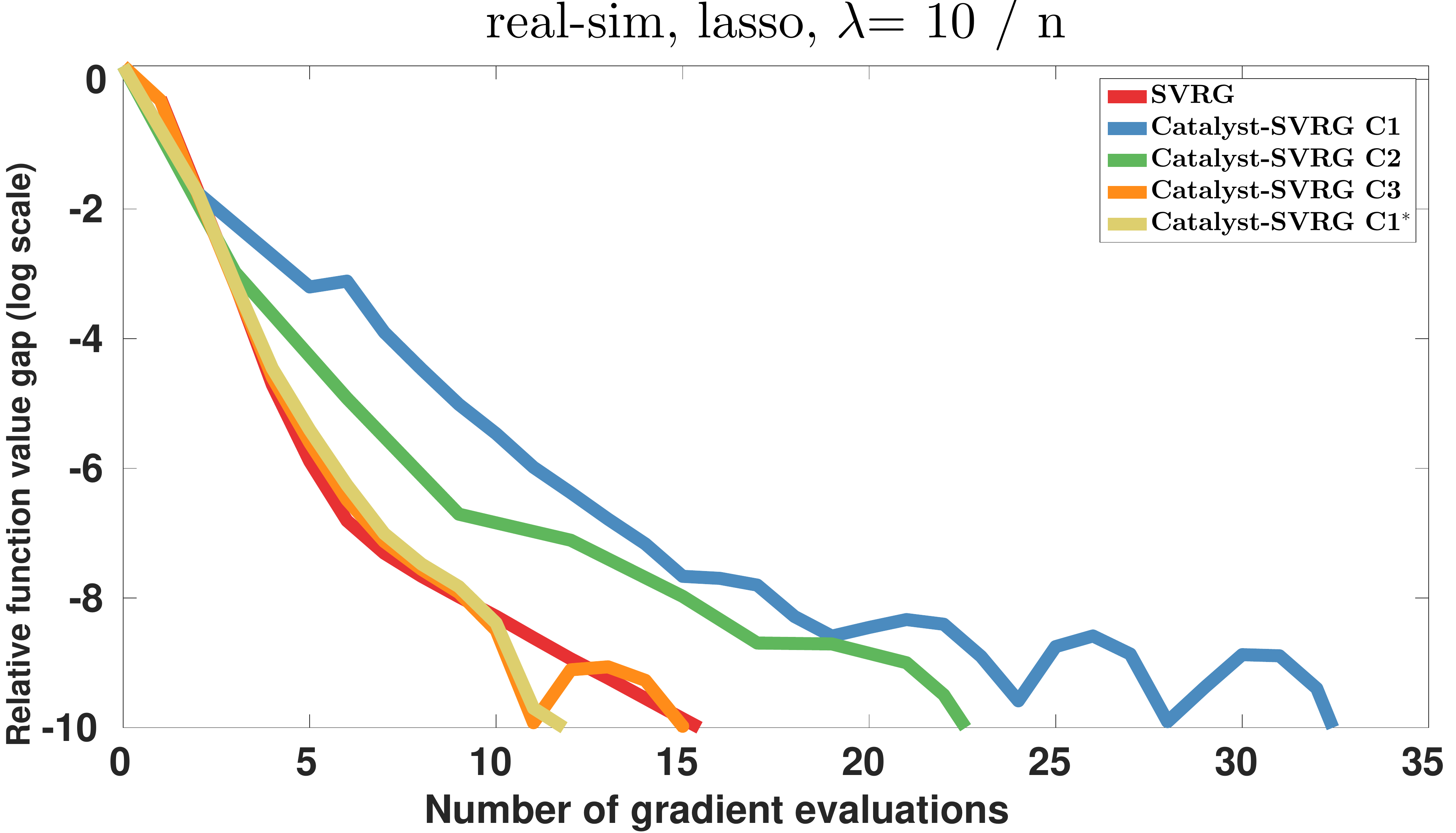}\\
   ~~\includegraphics[width=0.31\linewidth]{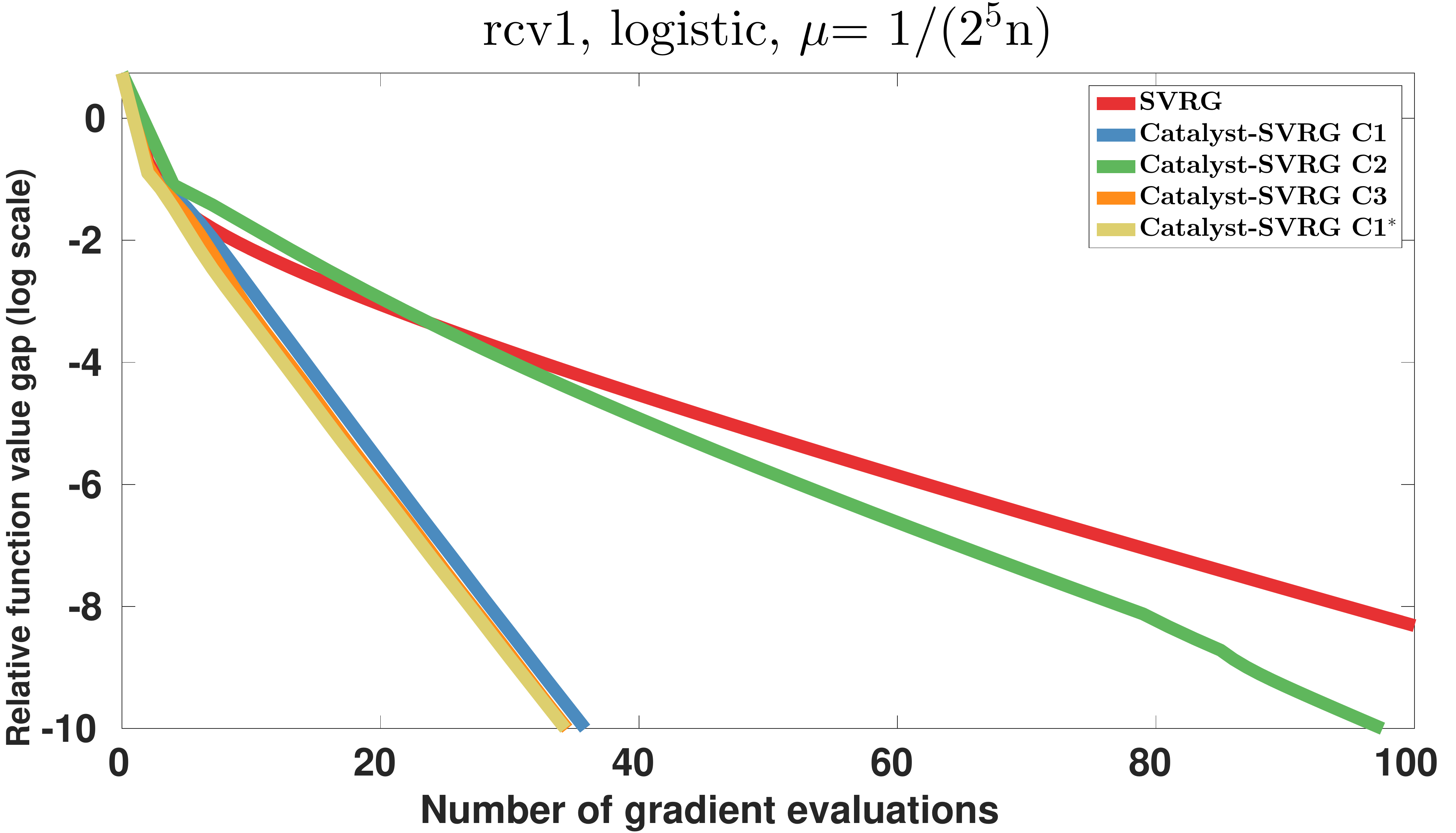}~ 
   ~~\includegraphics[width=0.31\linewidth]{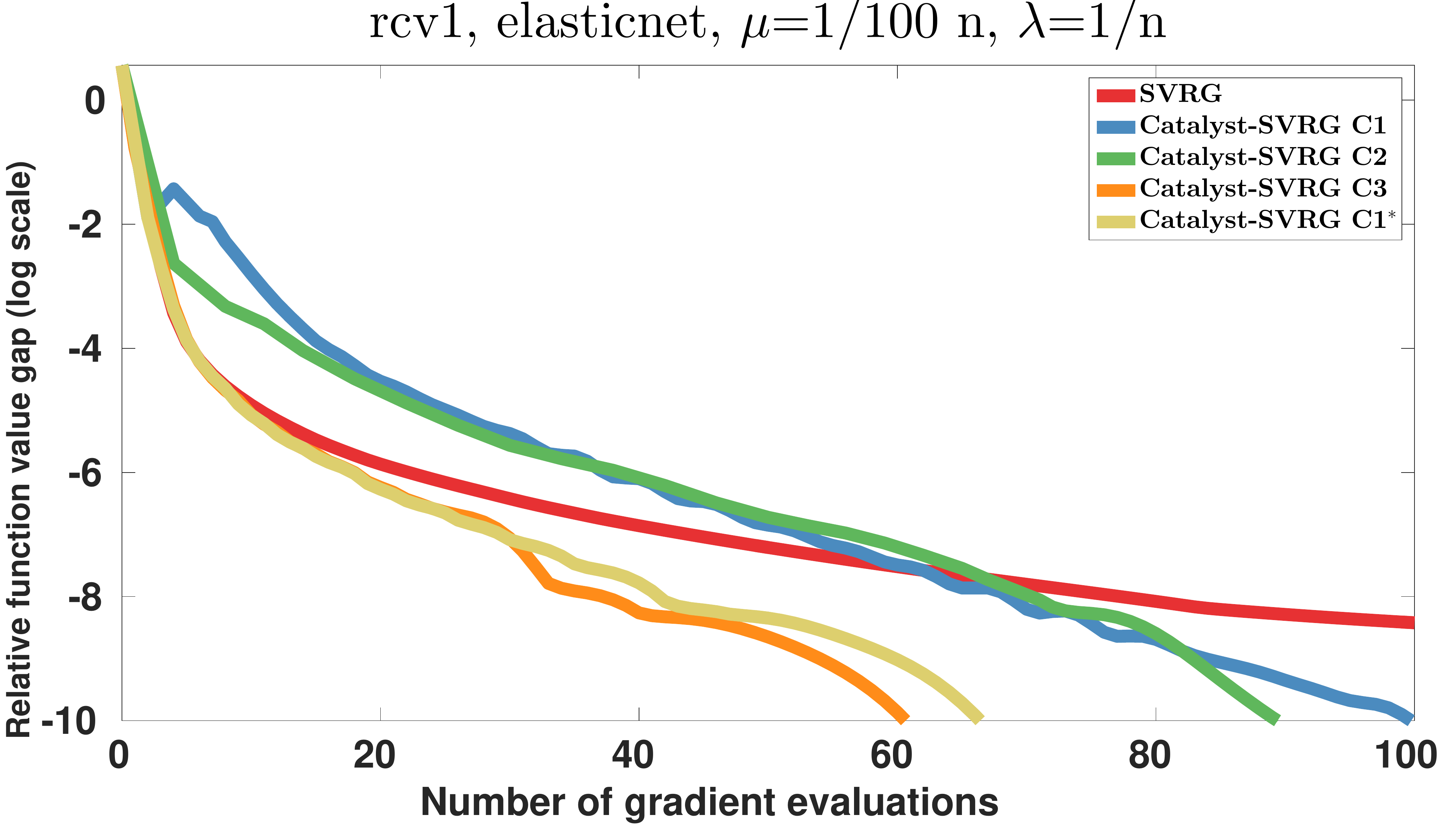}~ 
   ~~\includegraphics[width=0.31\linewidth]{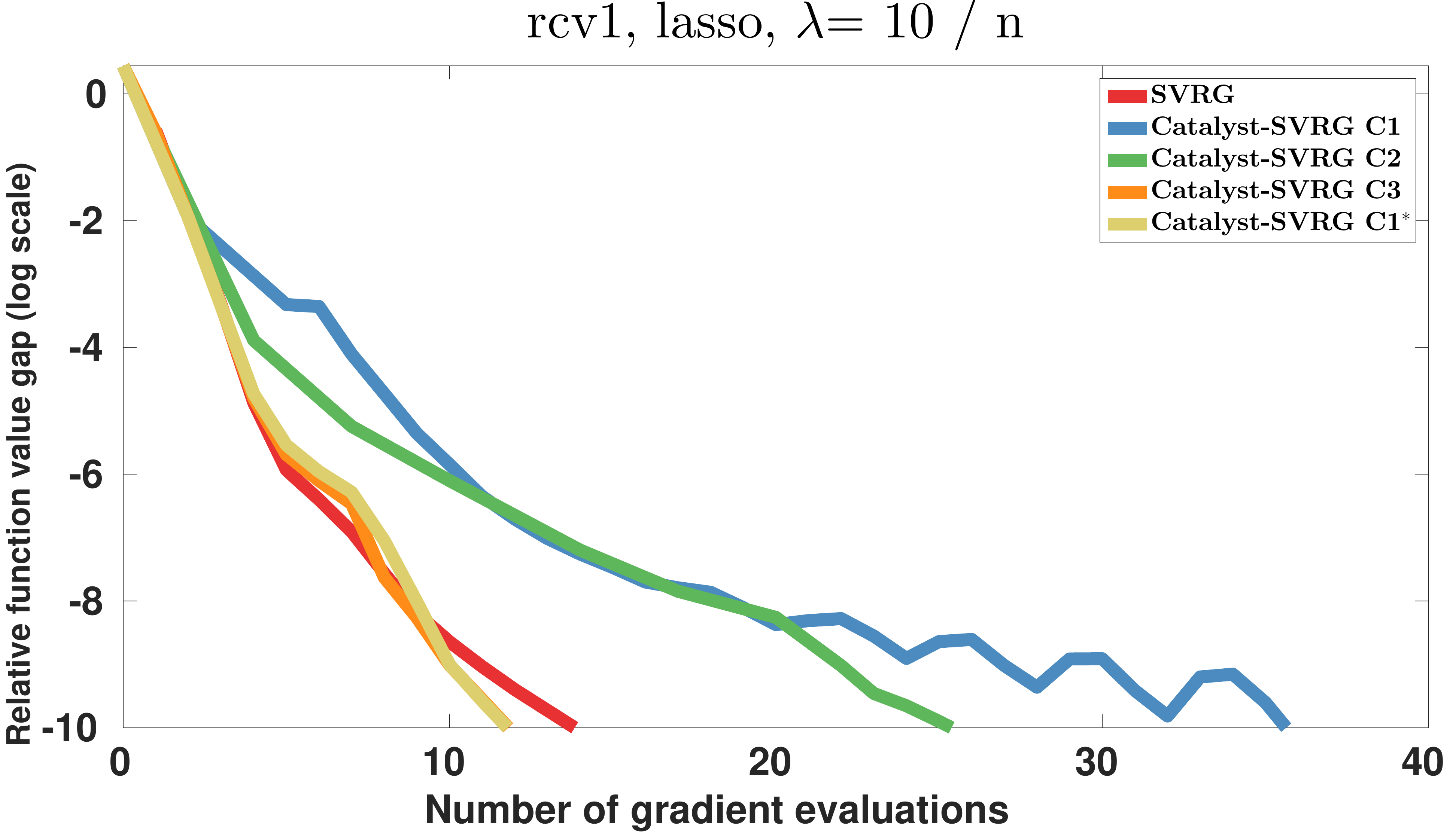}\\
   ~~\includegraphics[width=0.31\linewidth]{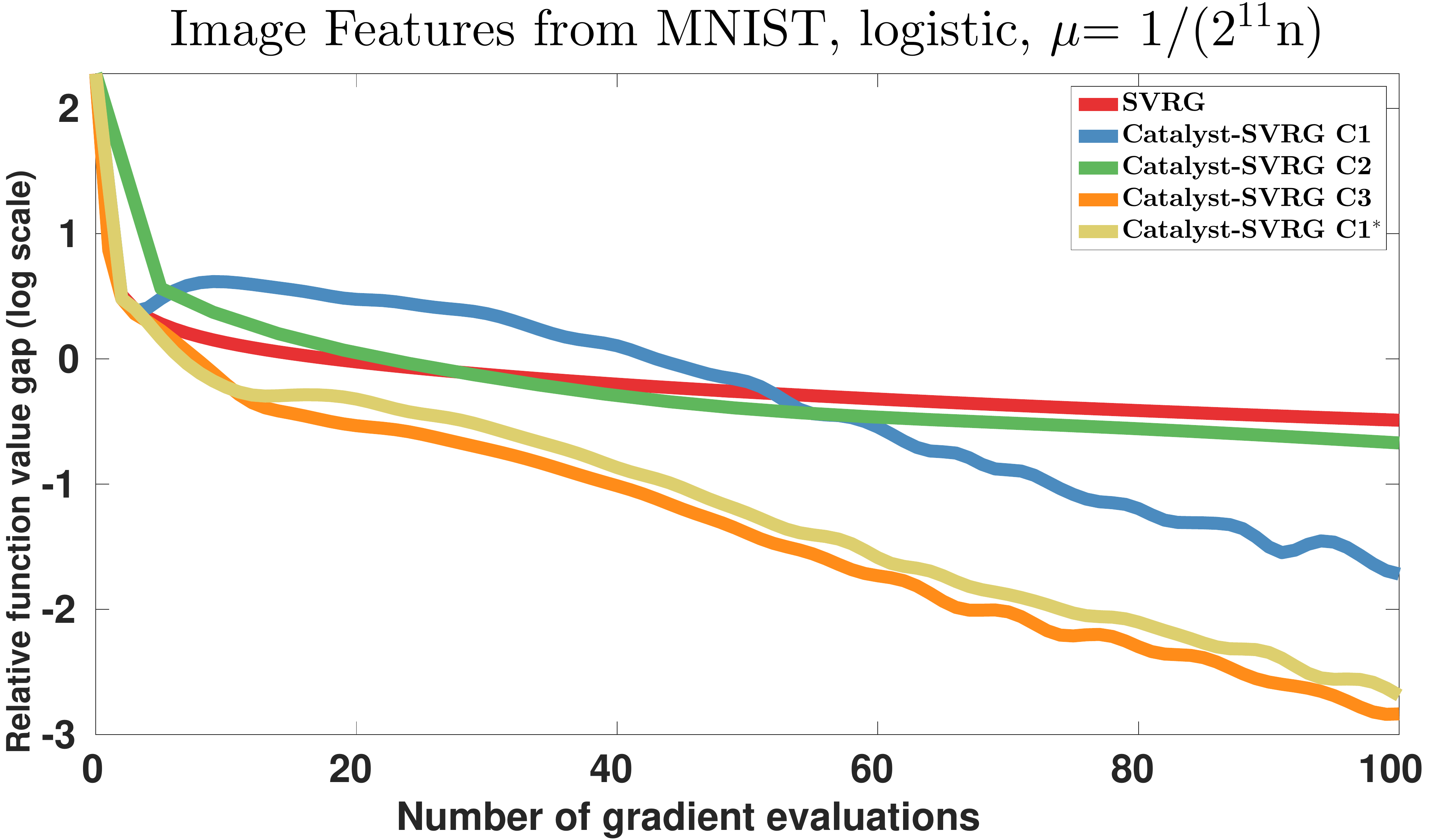}~ 
   ~~\includegraphics[width=0.31\linewidth]{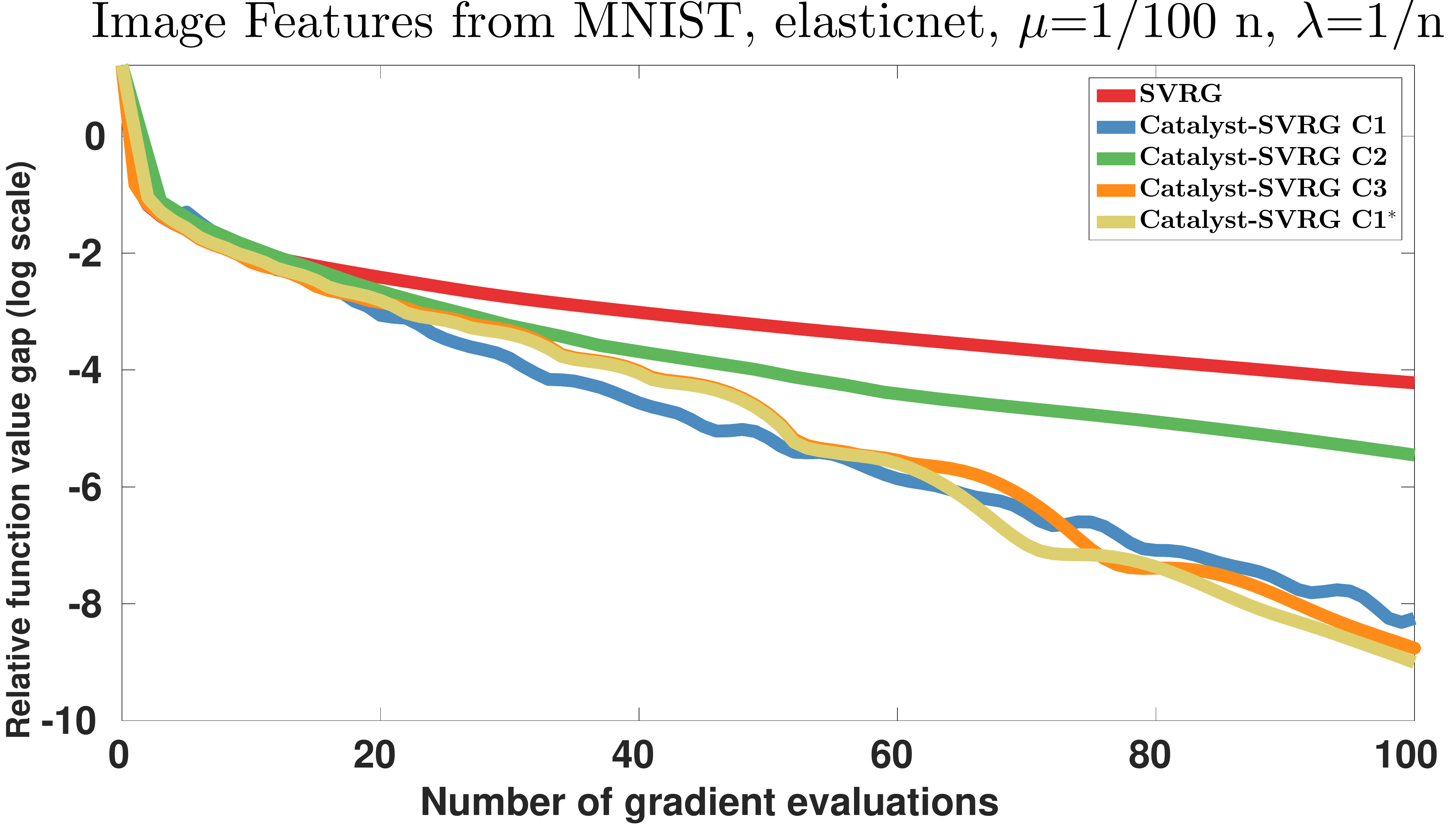}~ 
   ~~\includegraphics[width=0.31\linewidth]{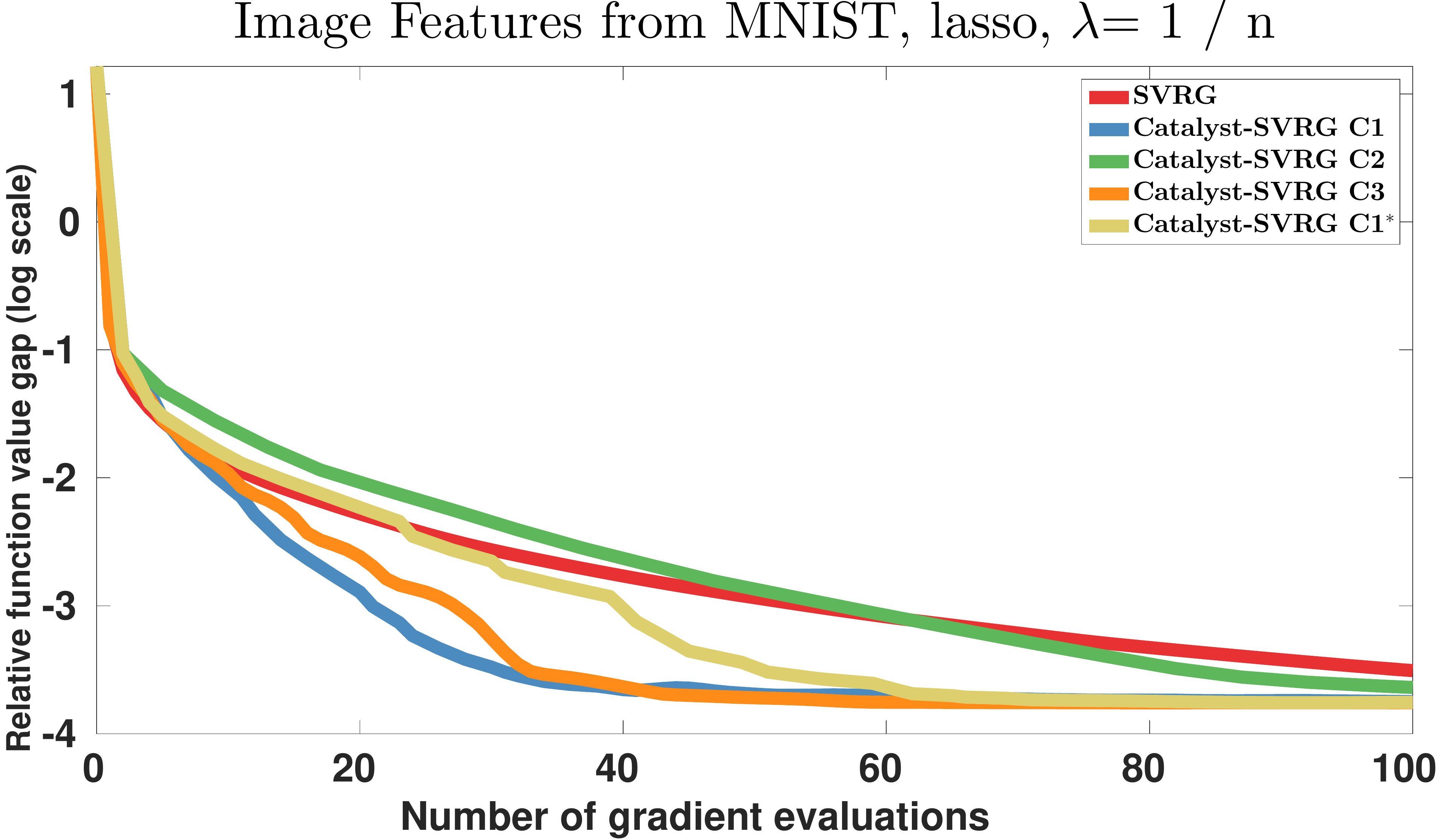}\\
   ~~\includegraphics[width=0.31\linewidth]{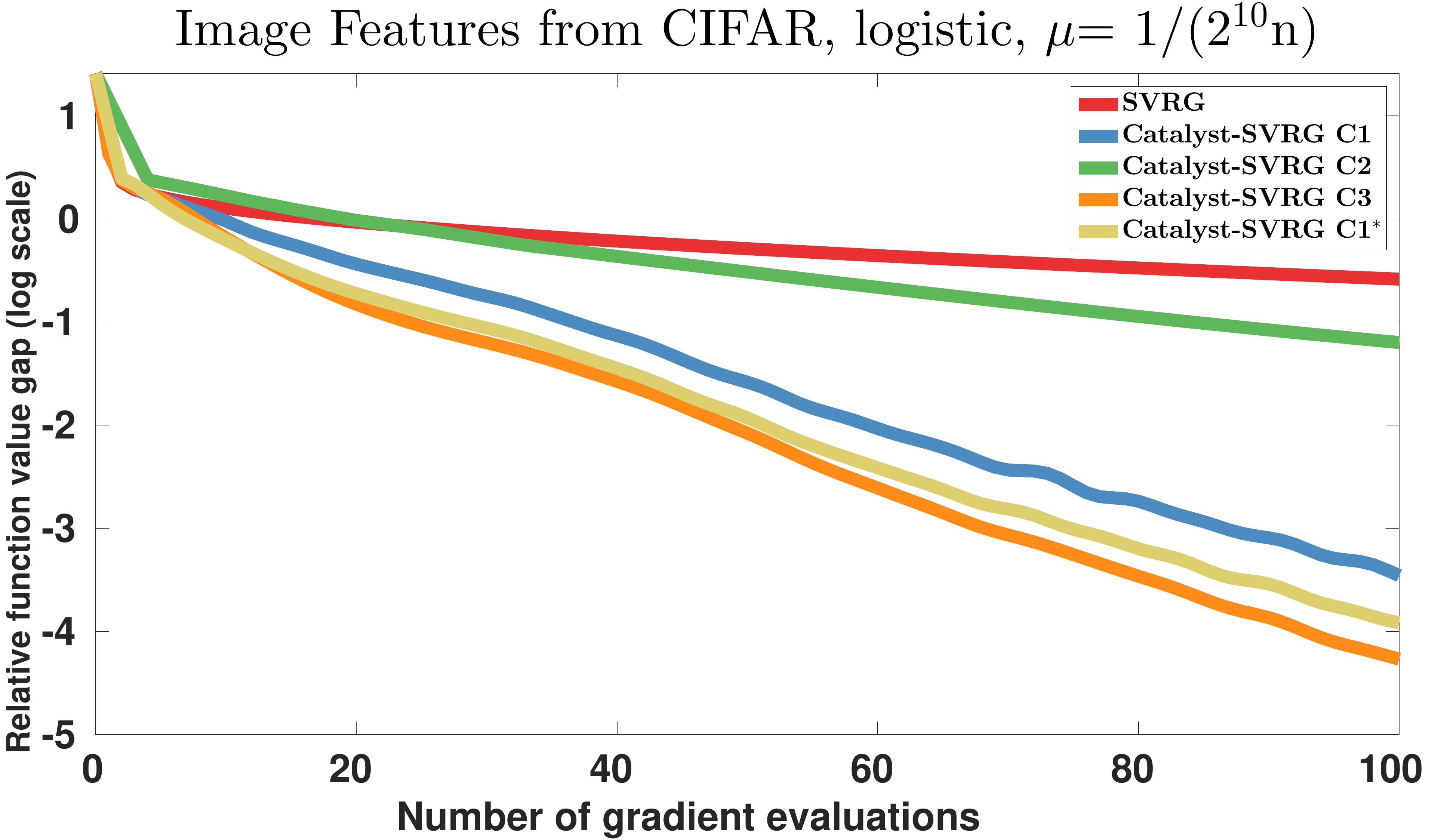}~ 
   ~~\includegraphics[width=0.31\linewidth]{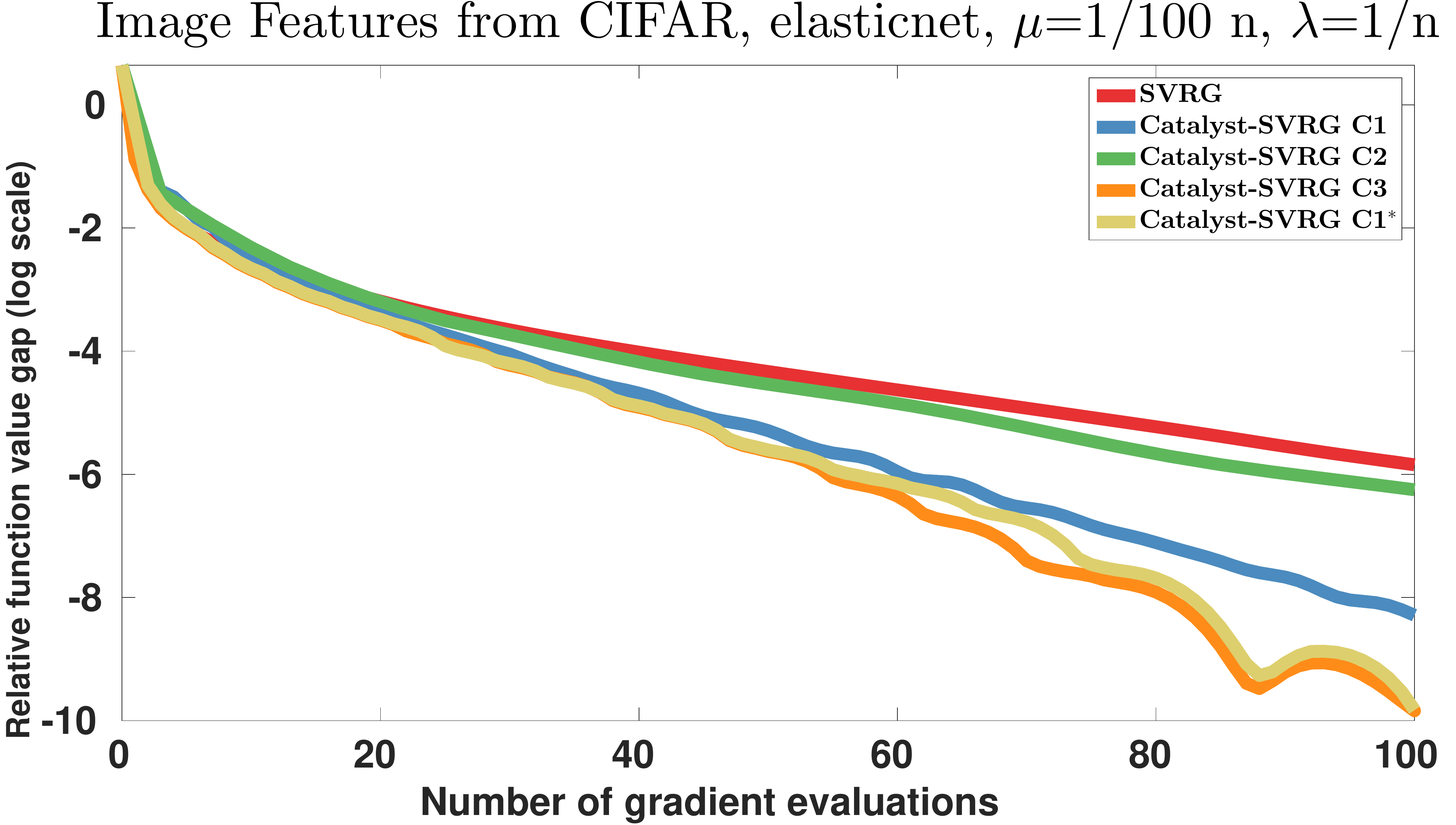}~ 
   ~~\includegraphics[width=0.31\linewidth]{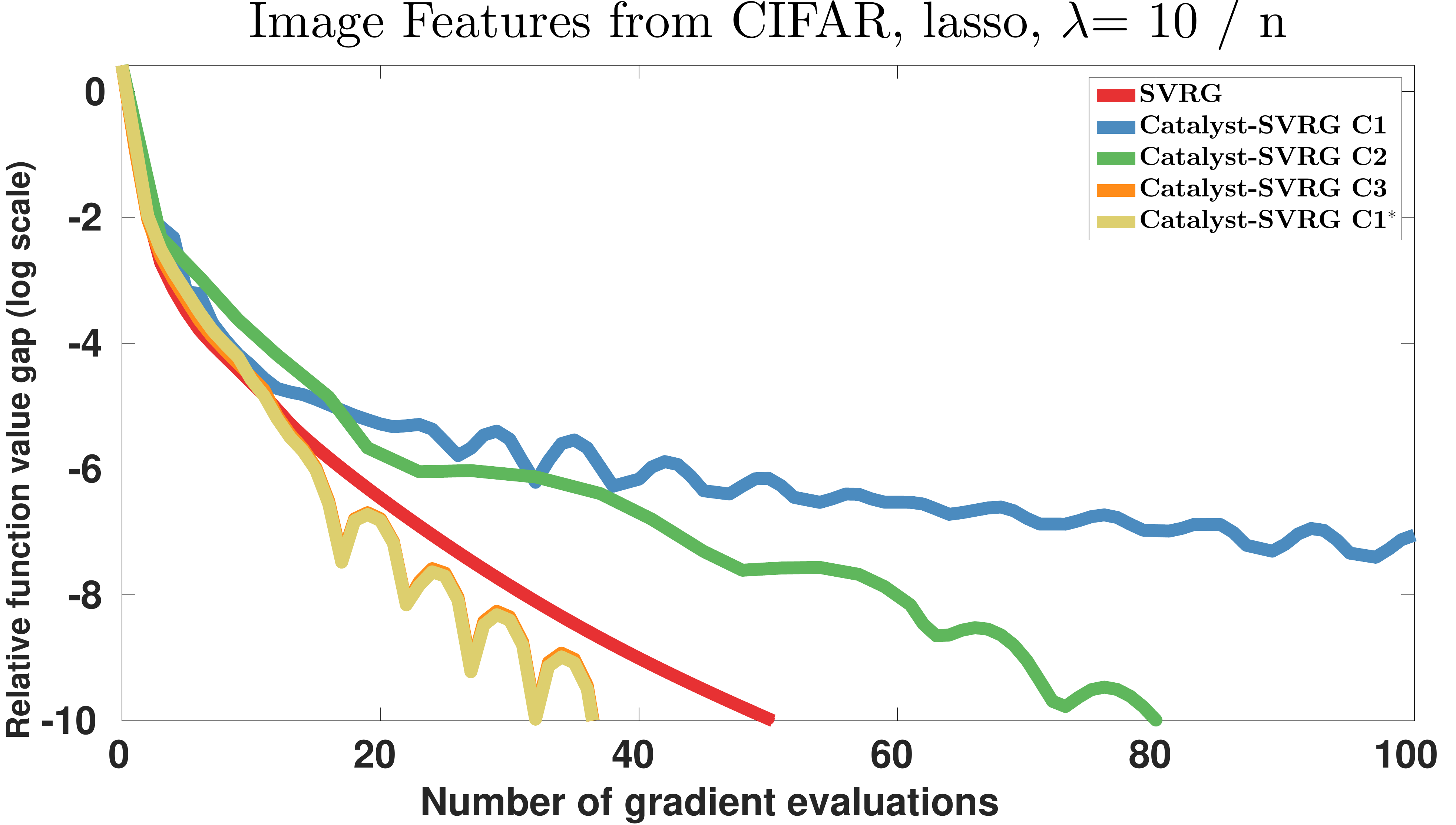}
   \caption{Experimental study of different stopping criterions for Catalyst-SVRG. We plot the value~$f(x_k)/f^\star-1$ as
   a function of the number of gradient evaluations, on a logarithmic scale;
   the optimal value $f^\star$ is estimated with a duality gap. }\label{catalyst:fig:svrg}
\end{figure}

\paragraph{Observations for Catalyst-SVRG.} We remark that in most of the
cases, the curve of \Ctrois and \Cunstar are superimposed, meaning that one
pass through the data is enough for solving the subproblem up to the required
accuracy. Moreover, they give the best performance among all criterions.
Regarding the logistic regression problem, the acceleration is significant
(even huge for the covtype data set) except for alpha, where only \Ctrois and~\Cunstar do not degrade significantly the performance.  For sparse problems,
the effect of acceleration is more mitigated, with 7 cases
out of 12 exhibiting important acceleration and 5 cases no acceleration. As
before, \Ctrois and \Cunstar are the only strategies that never degrade
performance.

One reason explaining why acceleration is not systematic may be the ability of
incremental methods to adapt to the unknown strong convexity parameter~$\mu'
\geq \mu$ hidden in the objective's loss, or local strong convexity near the solution.
When~$\mu'/L \geq 1/n$, we indeed obtain a well-conditioned regime where
acceleration should not occur theoretically. In fact the complexity
$O(n\log(1/\varepsilon))$ is already optimal in this regime, see \cite{tightbound_yossi,tightbound_blake}.
For sparse problems, conditioning of the problem with respect to the linear
subspace where the solution lies might also play a role, even though our analysis
does not study this aspect.
 Therefore, this experiment suggests that
adaptivity to unknown strong convexity is of high interest for incremental
optimization.

\begin{figure}[hbtp]
   \centering
   ~~\includegraphics[width=0.31\linewidth]{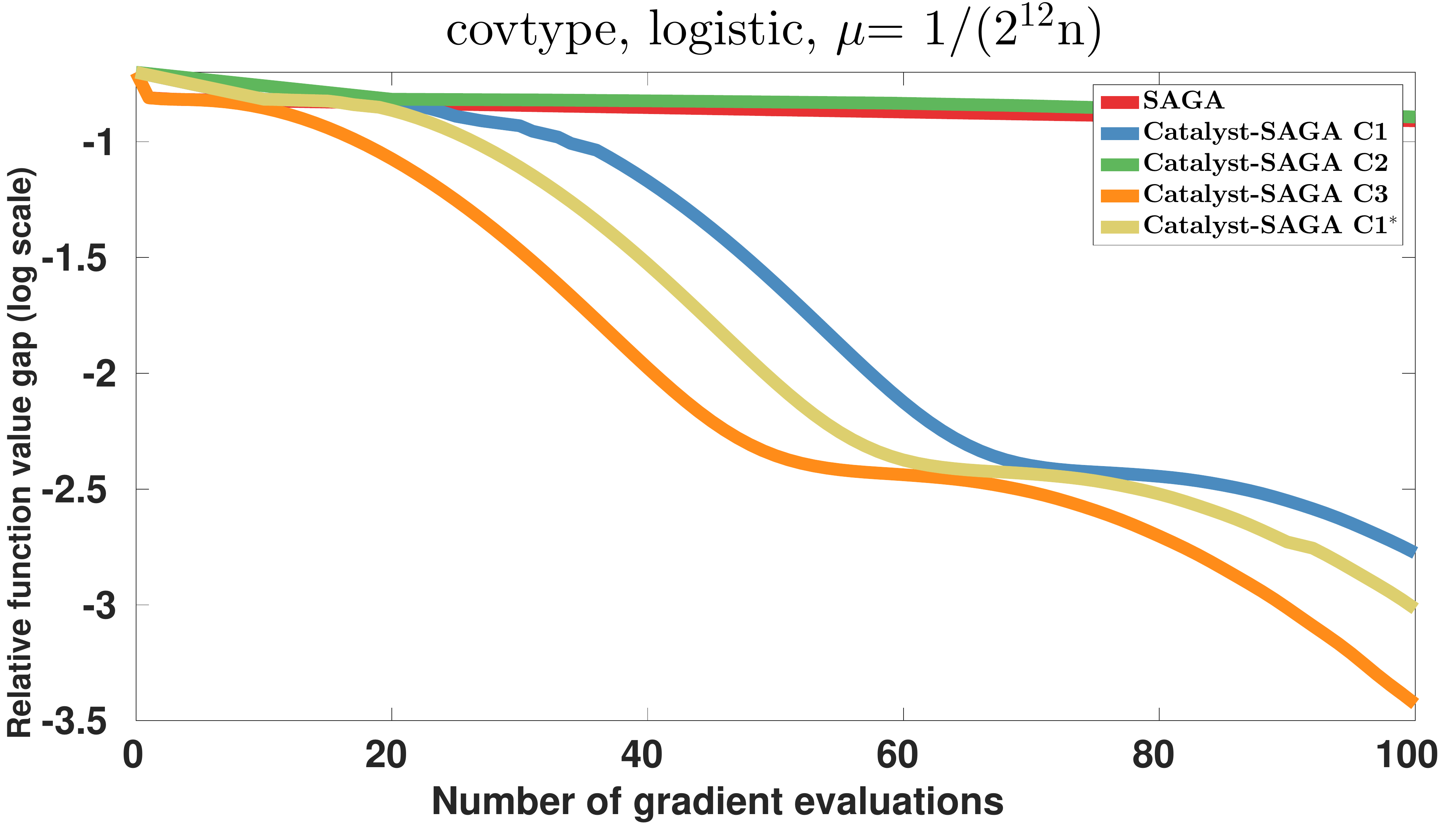}~ 
   ~~\includegraphics[width=0.31\linewidth]{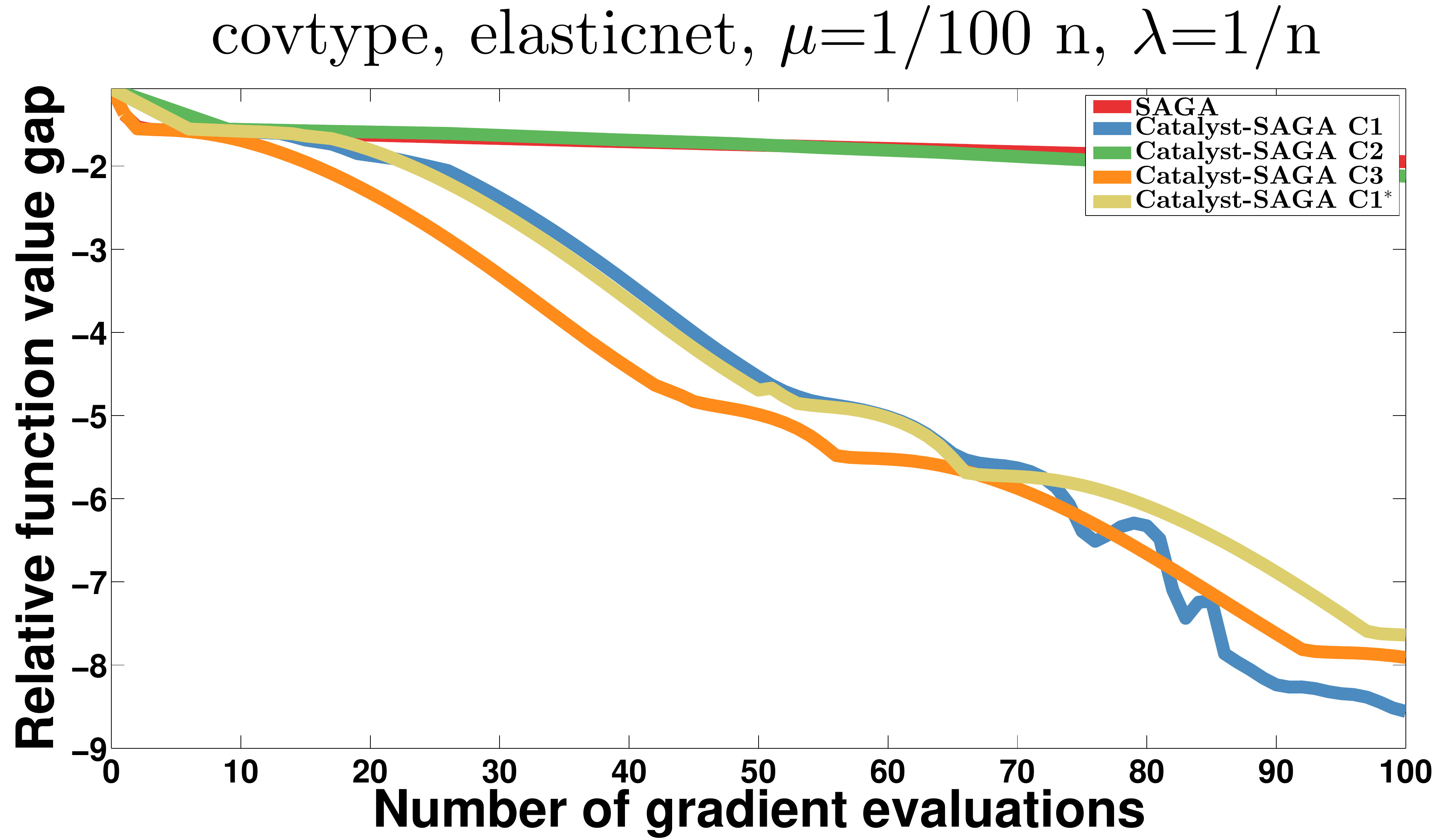}~ 
   ~~\includegraphics[width=0.31\linewidth]{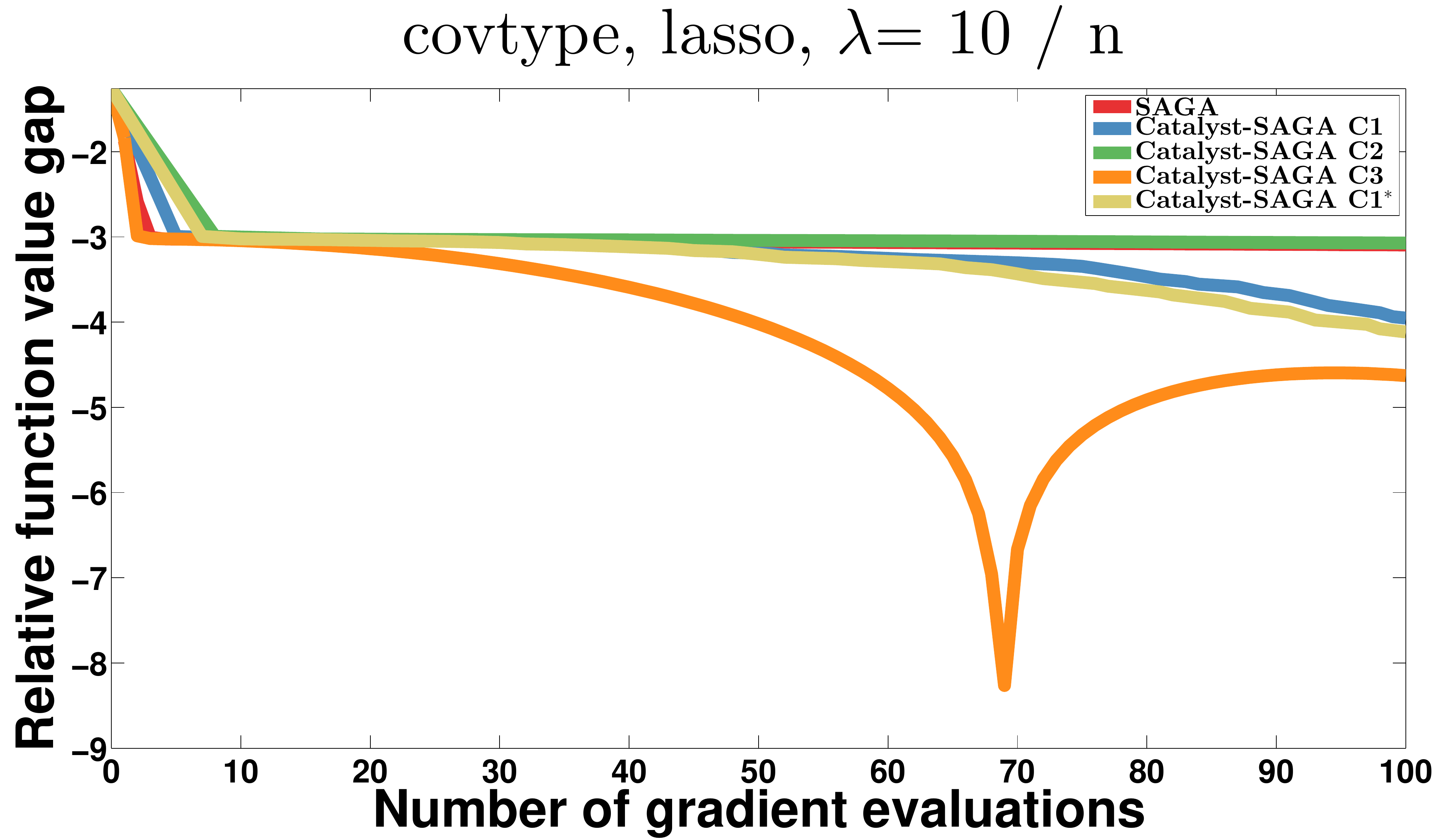}\\
   ~~\includegraphics[width=0.31\linewidth]{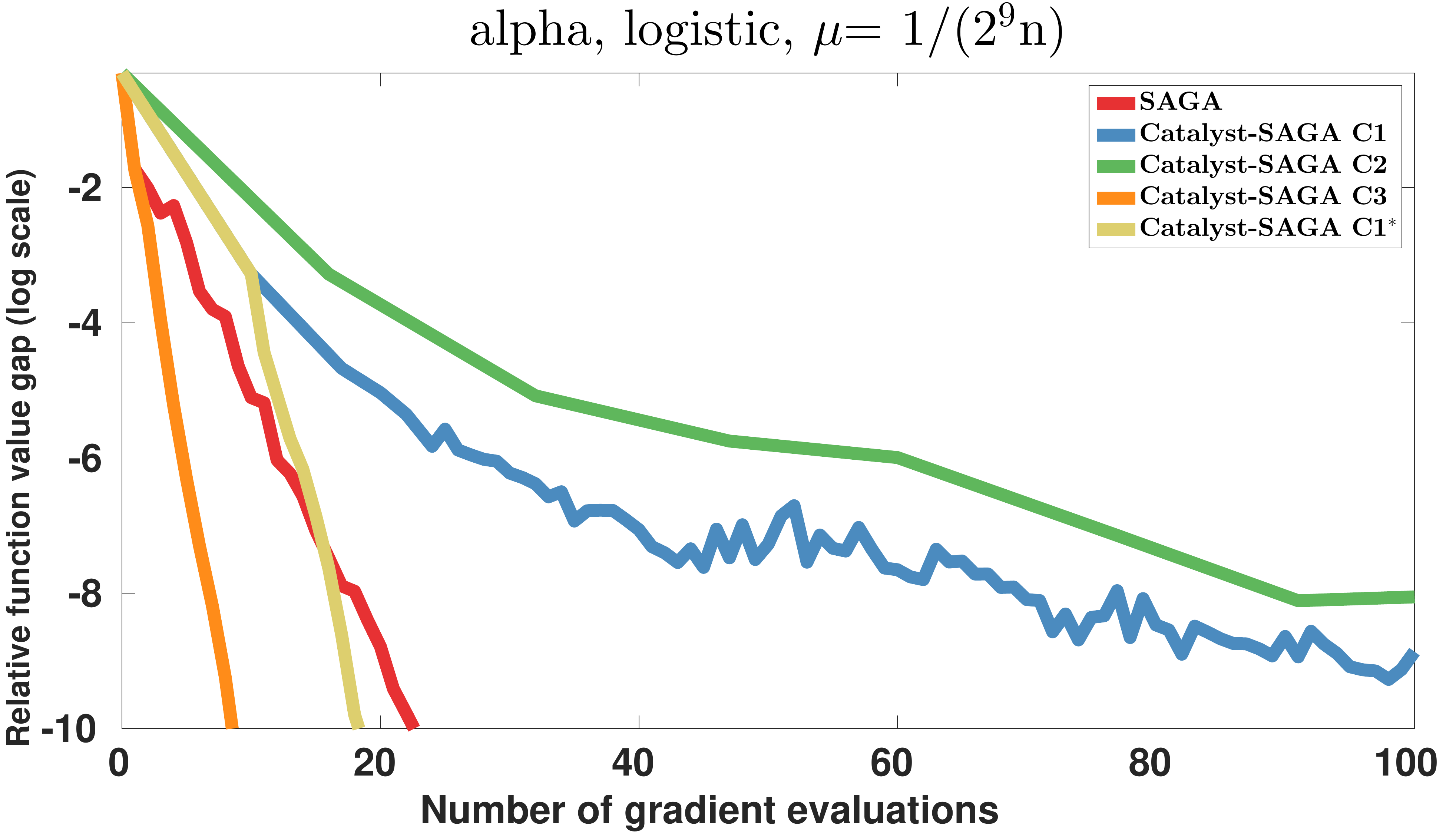}~ 
   ~~\includegraphics[width=0.31\linewidth]{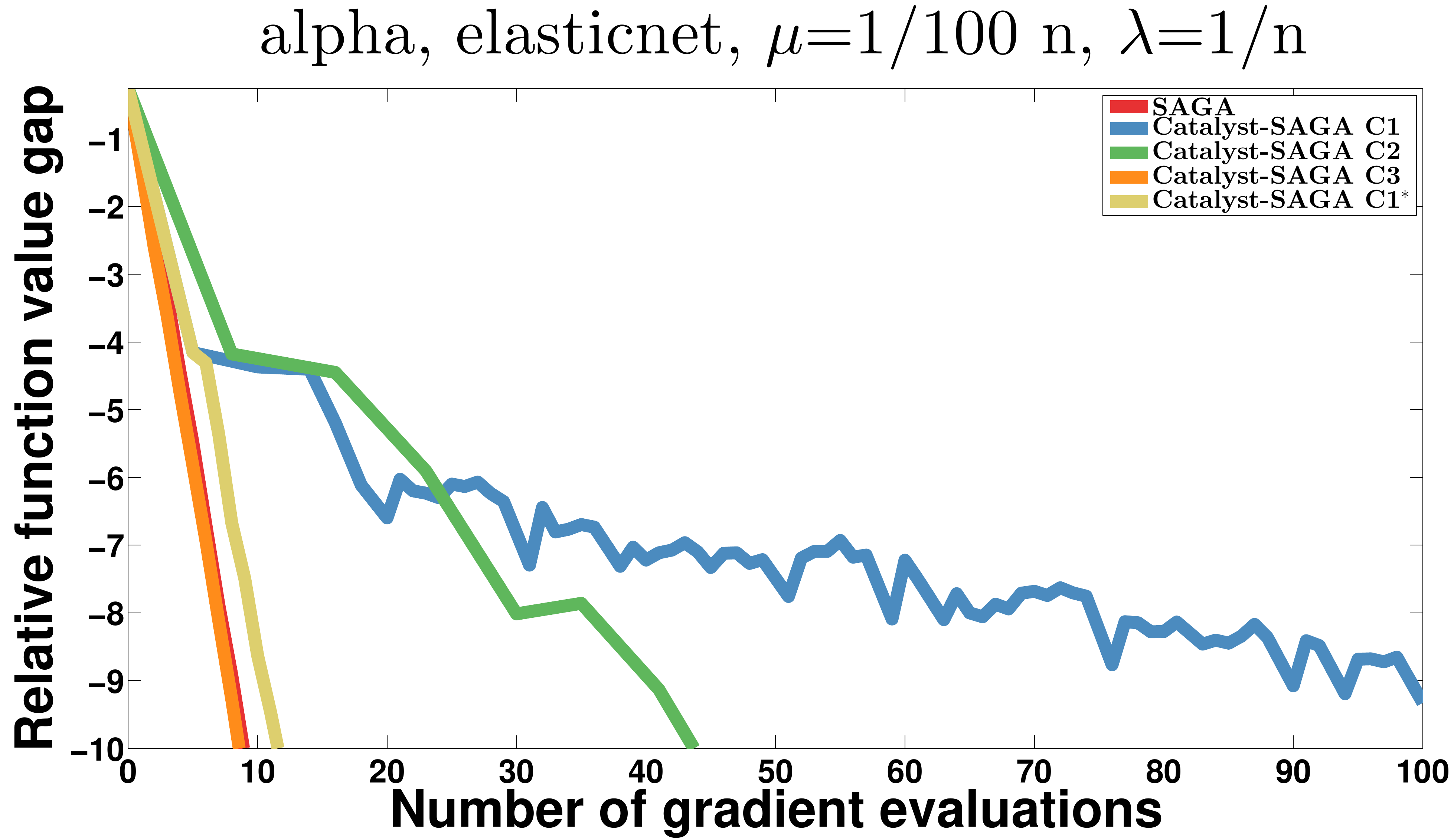}~ 
   ~~\includegraphics[width=0.31\linewidth]{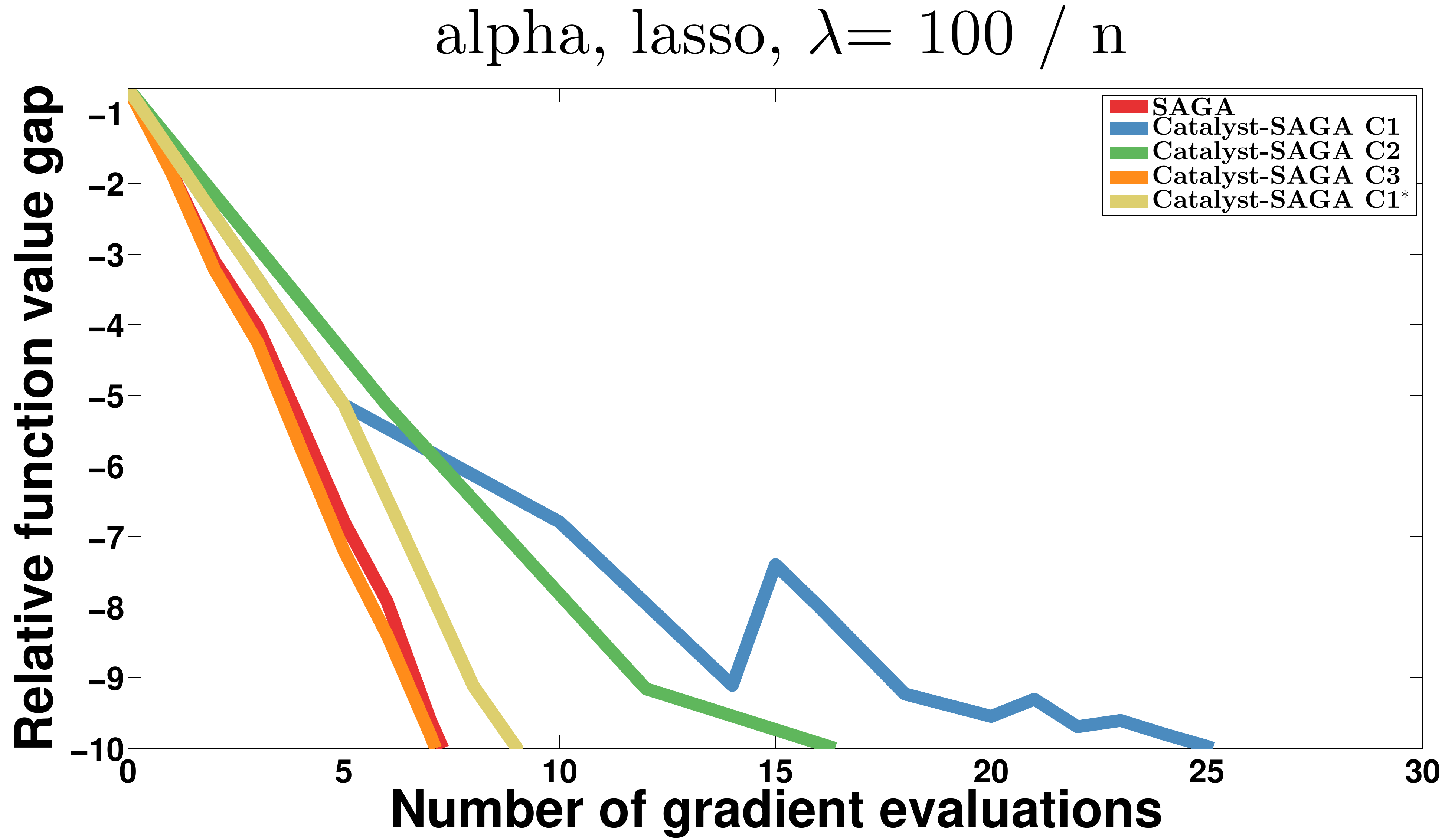}\\
   ~~\includegraphics[width=0.31\linewidth]{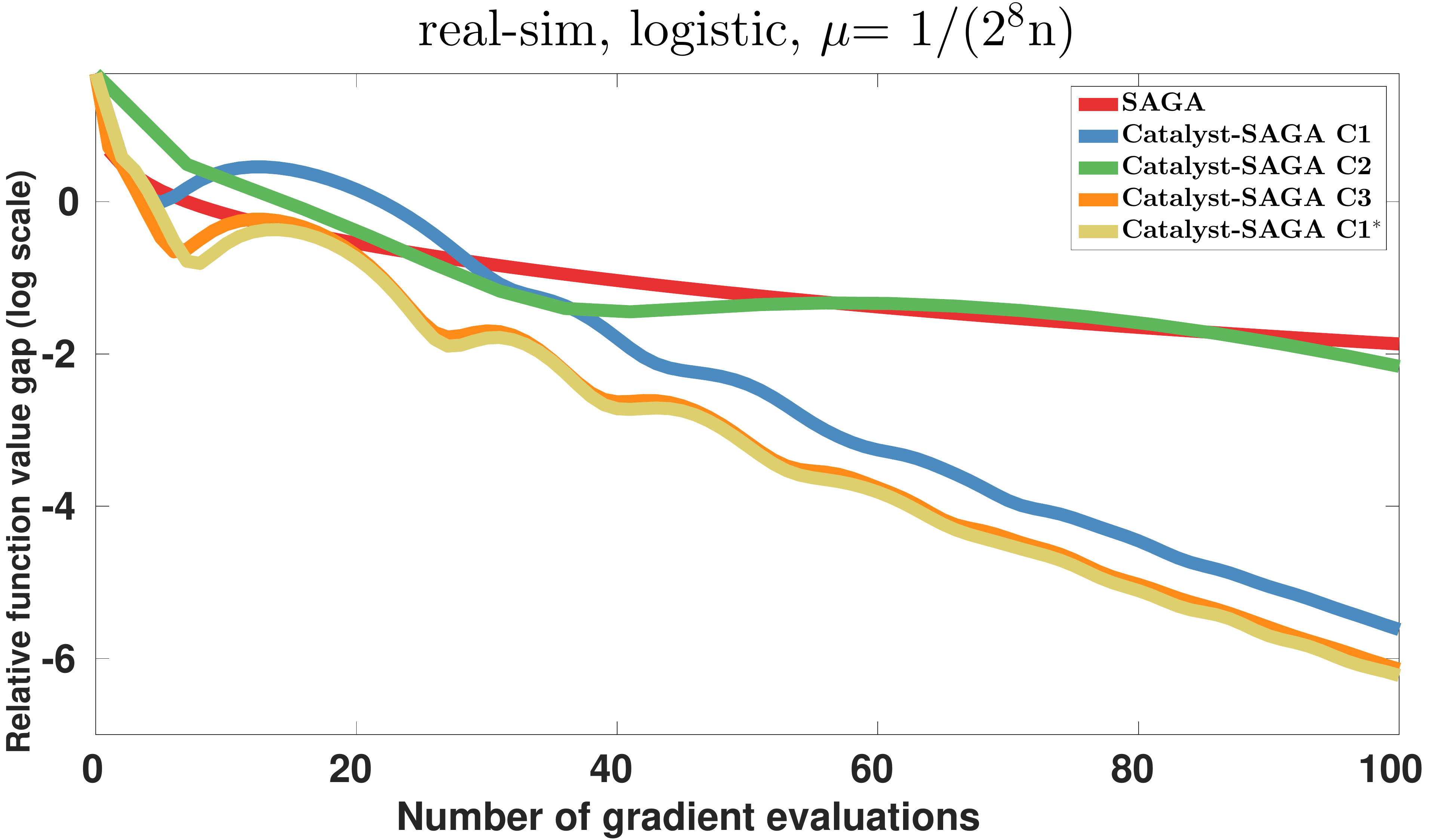}~ 
   ~~\includegraphics[width=0.31\linewidth]{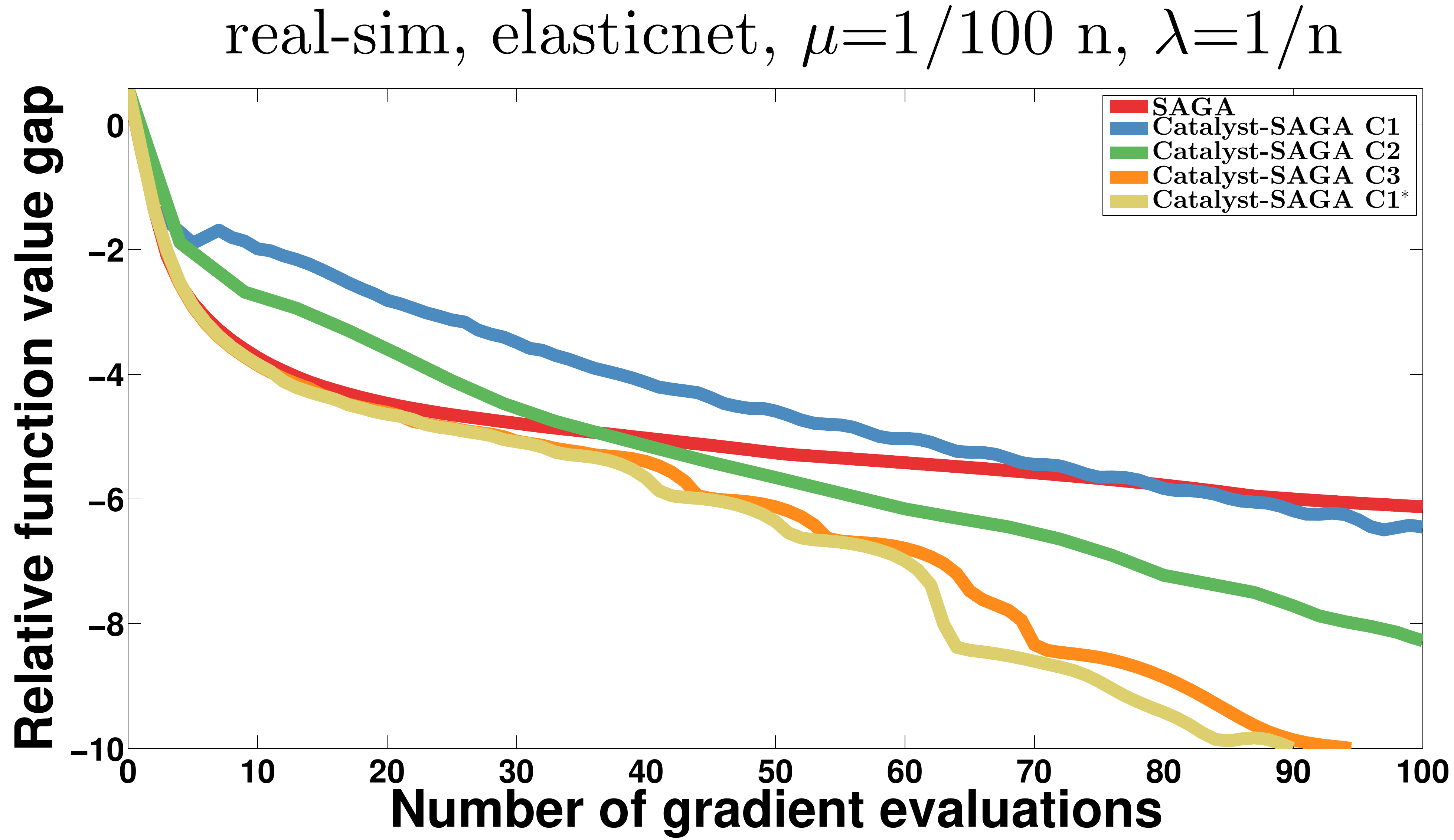}~ 
   ~~\includegraphics[width=0.31\linewidth]{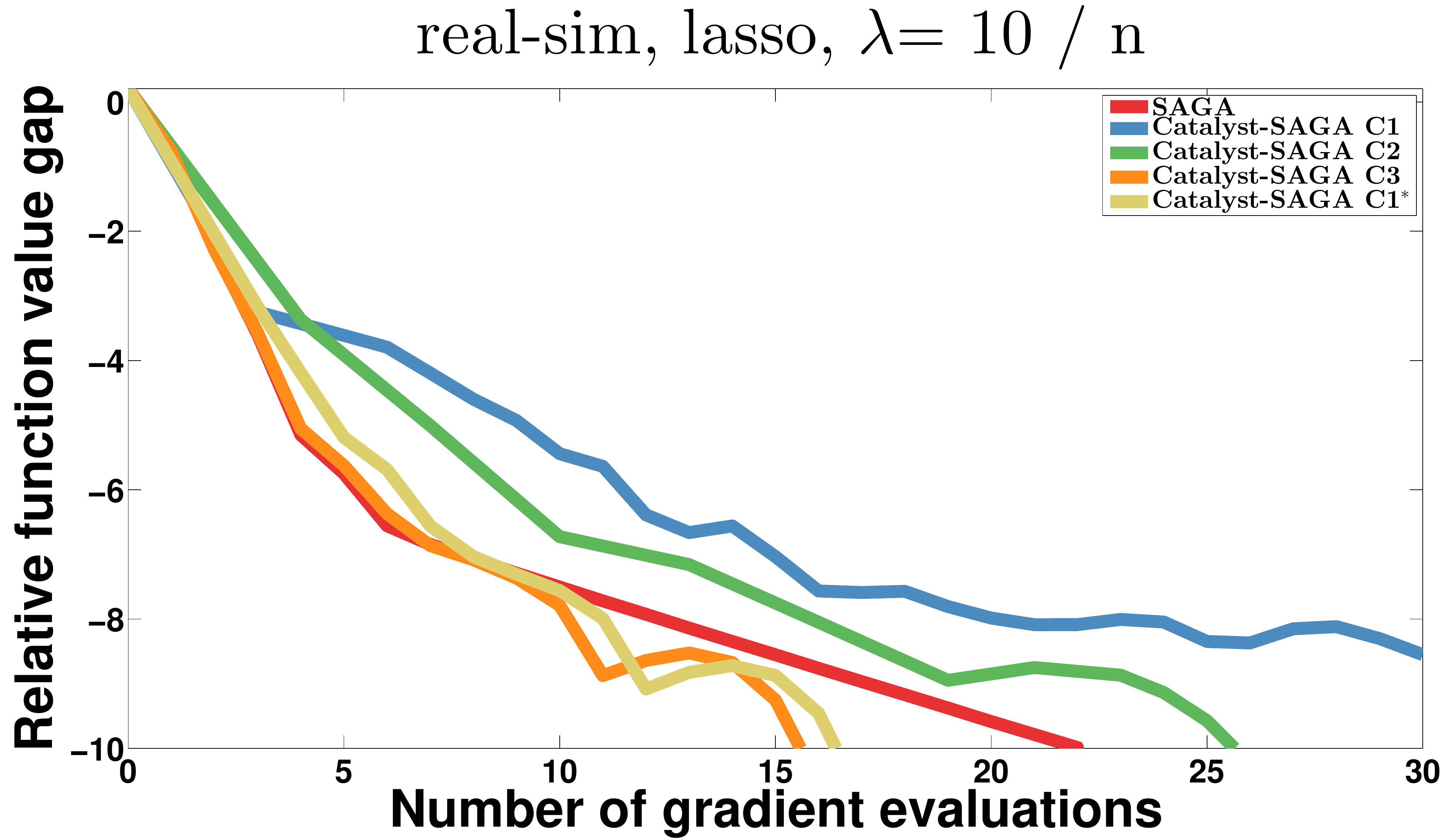}\\
   ~~\includegraphics[width=0.31\linewidth]{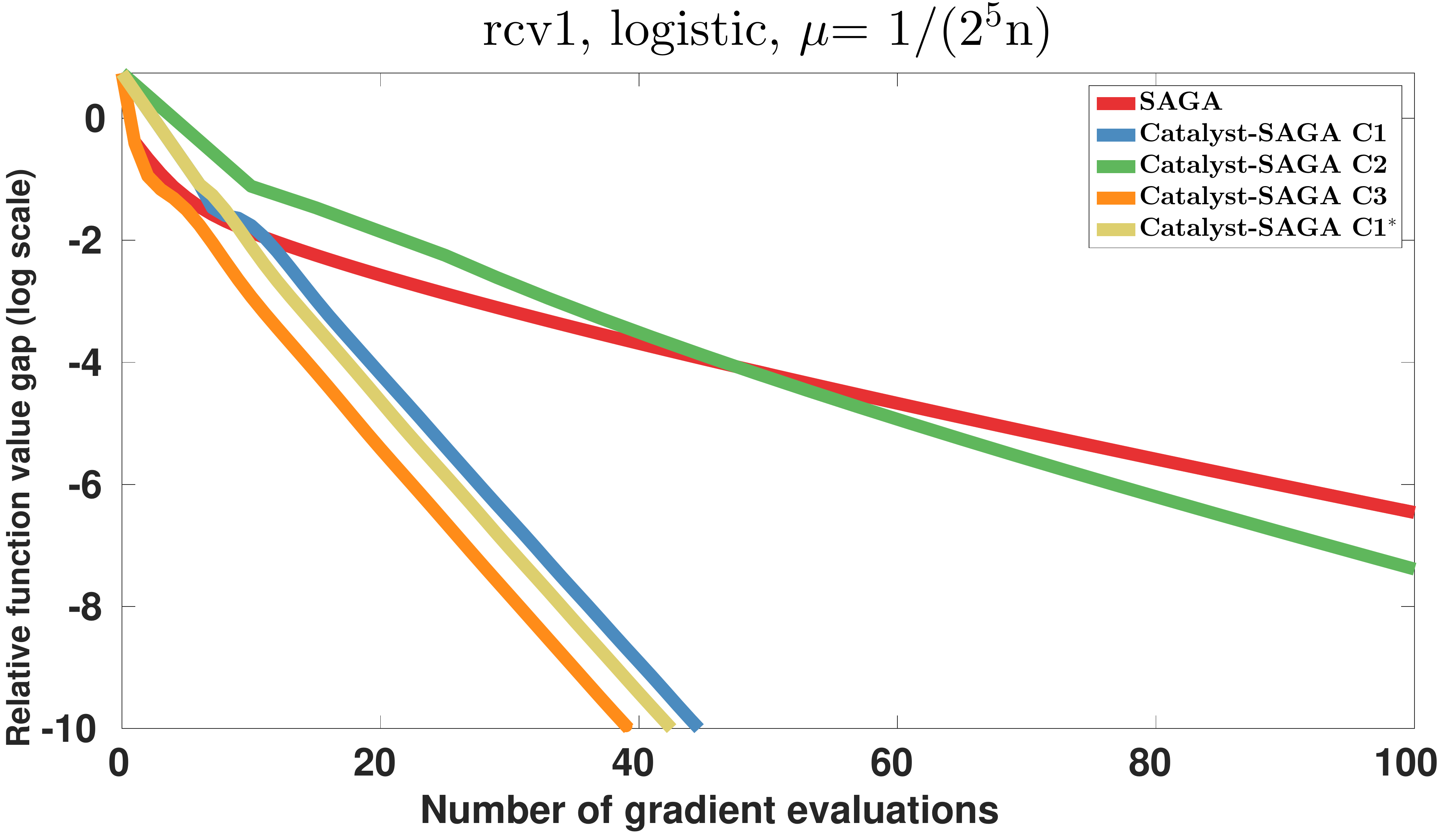}~ 
   ~~\includegraphics[width=0.31\linewidth]{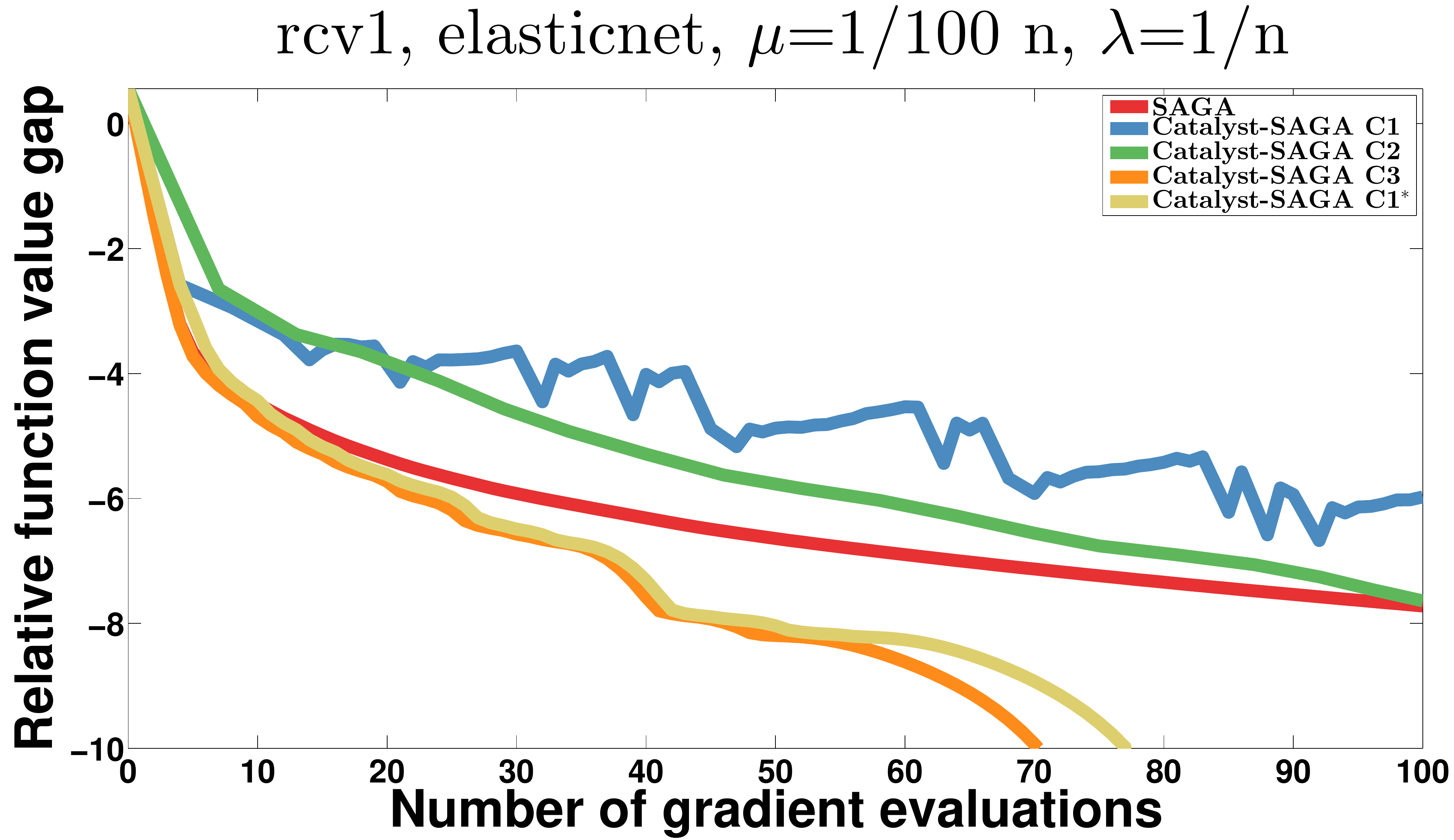}~ 
   ~~\includegraphics[width=0.31\linewidth]{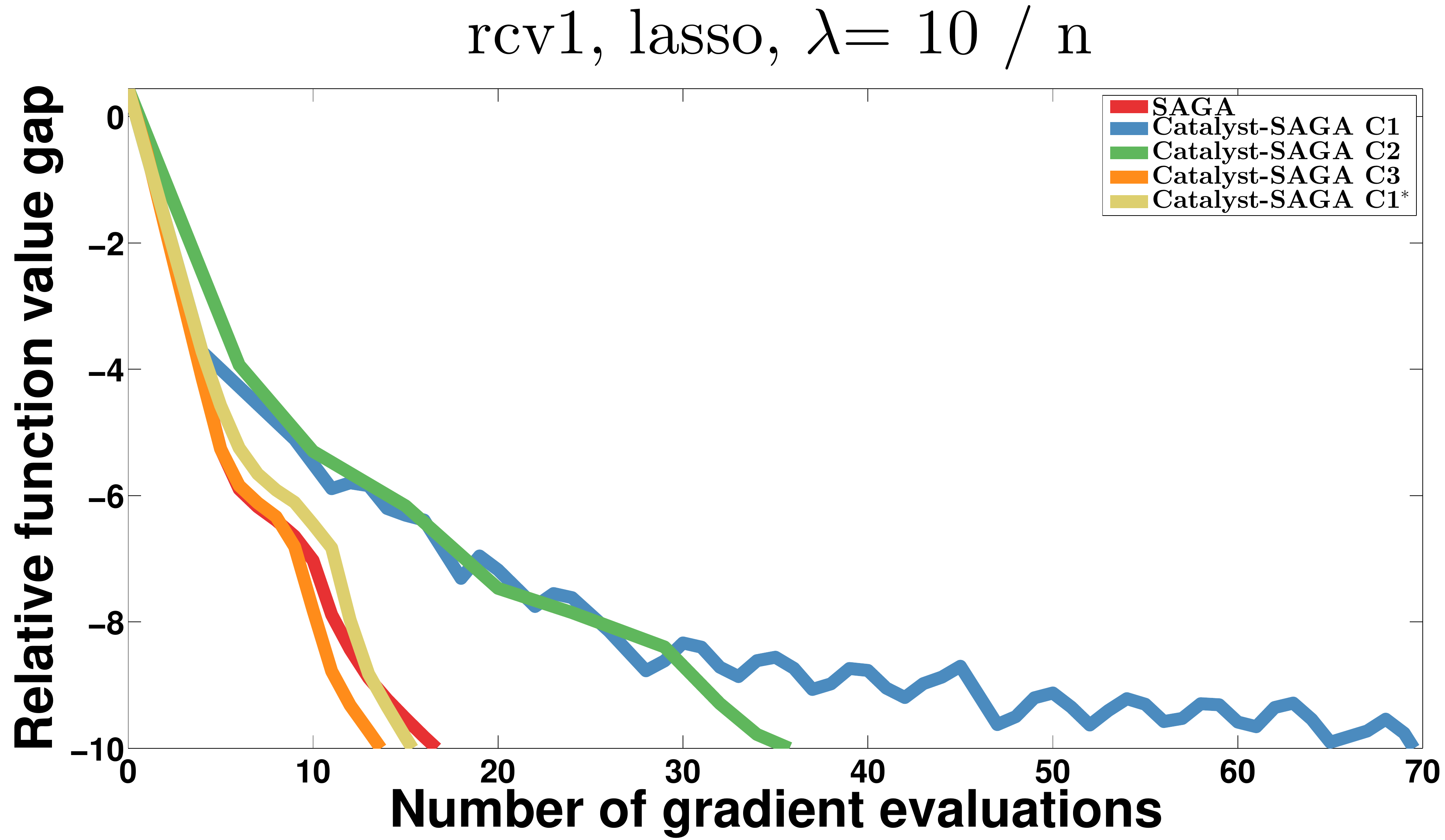}\\
   ~~\includegraphics[width=0.31\linewidth]{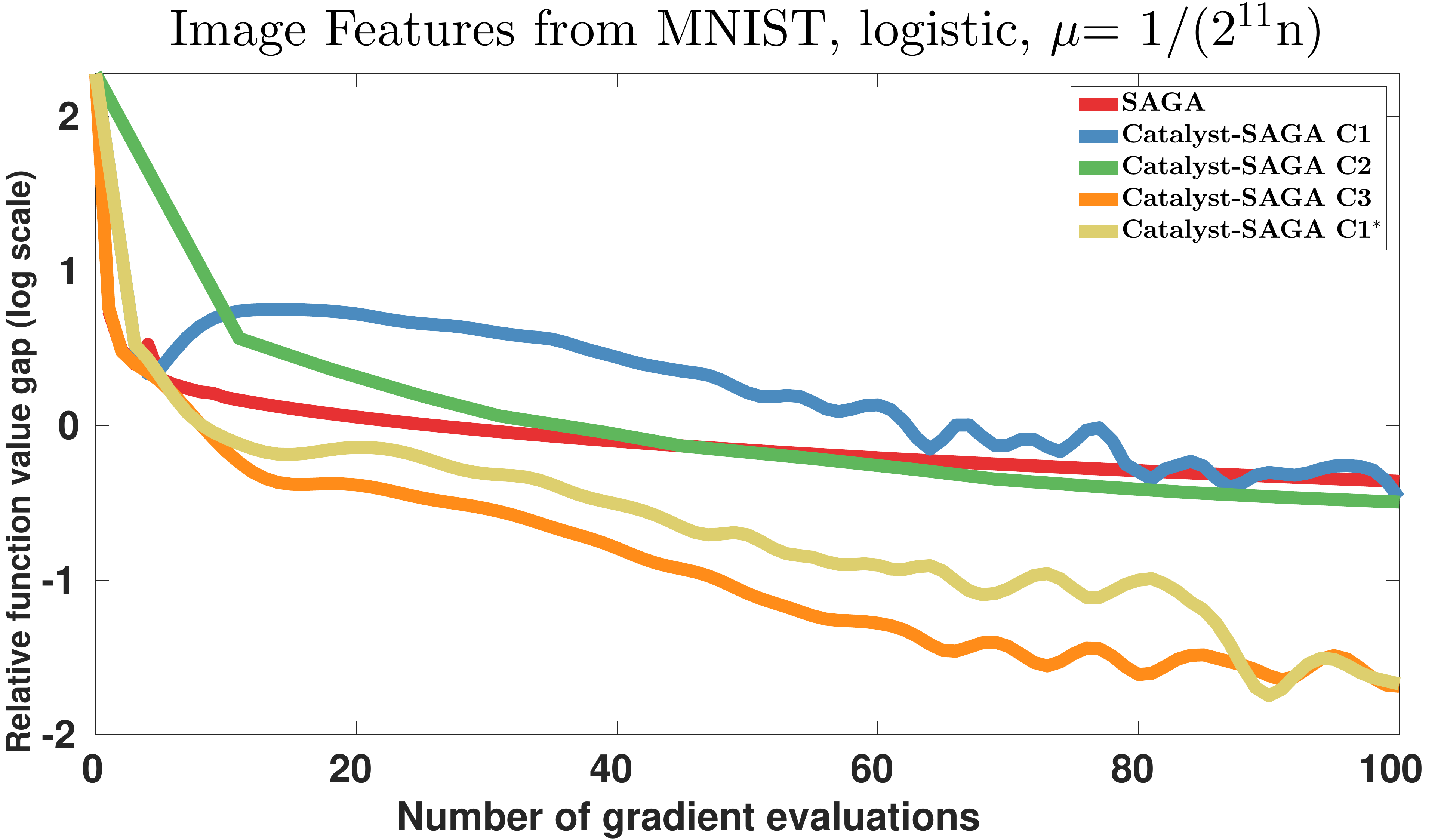}~ 
   ~~\includegraphics[width=0.31\linewidth]{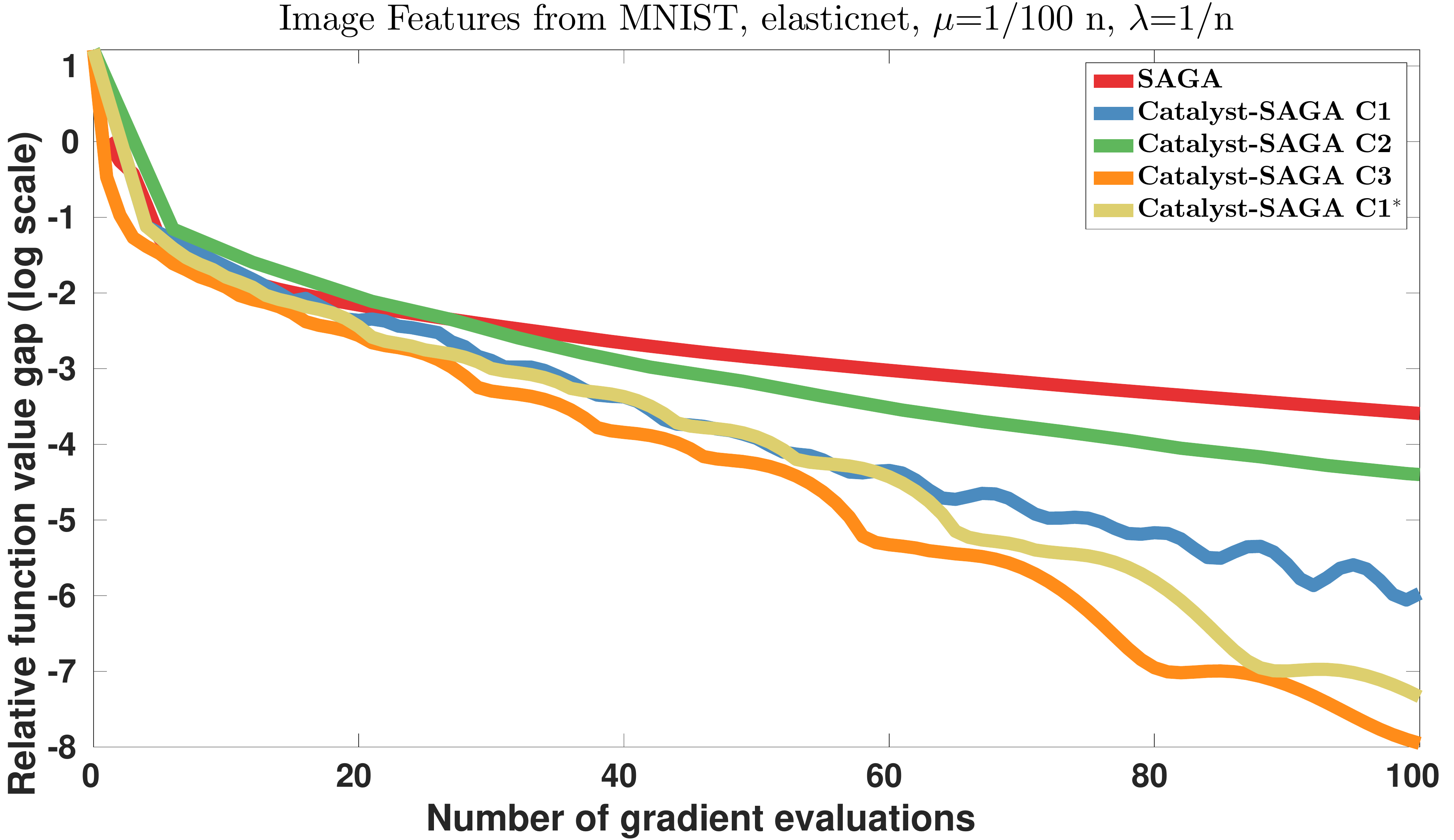}~ 
   ~~\includegraphics[width=0.31\linewidth]{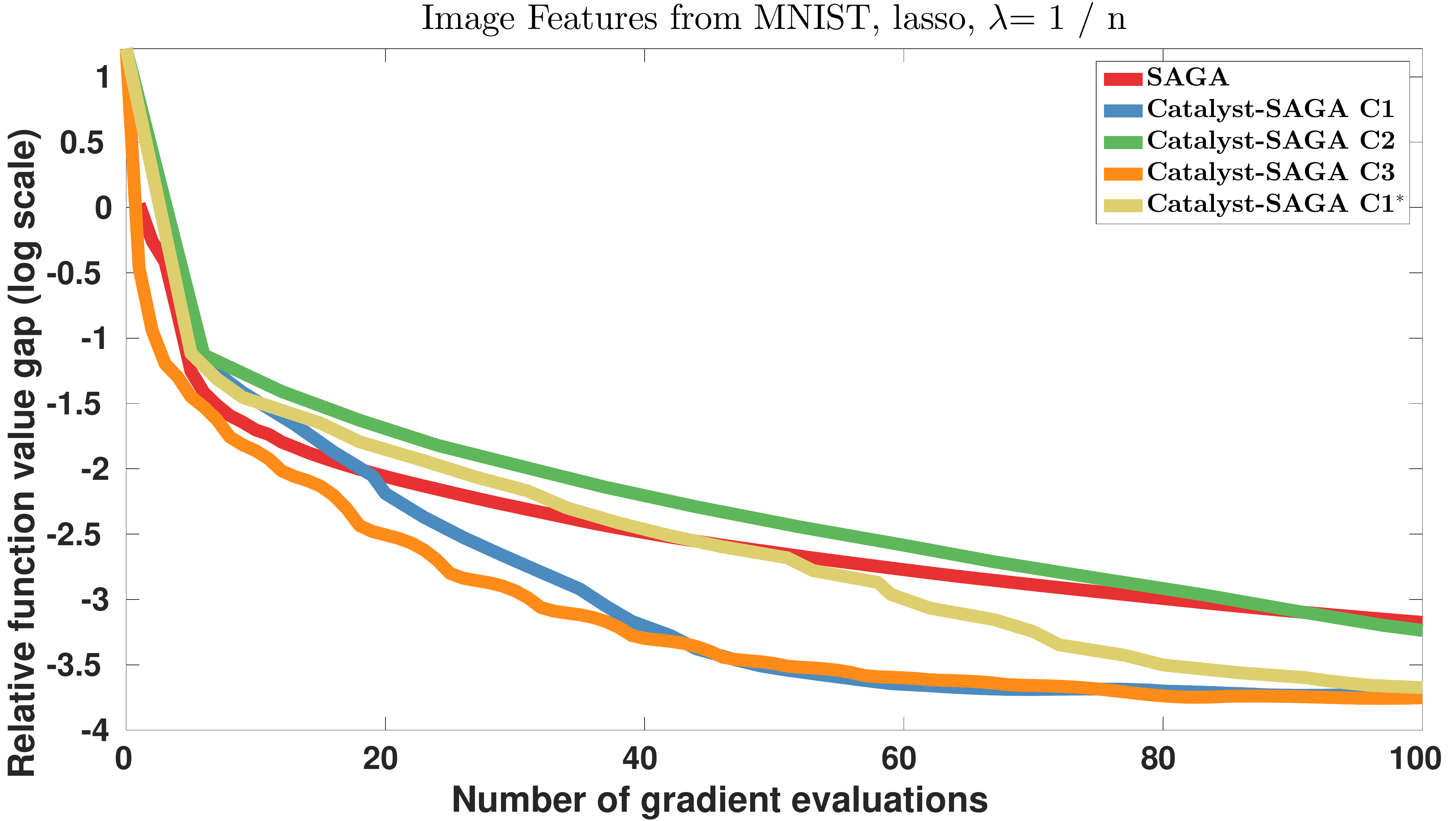}\\
   ~~\includegraphics[width=0.31\linewidth]{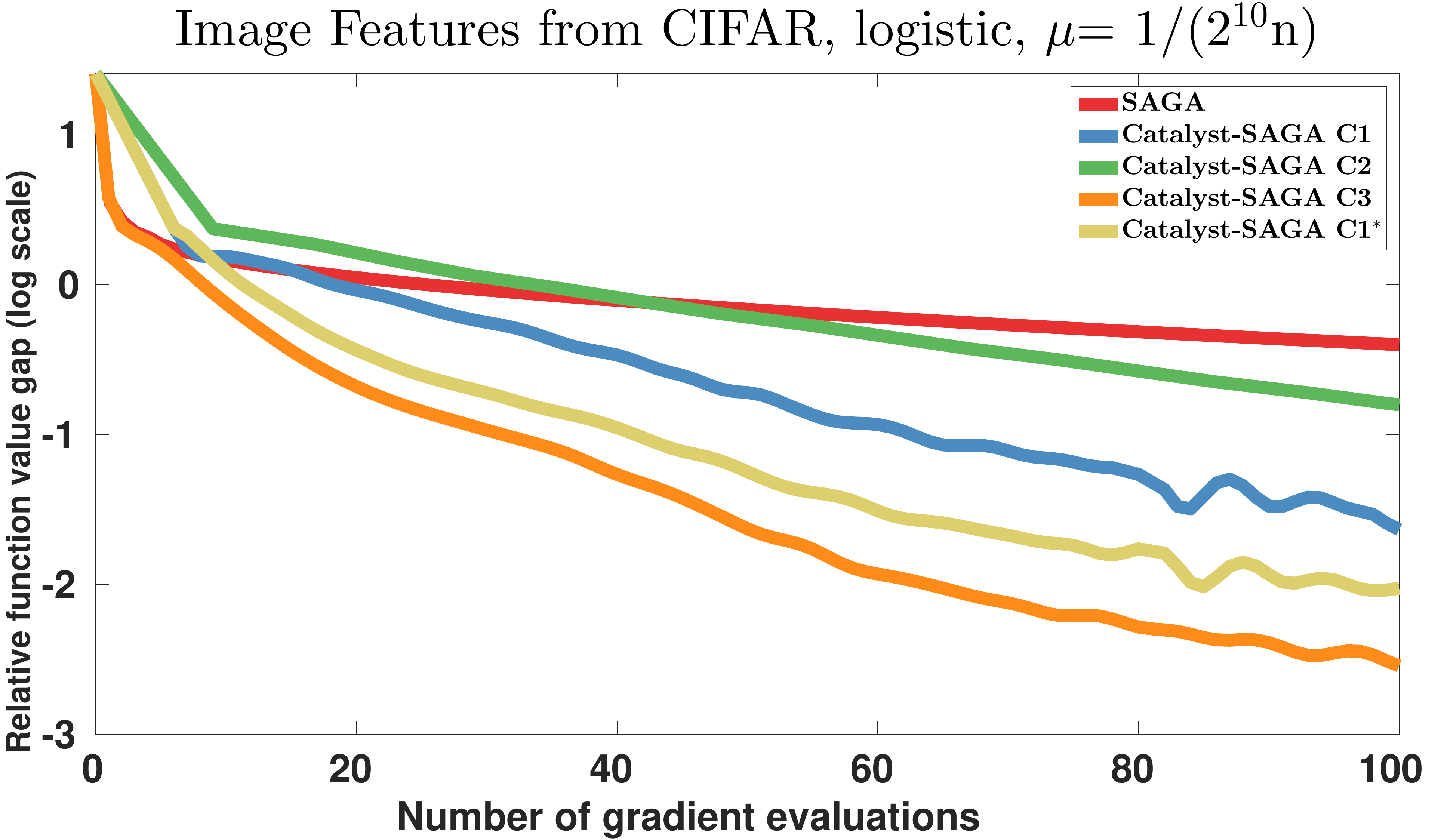}~ 
   ~~\includegraphics[width=0.31\linewidth]{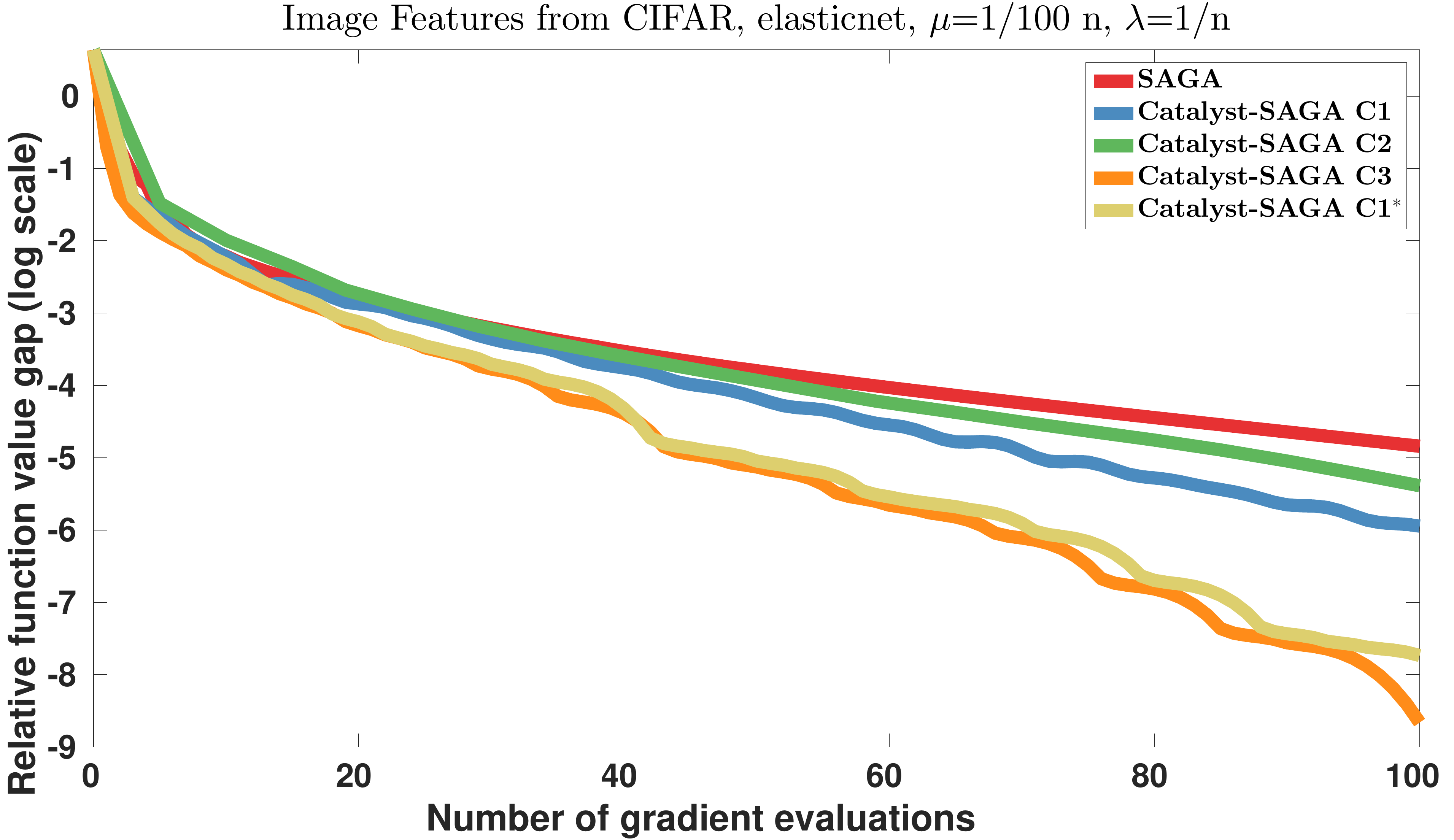}~ 
   ~~\includegraphics[width=0.31\linewidth]{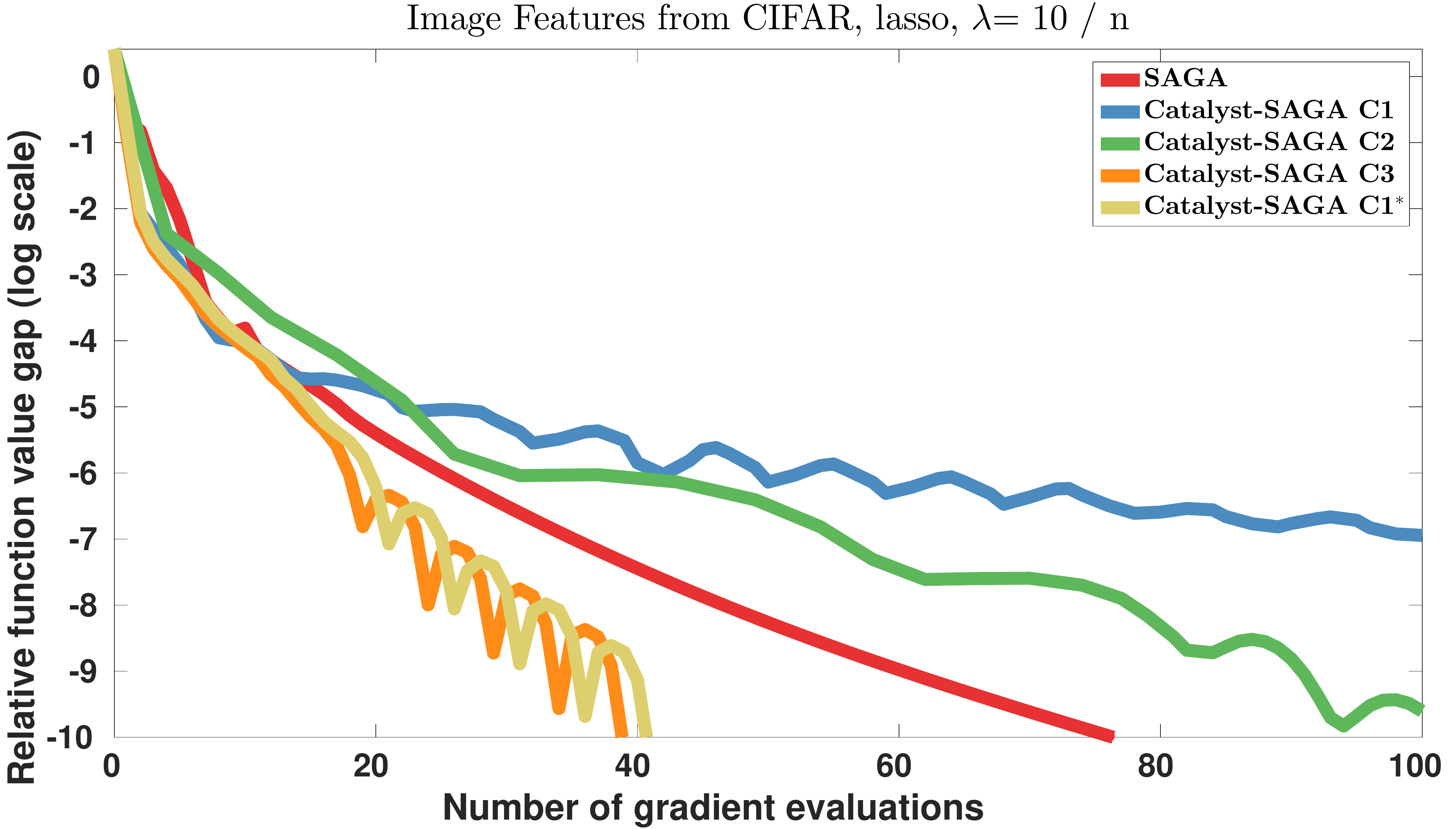}
    \caption{Experimental study of different stopping criterions for Catalyst-SAGA, with a similar setting as in Figure~\ref{catalyst:fig:svrg}.  }\label{catalyst:fig:saga}
\end{figure}

\paragraph{Observations for Catalyst-SAGA.} Our conclusions with SAGA are almost the same as with SVRG. However, in a few cases, we also notice that criterion \ref{C1} lacks stability, or at least exhibits some oscillations, which may suggest that SAGA has a larger variance compared to SVRG. The difference in the performance of (\ref{C1}) and \Cunstar can be huge, while they differ from each other only by the warm start strategy. Thus, \textit{choosing a good initial point for solving the sub-problems is a key for obtaining acceleration in practice.} 

\begin{figure}[hbtp]
   \centering
   ~~\includegraphics[width=0.31\linewidth]{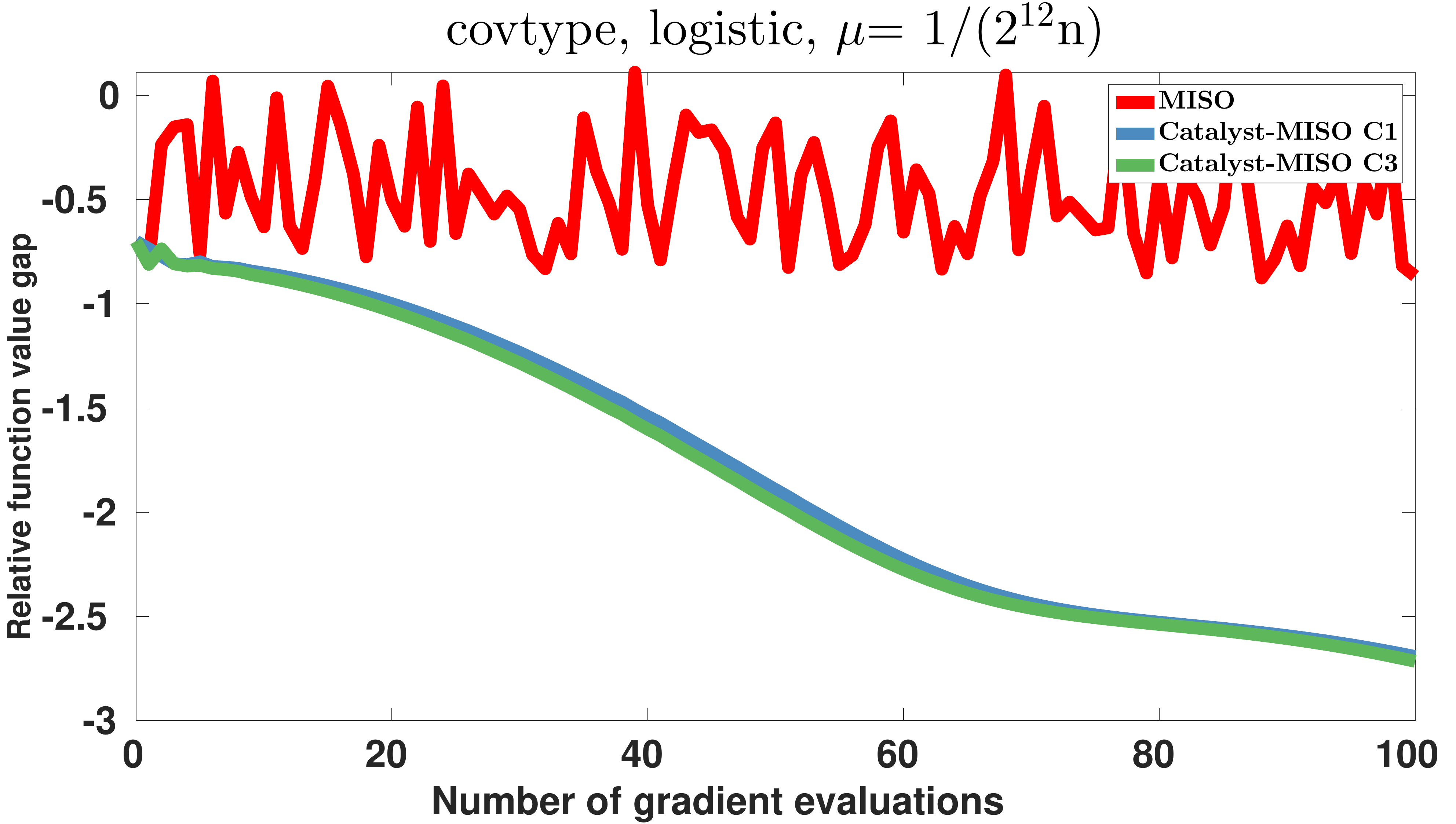}~ 
   ~~\includegraphics[width=0.31\linewidth]{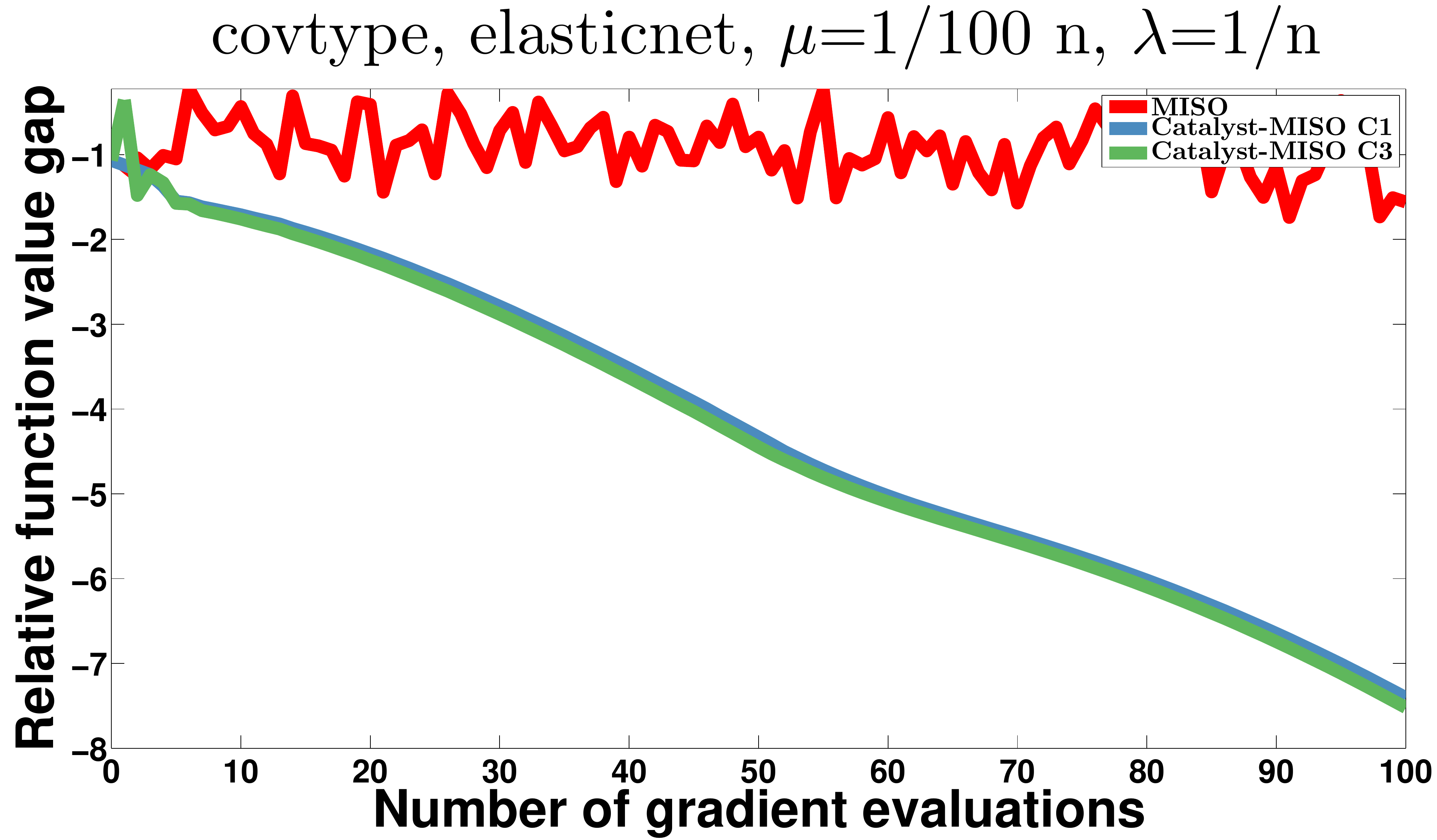}~ 
   ~~\includegraphics[width=0.31\linewidth]{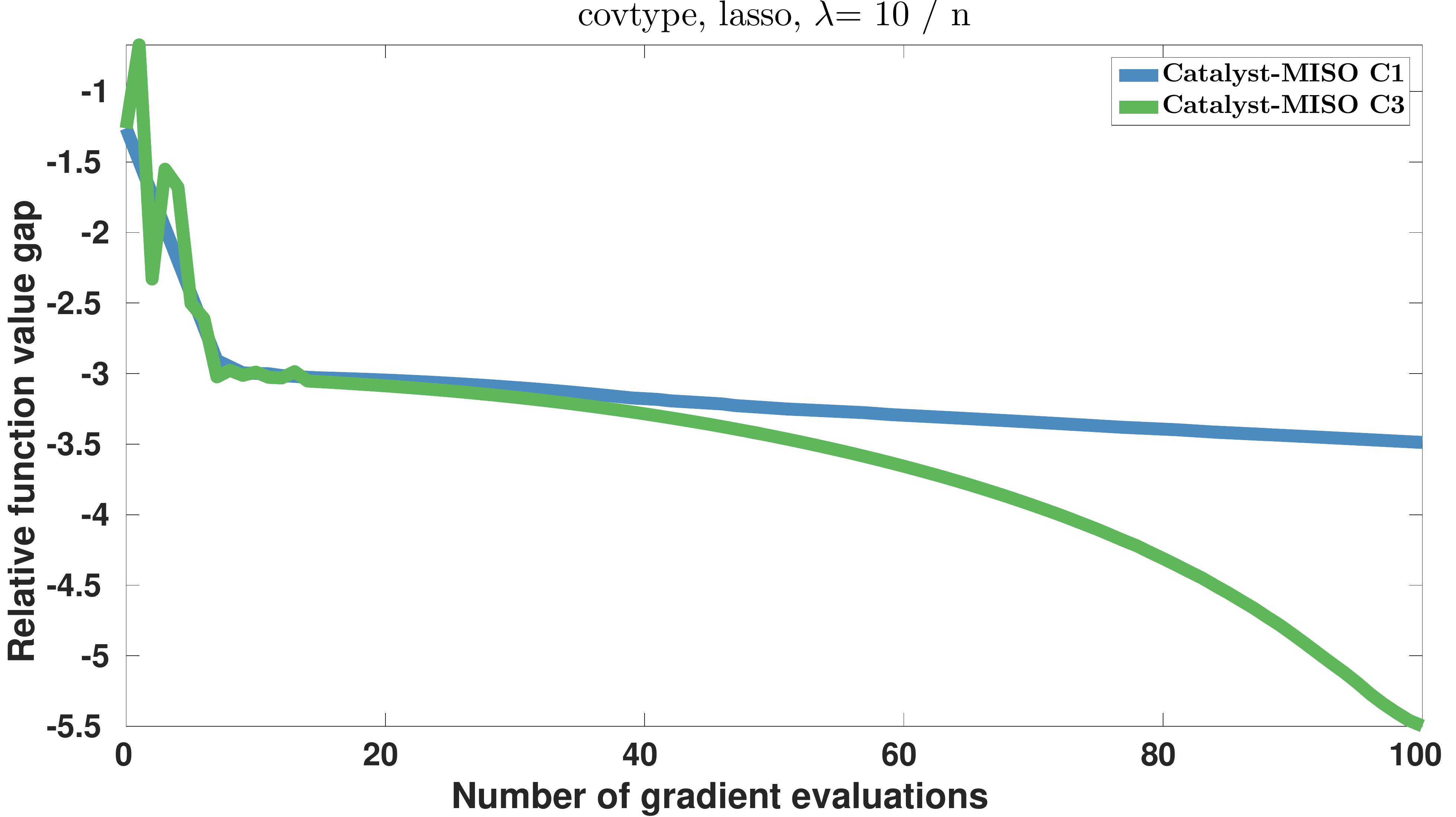}\\
   ~~\includegraphics[width=0.31\linewidth]{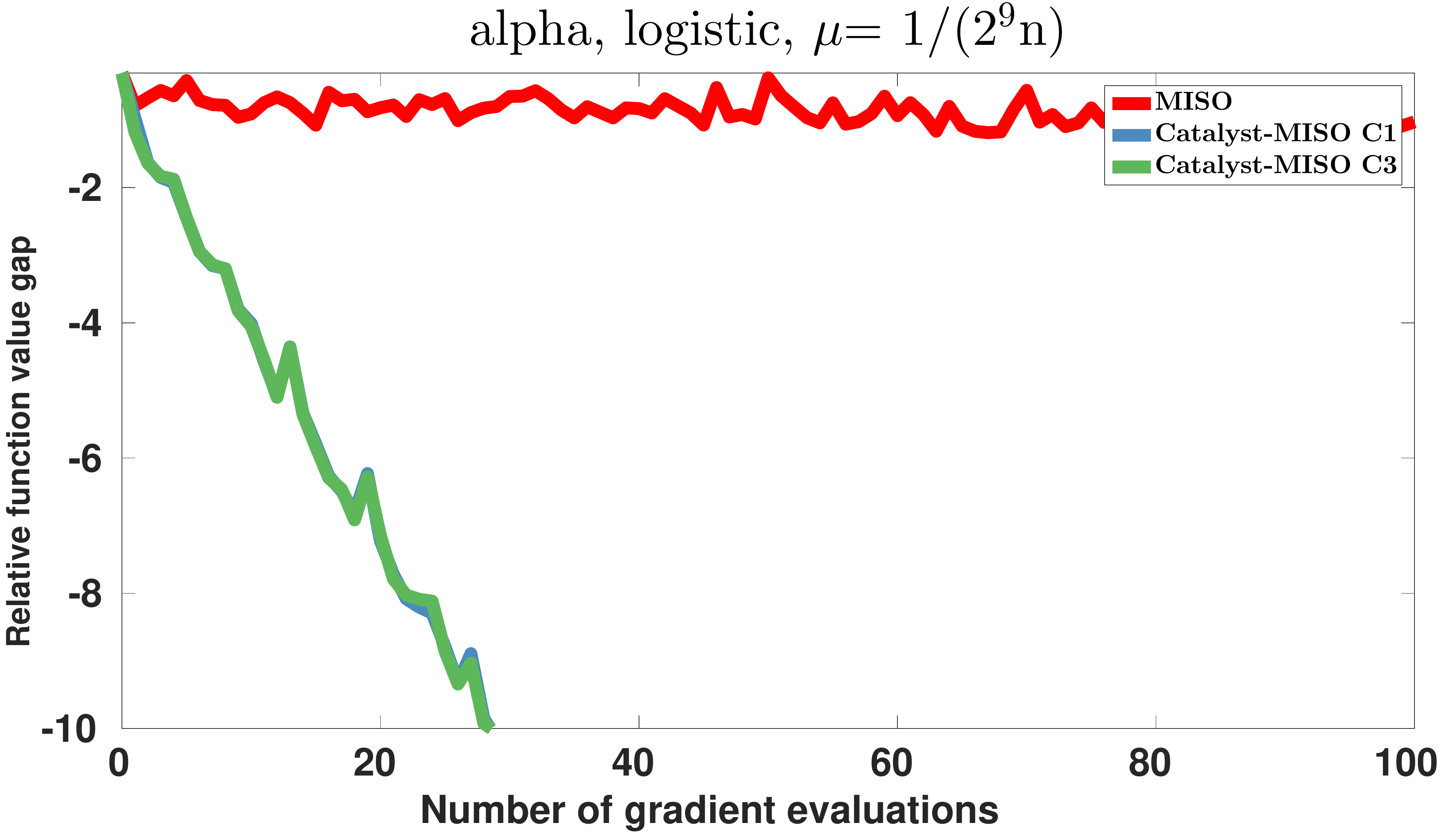}~ 
   ~~\includegraphics[width=0.31\linewidth]{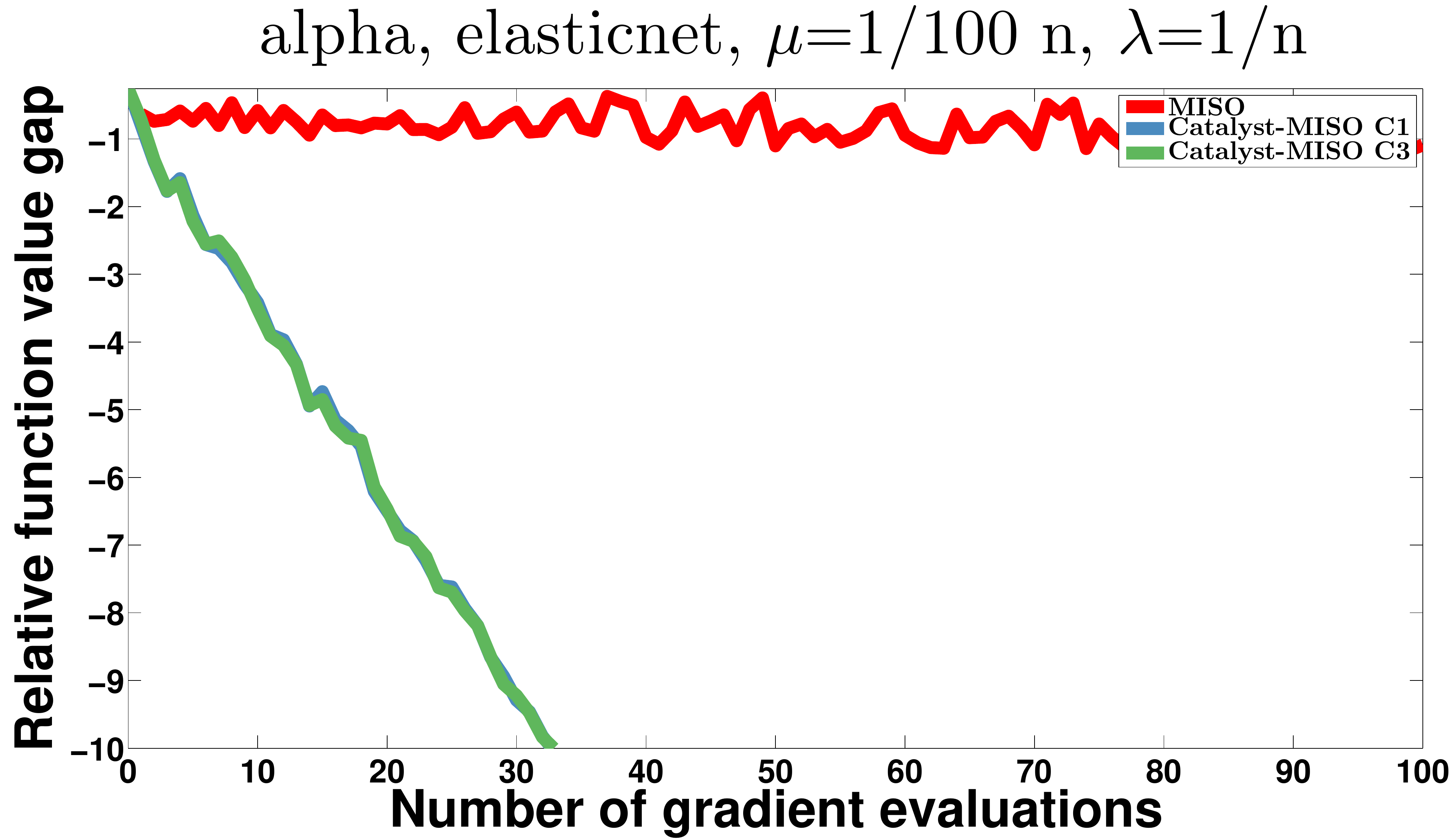}~ 
   ~~\includegraphics[width=0.31\linewidth]{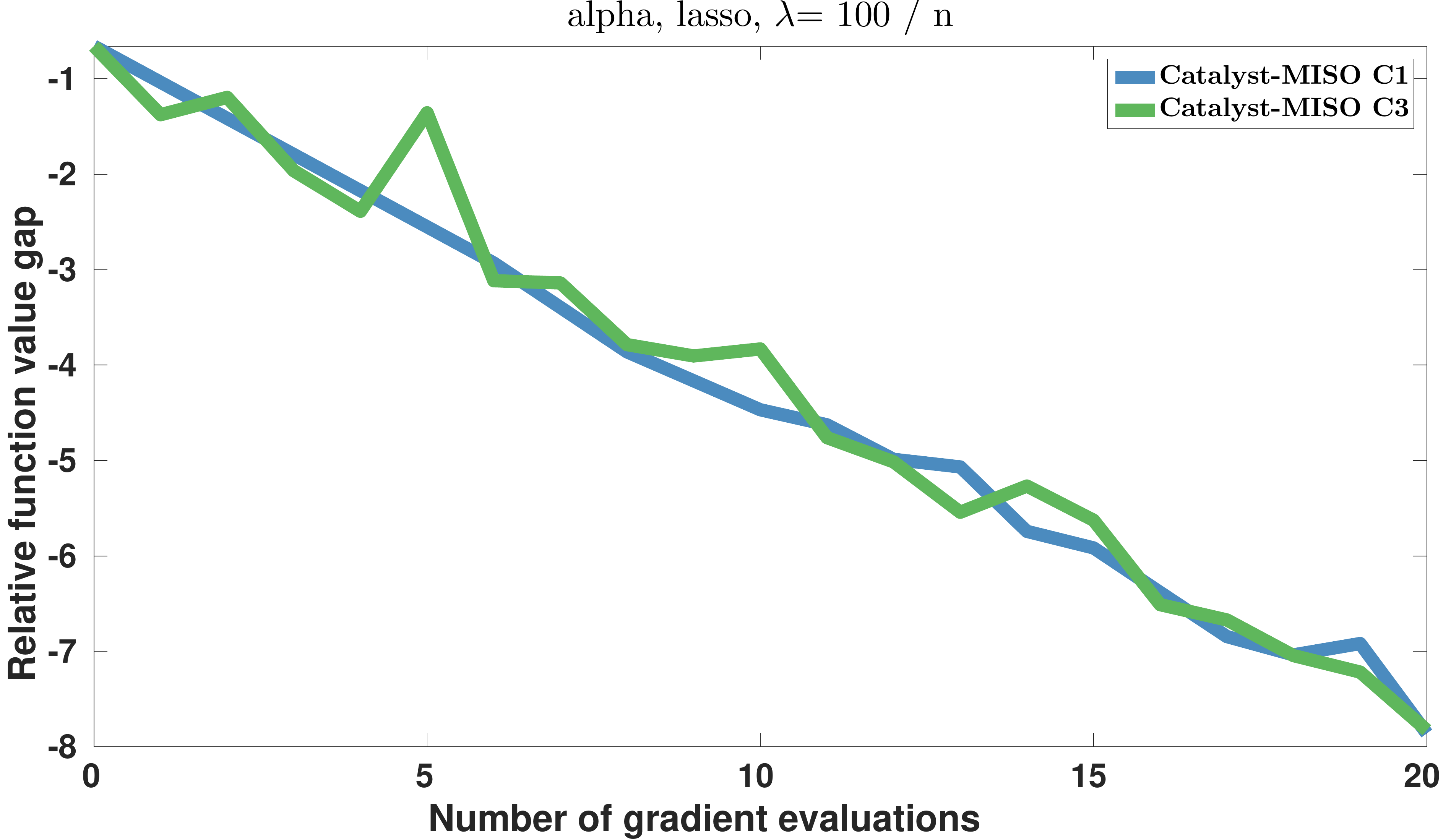}\\
   ~~\includegraphics[width=0.31\linewidth]{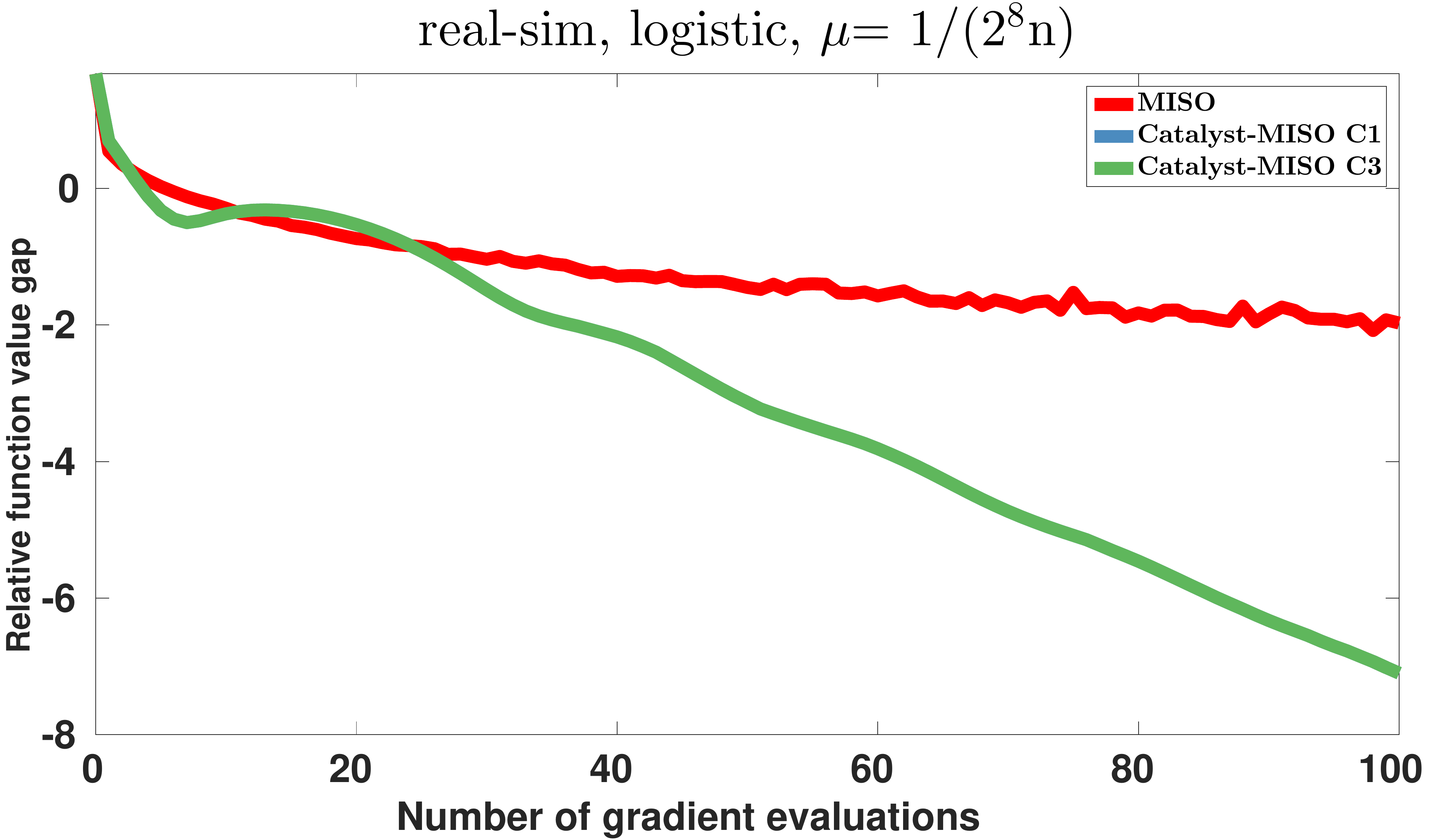}~ 
   ~~\includegraphics[width=0.31\linewidth]{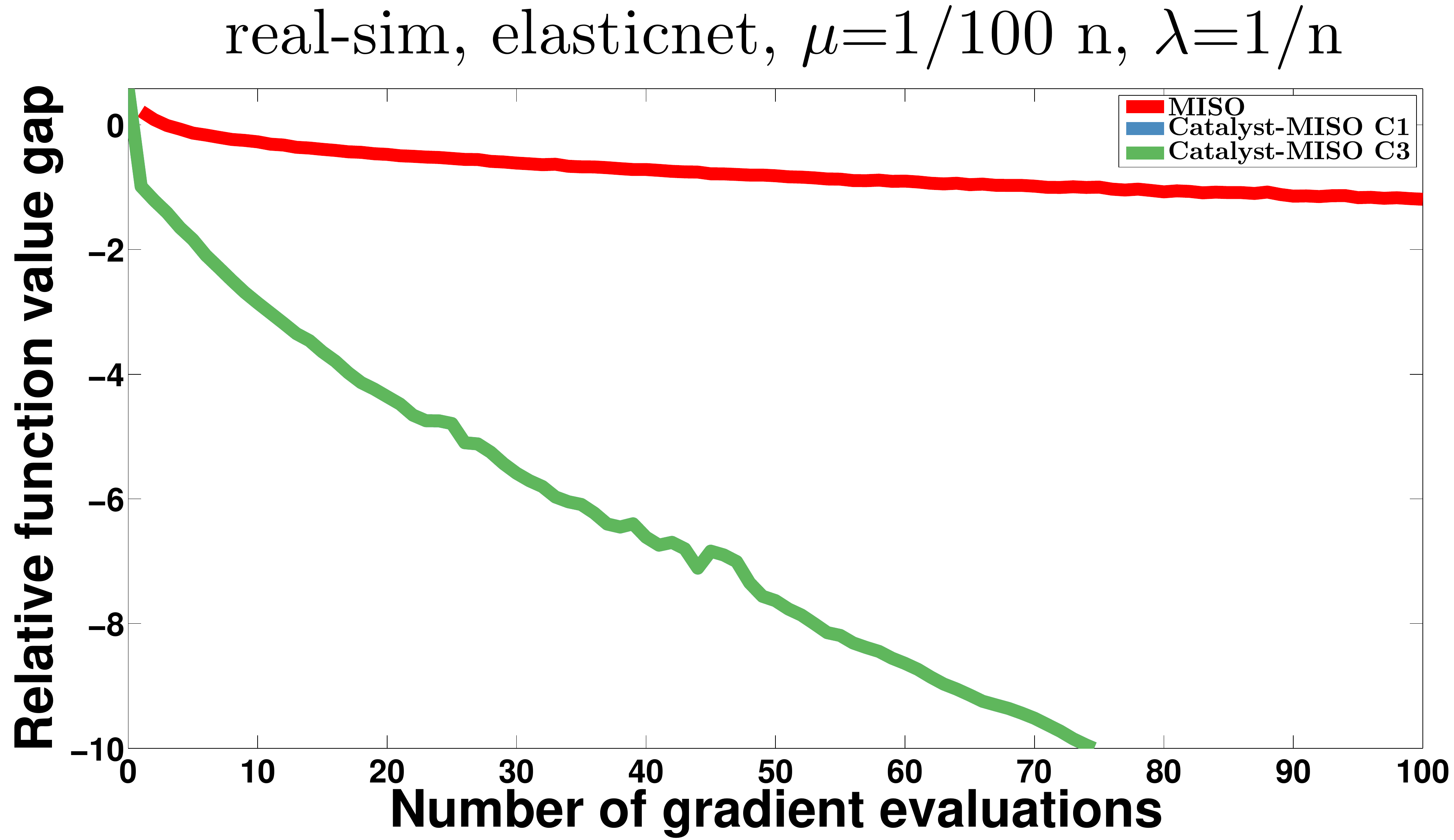}~ 
   ~~\includegraphics[width=0.31\linewidth]{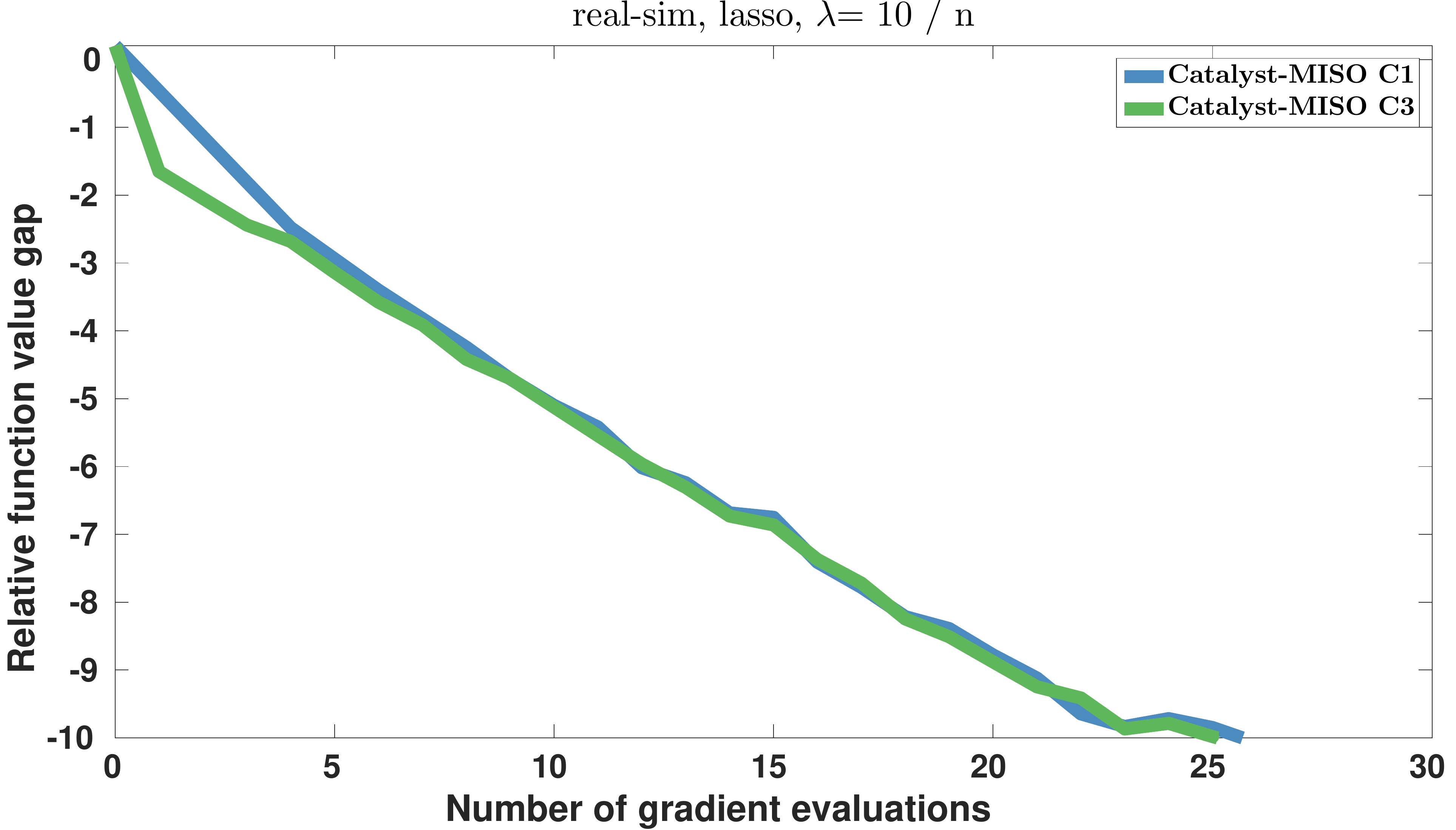}\\
   ~~\includegraphics[width=0.31\linewidth]{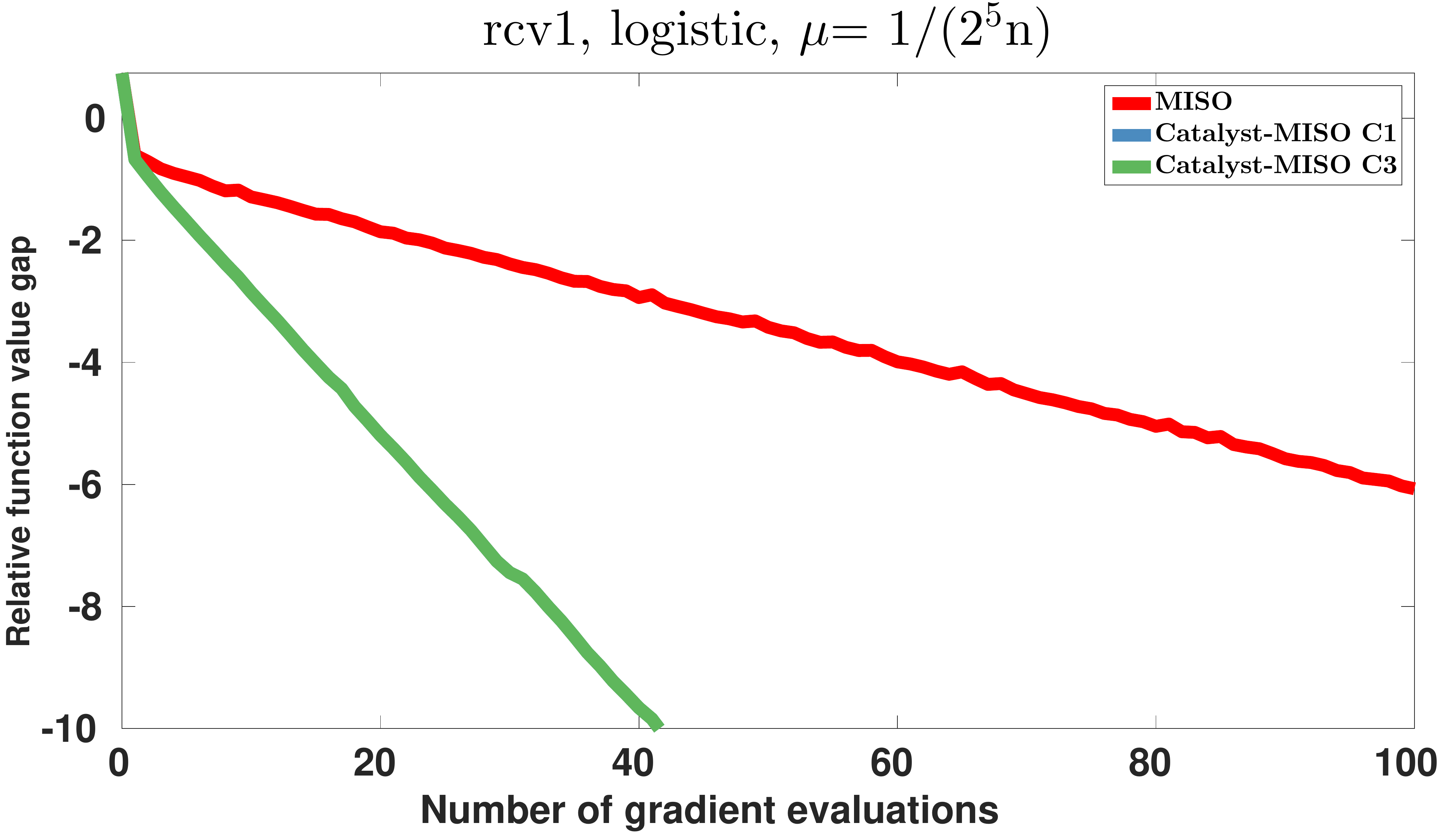}~ 
   ~~\includegraphics[width=0.31\linewidth]{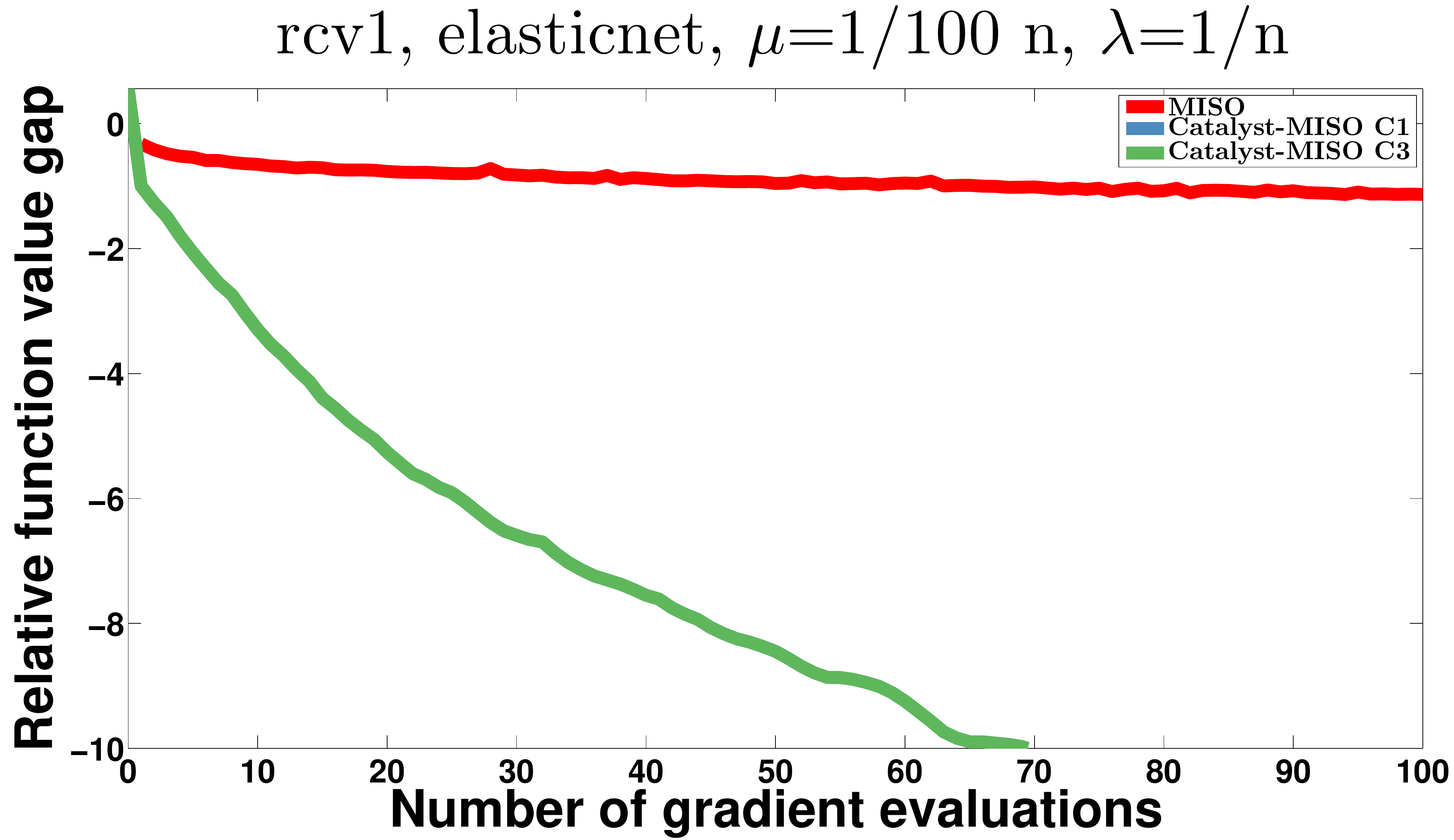}~ 
   ~~\includegraphics[width=0.31\linewidth]{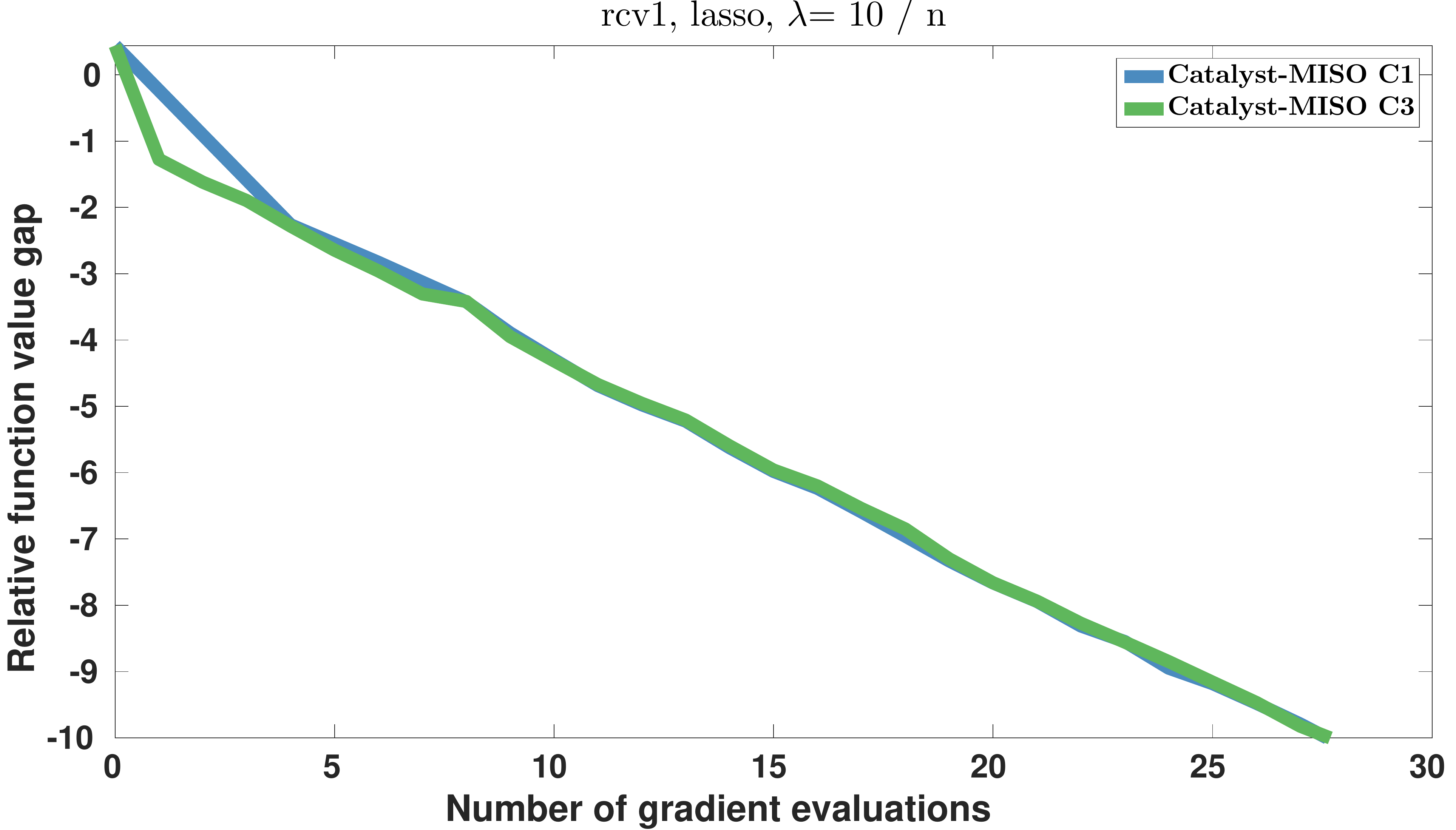}\\
   ~~\includegraphics[width=0.31\linewidth]{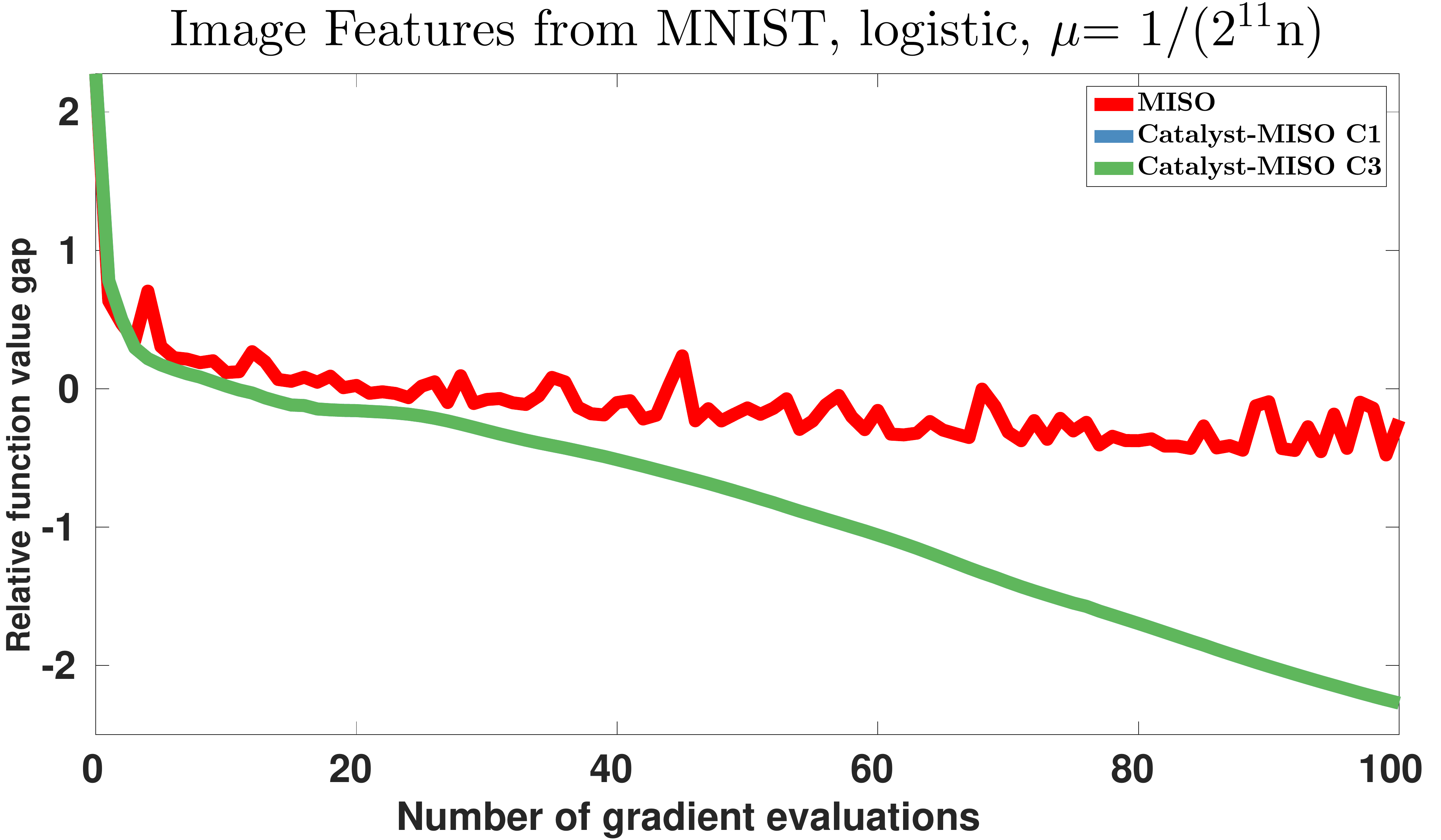}~ 
   ~~\includegraphics[width=0.31\linewidth]{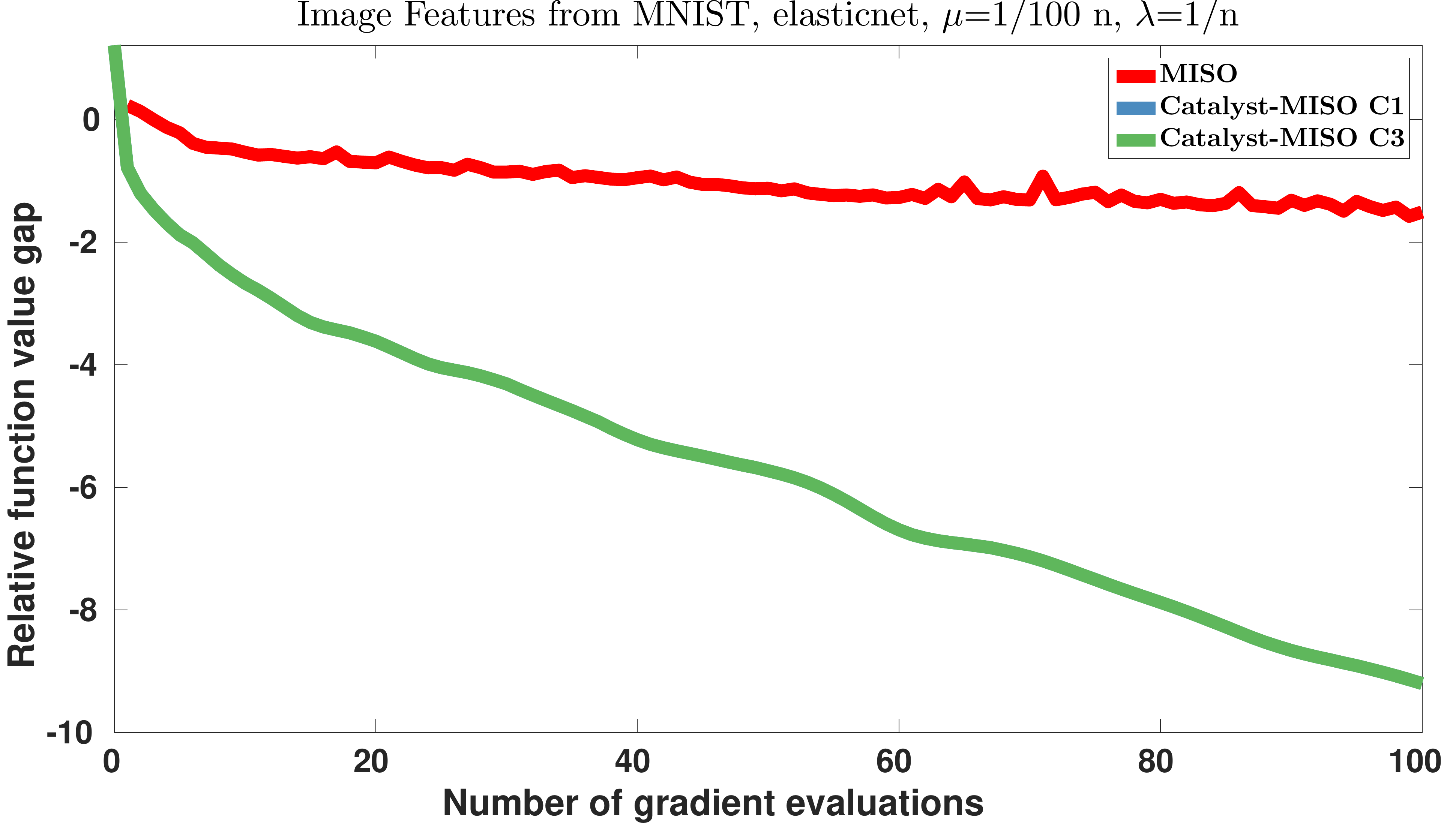}~ 
   ~~\includegraphics[width=0.31\linewidth]{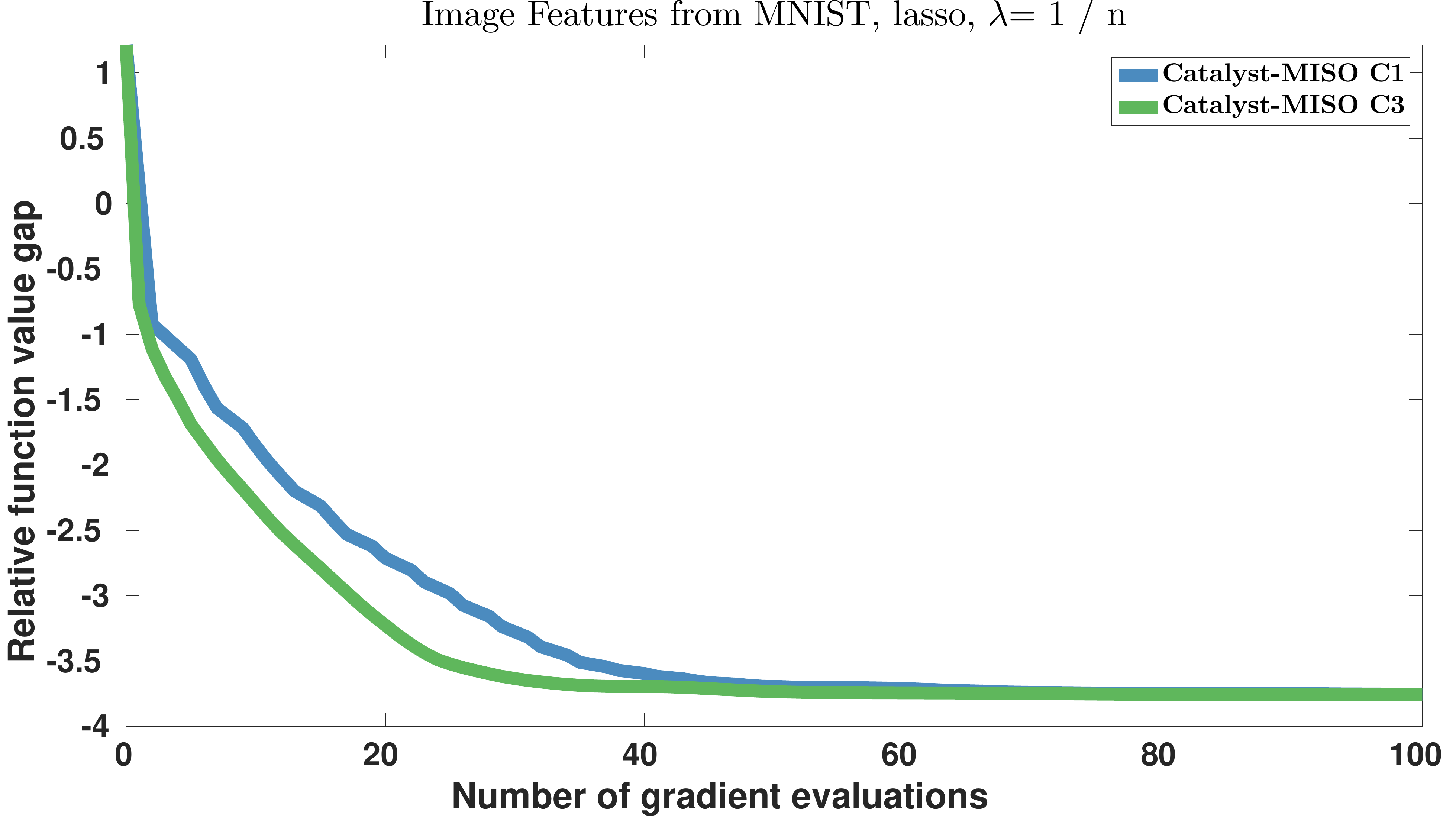}\\
  ~~\includegraphics[width=0.31\linewidth]{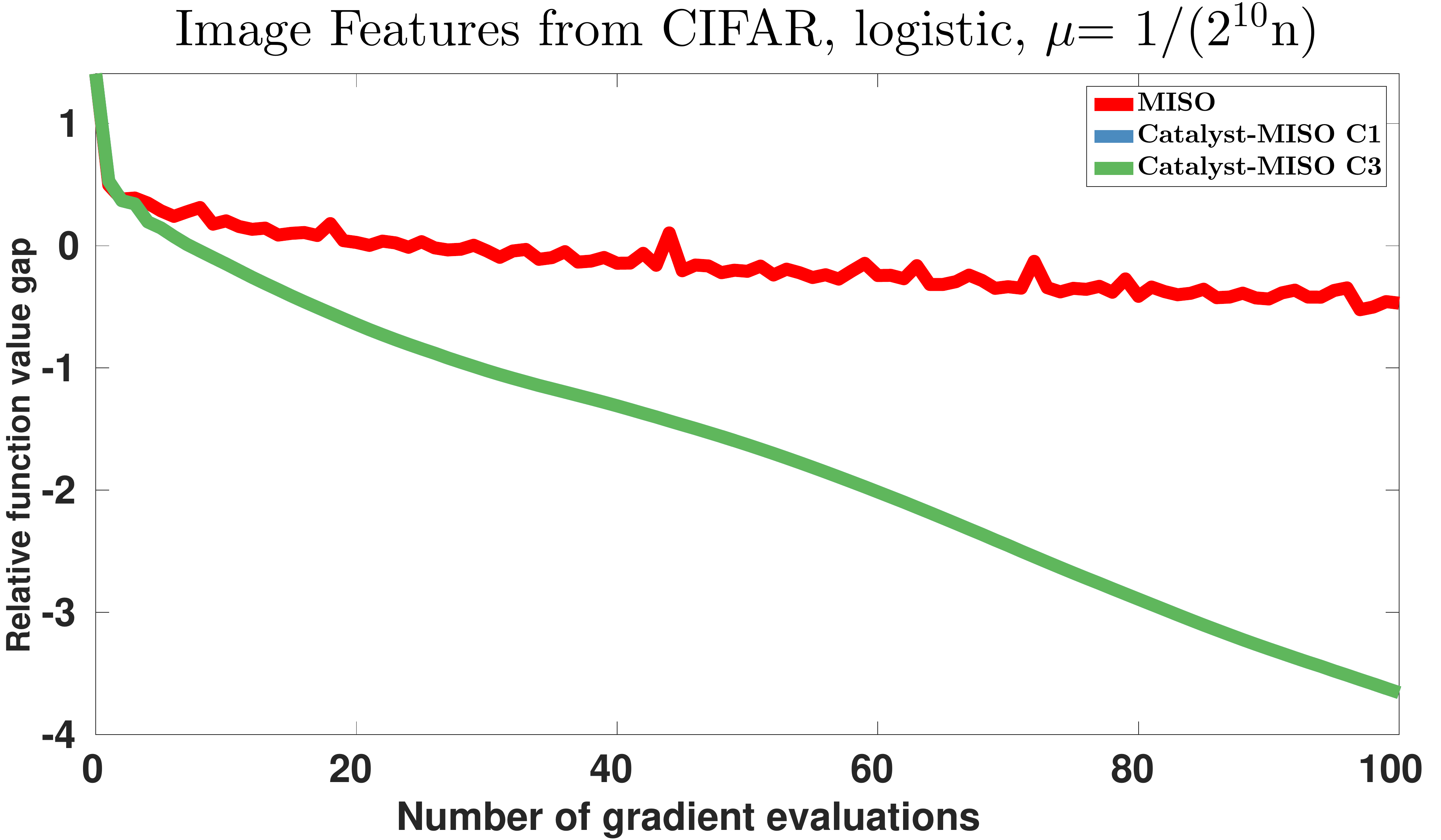}~ 
   ~~\includegraphics[width=0.31\linewidth]{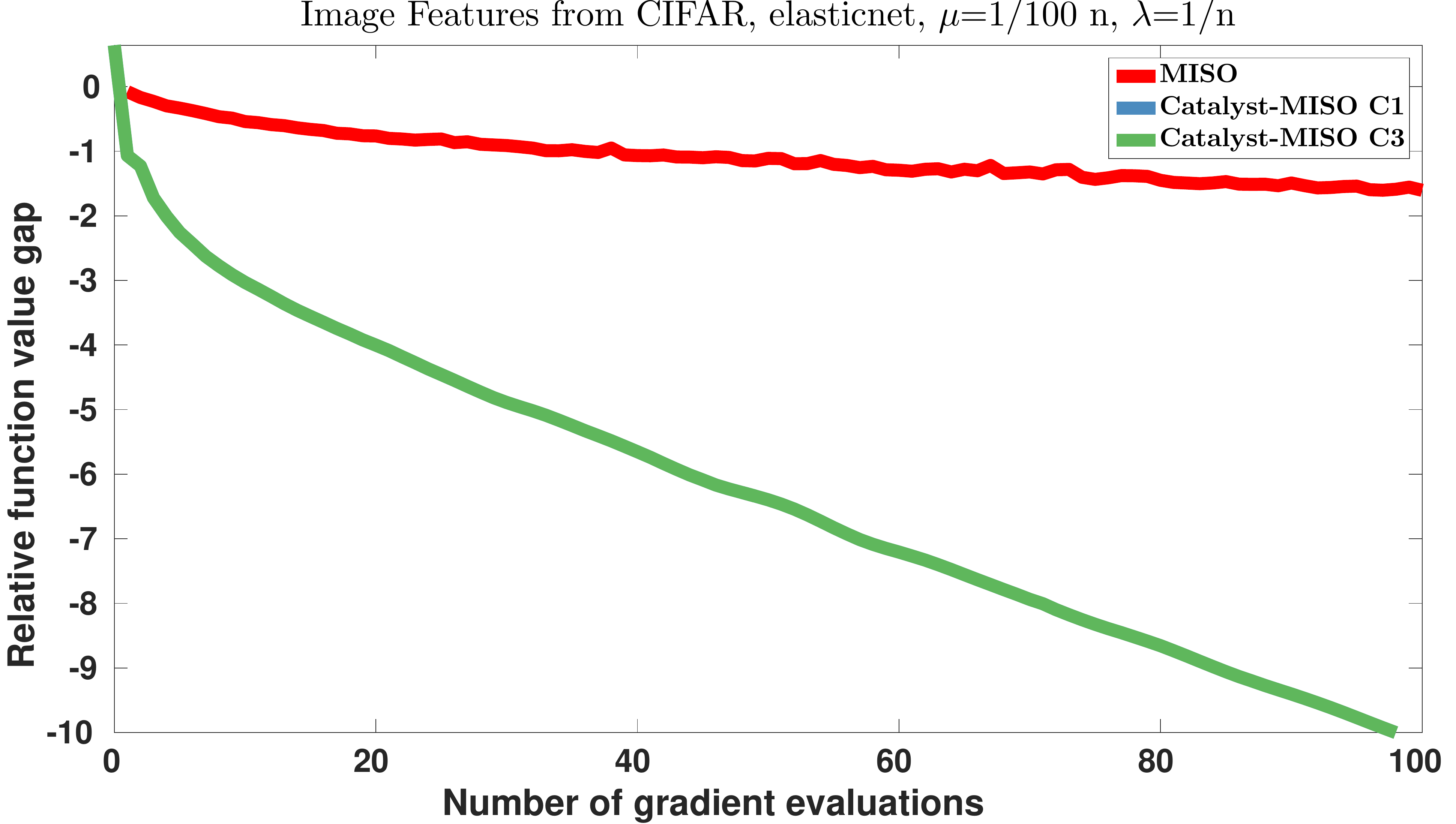}~ 
   ~~\includegraphics[width=0.31\linewidth]{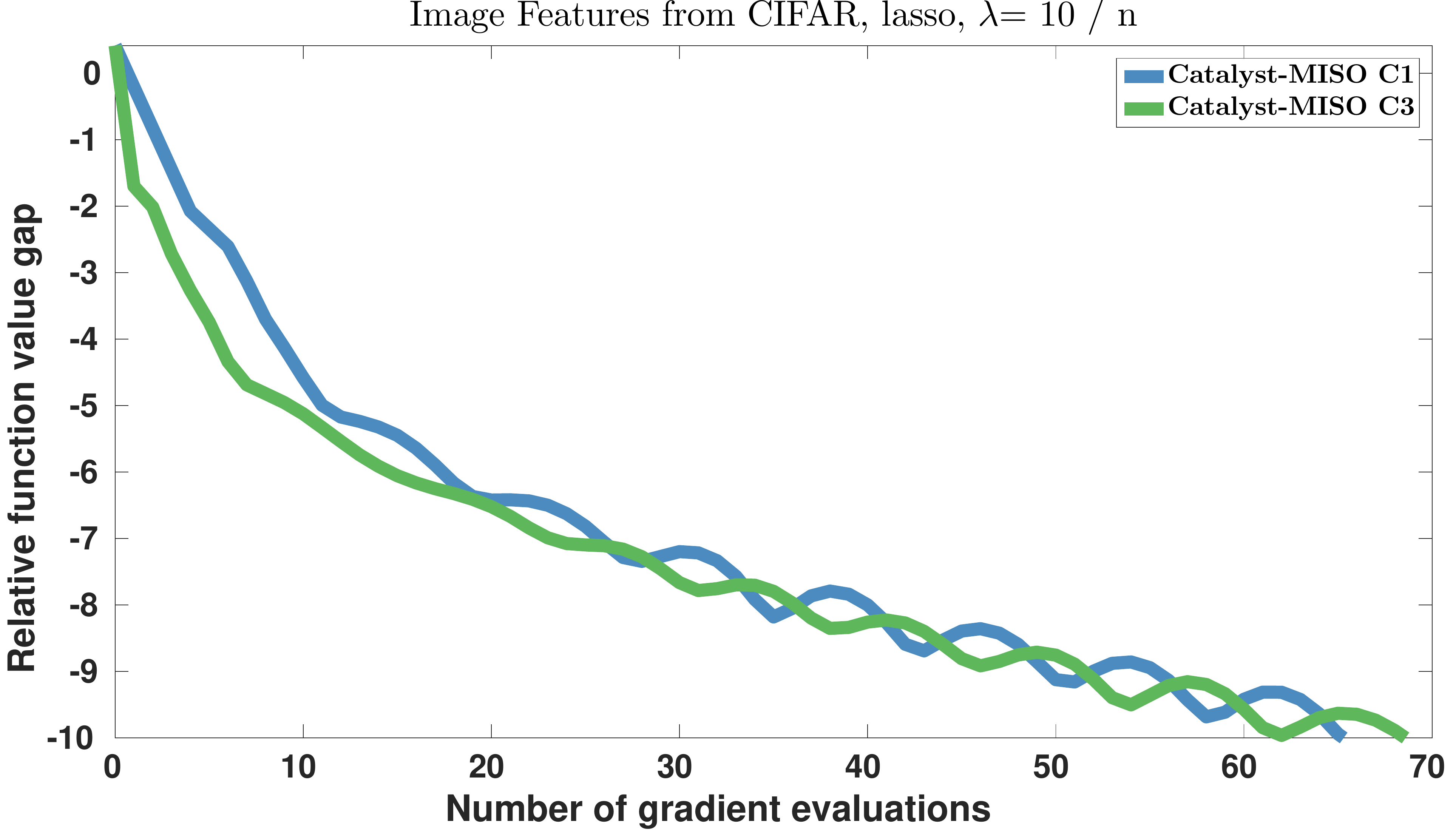}
   \caption{Experimental study of different stopping criterions for Catalyst-MISO, with a similar setting as in Figure~\ref{catalyst:fig:svrg} }\label{catalyst:fig:miso}
\end{figure}

\paragraph{Observations for Catalyst-MISO.} 
The warm-start strategy of MISO is different
from primal algorithms because parameters for the dual function need to be
specified. The most natural way for warm starting the dual functions is to set
$$ d_{k+1}(x) = d_k(x) + \frac{\kappa}{2} \Vert x -y_k\Vert^2 -
\frac{\kappa}{2} \Vert x-y_{k-1}\Vert^2 ,$$
where $d_k$ is the last dual function of the previous subproblem $h_k$. This gives the warm start 
$$ z_0 = \prox \left(x_k + \frac{\kappa}{\kappa+\mu}(y_k- y_{k-1})\right). $$
For other choices of $z_0$, the dual function needs to be recomputed from scratch, which is computationally expensive and unstable for ill-conditioned problems. Thus, we only present the experimental results with respect to criterion (\ref{C1}) and the one-pass heuristic~\Ctrois. As we observe, a huge acceleration is obtained in logistic regression and Elastic-net formulations. For Lasso problem, the original Prox-MISO is not defined since the problem is not strongly convex. 
Thus, in order to make a comparison, we compare with Catalyst-SVRG which shows that the acceleration achieves a similar performance. This aligns with the theoretical result stating that Catalyst applied to incremental algorithms yields a similar convergence rate.
Notice also that the original MISO algorithm suffers from numerical stability in this ill-conditioned regime chosen for our experiments. Catalyst not only accelerates MISO, but it also stabilizes it.

\subsection{Acceleration of Existing Methods}\label{subsec:comparison_methods}
Then, we put the previous curves into perspective and make a comparison of the performance before and after applying Catalyst across methods. We show the best performance among the three developed stopping criteria, which corresponds to be \Ctrois. 
\begin{figure}[hbtp]
   \centering
   ~~\includegraphics[width=0.31\linewidth]{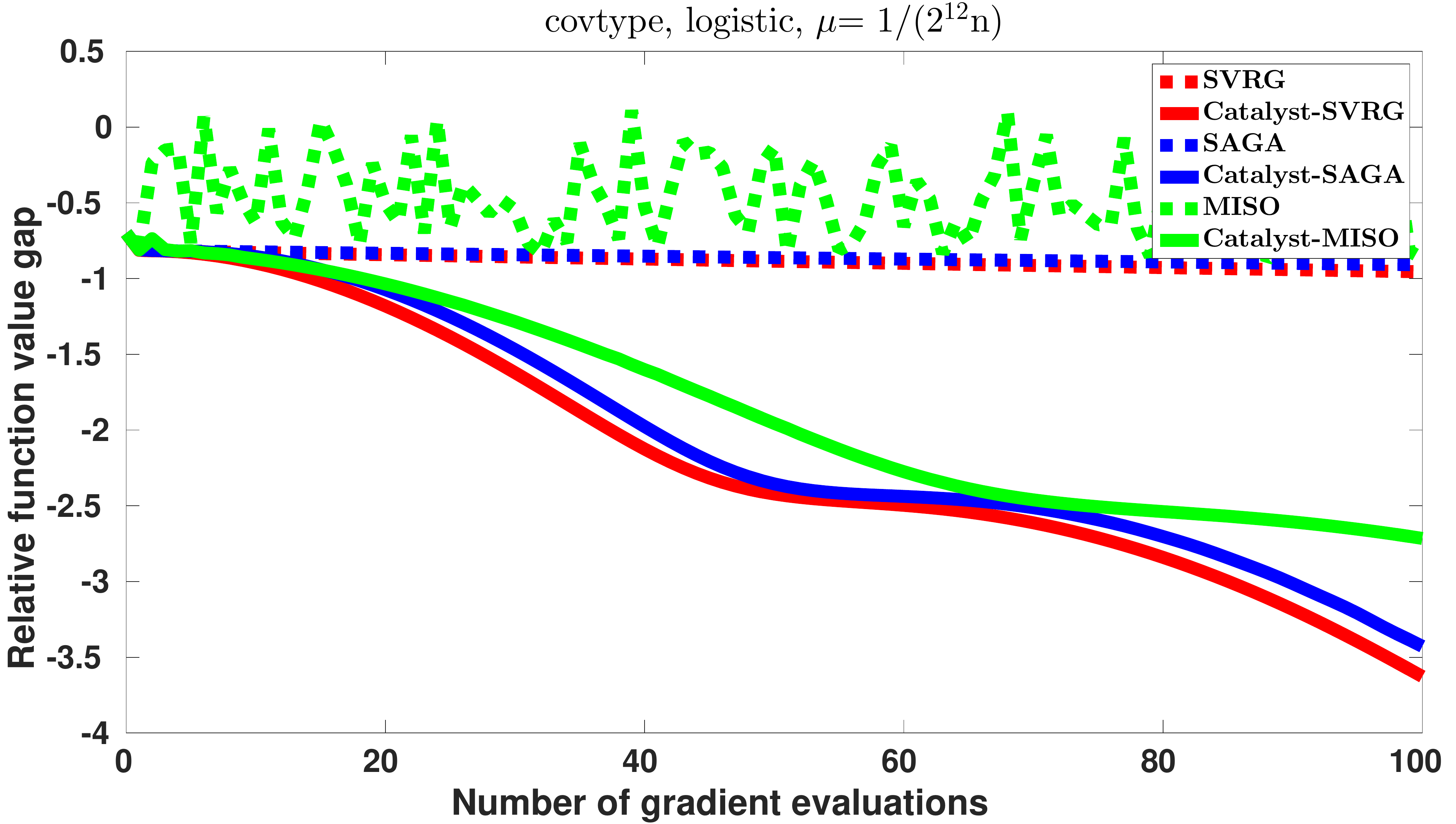}~ 
   ~~\includegraphics[width=0.31\linewidth]{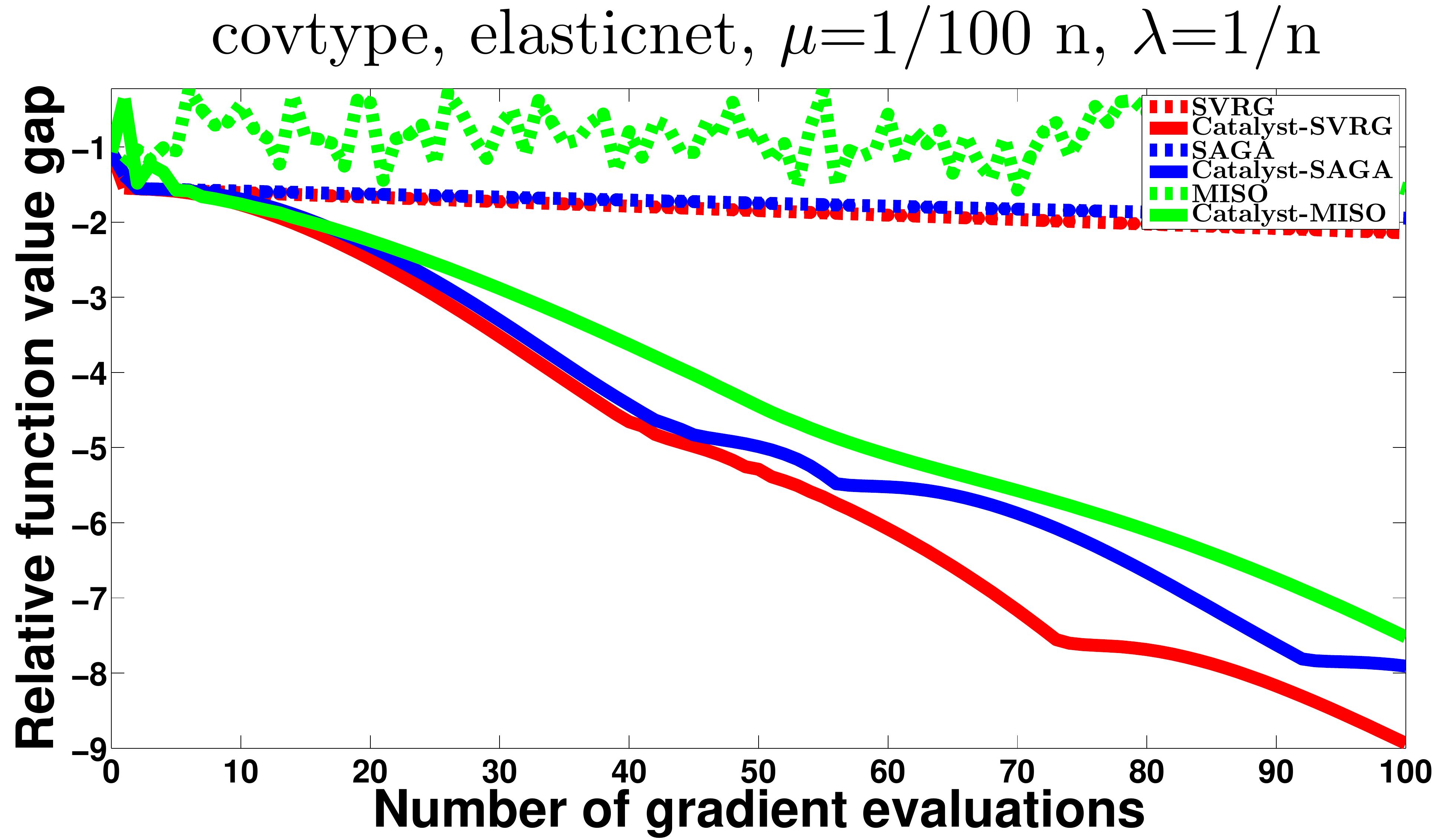}~ 
   ~~\includegraphics[width=0.31\linewidth]{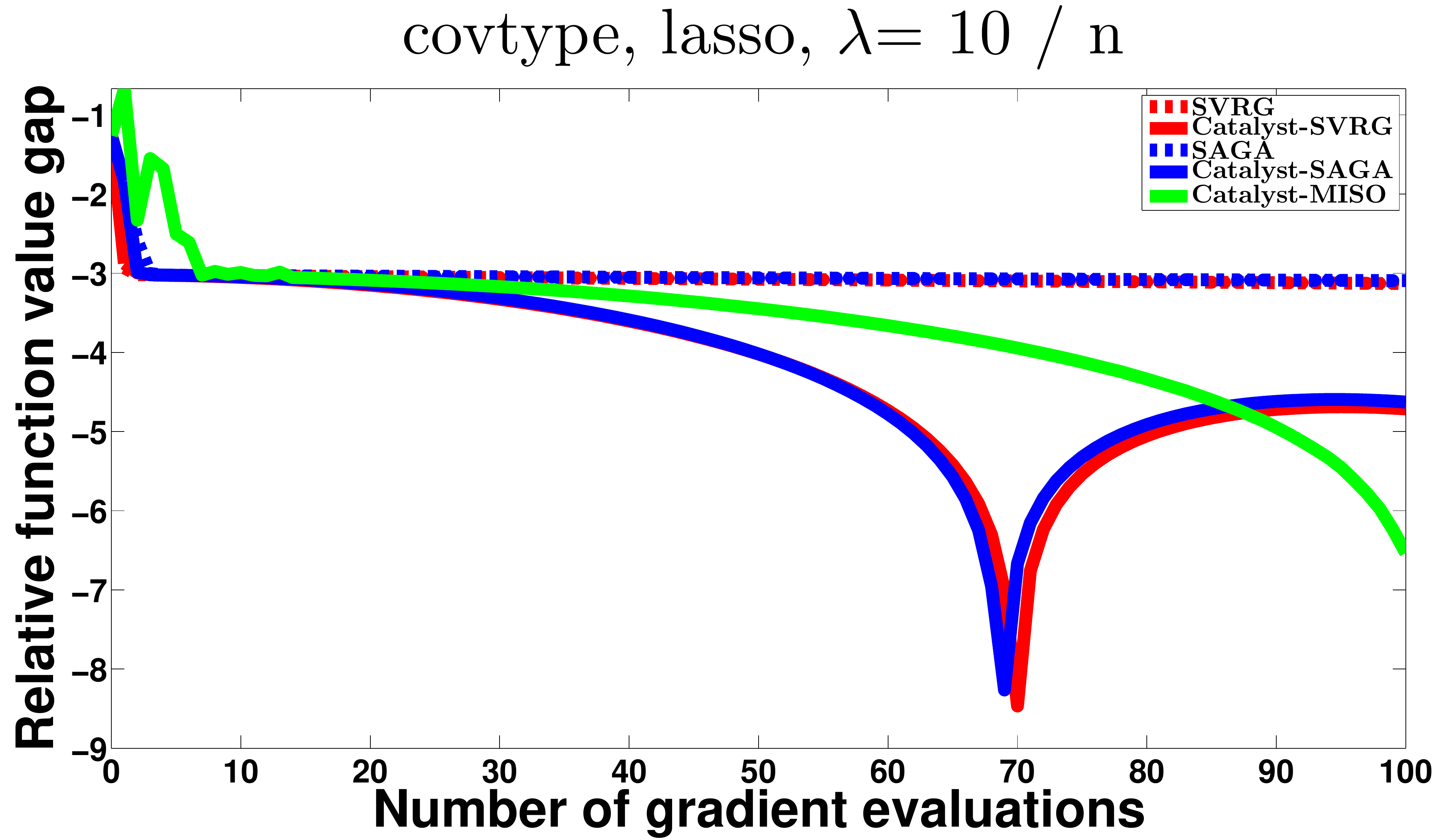}\\
   ~~\includegraphics[width=0.31\linewidth]{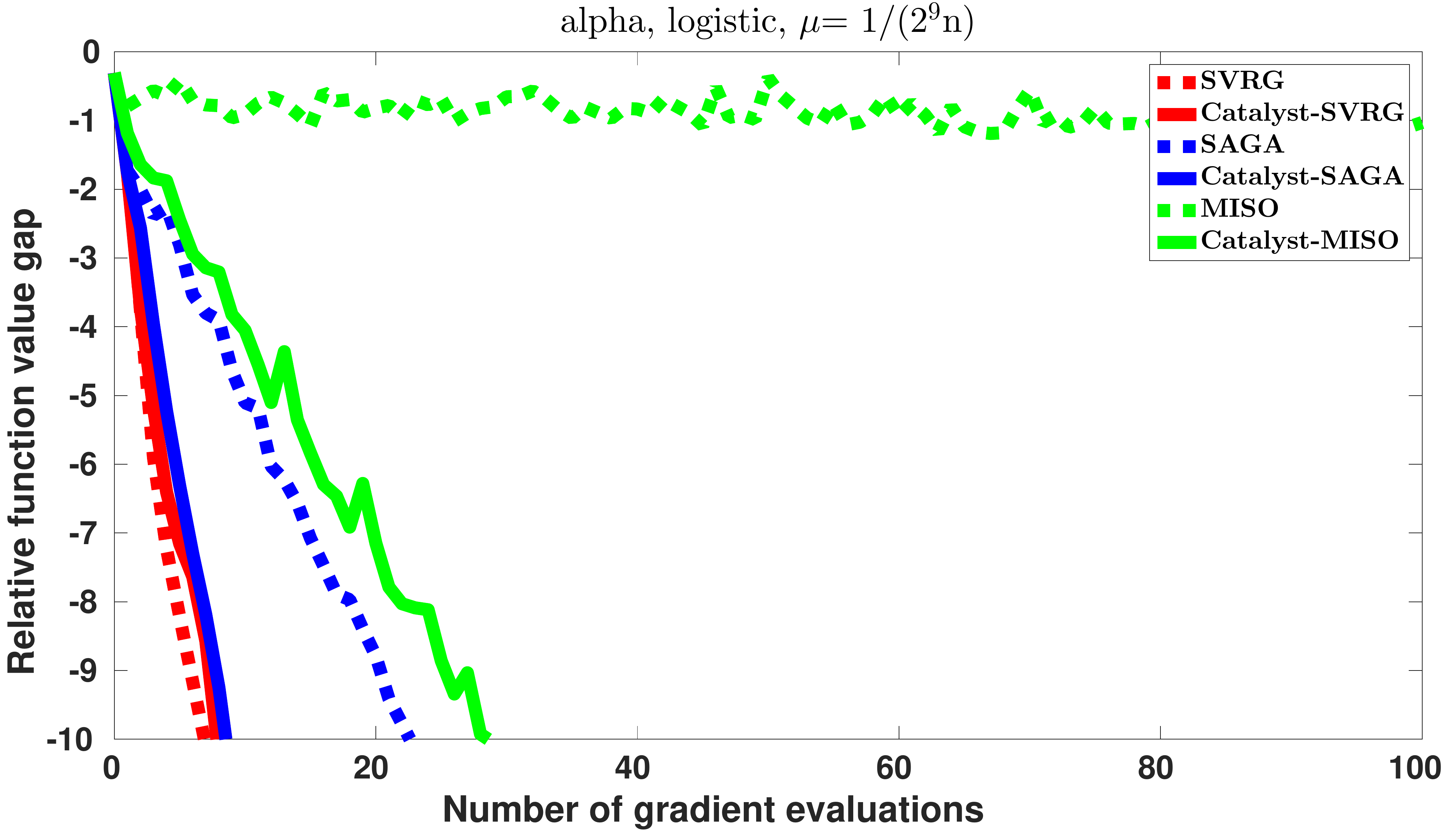}~ 
   ~~\includegraphics[width=0.31\linewidth]{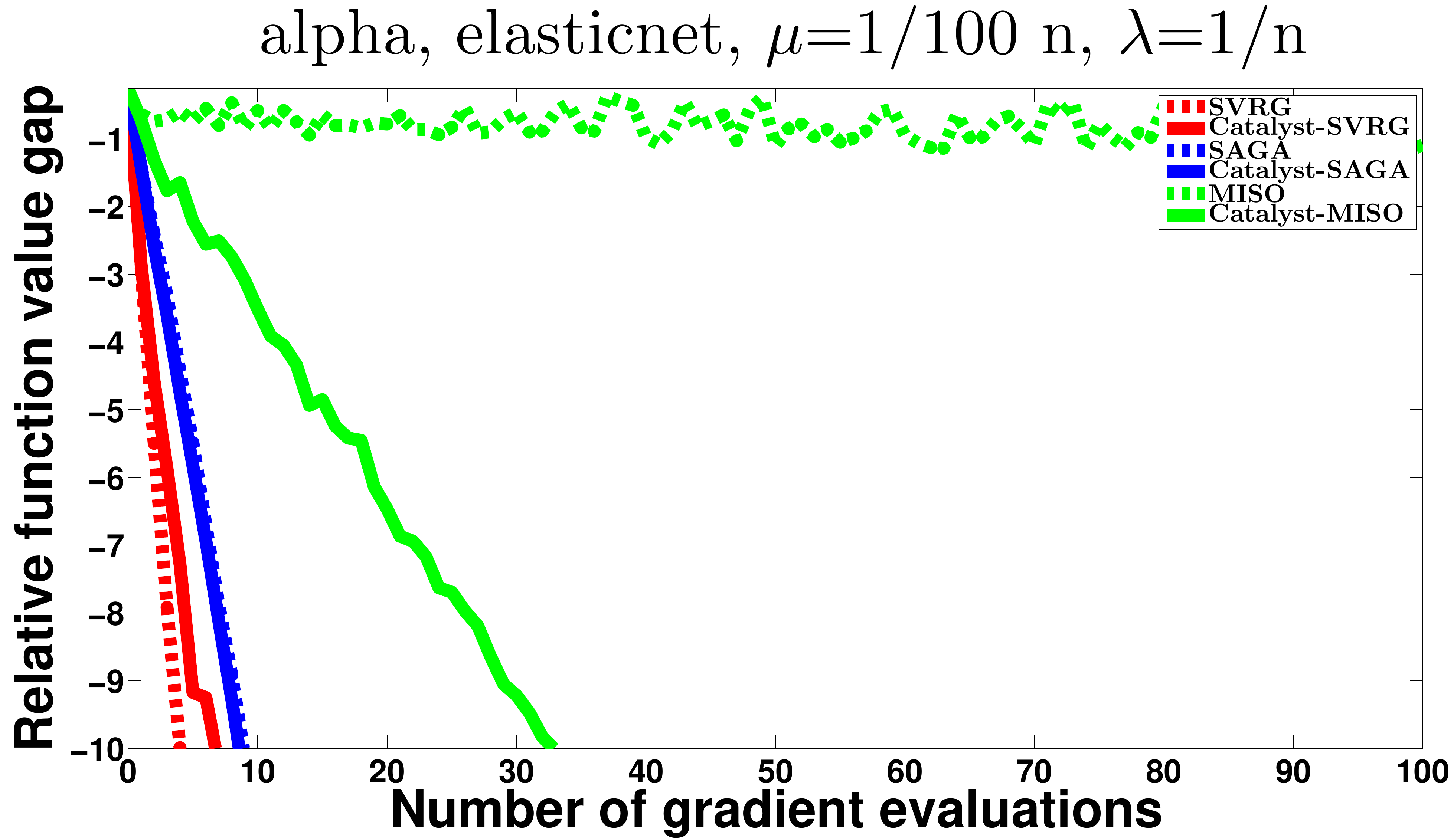}~ 
   ~~\includegraphics[width=0.31\linewidth]{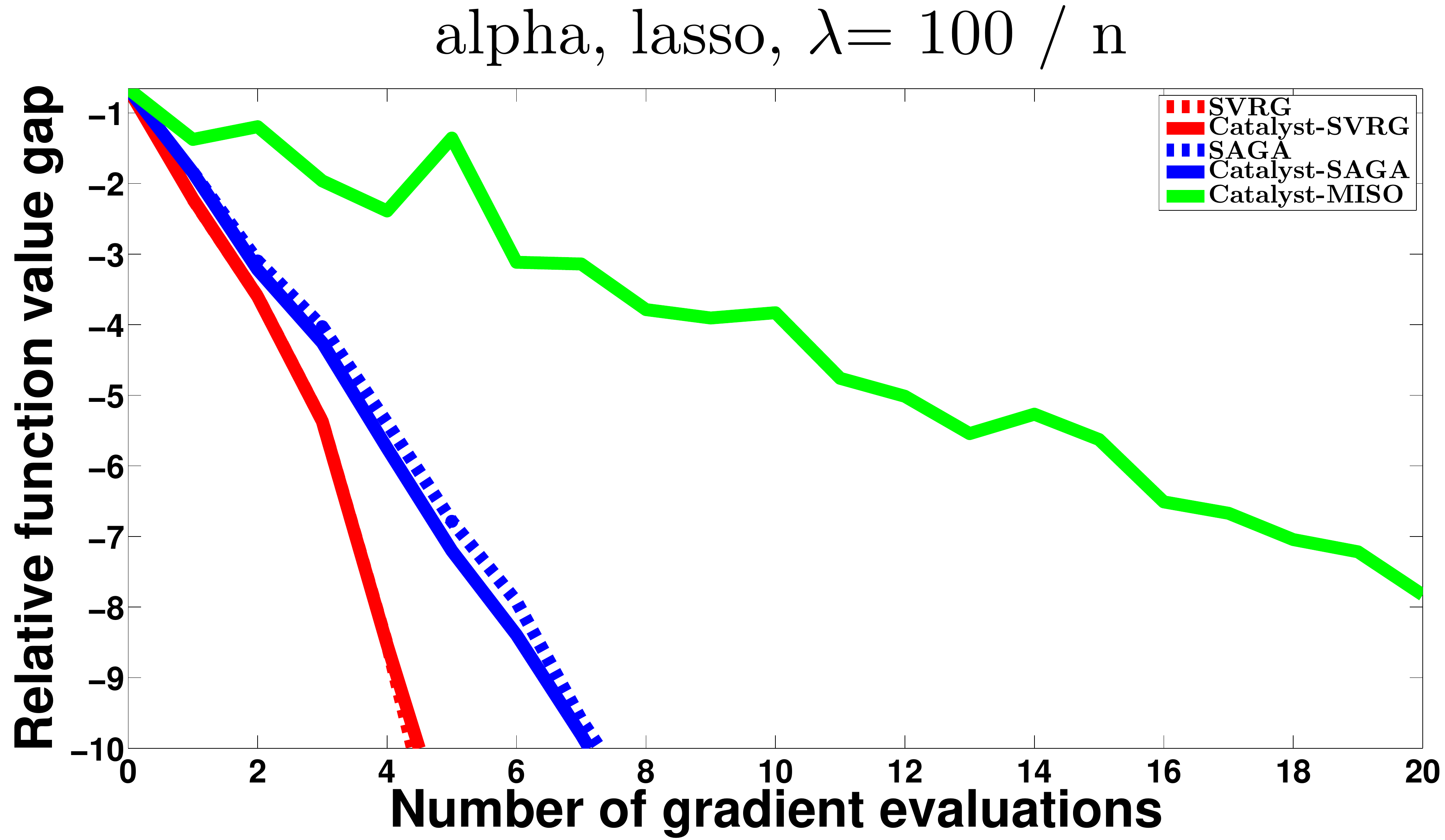}\\
   ~~\includegraphics[width=0.31\linewidth]{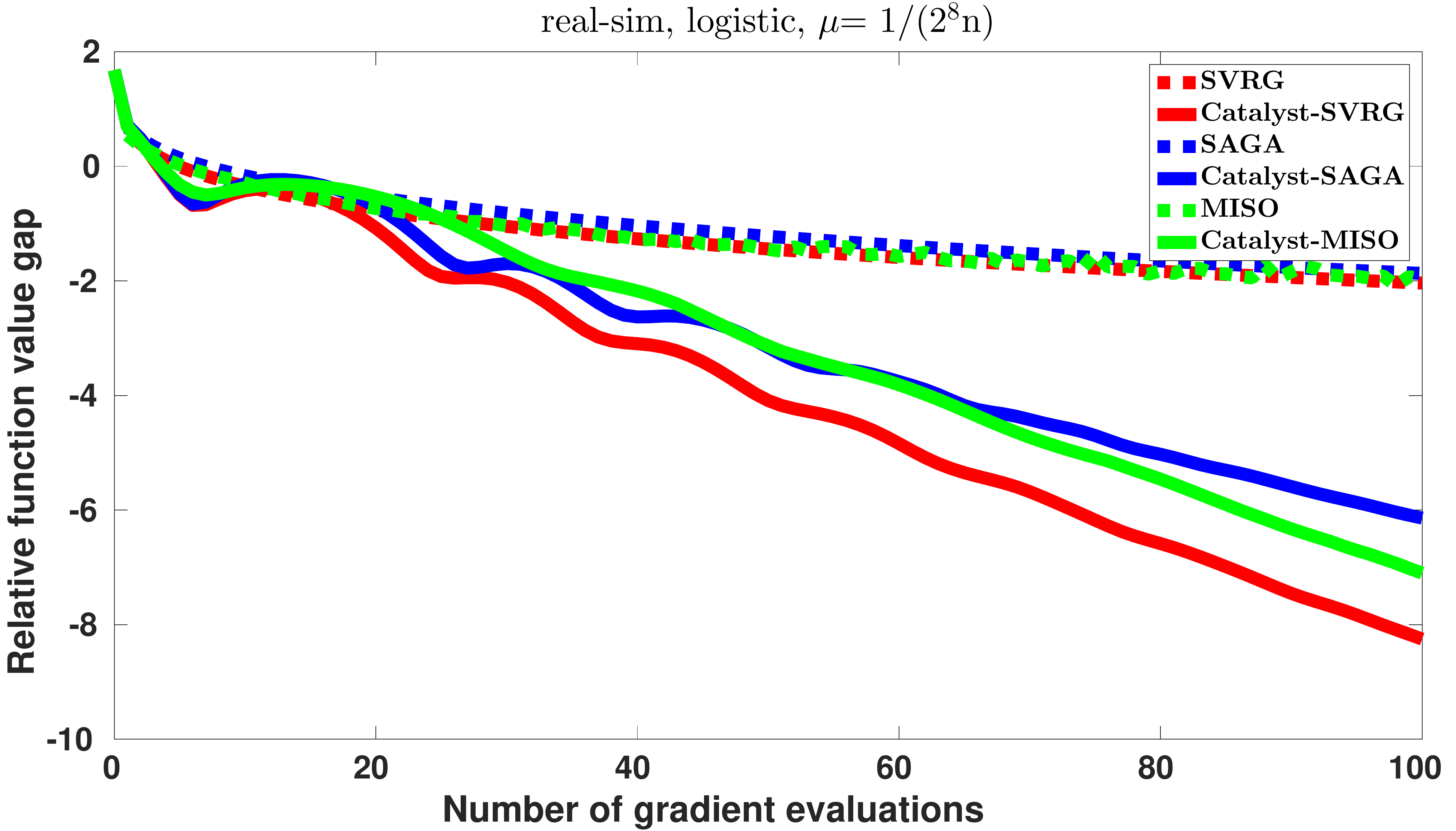}~ 
   ~~\includegraphics[width=0.31\linewidth]{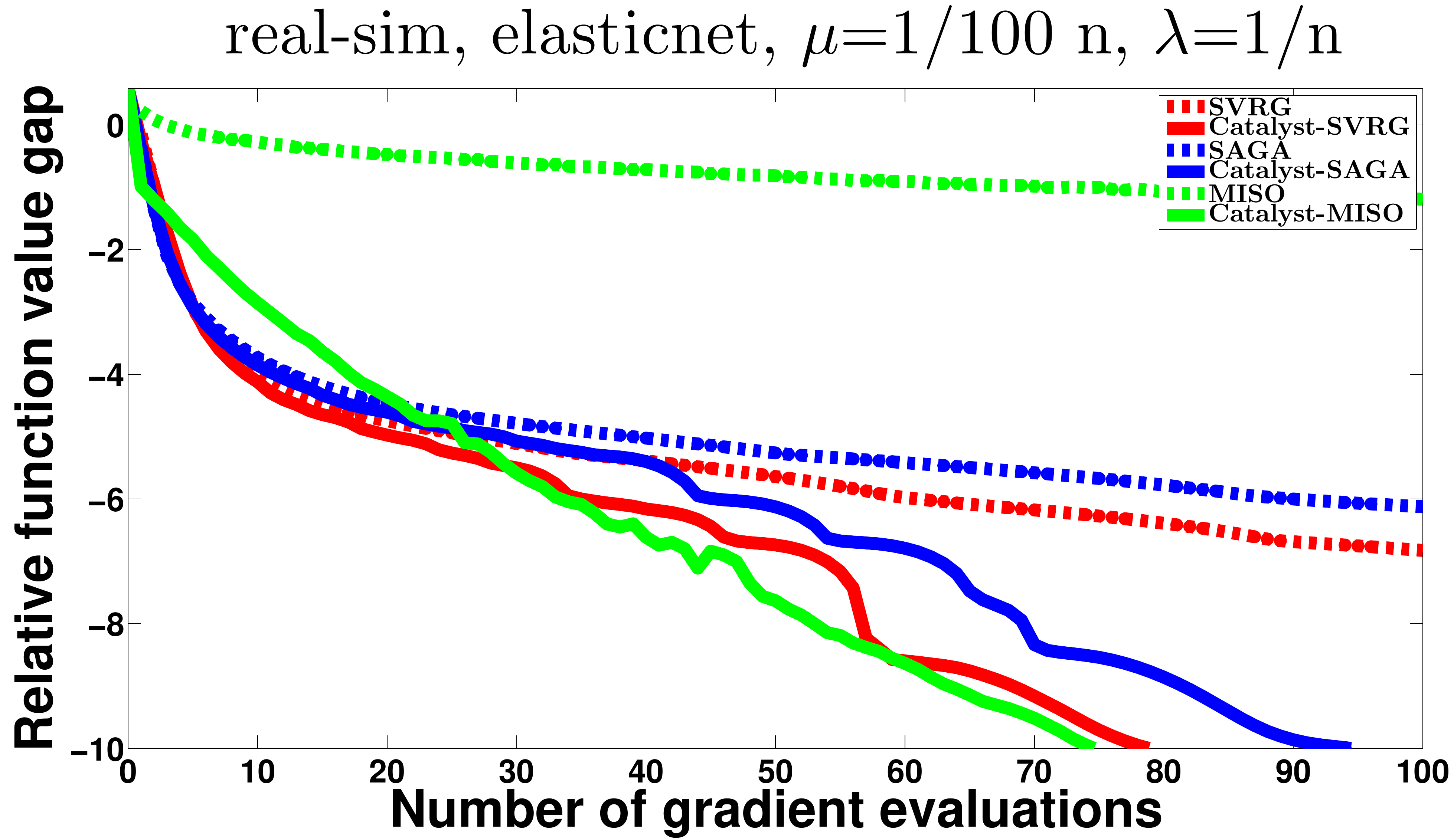}~ 
   ~~\includegraphics[width=0.31\linewidth]{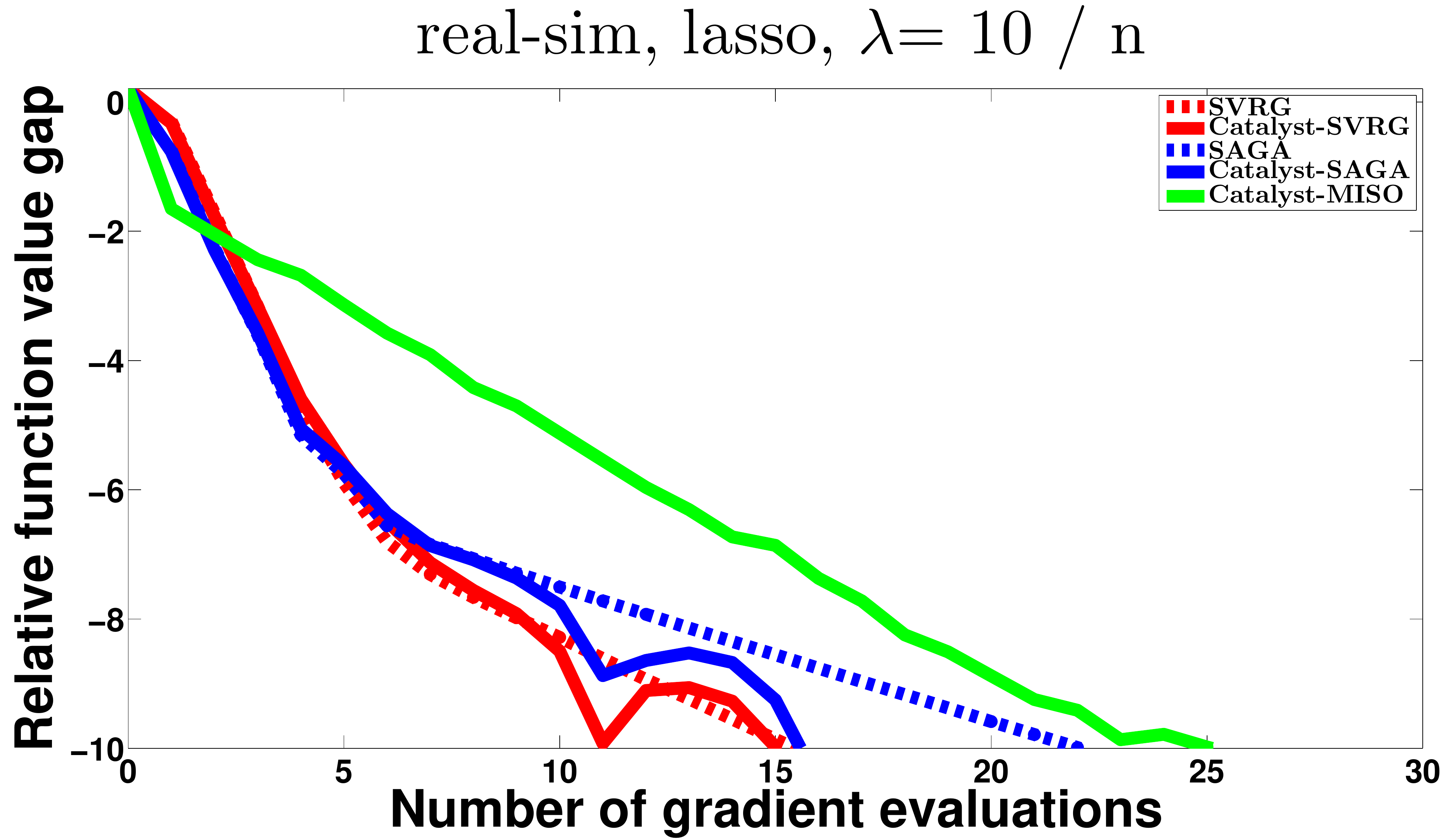}\\
   ~~\includegraphics[width=0.31\linewidth]{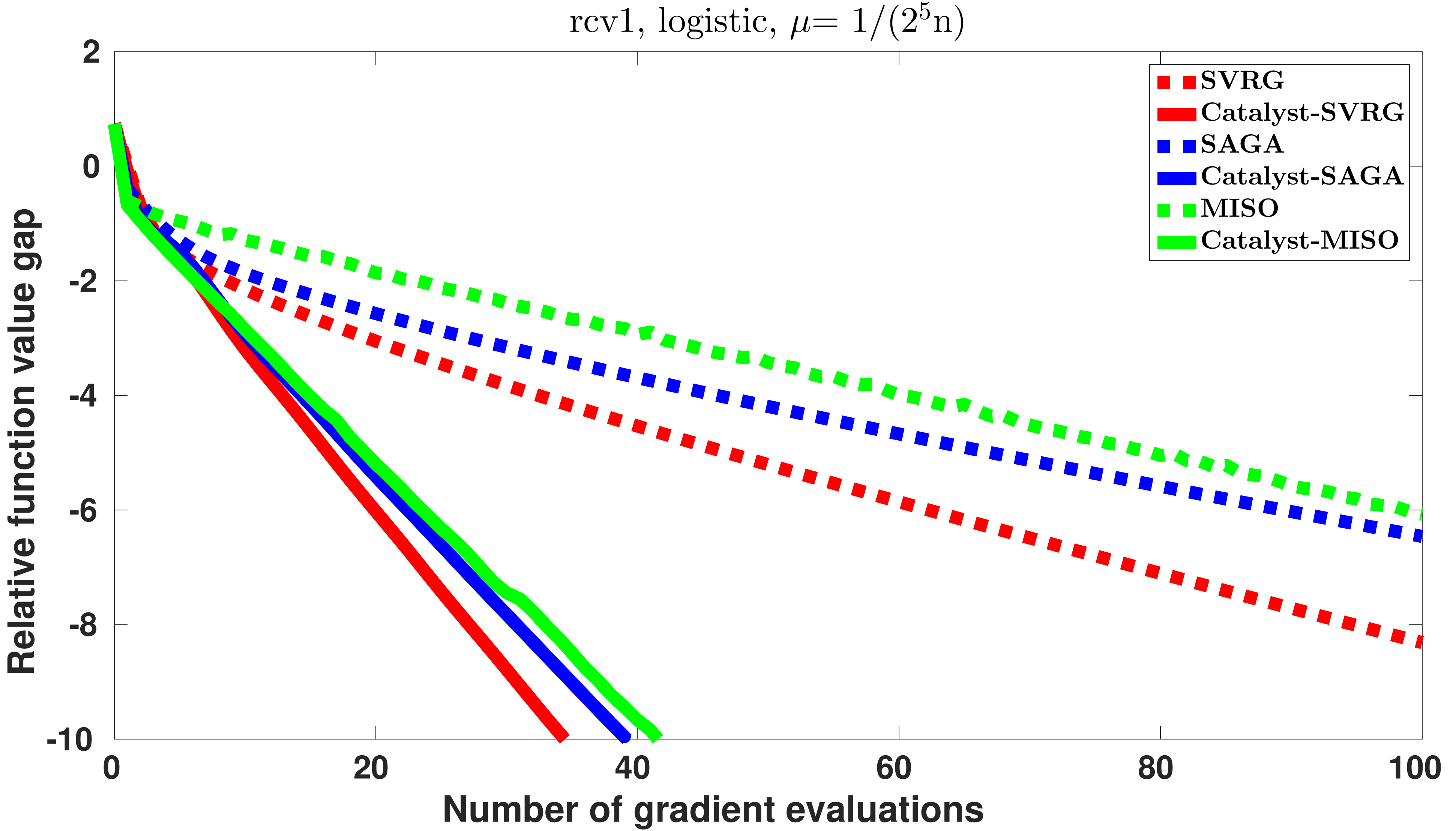}~ 
   ~~\includegraphics[width=0.31\linewidth]{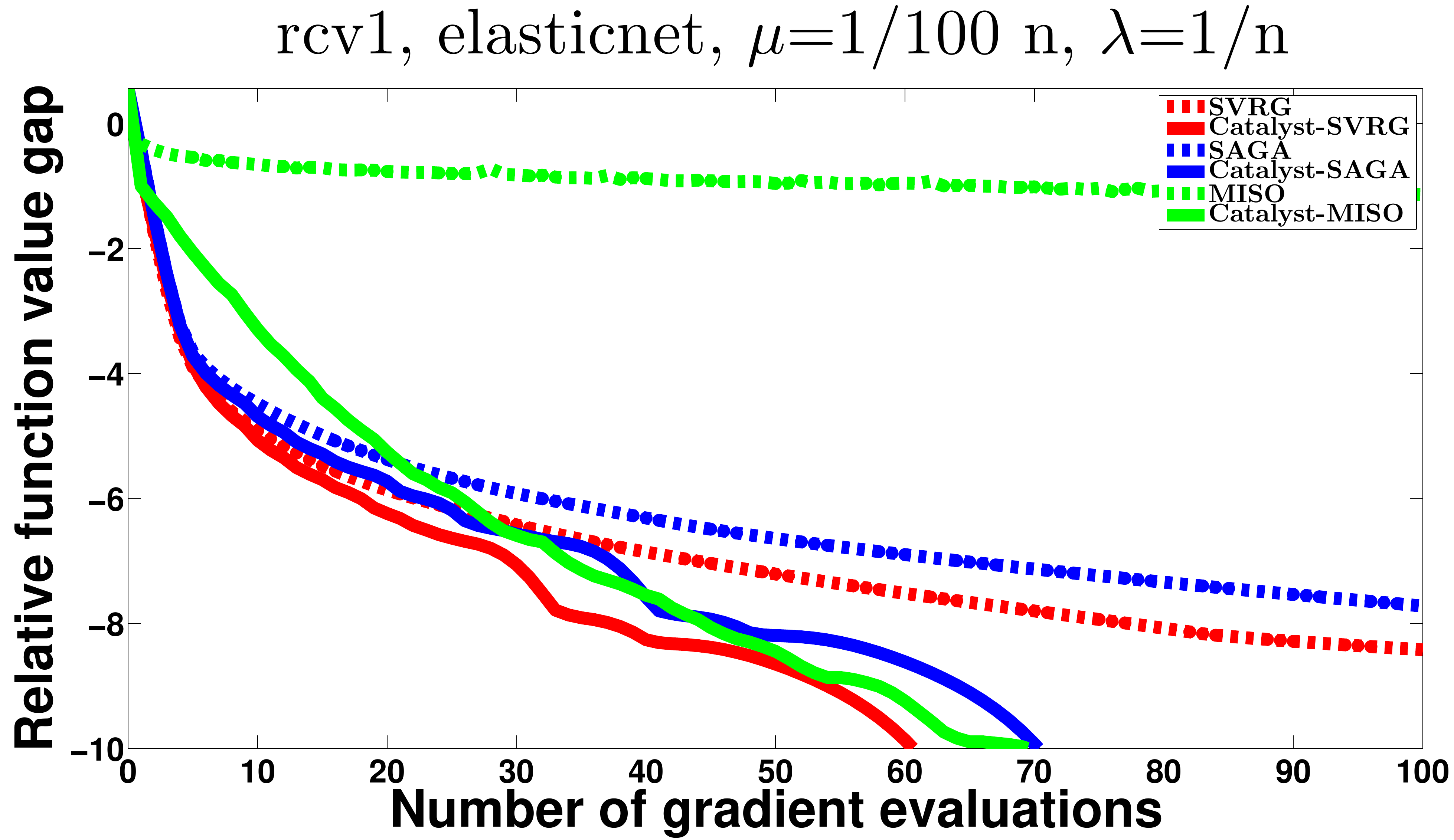}~ 
   ~~\includegraphics[width=0.31\linewidth]{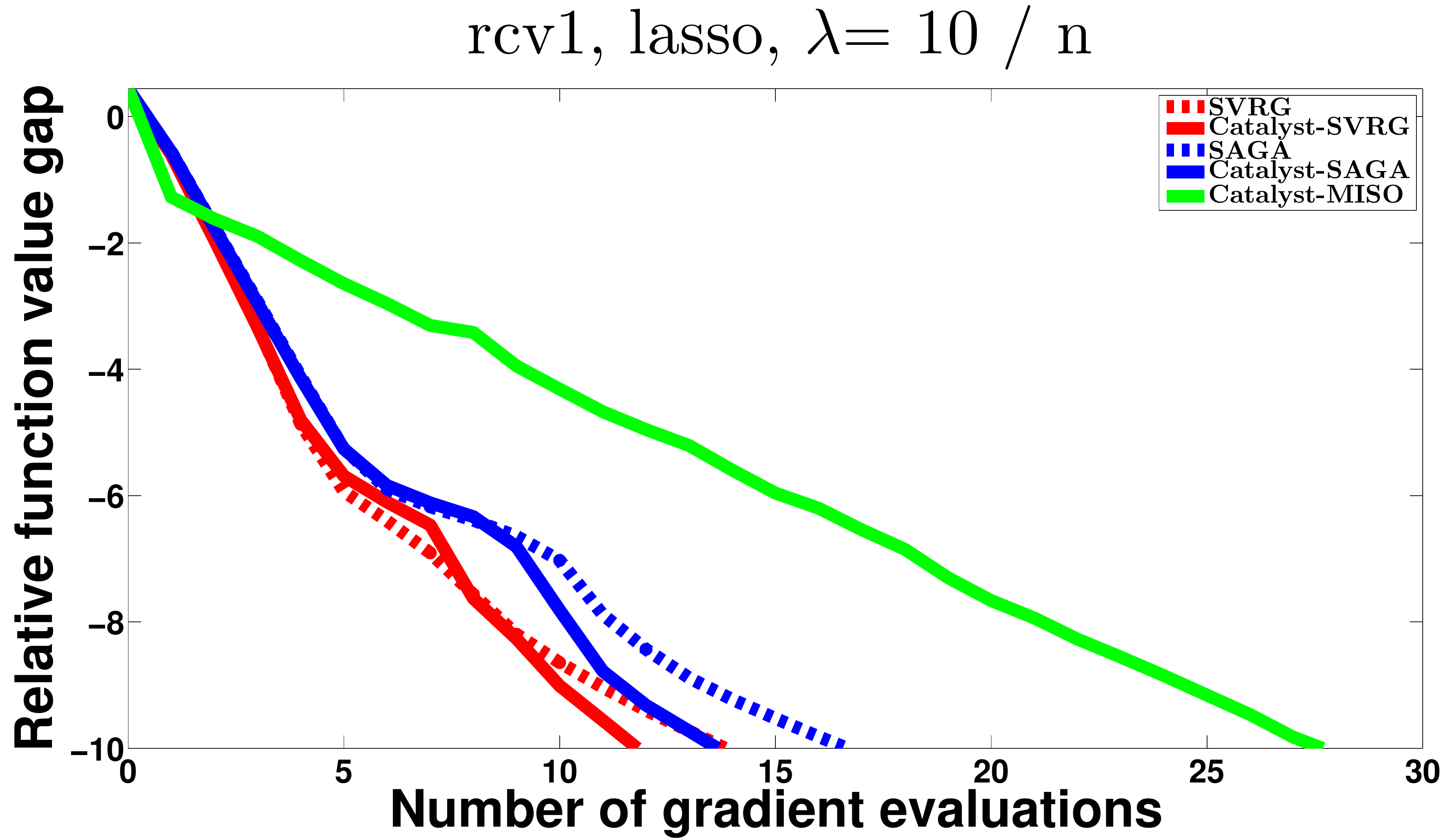}\\
   ~~\includegraphics[width=0.31\linewidth]{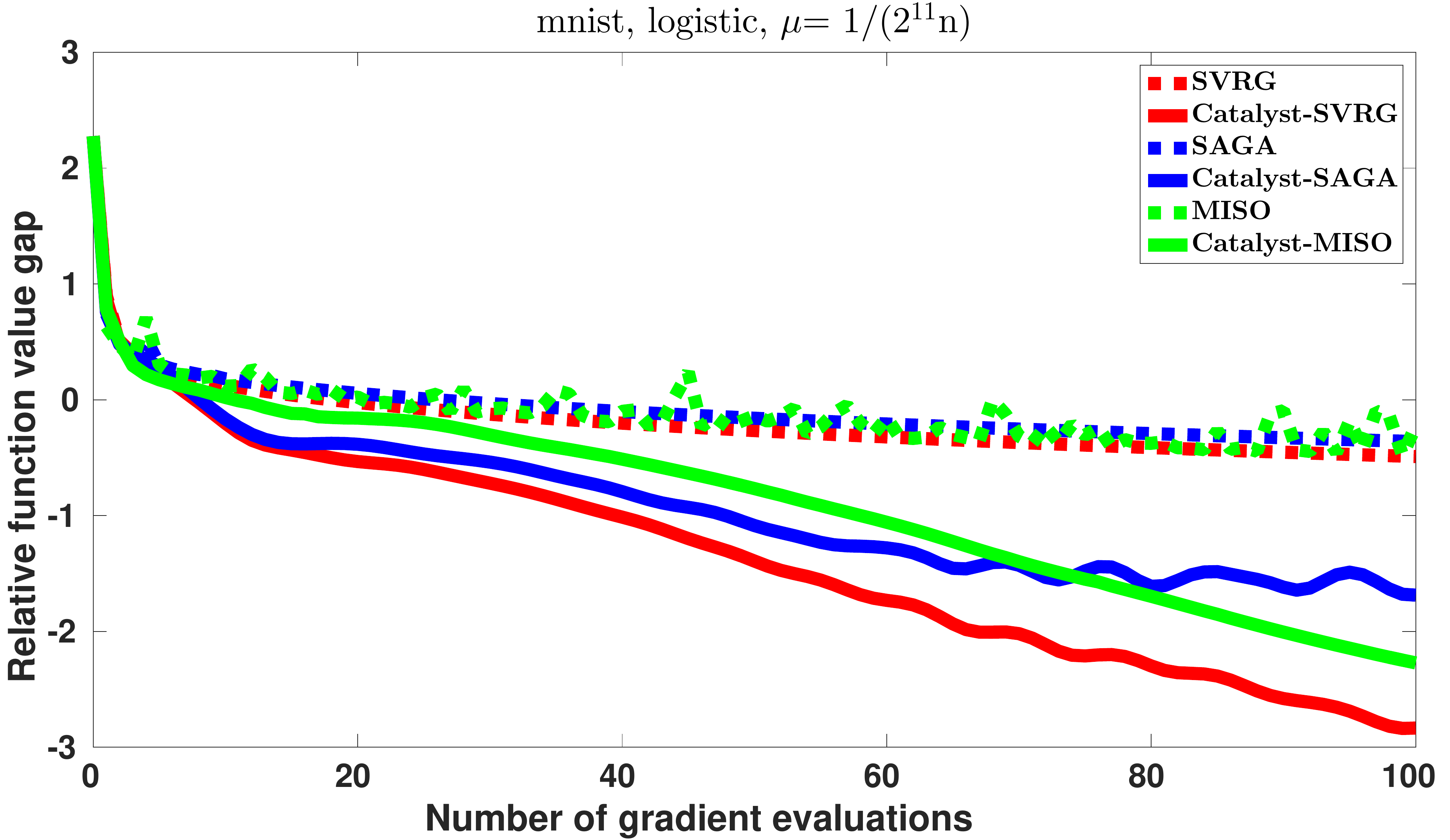}~ 
   ~~\includegraphics[width=0.31\linewidth]{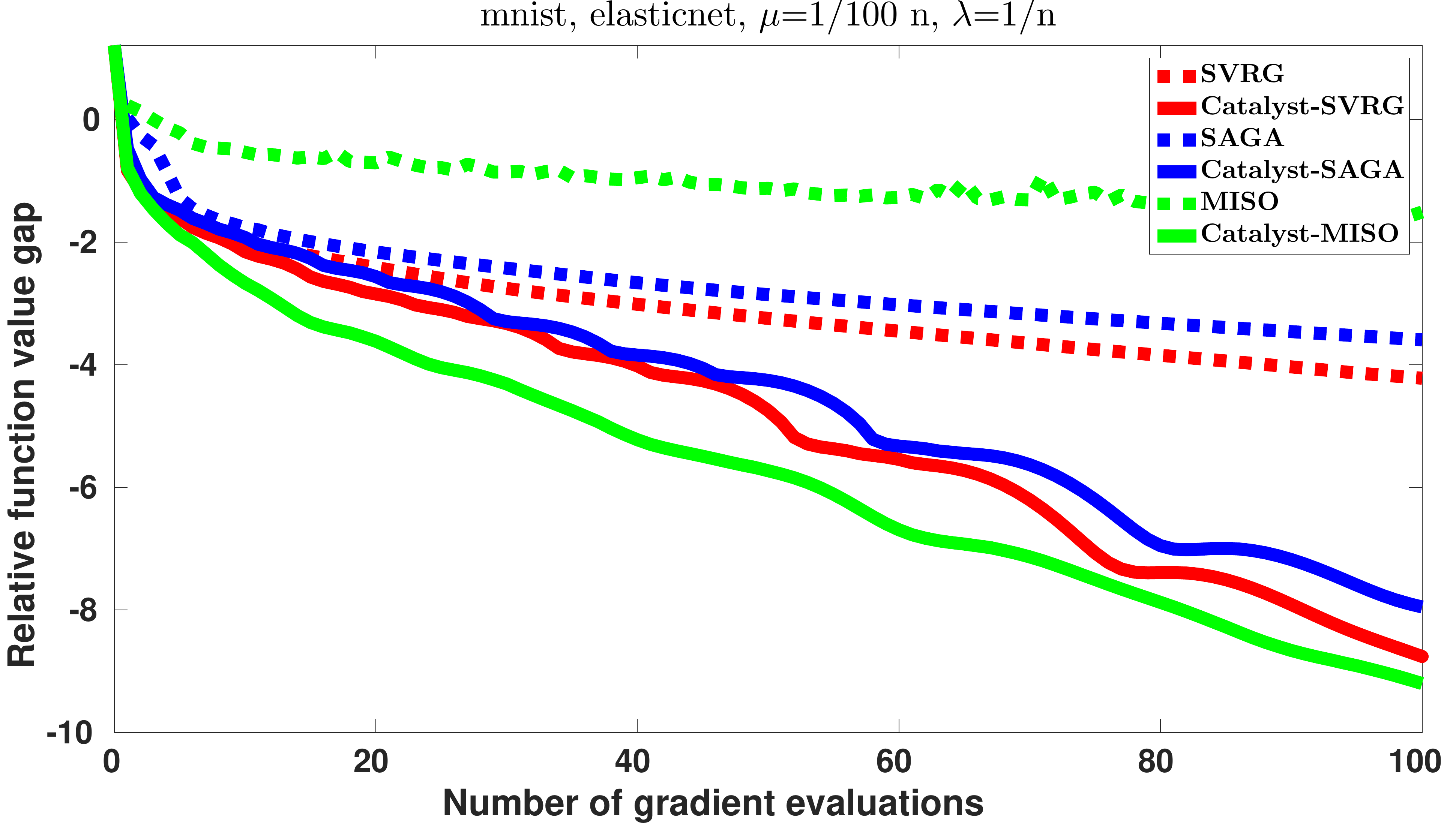}~ 
   ~~\includegraphics[width=0.31\linewidth]{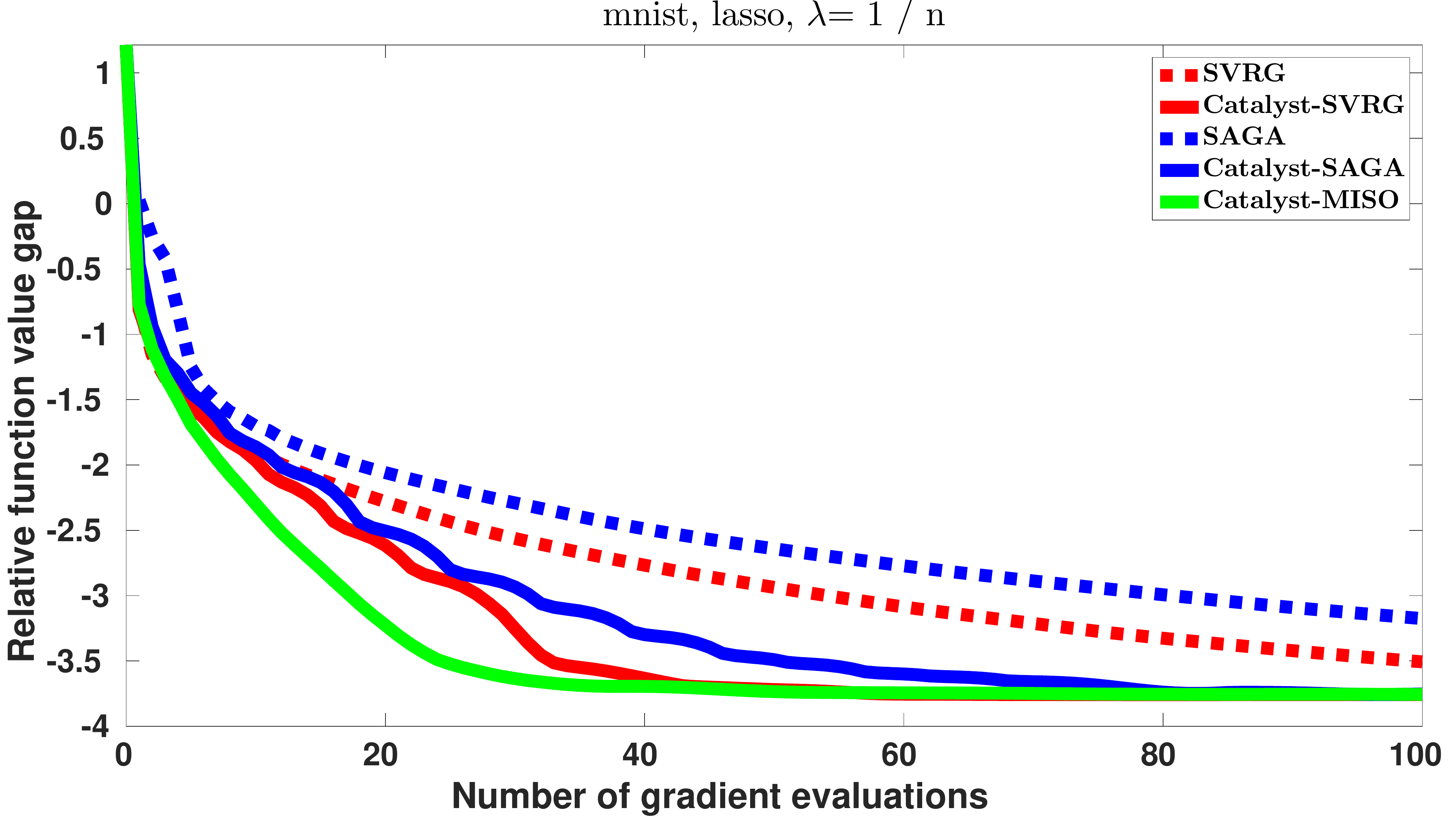}\\
   ~~\includegraphics[width=0.31\linewidth]{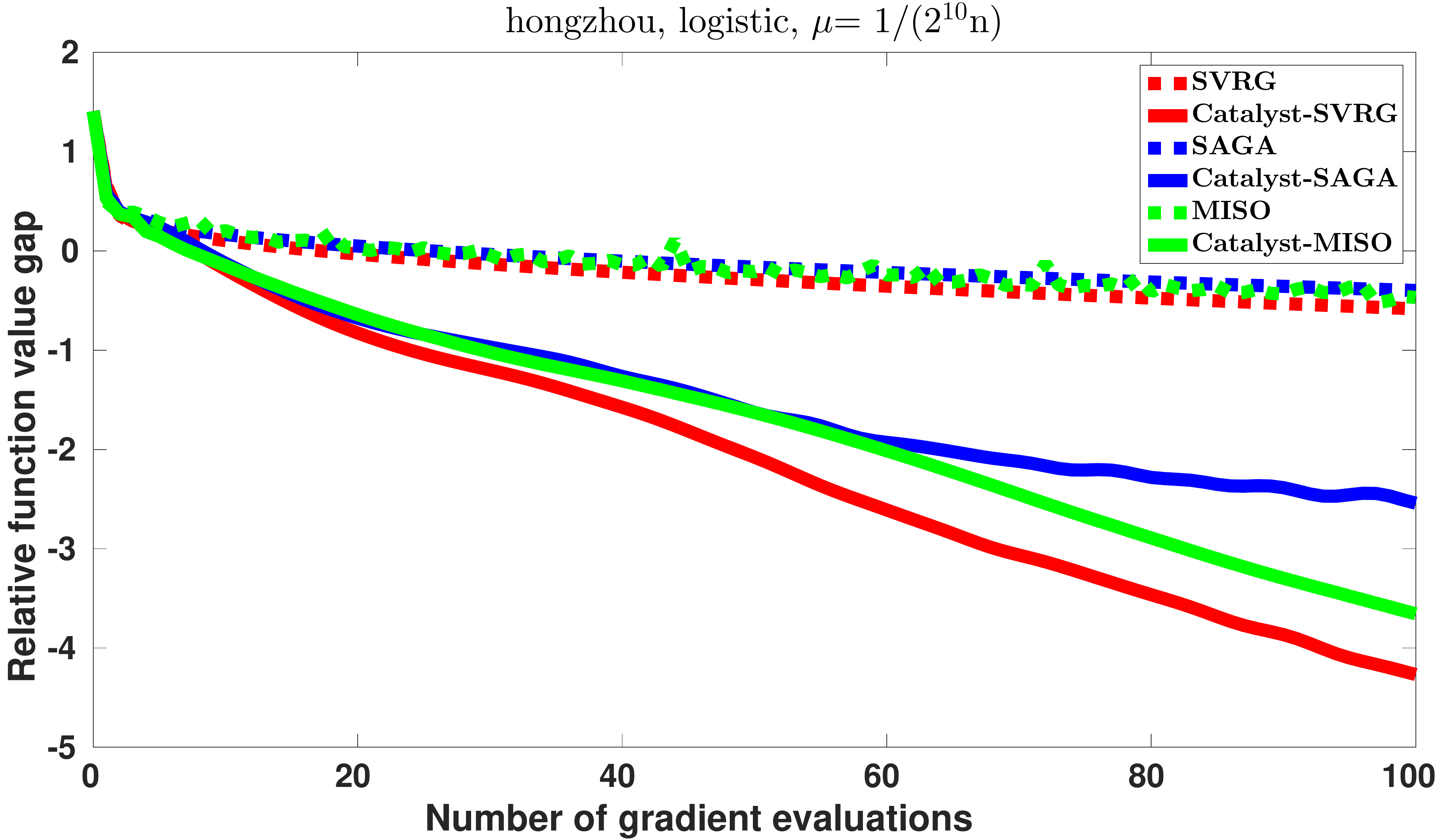}~ 
   ~~\includegraphics[width=0.31\linewidth]{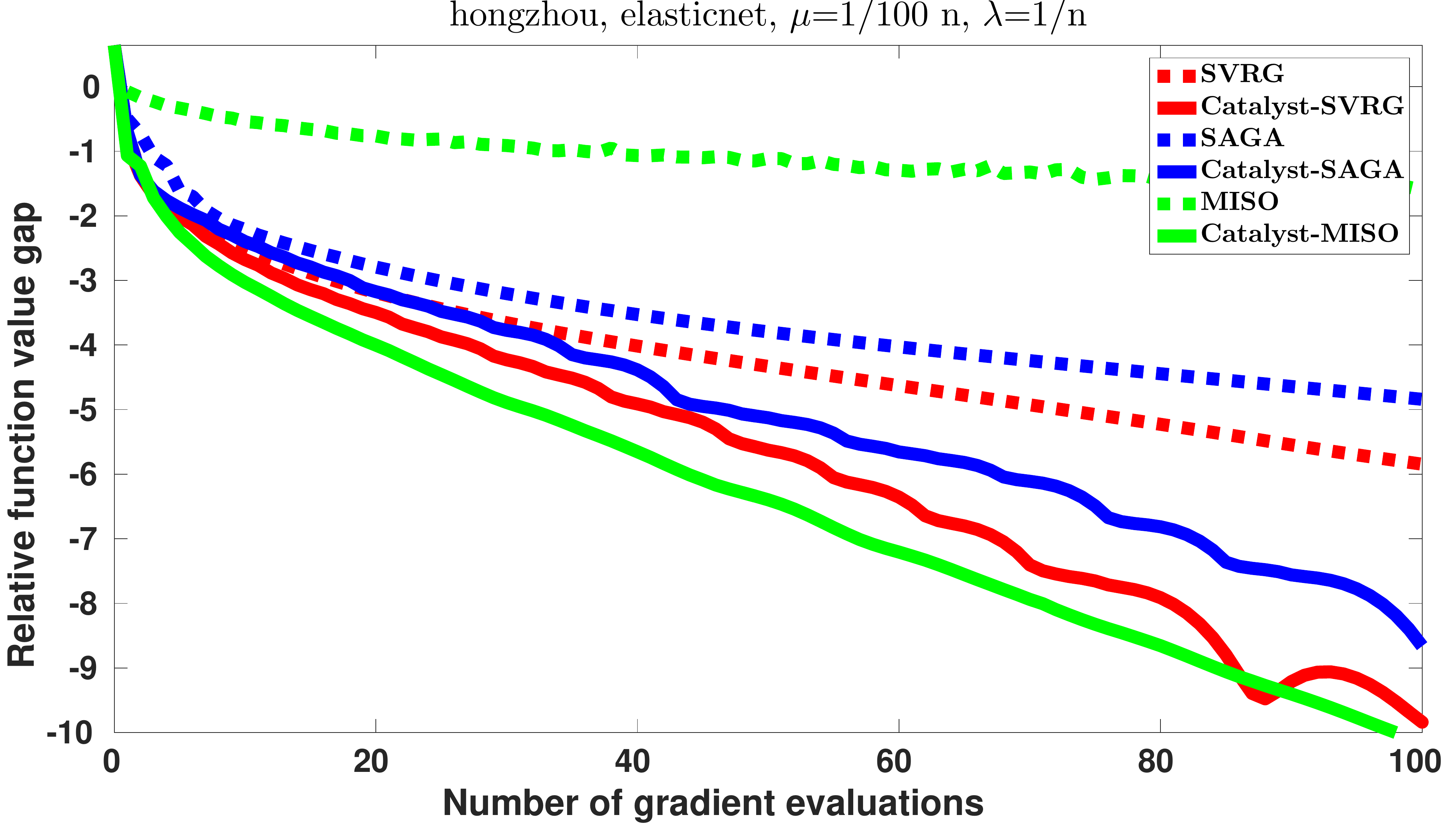}~ 
   ~~\includegraphics[width=0.31\linewidth]{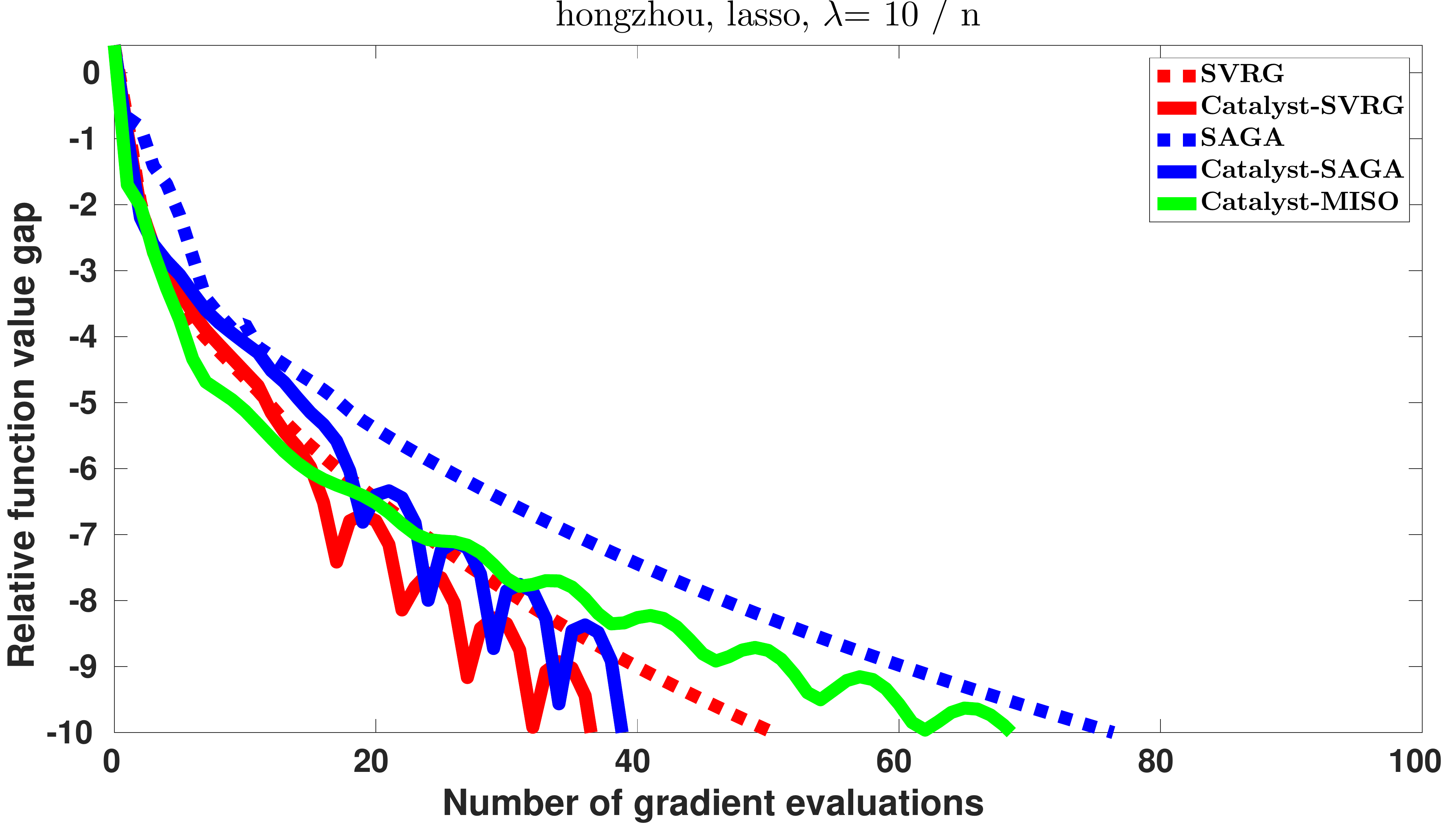}\\
   \caption{Experimental study of the performance of Catalyst applying to SVRG, SAGA and MISO. The dashed lines correspond to the original algorithms and the solid lines correspond to accelerated algorithms by applying Catalyst. We plot the relative function value gap~$(f(x_k)-f^\star)/f^\star$ in the number of gradient evaluations, on a logarithmic scale.}\label{catalyst:fig:diff_algo}
\end{figure}

\paragraph{Observations.} In Figure~\ref{catalyst:fig:diff_algo}, we observe that by applying Catalyst, we accelerate the original algorithms up to the limitations discussed above (comparing the dashed line and the solid line of the same color). In three data sets (covtype, real-sim and rcv1), significant improvements are achieved as expected by the theory for the ill-conditioned problems in logistic regression and Elastic-net. For data set alpha, we remark that an relative accuracy in the order $10^{-10}$ is attained in less than 10 iterations. This suggests that the problems is in fact well-conditioned and there is some hidden strong convexity for this data set. Thus, the incremental algorithms like SVRG or SAGA are already optimal under this situation and no further improvement can be obtained by applying Catalyst. 

\subsection{Empirical Effect on the Generalization Error} \label{subsec:comparison_test}
A natural question that applies to all incremental methods is whether or not the acceleration that we may see for minimizing an empirical risk on \emph{training} data affects the objective function and the test accuracy on new unseen \emph{test} data.
To answer this question, we consider the logistic regression formulation with the regularization parameter $\mu^\star$ obtained by cross-validation. 
Then, we cut each data set into $80\%$ of training data and set aside $20\%$ of the data point as test data.

\begin{figure}[hbtp]
   \centering
   
   ~~\includegraphics[width=0.31\linewidth]{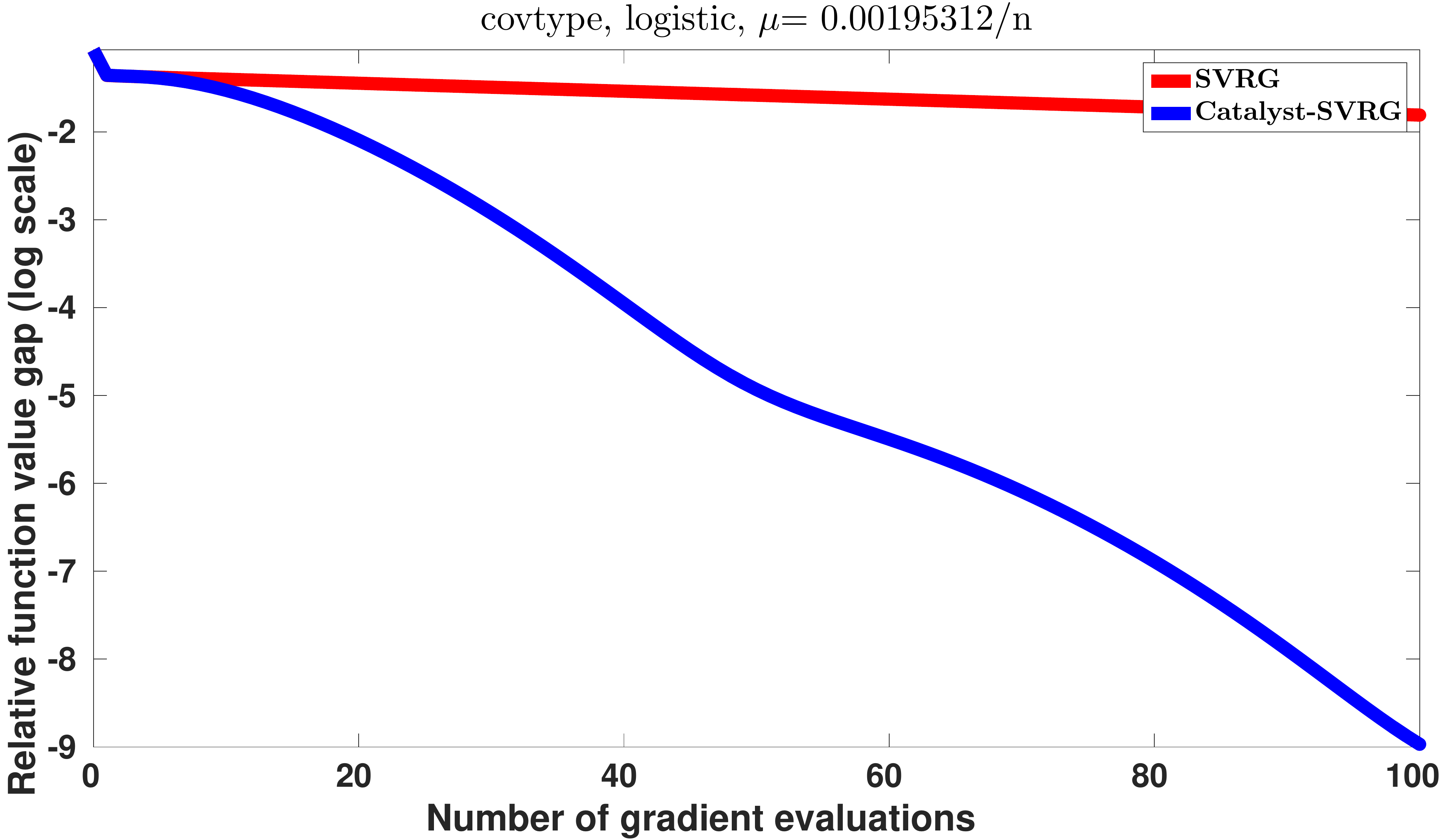}~ 
   ~~\includegraphics[width=0.31\linewidth]{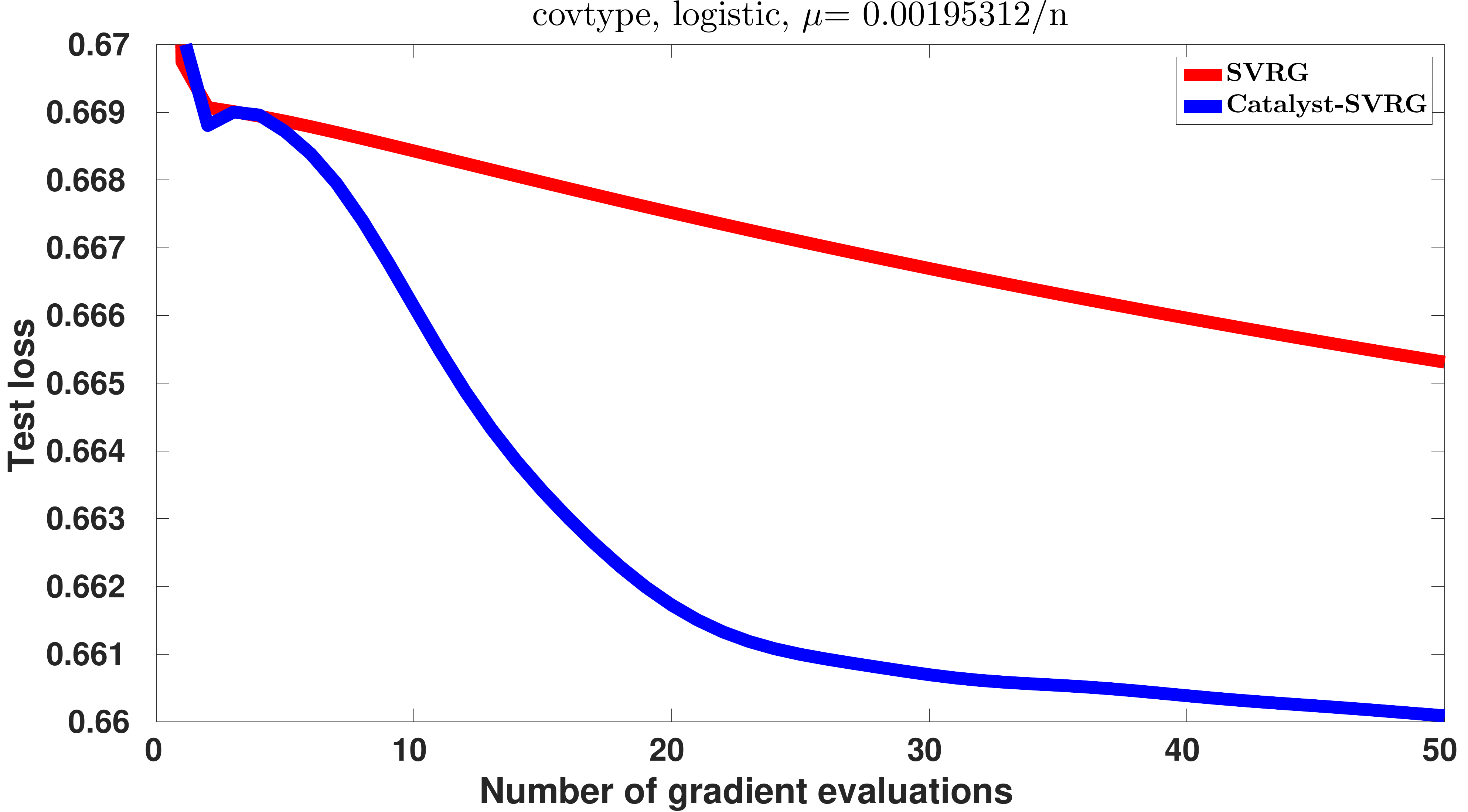}~
   ~~\includegraphics[width=0.31\linewidth]{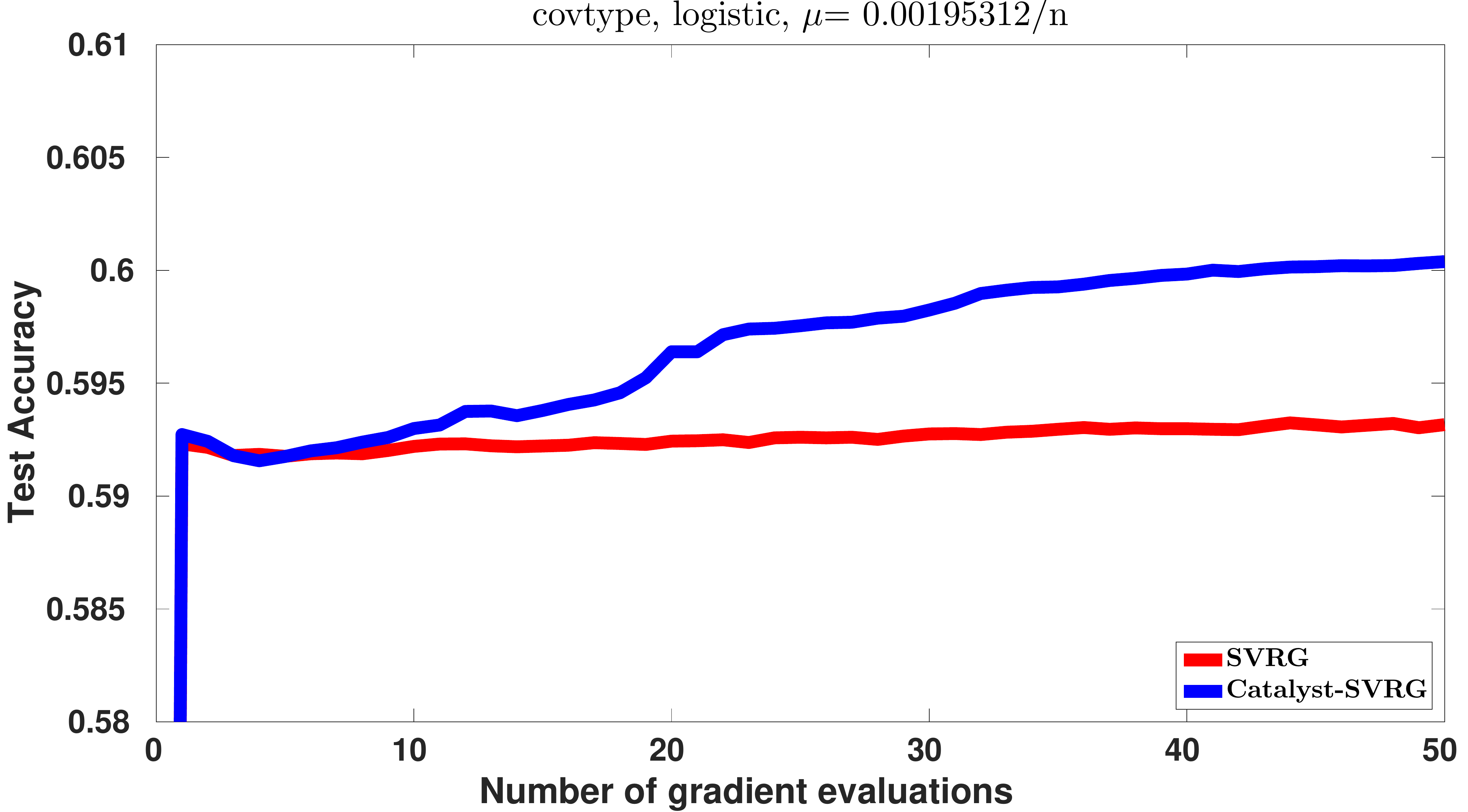}\\
   ~~\includegraphics[width=0.31\linewidth]{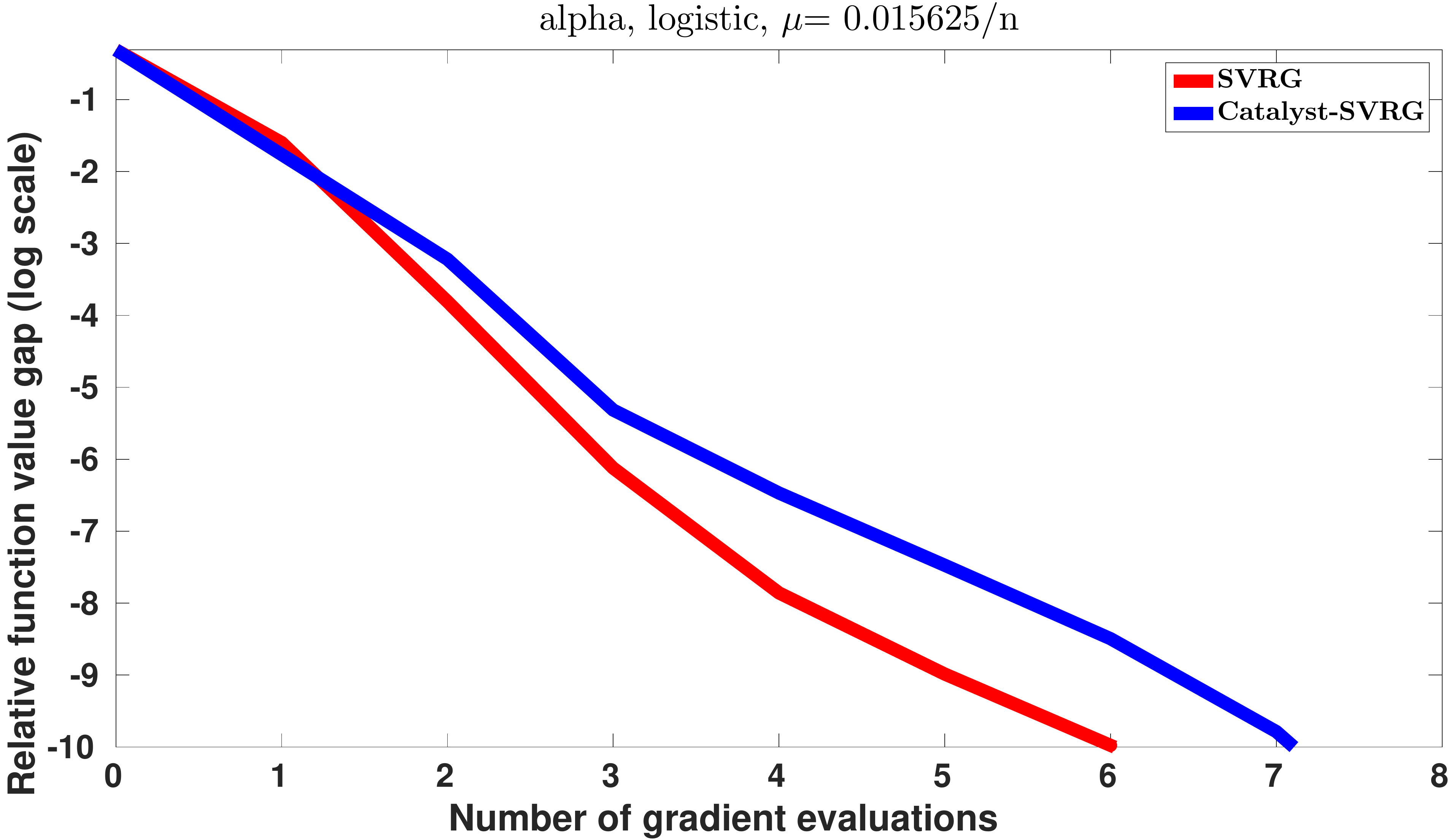}~ 
   ~~\includegraphics[width=0.31\linewidth]{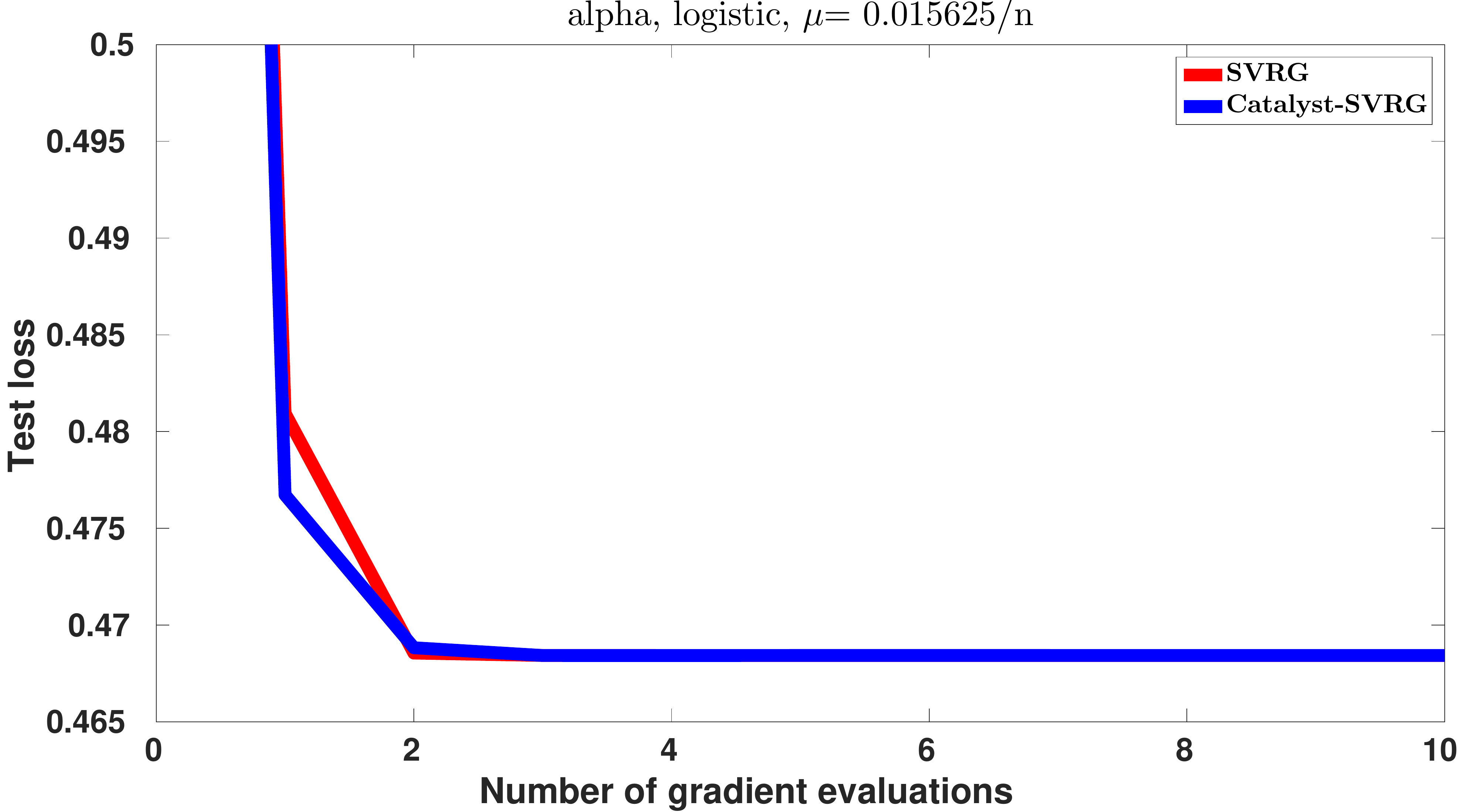}~
   ~~\includegraphics[width=0.31\linewidth]{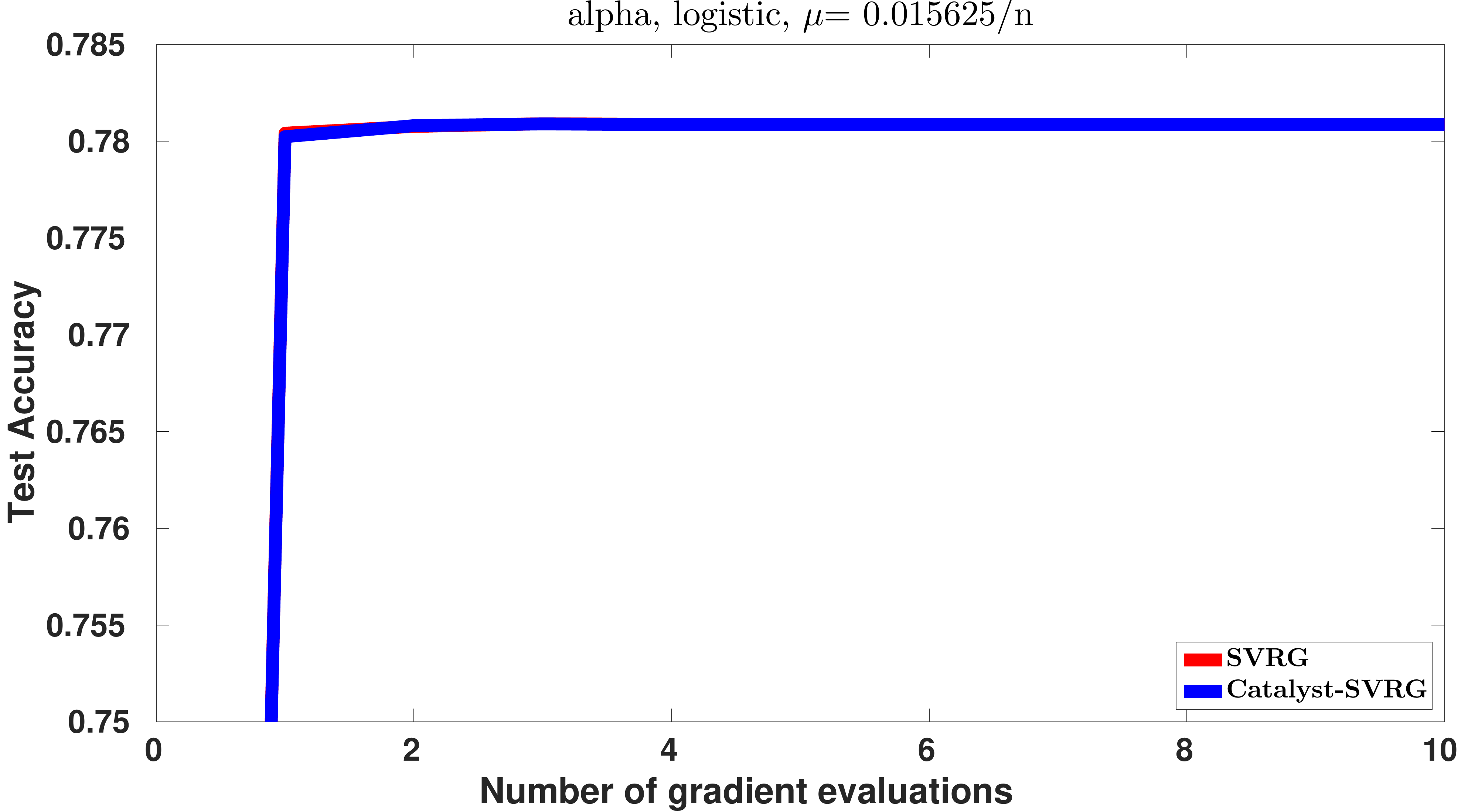}~\\
   ~~\includegraphics[width=0.31\linewidth]{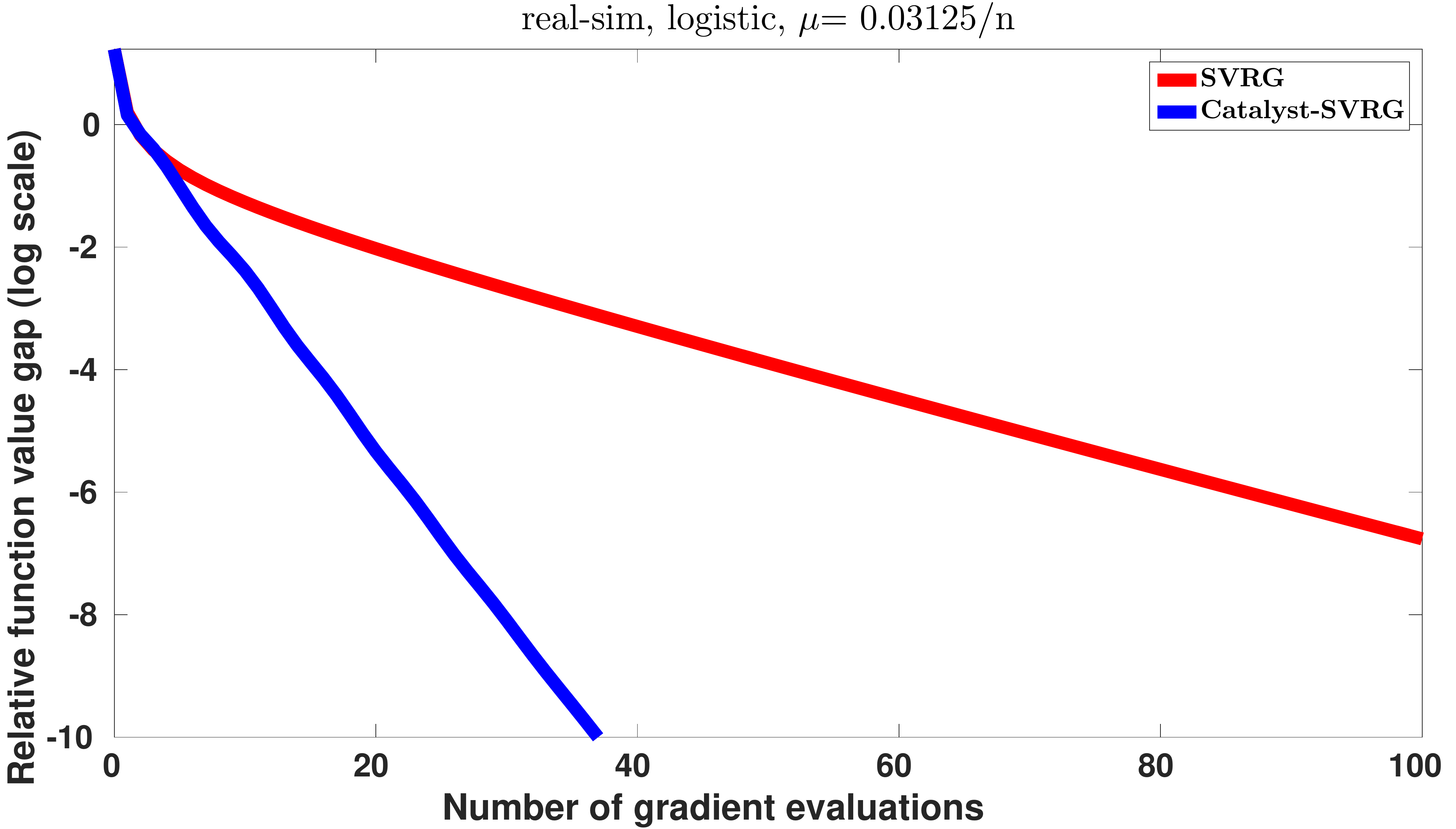}~ 
   ~~\includegraphics[width=0.31\linewidth]{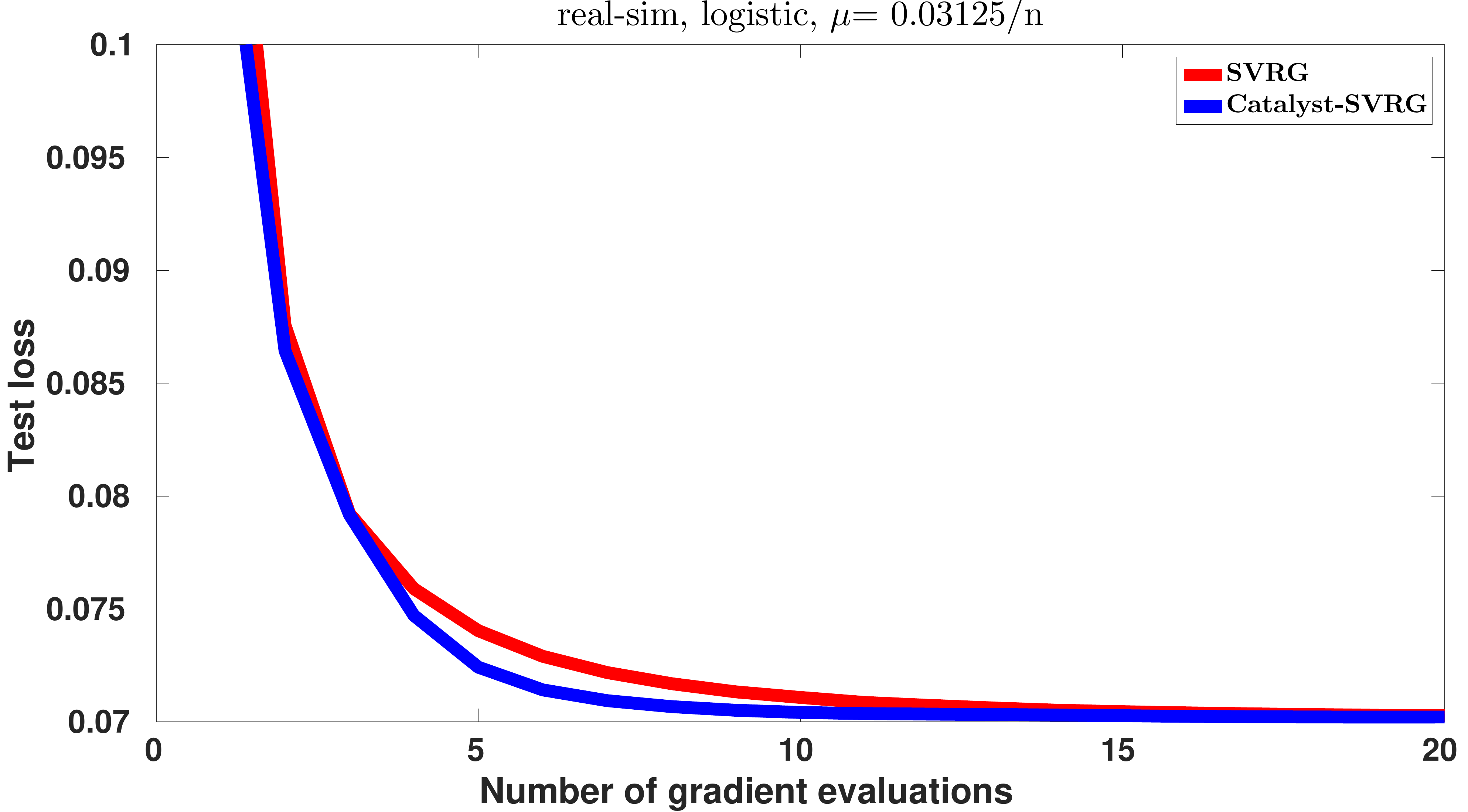}~
   ~~\includegraphics[width=0.31\linewidth]{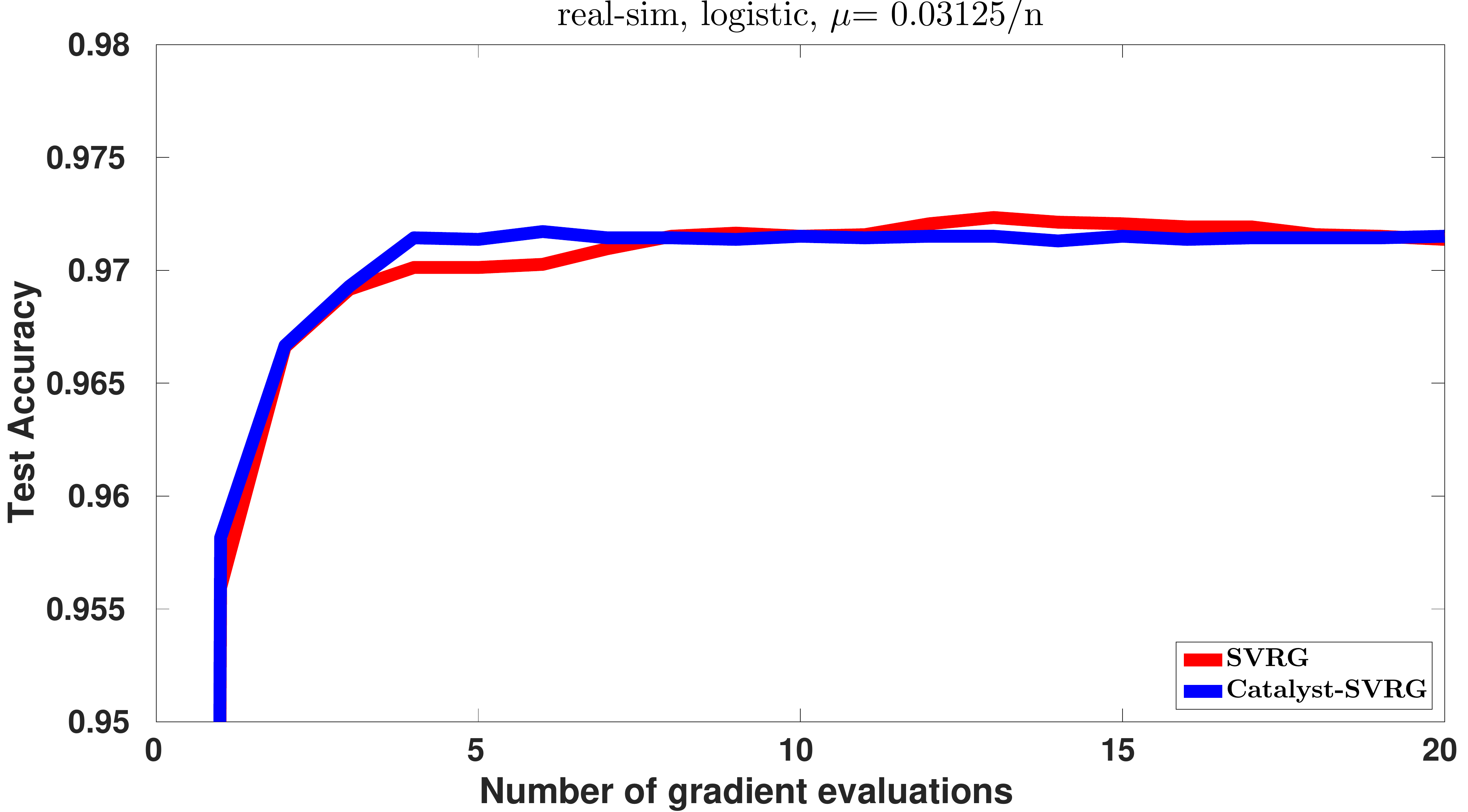}\\    
   ~~\includegraphics[width=0.31\linewidth]{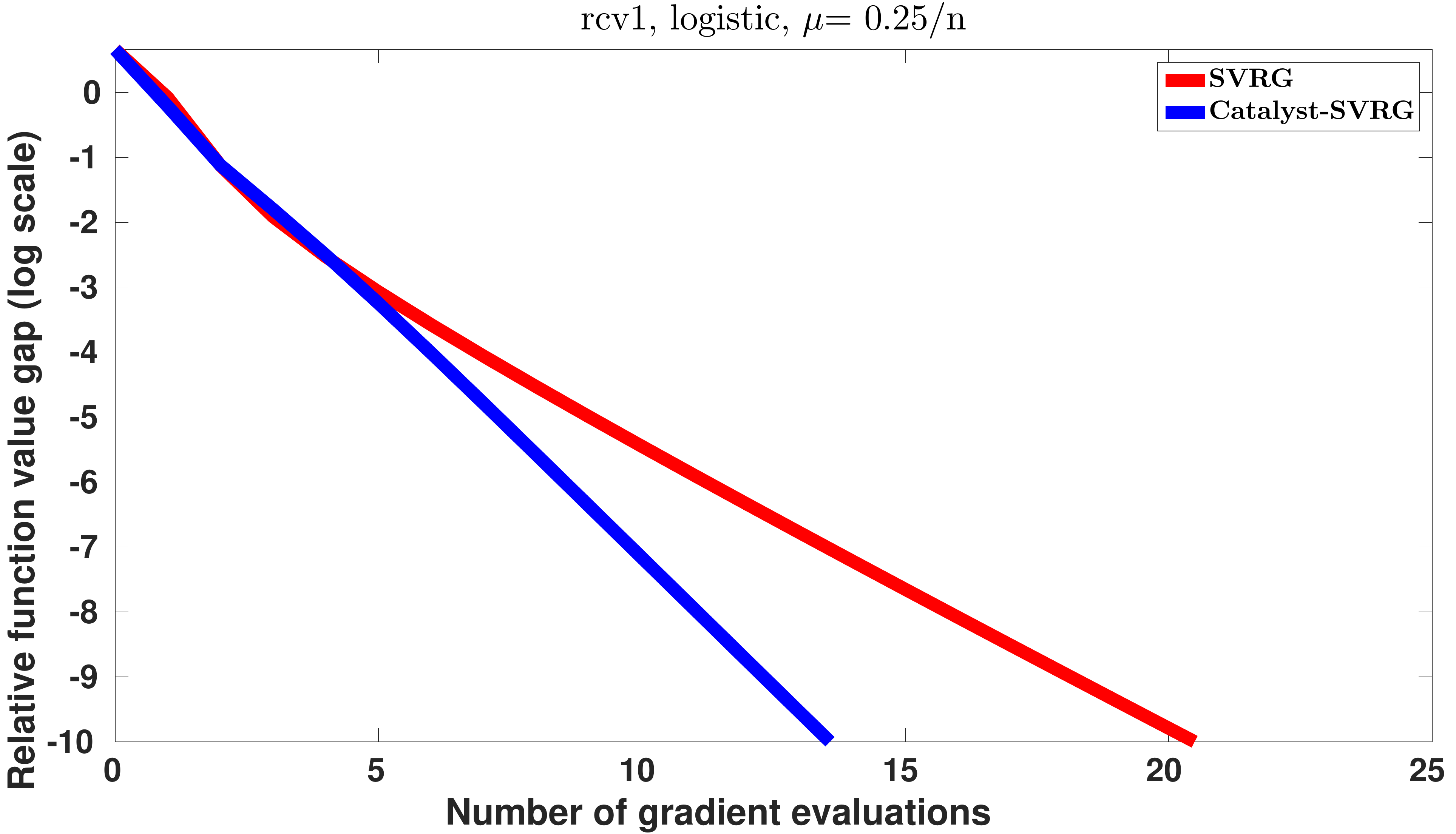}~ 
   ~~\includegraphics[width=0.31\linewidth]{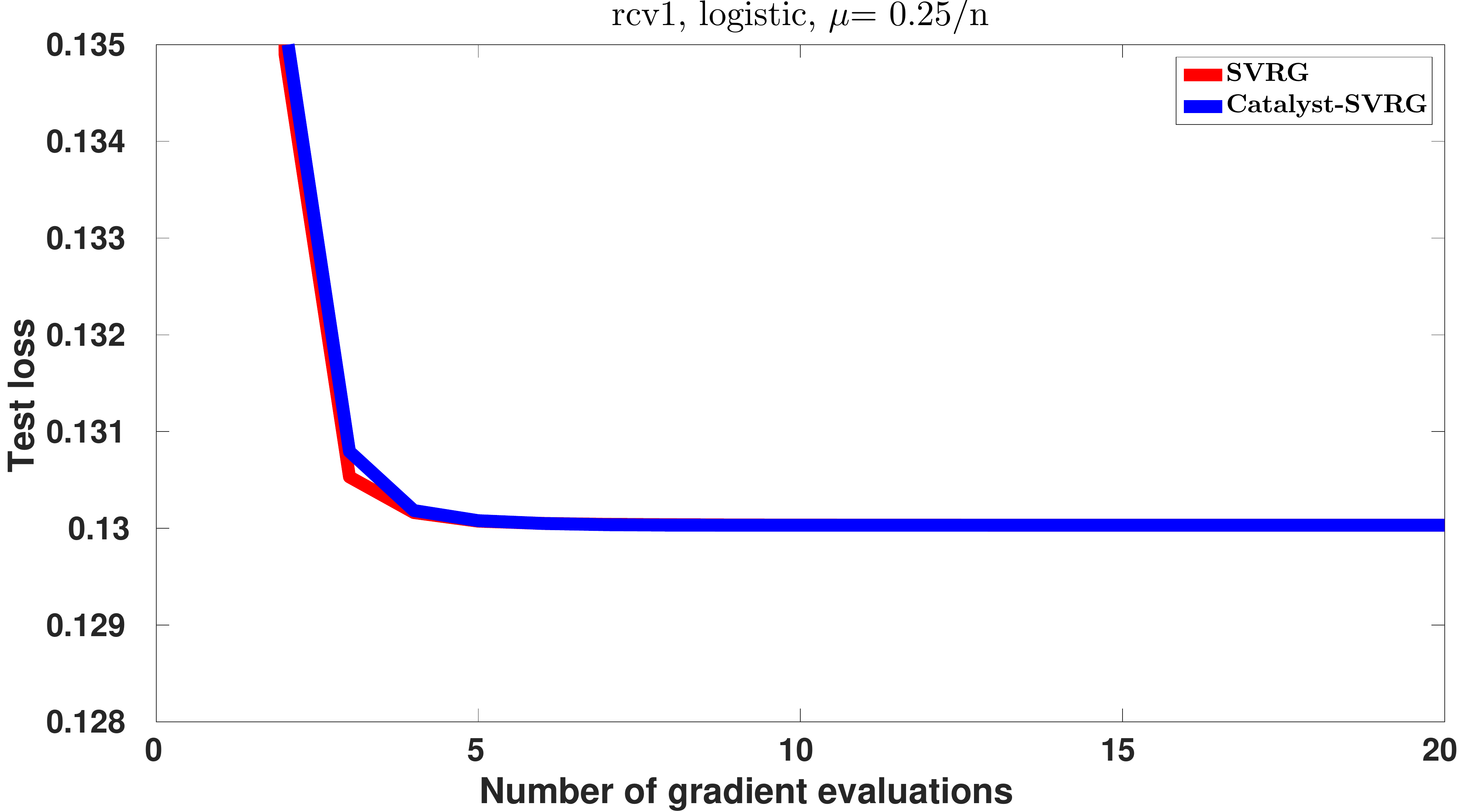}~
   ~~\includegraphics[width=0.31\linewidth]{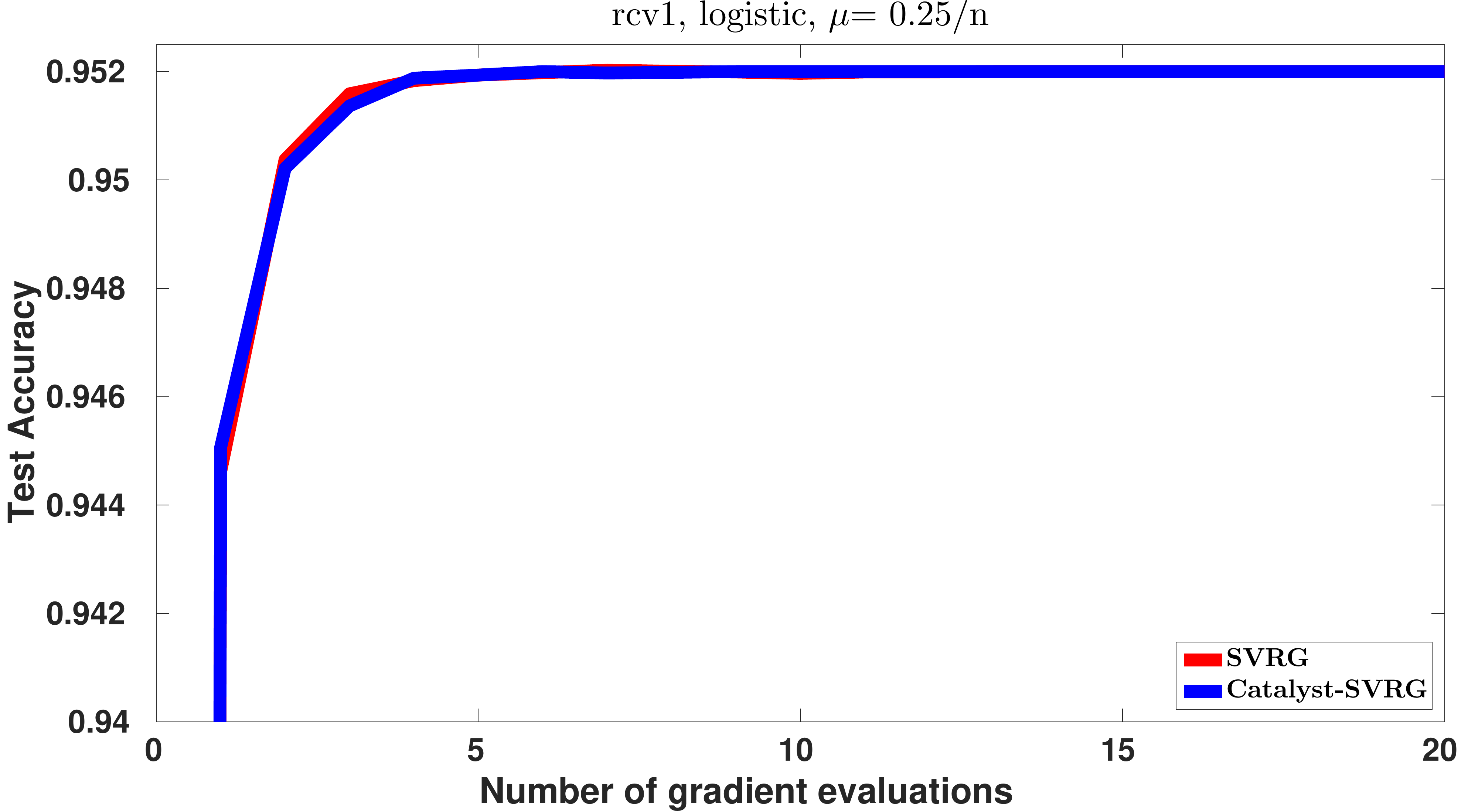}\\
   ~~\includegraphics[width=0.31\linewidth]{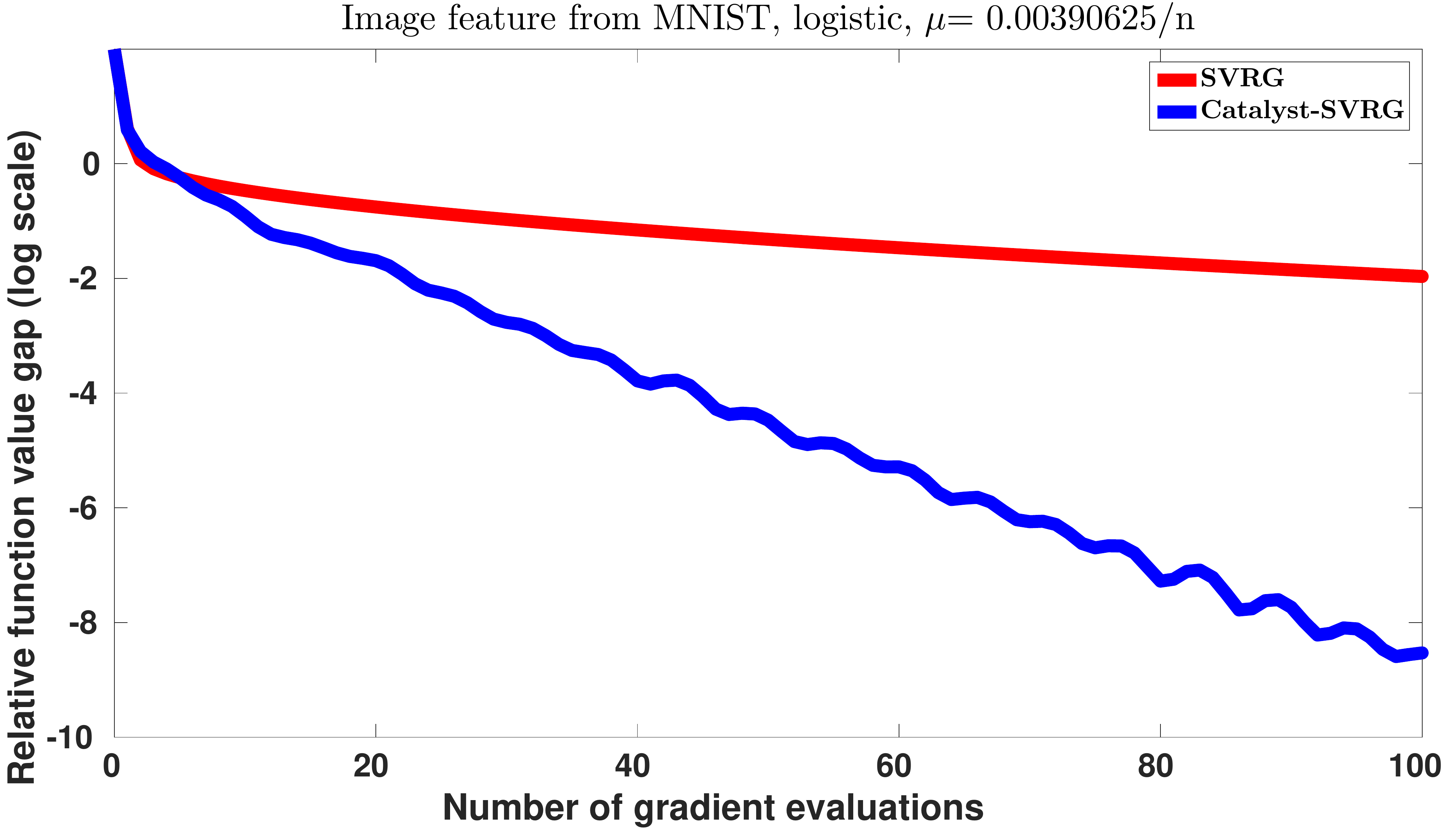}~ 
   ~~\includegraphics[width=0.31\linewidth]{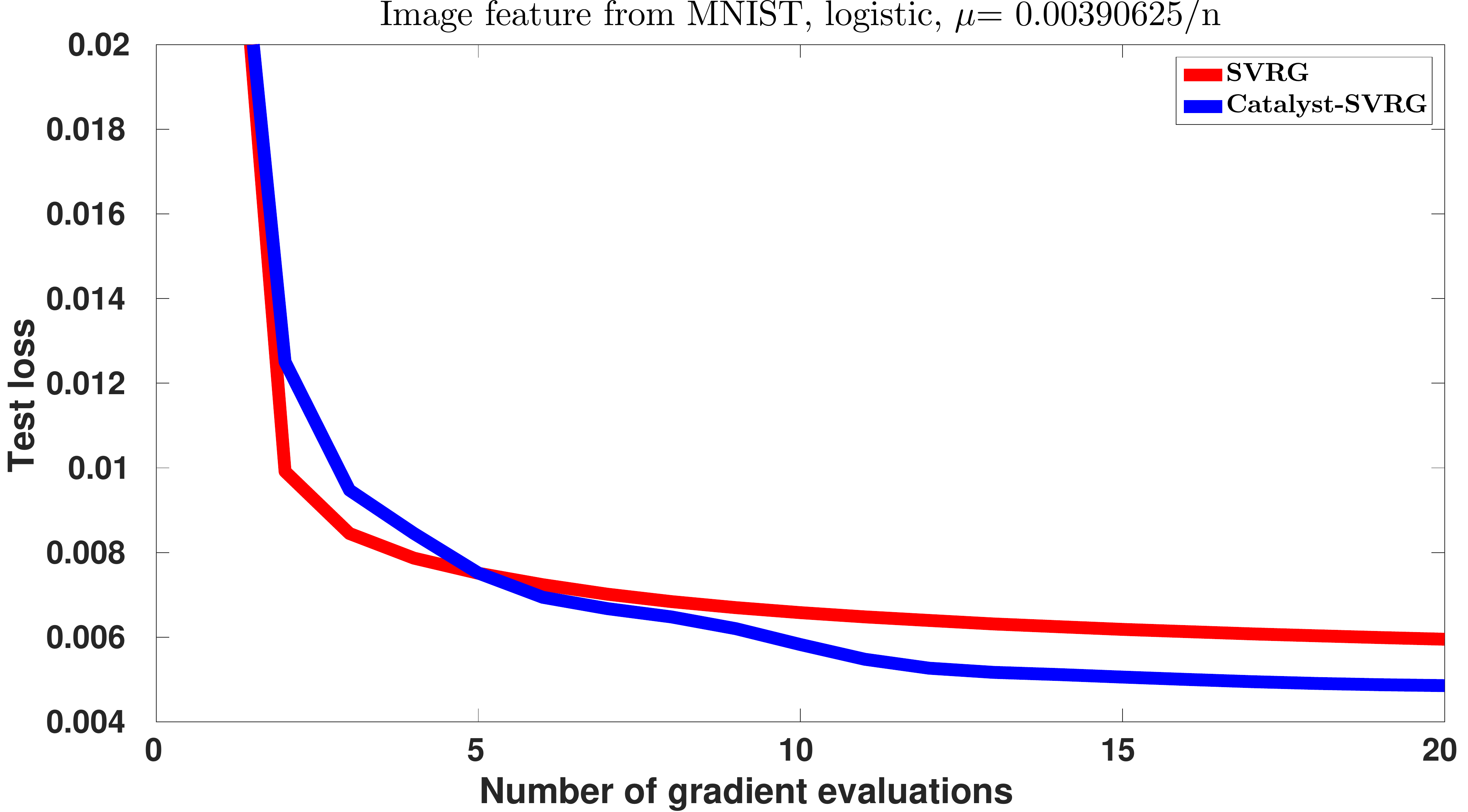}~
   ~~\includegraphics[width=0.31\linewidth]{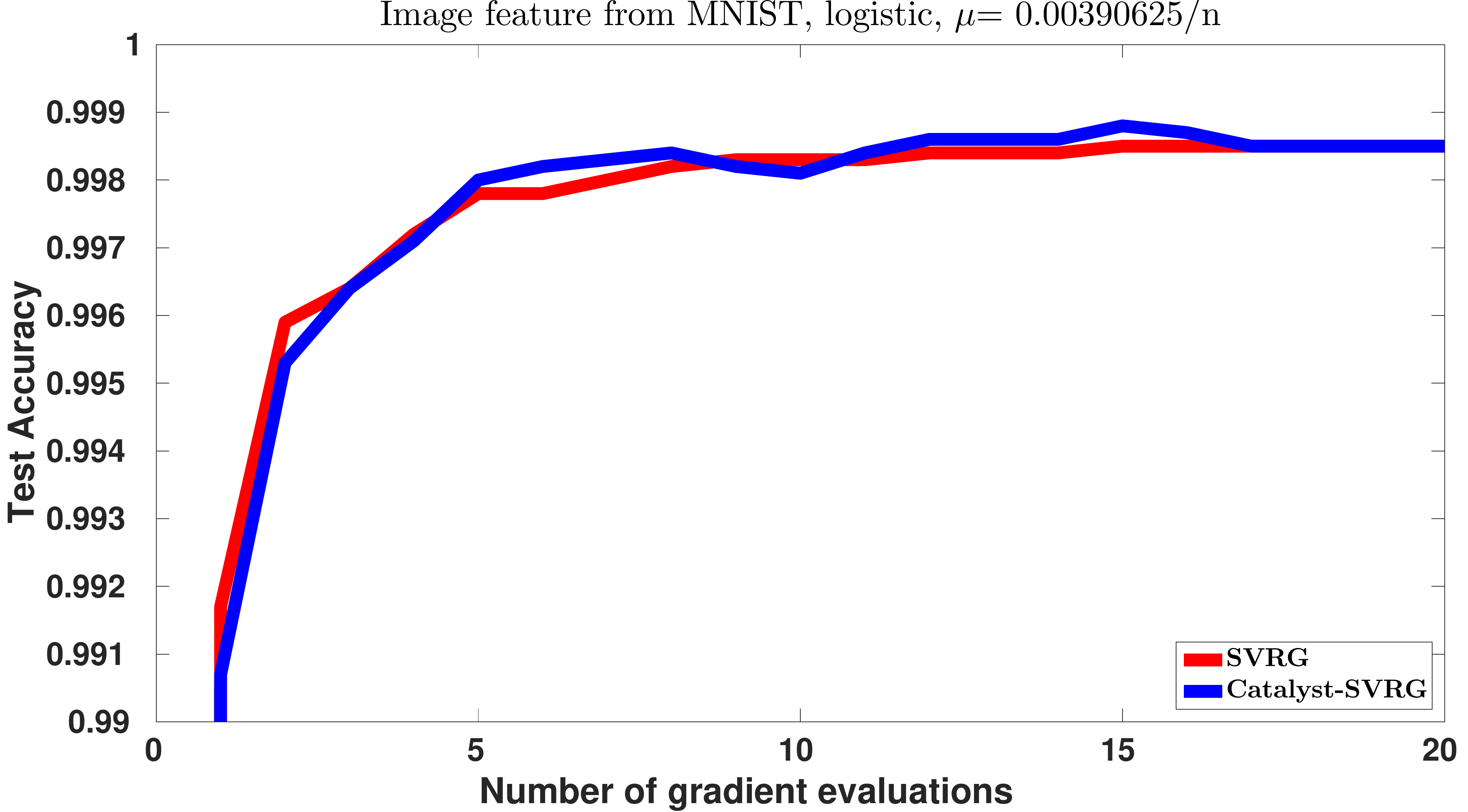}\\
   ~~\includegraphics[width=0.31\linewidth]{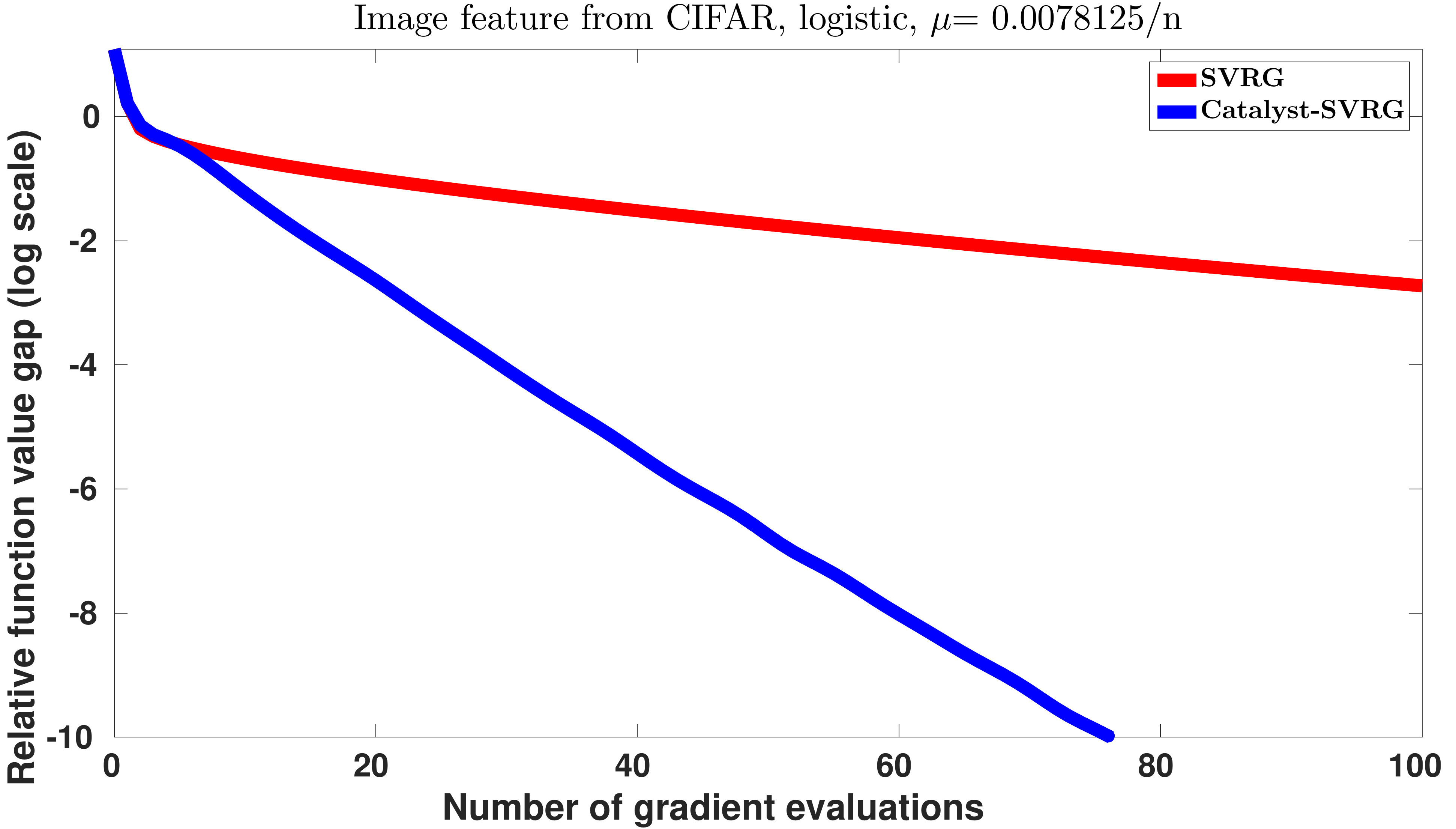}~ 
   ~~\includegraphics[width=0.31\linewidth]{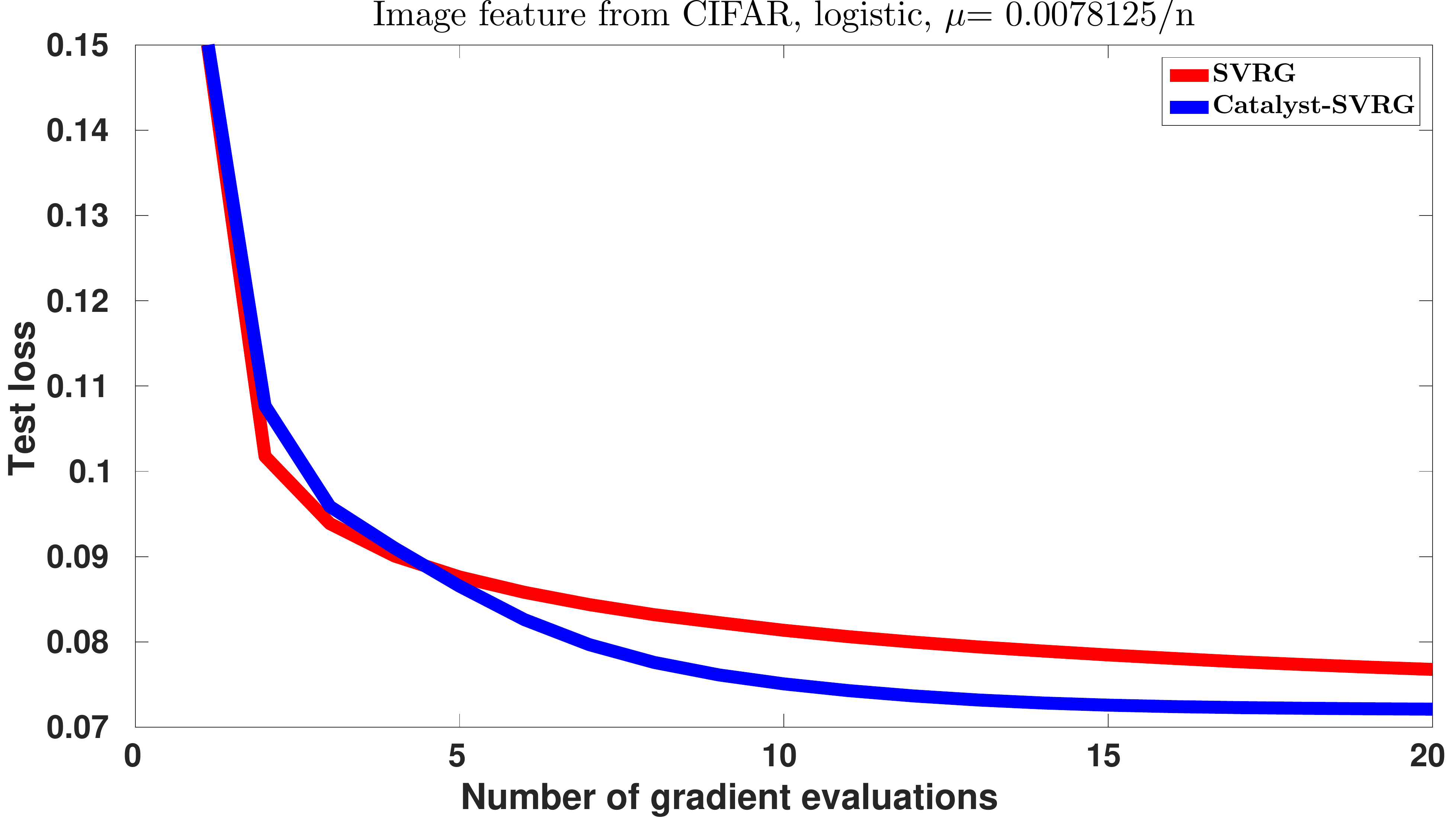}~
   ~~\includegraphics[width=0.31\linewidth]{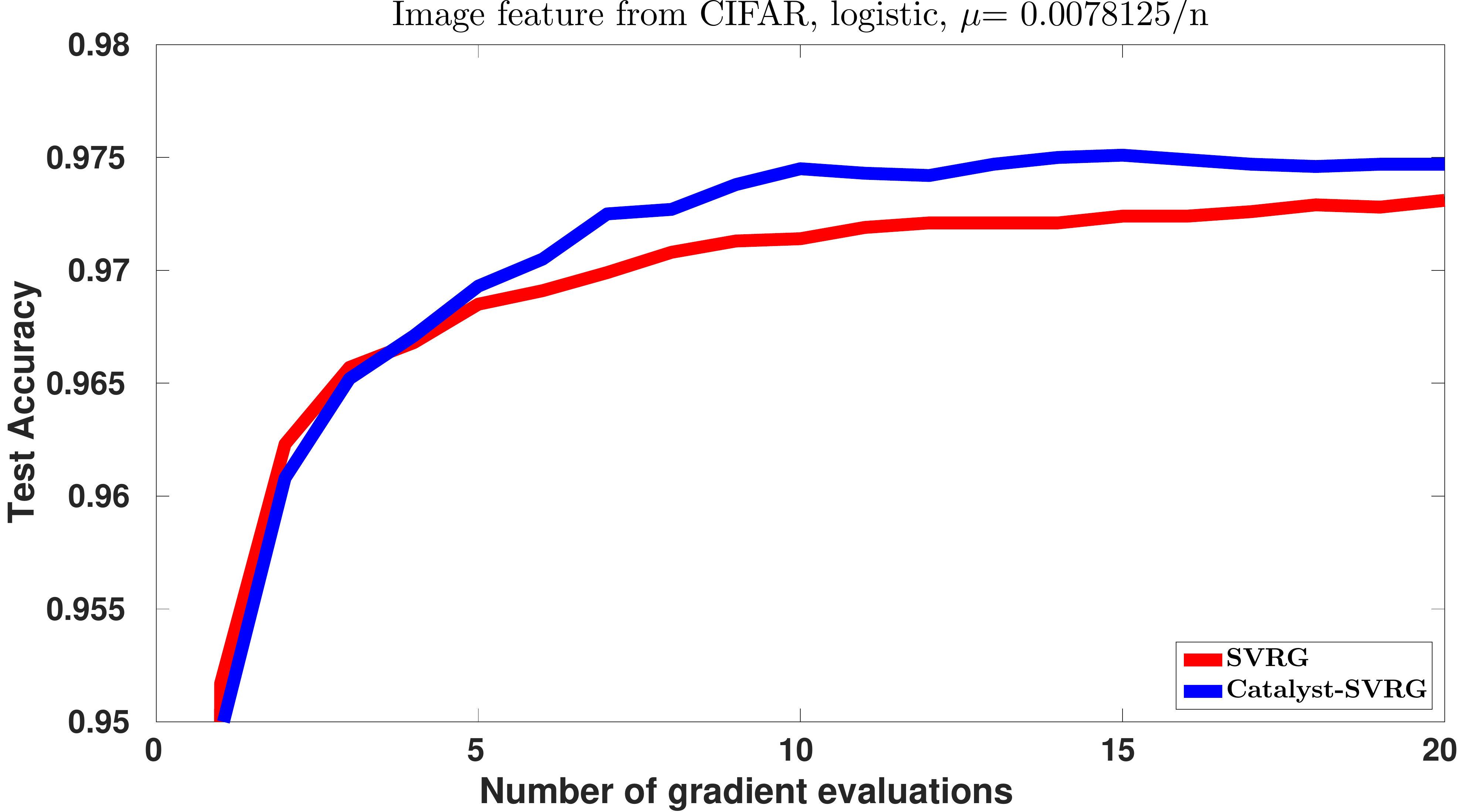}
   \caption{Empirical effect on the generalization error. For a logistic regression experiment, we report the value of the objective function evaluated on the training data on the left column, the value of the loss evaluated on a test set on the middle column, and the classification error evaluated on the test set on the right.}\label{fig:test}
\end{figure}

\paragraph{Observations on the test loss and test accuracy.} The left column of
Figure \ref{fig:test} shows the loss function on the training set, where
acceleration is significant in 5 cases out of 6. The middle column shows the loss
function evaluated on the test set, but on a non-logarithmic scale since the
optimal value of the test loss is unknown.  Acceleration appears in 4 cases out
of 6. Regarding the test accuracy, an important acceleration is obtained in 2 cases,
whereas it is less significant or negligible in the other cases.

\subsection{Study of the Parameter~$\kappa$}\label{subsec:comparison_kappa}
Finally, we evaluate the performance for different values of $\kappa$.
\begin{figure}[hbtp]
   \centering
   ~~\includegraphics[width=0.31\linewidth]{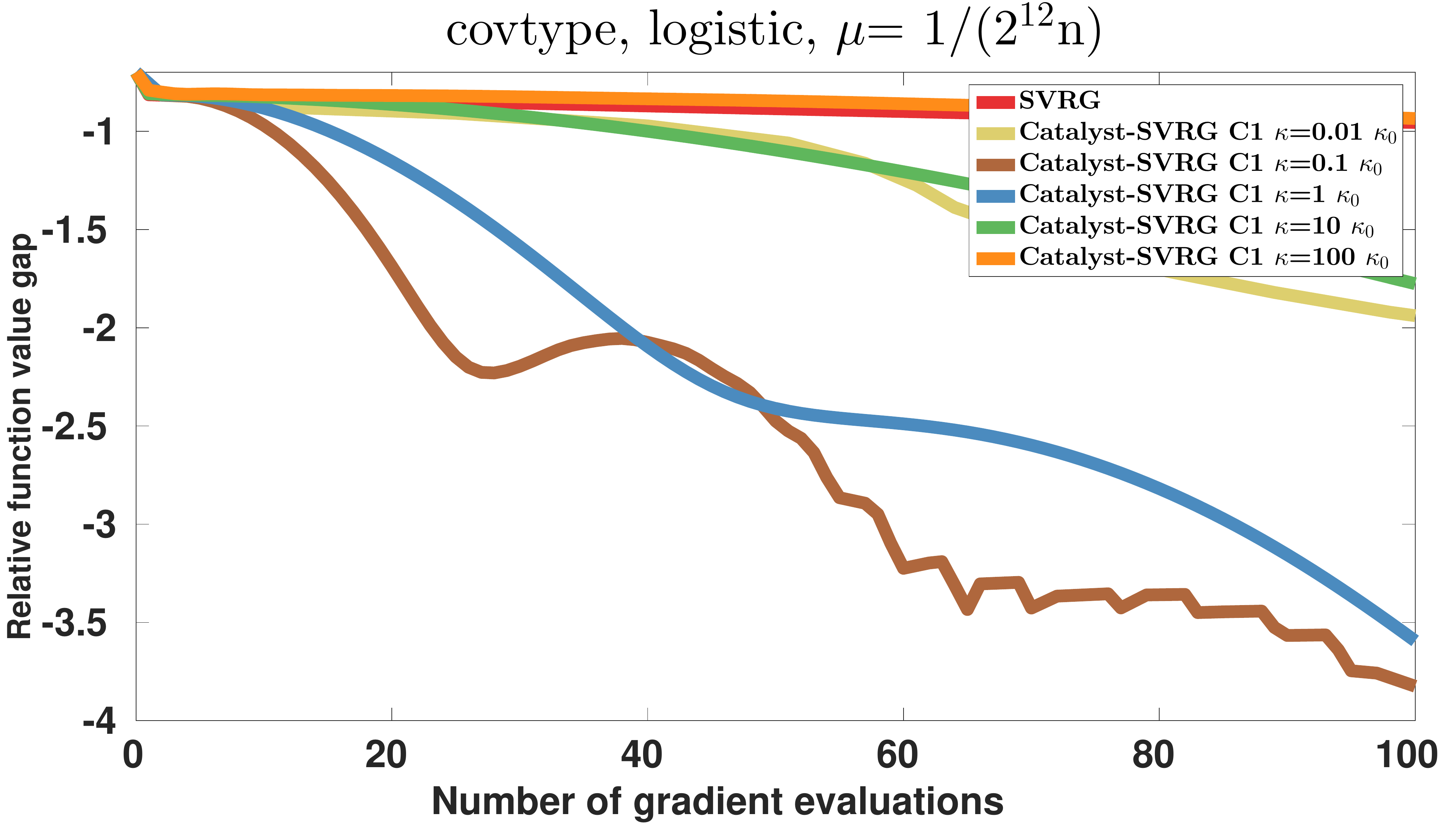}~ 
   ~~\includegraphics[width=0.31\linewidth]{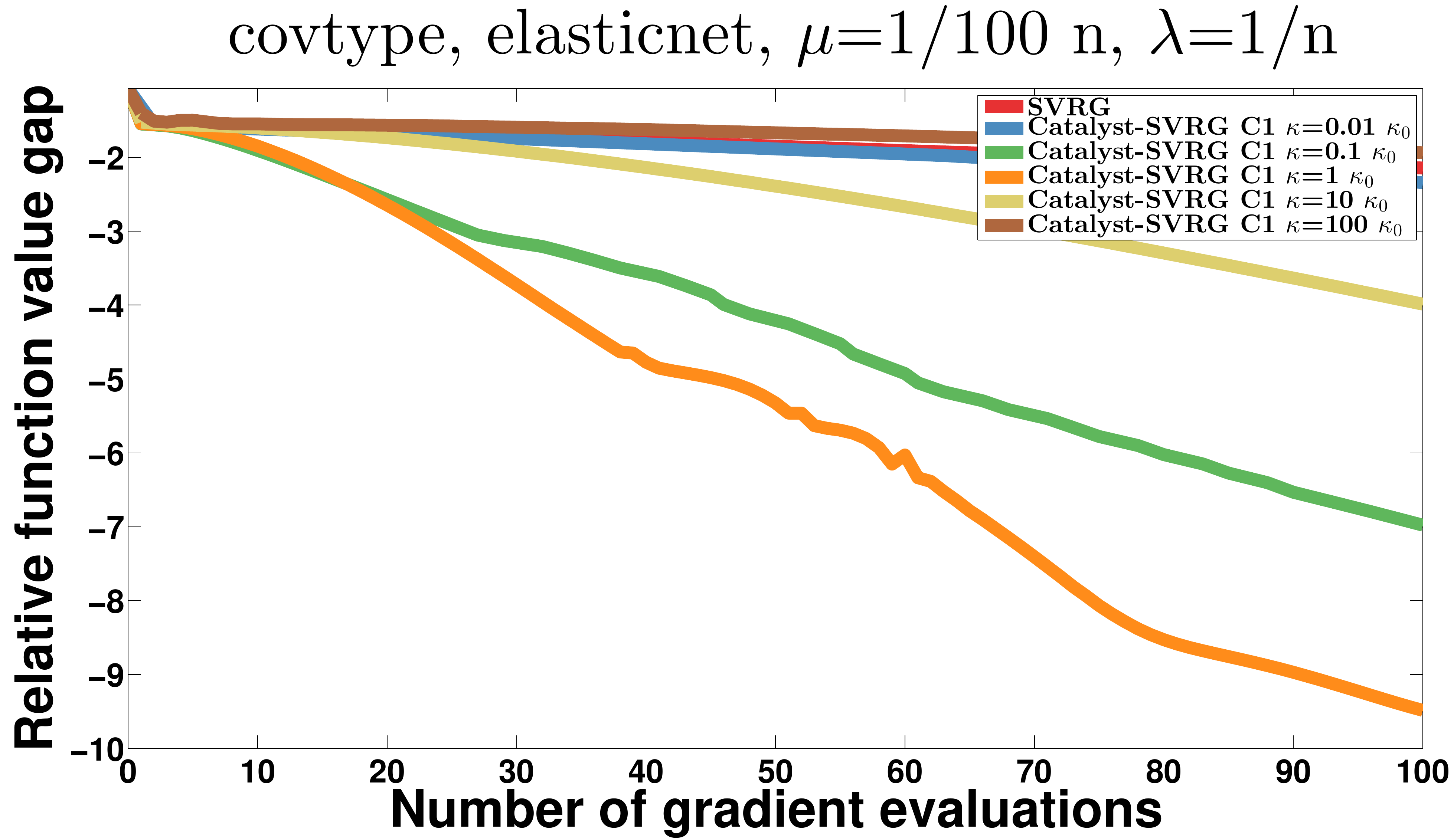}~ 
   ~~\includegraphics[width=0.31\linewidth]{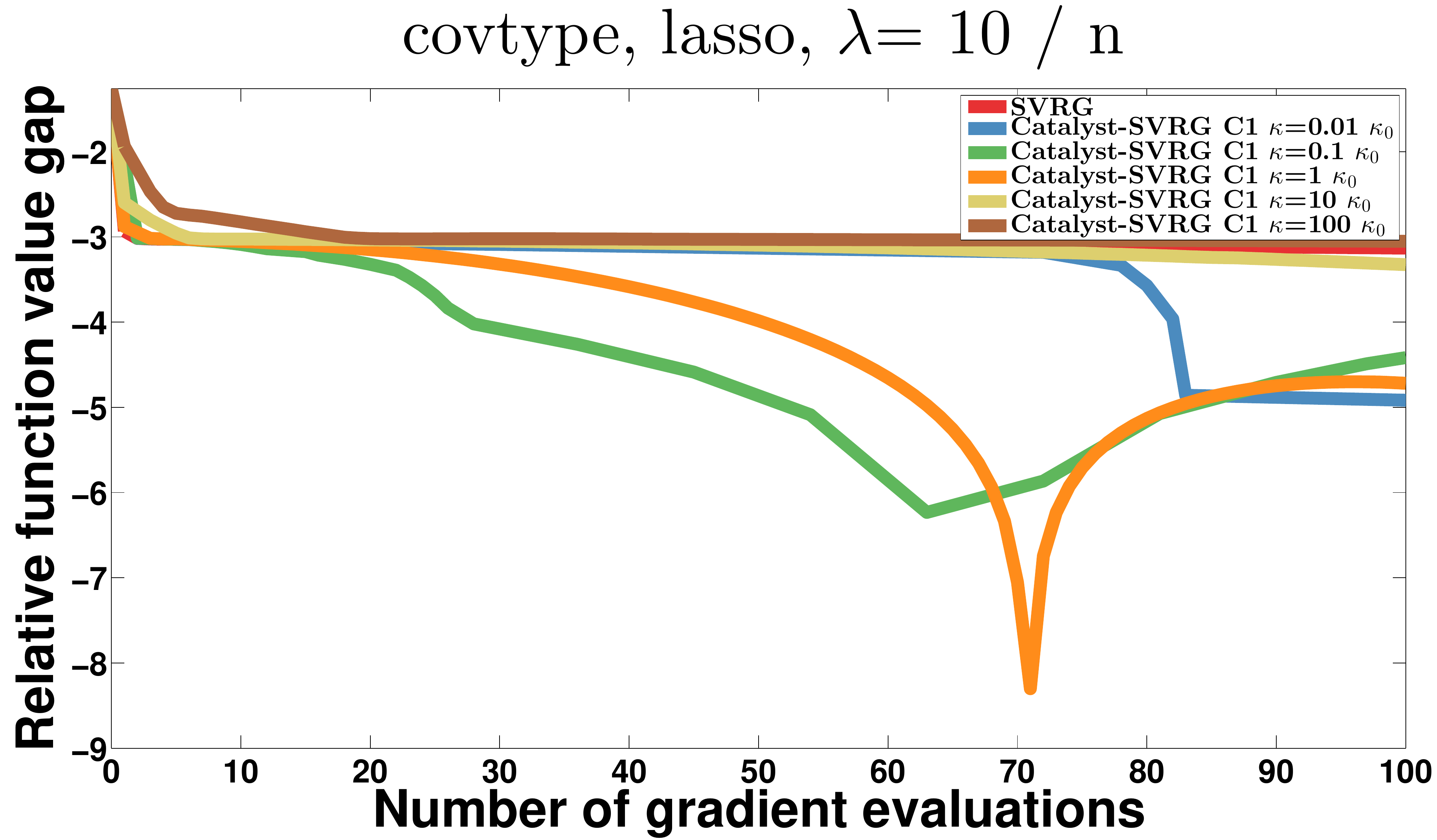}\\
    ~~\includegraphics[width=0.31\linewidth]{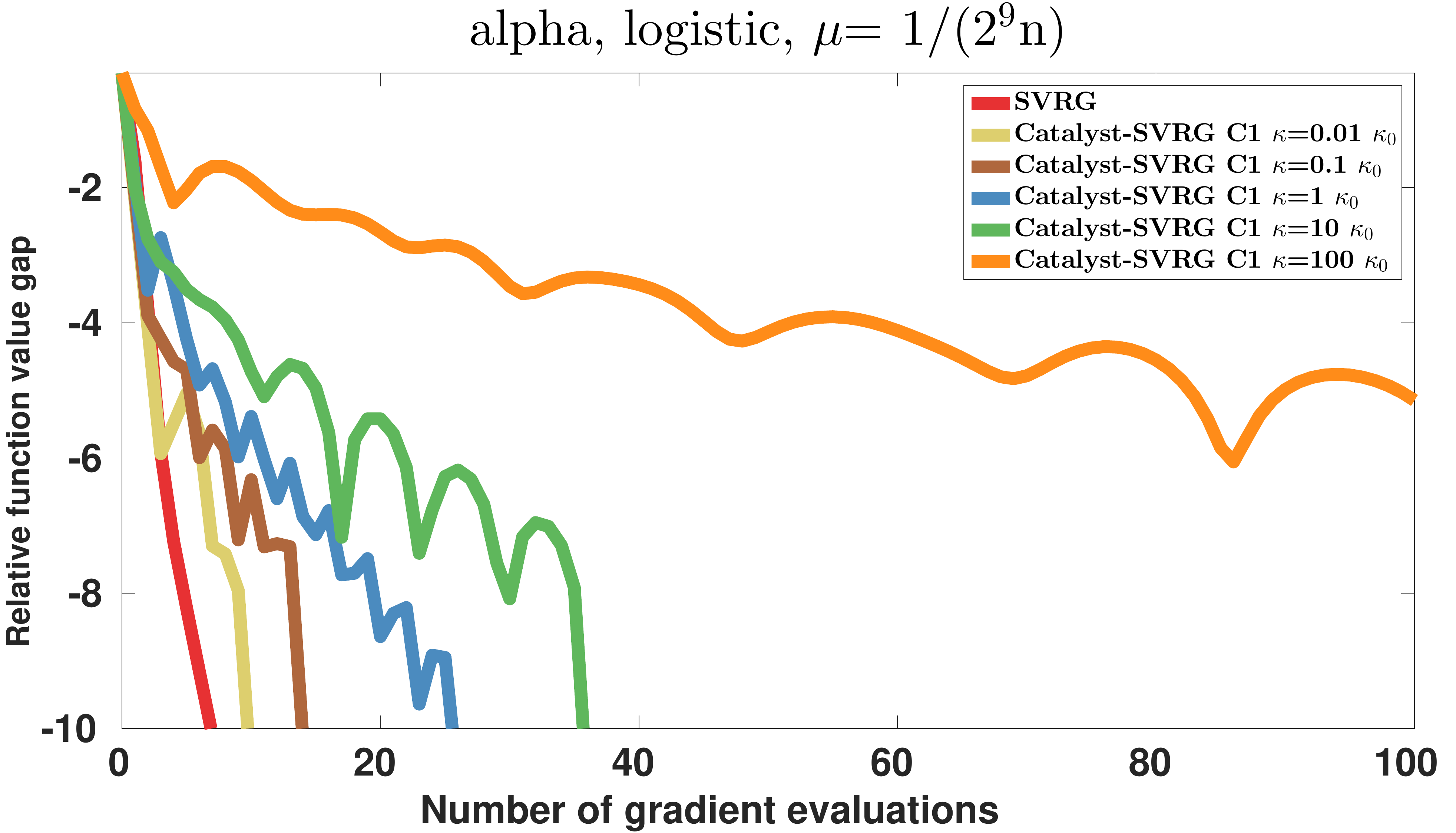}~ 
   ~~\includegraphics[width=0.31\linewidth]{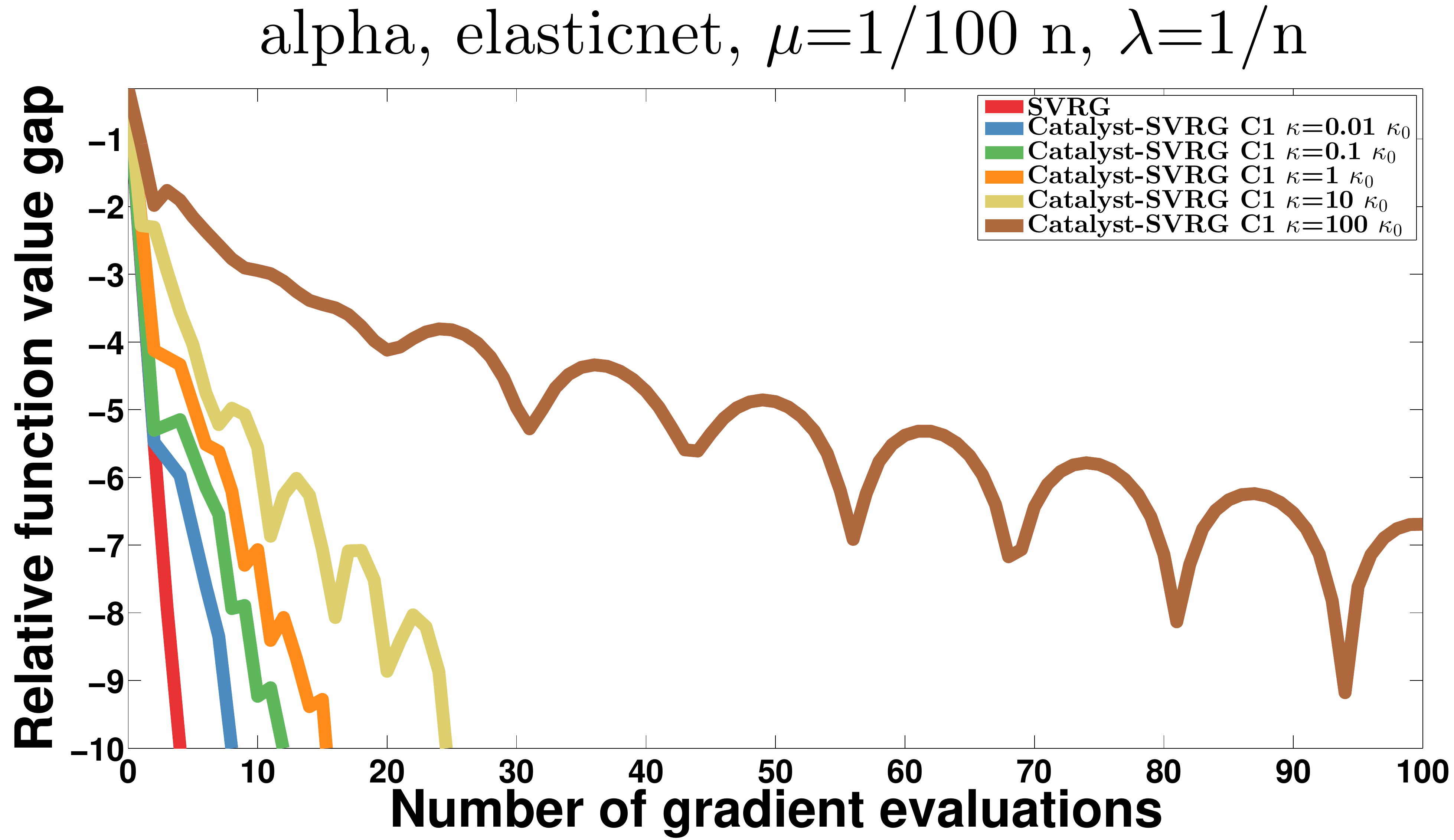}~ 
   ~~\includegraphics[width=0.31\linewidth]{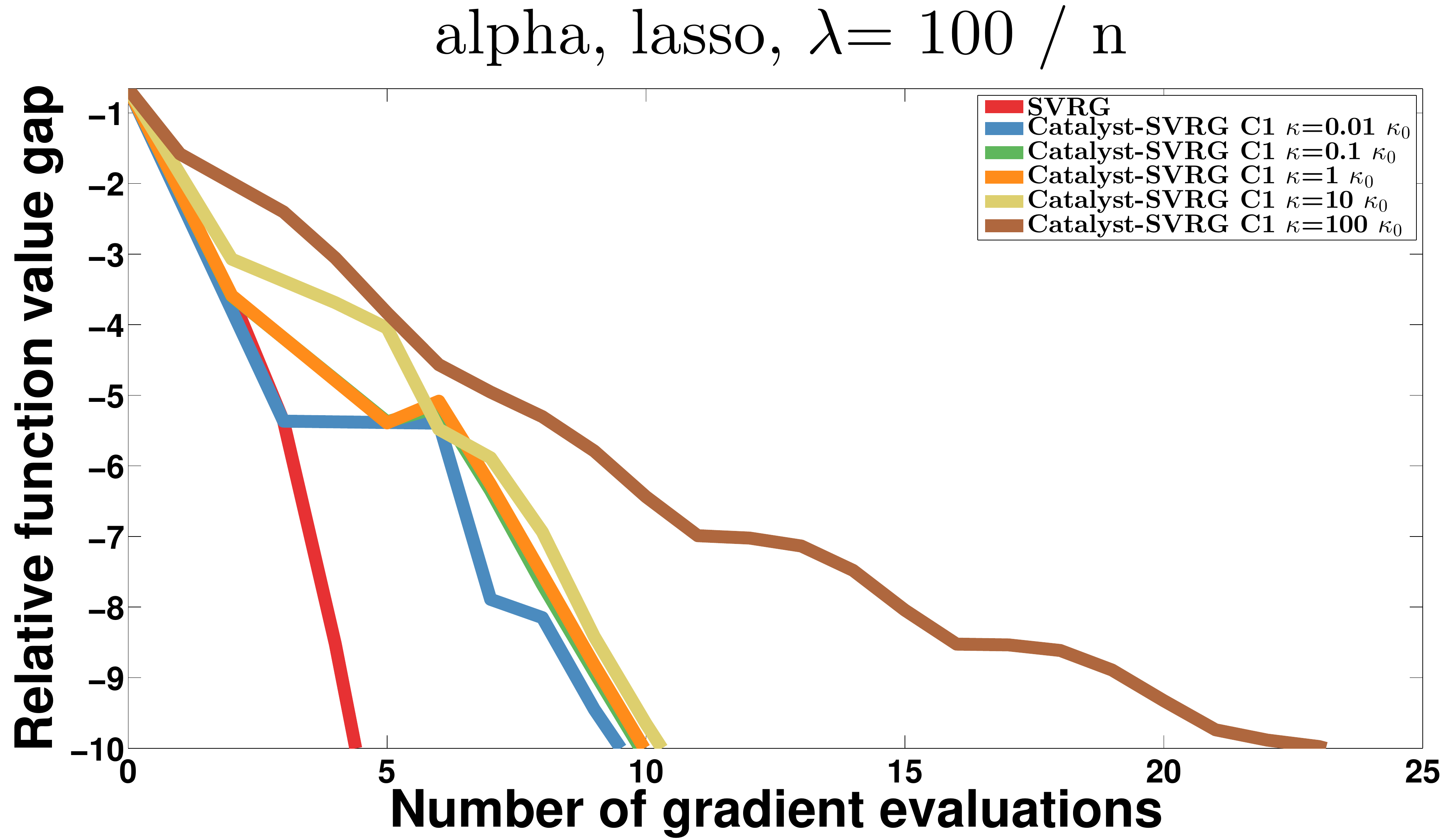}\\   
   ~~\includegraphics[width=0.31\linewidth]{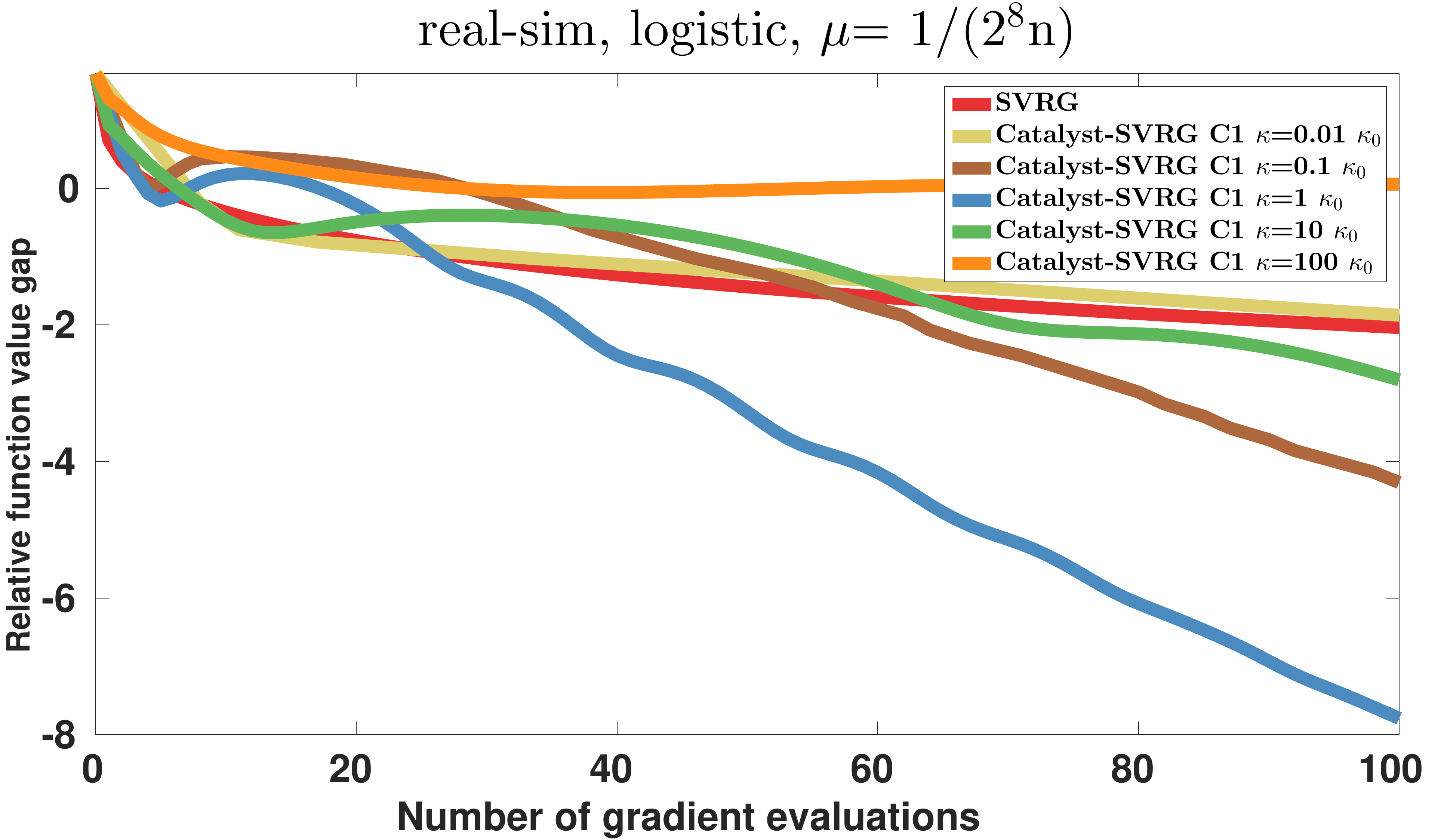}~ 
   ~~\includegraphics[width=0.31\linewidth]{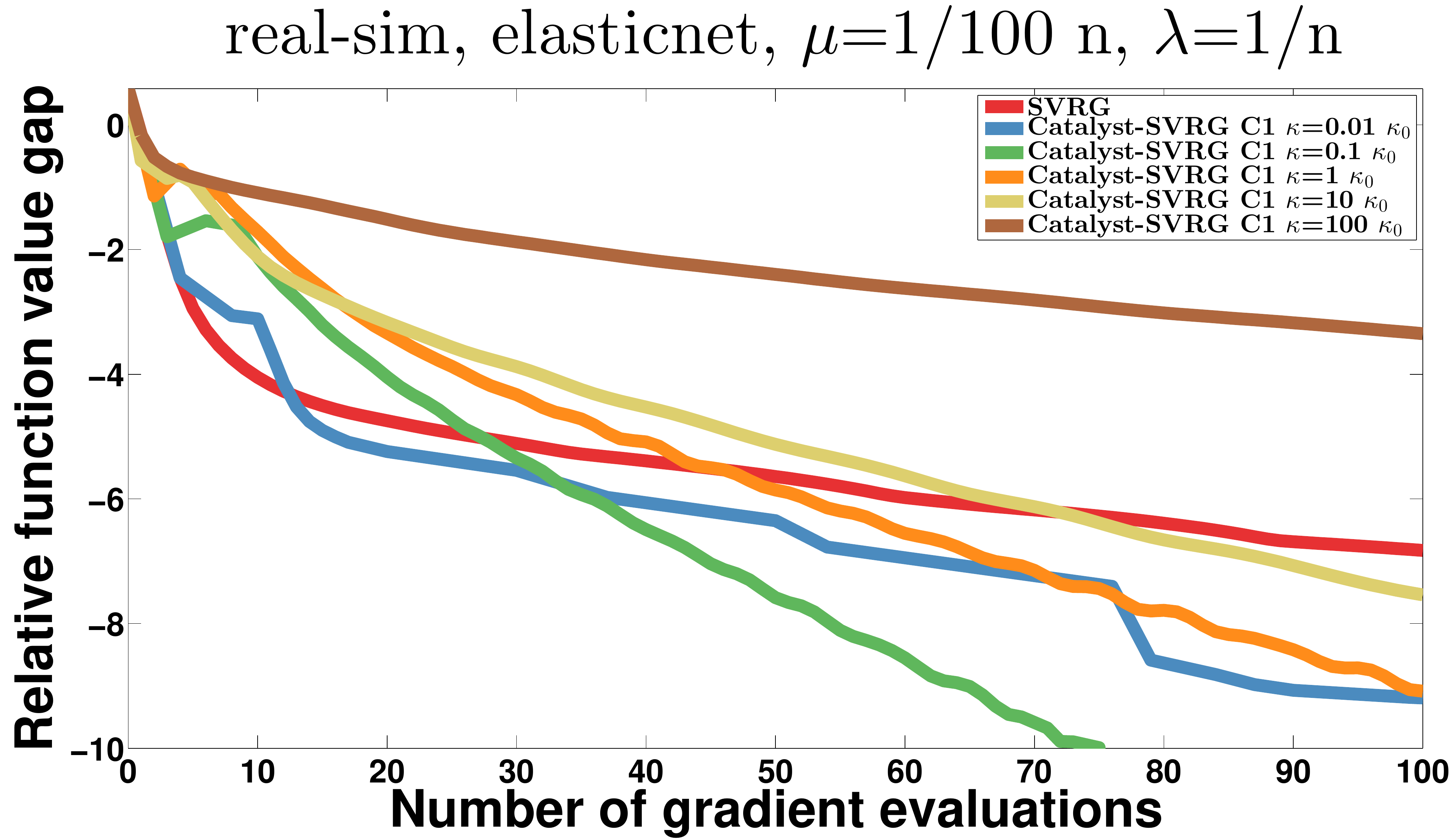}~ 
   ~~\includegraphics[width=0.31\linewidth]{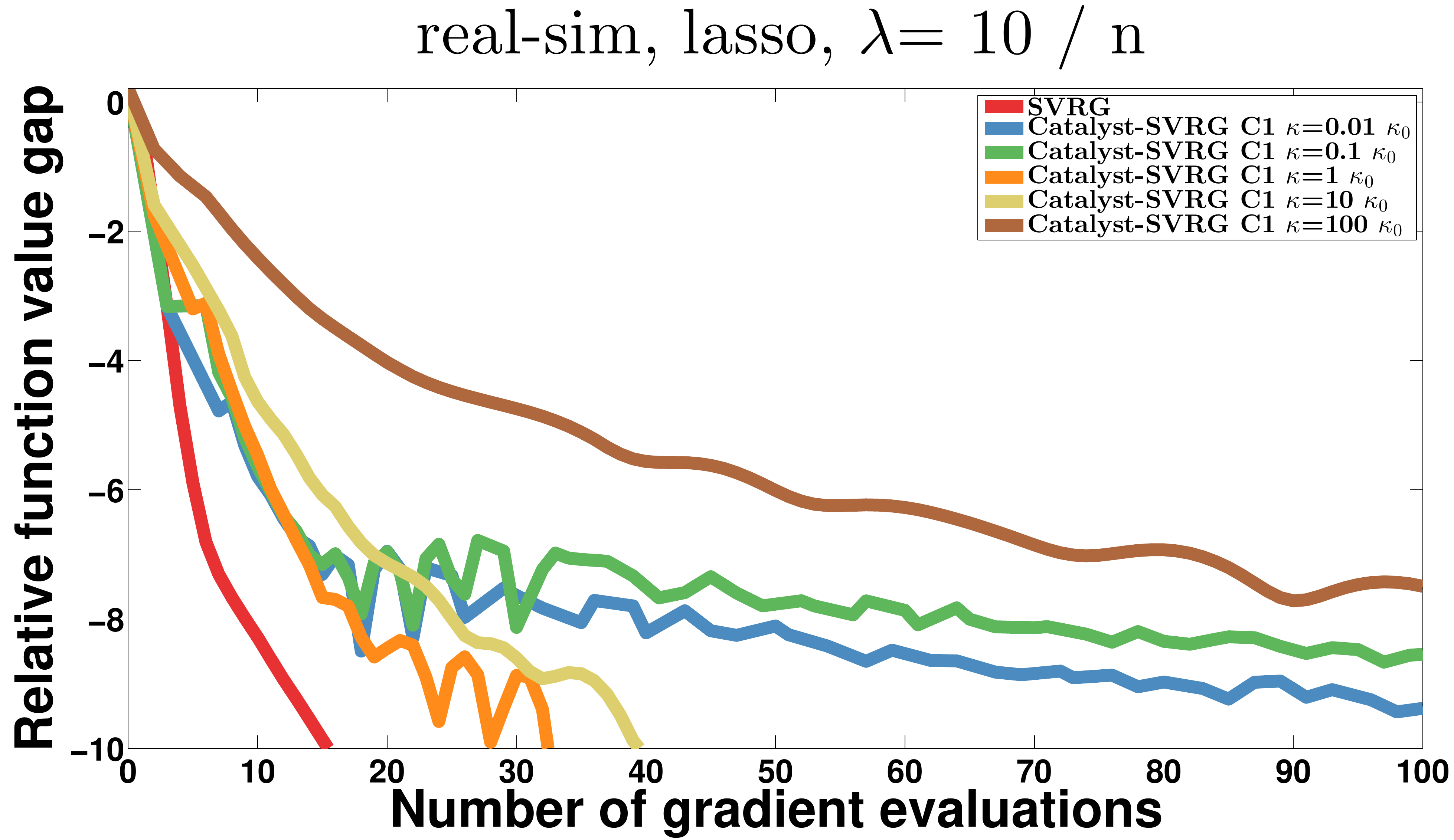}\\     
   ~~\includegraphics[width=0.31\linewidth]{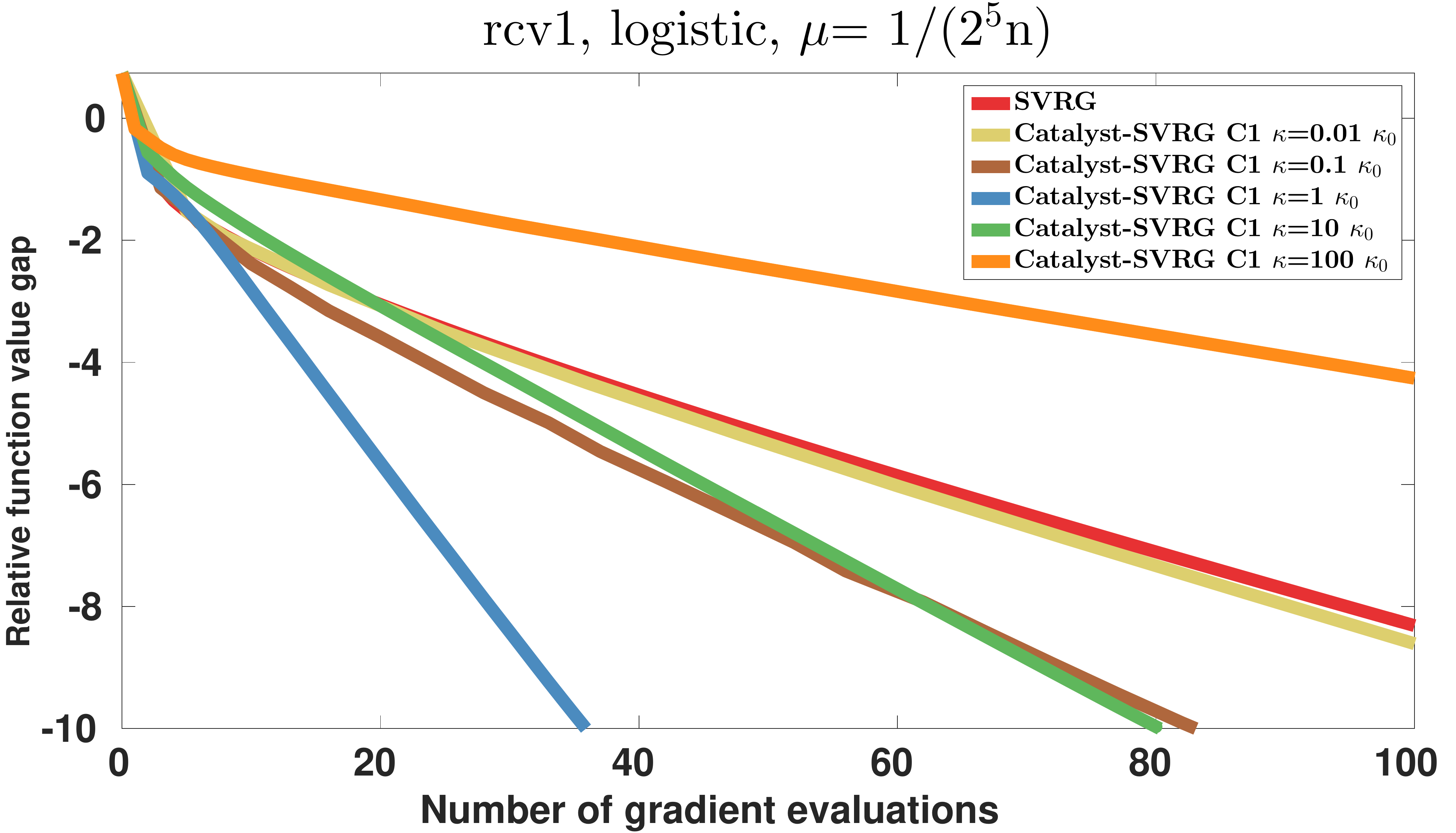}~ 
   ~~\includegraphics[width=0.31\linewidth]{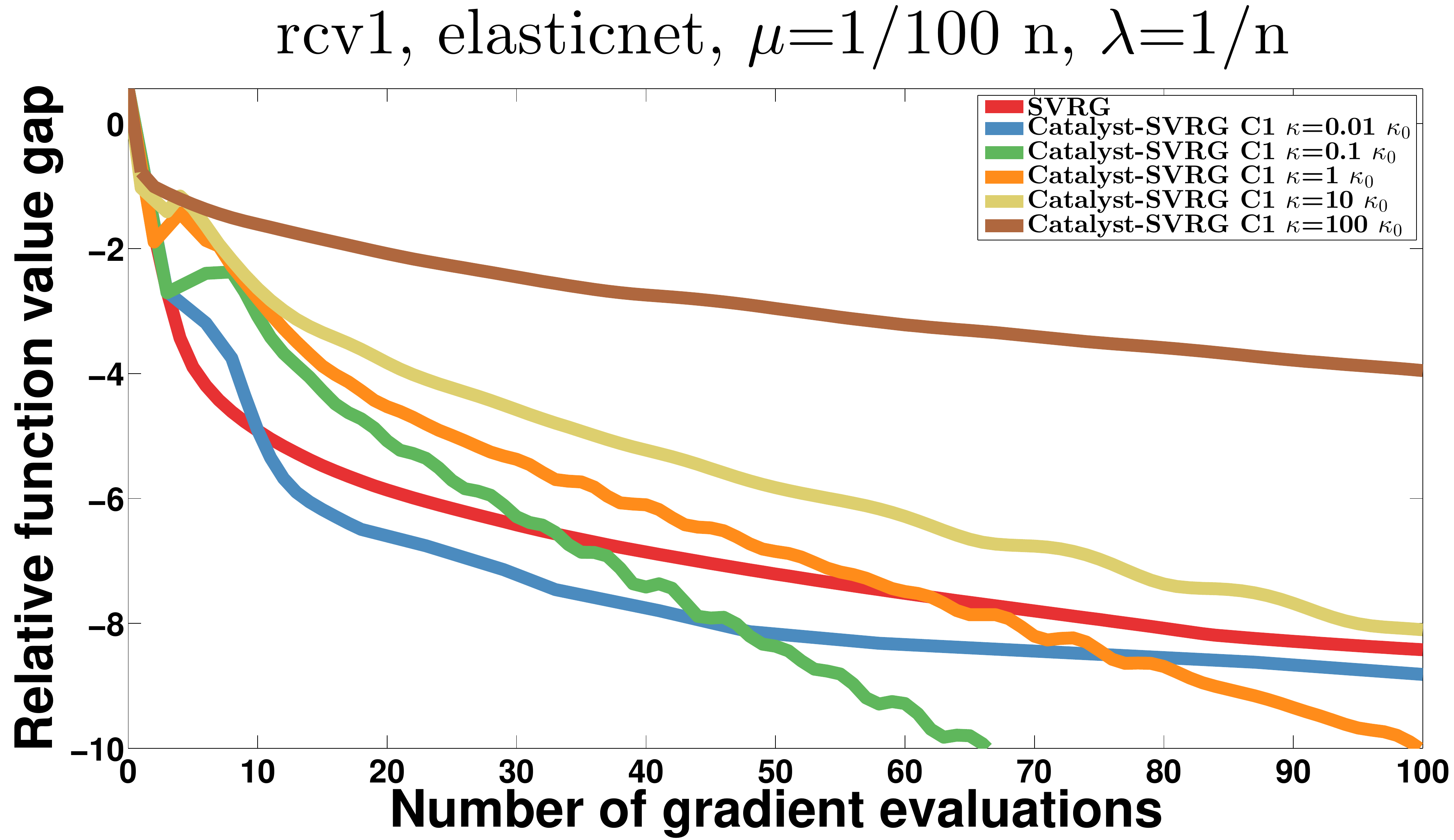}~ 
   ~~\includegraphics[width=0.31\linewidth]{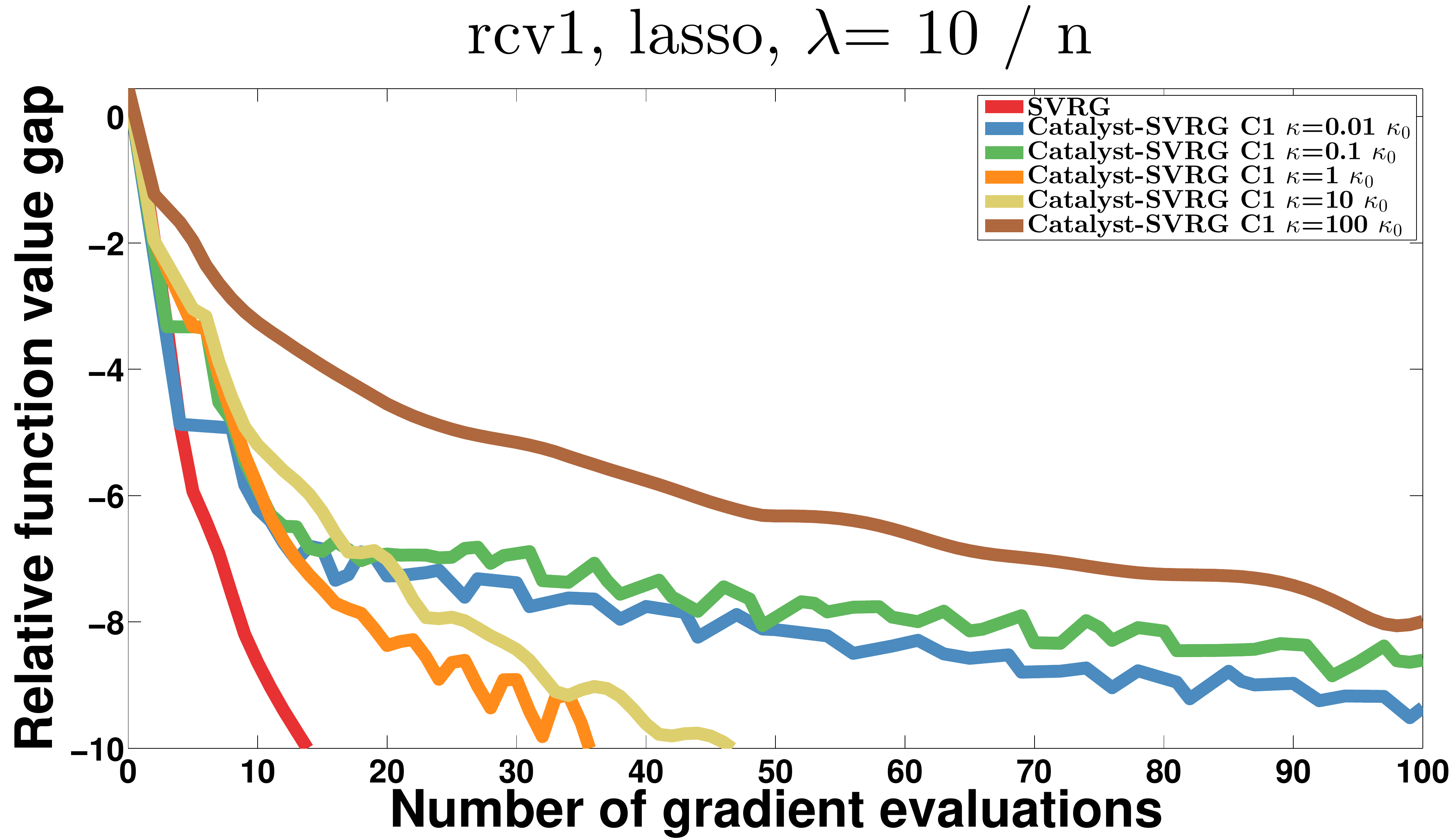}\\
   ~~\includegraphics[width=0.31\linewidth]{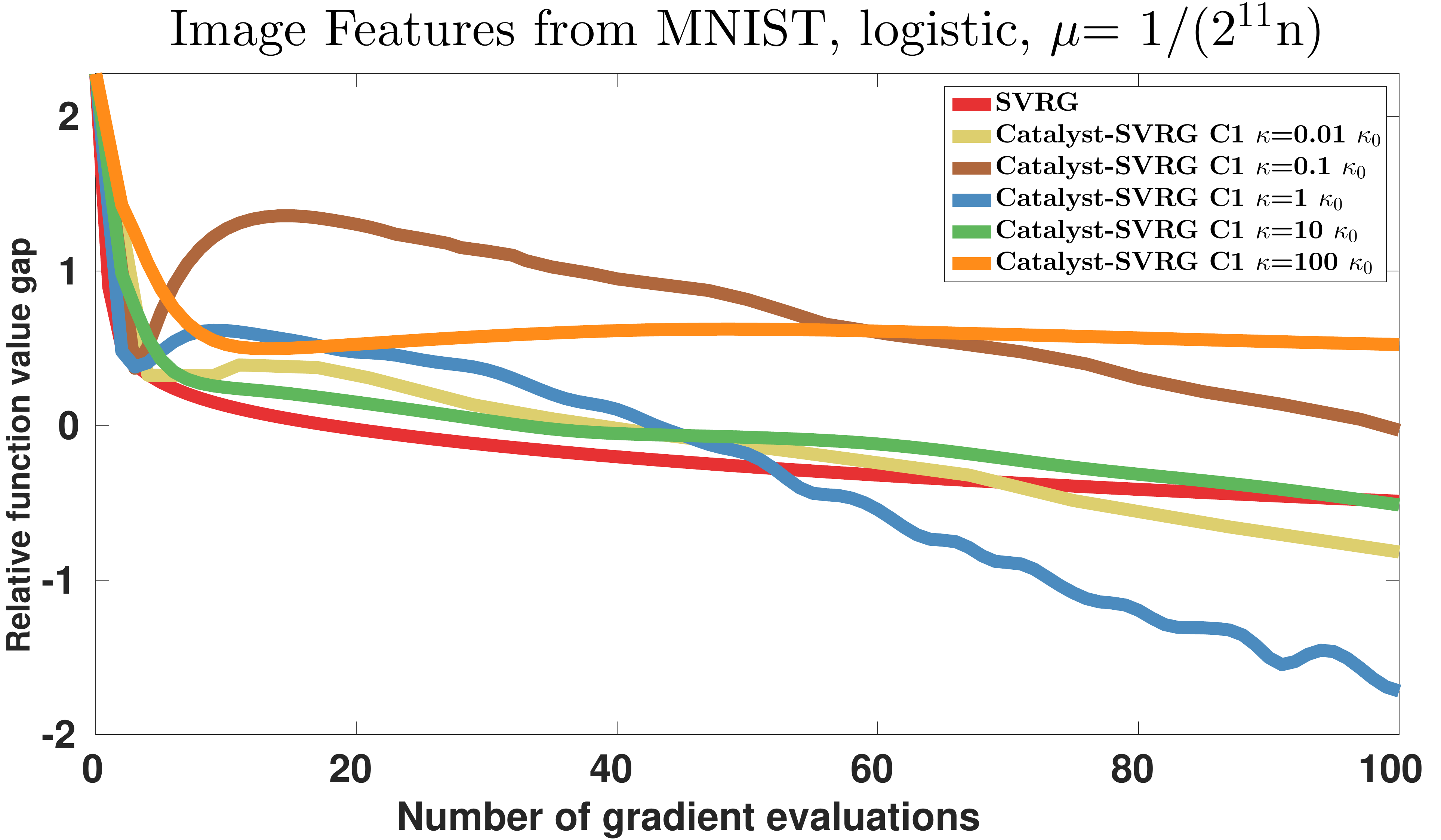}~ 
   ~~\includegraphics[width=0.31\linewidth]{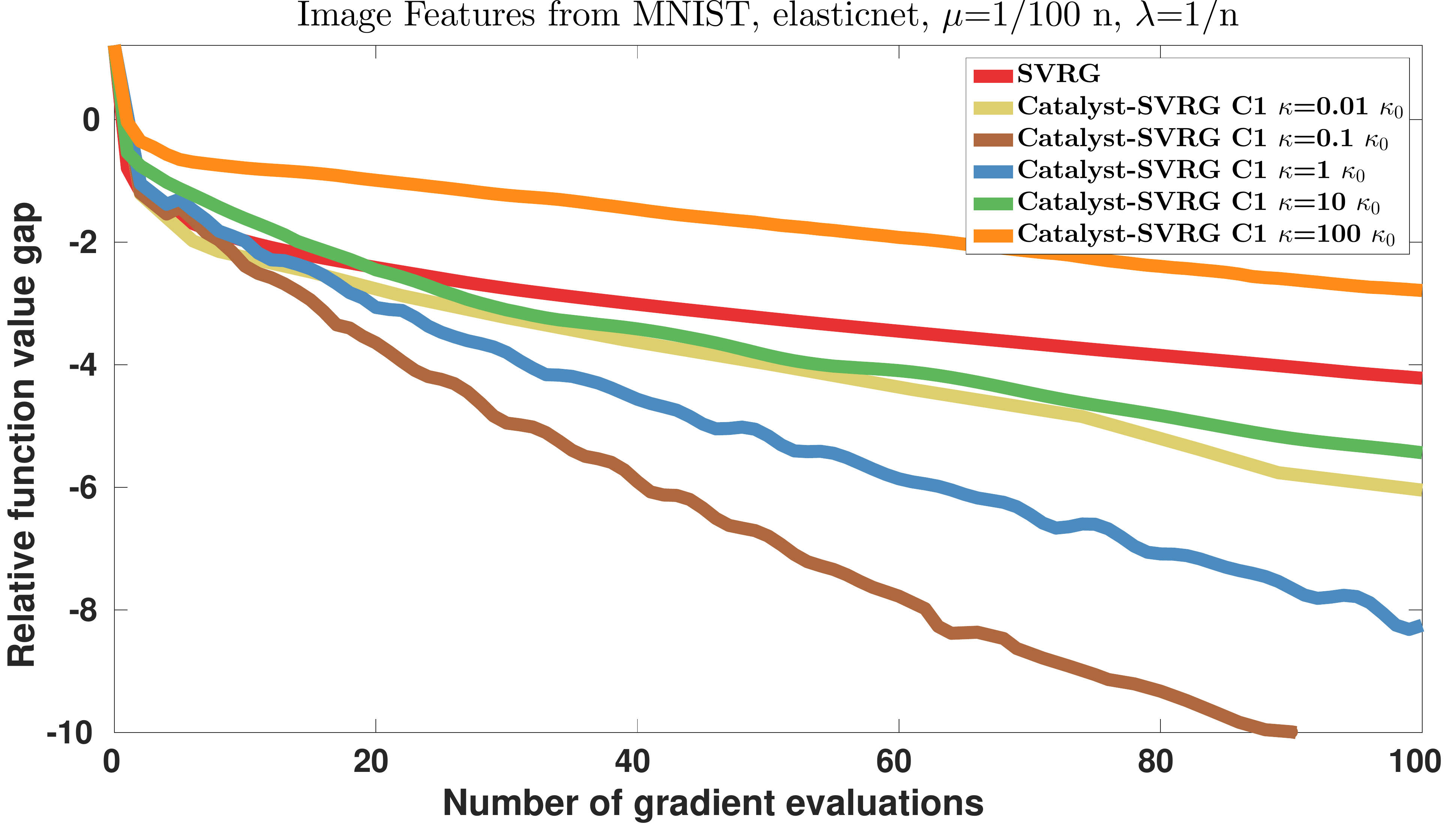}~ 
   ~~\includegraphics[width=0.31\linewidth]{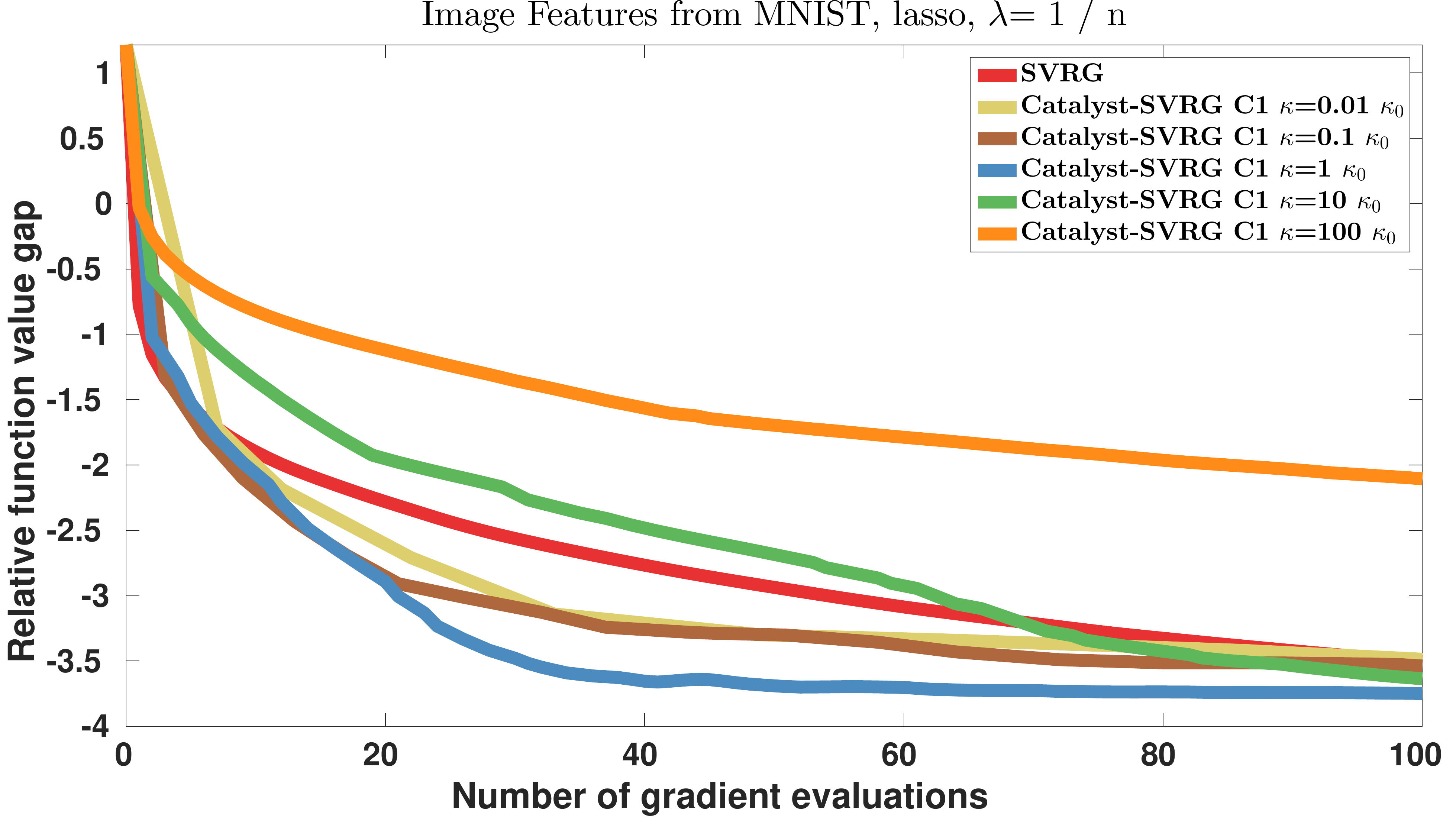}\\
      ~~\includegraphics[width=0.31\linewidth]{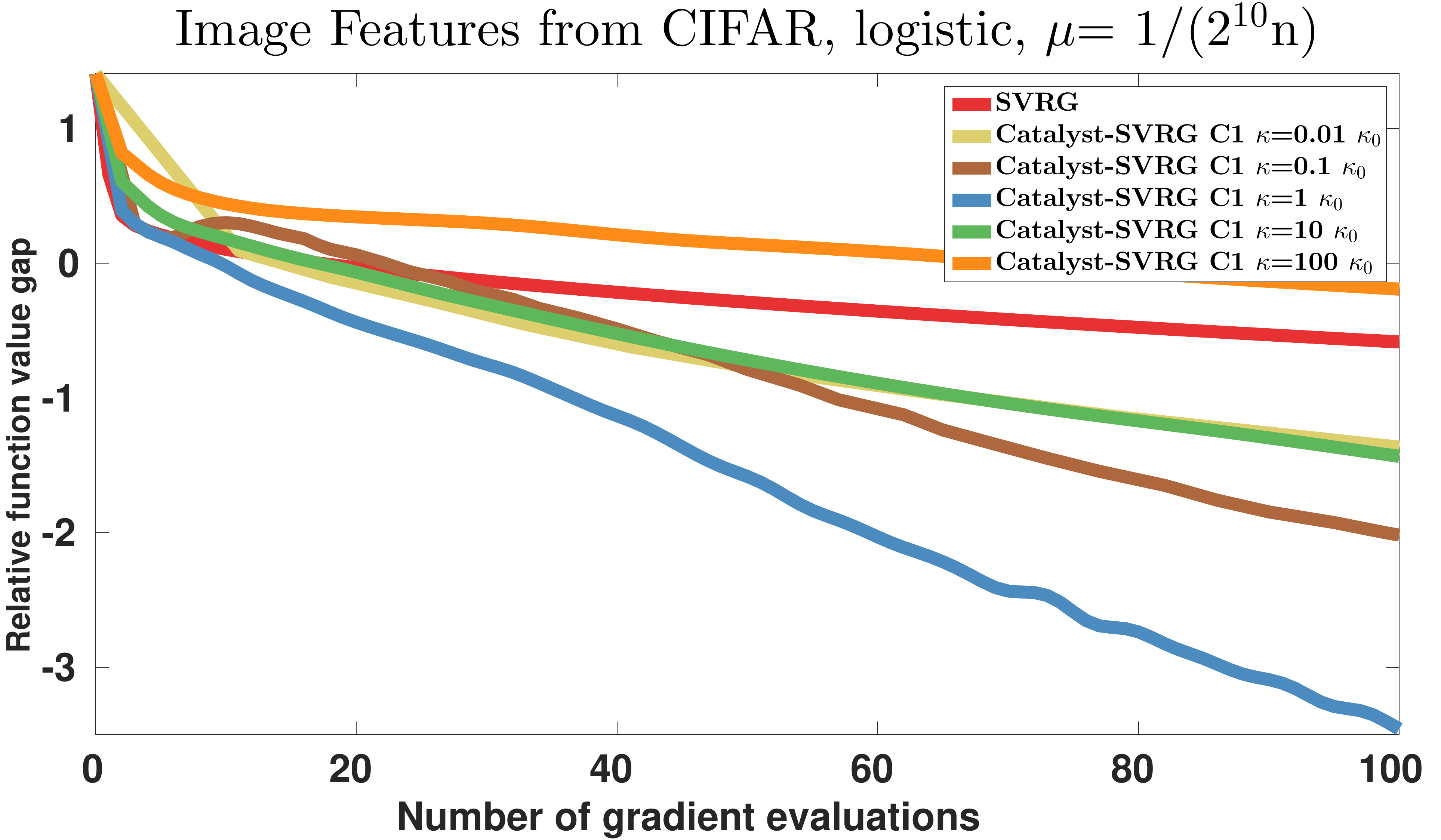}~ 
   ~~\includegraphics[width=0.31\linewidth]{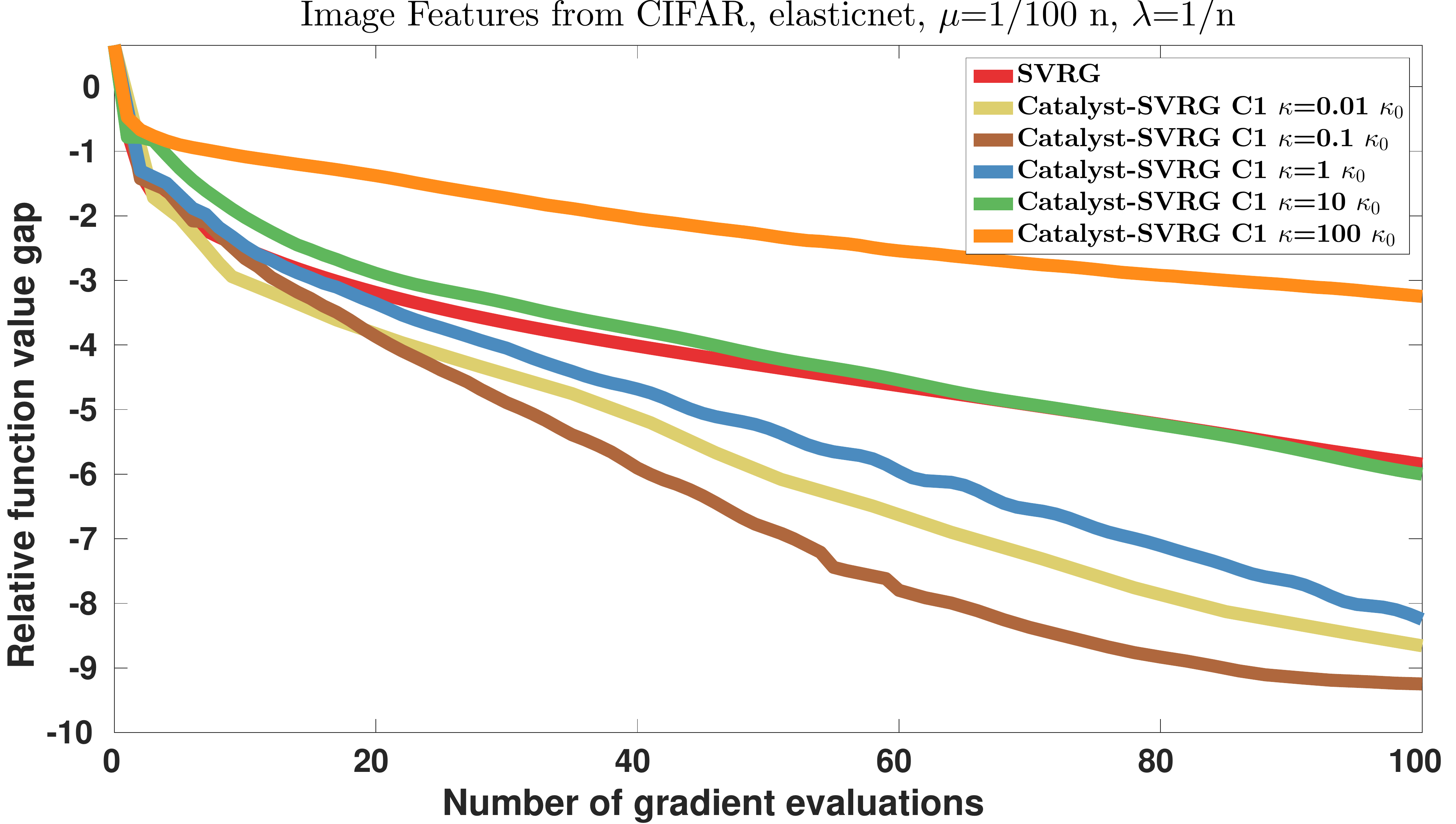}~ 
   ~~\includegraphics[width=0.31\linewidth]{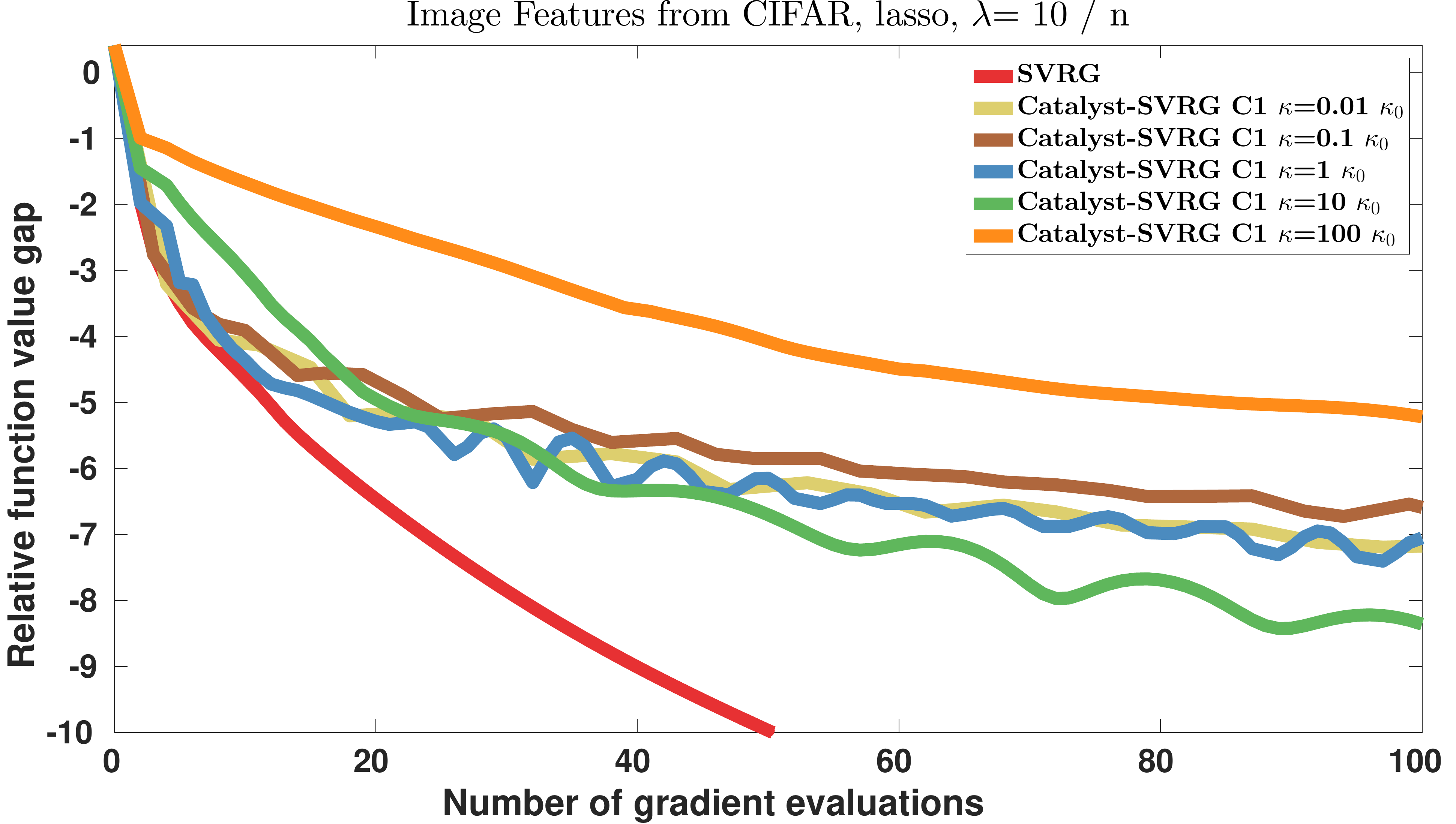}\\
   \caption{Evaluations of Catalyst-SVRG for different $\kappa$ using stopping criterion~\ref{C1}, where $\kappa_0$ is the theoretical choice given by the complexity analysis.} \label{catalyst:fig:compare_kappa_svrg}
\end{figure}

\paragraph{Observations for different choices of $\kappa$.} We consider a logarithmic grid $\kappa = 10^i \kappa_0$ with $i=-2,-1,\cdots,2$ and $\kappa_0$ is the optimal $\kappa$ given by the theory. We observe that for ill-conditioned problems, using optimal choice $\kappa_0$ provides much better performance than other choices, which confirms the theoretical result. For the data set of alpha or Lasso problems, we observe that the best choice is given by the smallest $\kappa=0.01\kappa_0$. This suggests that, as discussed before, there is a certain degree of strong convexity present in the objective even without any regularization.

\section{Conclusion}\label{sec:ccl}
We have introduced a generic algorithm called Catalyst that 
allows us to extend Nesterov's acceleration to a large class of first-order methods. 
We have shown that it can be effective in practice for ill-conditioned problems.
Besides acceleration, Catalyst also improves the numerical stability of a given
algorithm, by applying it to auxiliary problems that are better conditioned than
the original objective. For this reason, it also provides support to convex,
but not strongly convex objectives, to algorithms that originally require
strong convexity.
We have also studied experimentally many variants to identify the ones that are
the most effective and the simplest to use in practice. For incremental
methods, we showed that the ``almost-parameter-free'' variant,
consisting in performing a single pass over the data at every outer-loop
iteration, was the most effective one in practice.

Even though we have illustrated Catalyst in the context of finite-sum
optimization problems, the main feature of our approach is its versatility.
Catalyst could also be applied to other algorithms that have not been considered so far and give rise to new accelerated algorithms.

\acks{The authors would like to thank Dmitriy Drusvyatskiy, Anatoli Juditsky, Sham Kakade, Arkadi Nemirovski, Courtney Paquette, and Vincent Roulet for fruitful discussions. 
Hongzhou Lin and Julien Mairal were supported by the ERC grant SOLARIS (\#
714381) and a grant from ANR (MACARON project ANR-14-CE23-0003-01). 
Zaid Harchaoui was supported by NSF Award CCF-1740551 and
the program ``Learning in Machines and Brains'' of CIFAR. 
This work was performed while Hongzhou Lin was at Inria and Univ. Grenoble Alpes.}

\appendix
%%%%% Appendix \input{appendix.tex}
\section{Useful Lemmas}\label{appendix:proofs_aux}
\begin{lemma}[\bfseries Simple lemma on quadratic functions]\label{lemma:quadratic}
    For all vectors $x, y, z$ in~$\R^p$ and $\theta > 0$,
    \begin{displaymath}
       \|x-y\|^2 \geq (1-\theta) \|x-z\|^2 + \left(1-\frac{1}{\theta}\right) \| z-y\|^2.
    \end{displaymath}
\end{lemma}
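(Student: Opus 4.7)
The plan is to prove this by expanding the squared norm on the left-hand side and recognizing that the resulting difference is a nonnegative quantity, specifically a perfect square. Writing $x - y = (x - z) + (z - y)$ and applying the identity $\|a+b\|^2 = \|a\|^2 + 2\langle a, b\rangle + \|b\|^2$ gives
\begin{equation*}
\|x - y\|^2 = \|x - z\|^2 + 2 \langle x - z,\, z - y \rangle + \|z - y\|^2.
\end{equation*}

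Subtracting the right-hand side of the claimed inequality from this expression, the coefficient of $\|x-z\|^2$ becomes $1 - (1-\theta) = \theta$ and the coefficient of $\|z-y\|^2$ becomes $1 - (1 - 1/\theta) = 1/\theta$. Hence the inequality to prove reduces to
\begin{equation*}
\theta \|x - z\|^2 + 2 \langle x - z,\, z - y \rangle + \tfrac{1}{\theta}\|z - y\|^2 \;\geq\; 0.
\end{equation*}

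The key observation is that since $\theta > 0$, the left-hand side is exactly $\bigl\|\sqrt{\theta}(x-z) + \tfrac{1}{\sqrt{\theta}}(z-y)\bigr\|^2$, which is nonnegative. This closes the proof. There is no real obstacle here: the only minor bookkeeping is to verify the coefficients match after expansion and to note that $\theta > 0$ is needed for the square-root reformulation (equivalently, for $1/\theta$ to be well defined). Equality holds if and only if $\sqrt{\theta}(x-z) + \tfrac{1}{\sqrt{\theta}}(z-y) = 0$, i.e.\ $z - y = -\theta (x - z)$, which one could mention for completeness but is not needed for the stated inequality.
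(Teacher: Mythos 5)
Your proof is correct and is essentially identical to the paper's: both expand $\|x-y\|^2$ via the decomposition $x-y=(x-z)+(z-y)$ and absorb the cross term into the nonnegative perfect square $\bigl\|\sqrt{\theta}(x-z)+\tfrac{1}{\sqrt{\theta}}(z-y)\bigr\|^2$. The only difference is presentational (you subtract the right-hand side and show the remainder is a square, while the paper adds and subtracts the square inside the expansion), so nothing substantive distinguishes the two arguments.
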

\begin{proof}
   \begin{displaymath}
      \begin{split}
         \|x-y\|^2 & = \|x-z+z-y\|^2 \\
                   & = \|x-z\|^2 +\| z-y\|^2 + 2 \langle x-z , z-y \rangle \\
                   & = \|x-z\|^2 +\| z-y\|^2 + \left\|\sqrt{\theta}(x-z) + \frac{1}{\sqrt{\theta}}(z-y) \right\|^2 - \theta \| x-z\|^2 - \frac{1}{\theta}\|z-y\|^2 \\
                   & \geq (1-\theta) \|x-z\|^2 + \left(1-\frac{1}{\theta}\right) \| z-y\|^2. \\
      \end{split}
   \end{displaymath}
\end{proof}

\begin{lemma}[\bfseries Simple lemma on non-negative sequences]\label{lemma:sequences}
   Consider a increasing sequence  $(S_k)_{k \geq 0}$ and two non-negative sequences $(a_k)_{k \geq 0}$ and $(u_k)_{k \geq 0}$ such that for all $k$,
   \begin{equation}
      u_k^2 \leq S_k + \sum_{i=1}^k a_i u_i. \label{eq:relation}
   \end{equation}
   Then, 
   \begin{equation}
      S_k + \sum_{i=1}^k a_i u_i \leq \left(\sqrt{S_k} + \sum_{i=1}^k a_i \right)^2. \label{eq:induction}
   \end{equation}
\end{lemma}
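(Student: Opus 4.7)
The plan is to proceed by induction on $k$, tracking the quantity $T_k \defin S_k + \sum_{i=1}^k a_i u_i$ and exploiting the fact that hypothesis~(\ref{eq:relation}) gives us the pointwise bound $u_k \leq \sqrt{T_k}$. The base case $k=0$ is immediate, since by convention the empty sum vanishes and (\ref{eq:induction}) reduces to $S_0 \leq S_0$.

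For the inductive step, I would first derive a one-step recursion: by definition,
\begin{equation*}
   T_k - T_{k-1} = (S_k - S_{k-1}) + a_k u_k \leq (S_k - S_{k-1}) + a_k \sqrt{T_k},
\end{equation*}
so that, setting $v_k = \sqrt{T_k}$, the variable $v_k$ satisfies the quadratic inequality
\begin{equation*}
   v_k^2 - a_k v_k \leq T_{k-1} + (S_k - S_{k-1}).
\end{equation*}
Applying the induction hypothesis $T_{k-1} \leq (\sqrt{S_{k-1}} + \sum_{i=1}^{k-1} a_i)^2$ together with the monotonicity of $(S_k)$, a short algebraic check shows that the right-hand side is dominated by $(\sqrt{S_k} + \sum_{i=1}^{k-1} a_i)^2$, where we crucially use $\sqrt{S_{k-1}} \leq \sqrt{S_k}$ to absorb the $S_k - S_{k-1}$ increment.

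The last step is to solve the resulting quadratic inequality $v_k^2 - a_k v_k \leq \alpha^2$, with $\alpha \defin \sqrt{S_k} + \sum_{i=1}^{k-1} a_i$, by completing the square, which yields $v_k \leq a_k/2 + \sqrt{\alpha^2 + a_k^2/4}$. A direct manipulation (square both sides of the inequality $\sqrt{\alpha^2 + a_k^2/4} \leq \alpha + a_k/2$, valid since $\alpha, a_k \geq 0$) then gives $v_k \leq \alpha + a_k = \sqrt{S_k} + \sum_{i=1}^k a_i$, which is exactly~(\ref{eq:induction}) after squaring. I do not expect any serious obstacle; the only subtle point is making sure that the monotonicity of $(S_k)$ is used in the right place so that the extra term $S_k - S_{k-1}$ can be folded into $\sqrt{S_k}$ without losing a factor.
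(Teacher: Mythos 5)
Your proof is correct and follows essentially the same route as the paper's: induction on $k$, a completing-the-square step to resolve the quadratic inequality, and the monotonicity of $(S_k)_{k \geq 0}$ invoked at exactly the same point to absorb the increment $S_k - S_{k-1}$ into $\sqrt{S_k}$. The only cosmetic difference is that you complete the square in the aggregate variable $v_k = \sqrt{S_k + \sum_{i=1}^k a_i u_i}$ after applying the induction hypothesis, whereas the paper completes the square in $u_k$ itself and applies the induction hypothesis at the end of the chain of inequalities; both arrangements go through without difficulty.
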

\begin{proof}
   This lemma is identical to the Lemma~A.10 in the original Catalyst
   paper~\citep{catalyst}, inspired by a lemma of~\citet{proxinexact} for
   controlling errors of inexact proximal gradient methods.

   We give here an elementary proof for completeness based on induction. The
   relation~(\ref{eq:induction}) is obviously true for $k=0$. Then, we assume
   it is true for $k-1$ and prove the relation for~$k$.
   We remark that from~(\ref{eq:relation}),
   \begin{equation*}
      \left(u_k - \frac{a_k}{2}\right)^2  \leq S_k + \sum_{i=1}^{k-1} a_i u_i + \frac{a_k^2}{4},
   \end{equation*}
   and then
   \begin{equation*}
      u_k  \leq \sqrt{S_k + \sum_{i=1}^{k-1} a_i u_i + \frac{a_k^2}{4} }   + \frac{a_k}{2}.
   \end{equation*}
   We may now prove the relation~(\ref{eq:induction}) by induction,
   \begin{equation*}
      \begin{split}
         S_k + \sum_{i=1}^k a_i u_i & \leq  S_k + \sum_{i=1}^{k-1} a_i u_i + a_k\left(\frac{a_k}{2} + \sqrt{S_k + \sum_{i=1}^{k-1} a_i u_i + \frac{a_k^2}{4}}\right) \\
                                  & \leq  S_k + \sum_{i=1}^{k-1} a_i u_i + a_k\left({a_k} + \sqrt{S_k + \sum_{i=1}^{k-1} a_i u_i}\right) \\
                                  & \leq \left( \sqrt{S_k + \sum_{i=1}^{k-1} a_i u_i}  + {a_k} \right)^2 \\
                                  & = \left( \sqrt{(S_k-S_{k-1})+(S_{k-1}+ \sum_{i=1}^{k-1} a_i u_i)}  + {a_k} \right)^2 \\
                                  & \leq \left( \sqrt{(S_k-S_{k-1})+\left( \sqrt{S_{k-1}} + \sum_{i=1}^{k-1} a_i \right)^2}  + {a_k} \right)^2 ~~~\text{(by induction)}\\
                                  & \leq \left( \sqrt{S_k} + \sum_{i=1}^{k} a_i \right)^2 .
      \end{split}
   \end{equation*}
	The last inequality is obtained by developing the square $\left (\sqrt{S_{k-1} + \sum_{i=1}^{k-1} a_i } \right )^2$ and use the increasing assumption $S_{k-1} \leq S_k$.
\end{proof}

\begin{lemma}[\bfseries Growth of the sequence~$(A_k)_{k \geq 0}$]~\label{lemma:Ak}\newline
   Let $(A_k)_{k \geq 0}$ be the sequence defined in~(\ref{eq:Ak})
   where~$(\alpha_k)_{k \geq 0}$ is produced by (\ref{eq:alpha})
   with~$\alpha_0=1$ and~$\mu=0$.
   Then, we have the following bounds for all~$k \geq 0$,
   \begin{displaymath}
     \frac{2}{(k+2)^2} \leq A_k \leq \frac{4}{(k+2)^2} .
   \end{displaymath}
\end{lemma}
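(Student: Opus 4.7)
The plan is to exploit the special structure that arises when $q = \mu/(\mu+\kappa) = 0$. In this case the defining relation~(\ref{eq:alpha}) collapses to $\alpha_k^2 = (1-\alpha_k)\alpha_{k-1}^2$, and combining this with $A_k = (1-\alpha_k)A_{k-1}$ gives $\alpha_k^2/A_k = \alpha_{k-1}^2/A_{k-1}$. Starting from $\alpha_0 = 1 = A_0$, this constant equals $1$, so $\alpha_k = \sqrt{A_k}$ for every $k \geq 0$. I would then change variables to $b_k \defin 1/\sqrt{A_k}$, rewrite $A_k = (1-\sqrt{A_k})A_{k-1}$ as $b_{k-1}^2 = b_k^2 - b_k$, i.e.,
\begin{equation*}
   b_k^2 - b_{k-1}^2 \,=\, b_k \quad \Longleftrightarrow \quad b_k - b_{k-1} \,=\, \frac{b_k}{b_k + b_{k-1}}.
\end{equation*}
The entire lemma then reduces to a two-sided estimate on $b_k$, namely $(k+2)/2 \leq b_k \leq (k+2)/\sqrt{2}$.

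For the upper bound $A_k \leq 4/(k+2)^2$, I would use that $A_k$ is strictly decreasing (so $b_k$ is strictly increasing), hence $b_k + b_{k-1} \leq 2 b_k$ and therefore $b_k - b_{k-1} \geq 1/2$. Telescoping from $b_0 = 1$ yields $b_k \geq 1 + k/2 = (k+2)/2$, and the bound follows.

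For the lower bound $A_k \geq 2/(k+2)^2$ I would prove the companion estimate $b_k \leq (k+2)/\sqrt{2}$ by induction on $k$. The base case $b_0 = 1 \leq \sqrt{2}$ is immediate. For the inductive step, assuming $b_{k-1}^2 \leq (k+1)^2/2$, the recursion $b_k^2 = b_{k-1}^2 + b_k$ shows it suffices to check that $x \mapsto x^2 - x - (k+1)^2/2$ is nonpositive at $x = (k+2)/\sqrt{2}$, which reduces to the elementary inequality $\sqrt{2}(2k+3) \geq 2(k+2)$, valid for all $k \geq 0$. This yields $b_k \leq (k+2)/\sqrt{2}$ and hence $A_k \geq 2/(k+2)^2$.

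The main obstacle is really just getting the constant $\sqrt{2}$ through the induction cleanly; once the identity $\alpha_k = \sqrt{A_k}$ is noticed, everything else is a short one-variable induction on $b_k$, so no deeper machinery is required.
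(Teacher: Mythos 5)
Your proof is correct in substance and is more self-contained than the paper's, though one sign in your wording needs fixing. Both arguments hinge on the same key identity: with $q=0$ and $\alpha_0=1$, the recursion~(\ref{eq:alpha}) combined with $A_k=(1-\alpha_k)A_{k-1}$ gives $\alpha_k^2=A_k$. From there the routes diverge. The paper obtains the upper bound $A_k\leq 4/(k+2)^2$ by invoking Lemma~\ref{lem:nesterov Ak} (Nesterov's Lemma 2.2.4, quoted without proof) with $\gamma_0=\kappa$, and then gets the lower bound by writing $\alpha_i\leq 2/(i+2)$ and computing the telescoping product $A_k=\prod_{i=1}^k(1-\alpha_i)\geq\prod_{i=1}^k\frac{i}{i+2}=\frac{2}{(k+1)(k+2)}$, which is in fact slightly sharper than $2/(k+2)^2$. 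You instead substitute $b_k=1/\sqrt{A_k}$, turn the recursion into $b_k^2-b_{k-1}^2=b_k$, and prove both bounds from scratch: the increment bound $b_k-b_{k-1}\geq 1/2$ replaces the appeal to Nesterov's lemma for the upper bound, and an induction gives $b_k\leq(k+2)/\sqrt{2}$ for the lower bound. What your route buys is a fully elementary, citation-free proof of both inequalities (in effect you re-prove the relevant special case of Lemma~\ref{lem:nesterov Ak}); what the paper's route buys is brevity, by recycling a lemma it has already stated, and a marginally stronger lower bound.

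The correction: in your inductive step you write that it suffices for $g(x)=x^2-x-(k+1)^2/2$ to be \emph{nonpositive} at $x=(k+2)/\sqrt{2}$; it must be \emph{nonnegative}. Indeed, the induction hypothesis and the recursion give $g(b_k)=b_{k-1}^2-(k+1)^2/2\leq 0$, so $b_k$ lies between the two roots of the upward parabola $g$; to conclude $b_k\leq (k+2)/\sqrt{2}$ you need $(k+2)/\sqrt{2}$ to sit at or beyond the larger root, which (since $(k+2)/\sqrt{2}>1/2$, the vertex) is exactly the condition $g\left((k+2)/\sqrt{2}\right)\geq 0$. Your elementary inequality $\sqrt{2}(2k+3)\geq 2(k+2)$ is precisely this condition, as $g\left((k+2)/\sqrt{2}\right)=\frac{2k+3}{2}-\frac{k+2}{\sqrt{2}}$, so the computation you carried out is already the right one; only the word needs changing.
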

\begin{proof}
The righthand side is directly obtained from Lemma~\ref{lem:nesterov Ak} by noticing that $\gamma_0 = \kappa$ with the choice of $\alpha_0$. 
   Using the recurrence of~$\alpha_k$, we have for all~$k \geq 1$,
   \begin{displaymath}
      \alpha_k^2 = (1-\alpha_k) \alpha_{k-1}^2 = \prod_{i=1}^k (1-\alpha_i)\alpha_0^2 = A_k \leq \frac{4}{(k+2)^2}.
   \end{displaymath}
   Thus, $\alpha_k \leq \frac{2}{k+2}$ for all~$k \geq 1$ (it is also true for $k=0$). We now have all we need to conclude the lemma:
   \begin{displaymath}
      A_k = \prod_{i=1}^{k} (1-\alpha_i) \geq \prod_{i=1}^{k} \left(1- \frac{2}{i+2}\right) = \frac{2}{(k+2)(k+1)} \geq \frac{2}{(k+2)^2}.
   \end{displaymath}
\end{proof}

\section{Proofs of Auxiliary Results}\label{appendix:proofs}

\subsection{Proof of Lemma~\ref{prop:abs}}
\begin{proof}
   Let us introduce the notation $h'(z) \defin \frac{1}{\eta}(z- [z]_\eta)$  for the gradient mapping at $z$.
   The first order conditions of the convex problem defining $[z]_{\eta}$ give
 $$   h'(z)- \nabla h_0(z) \in \partial \psi([z]_\eta) .$$
Then, we may define 
\begin{align*}
 u & \defin \frac{1}{\eta}(z-[z]_\eta)- (\nabla h_0(z)-\nabla h_0([z]_\eta)) ,\\
  & =  h'(z) - \nabla h_0(z) + \nabla h_0([z]_\eta)  \in \partial h([z]_\eta). 
\end{align*}
Then, by strong convexity,
   \begin{displaymath}
      \begin{split}
         h^* & \geq h([z]_\eta) + u^\top(p(x)- [z]_{\eta}) + \frac{\kappa+\mu}{2}\|p(x)- [z]_{\eta}\|^2 \\
       		 & \geq h([z]_\eta) -\frac{1}{2(\kappa+\mu)} \Vert u \Vert^2.
               \end{split}
   \end{displaymath}
   Moreover, 
   \begin{displaymath}
      \begin{split}
         \|u\|^2 & = \left\Vert \frac{1}{\eta}(z-[z]_\eta) \right\Vert^2 -
         \frac{2}{\eta}\langle z-[z]_\eta, \nabla h_0(z)-\nabla h_0([z]_\eta)
         \rangle + \Vert \nabla h_0(z)-\nabla h_0([z]_\eta) \Vert^2 \\
         & \leq  \left\Vert h'(z) \right\Vert^2 -  \Vert \nabla h_0(z)-\nabla h_0([z]_\eta) \Vert^2 
         \\ & \leq  \left\Vert h'(z) \right\Vert^2,
      \end{split}
   \end{displaymath}
   where the first inequality comes from the relation
   \citep[][Theorem
   2.1.5]{nesterov} using the fact $h_0$ is $(1/\eta)$-smooth
   $$   \Vert \nabla h_0(z) -\nabla h_0([z]_\eta) \Vert^2 \leq \frac{1}{\eta} \langle z-[z]_\eta, \nabla h_0(z)-\nabla h_0([z]_\eta) \rangle.$$
   Thus,
   \begin{equation*}
      h([z]_\eta)-h^* \leq \frac{1}{2(\kappa+\mu)} \Vert u \Vert^2 \leq \frac{1}{2(\kappa+\mu)} \Vert h'(z) \Vert^2 .
   \end{equation*}
   As a result, 
   \begin{displaymath}
   	\Vert h'(z) \Vert \leq \sqrt{2\kappa \varepsilon} \quad \Rightarrow \quad h([z]_\eta)-h^* \leq \varepsilon. 
   \end{displaymath}
   \end{proof}

\subsection{Proof of Proposition~\ref{prop:convergence}}
\begin{proof}
   We simply use Theorem~\ref{thm:outer-loop-12} and specialize it to the choice of parameters.
   The initialization $\alpha_0=\sqrt{q}$ leads to a particularly simple form of the
   algorithm, where~$\alpha_k=\sqrt{q}$ for all~$k \geq 0$.
   Therefore, the sequence~$(A_k)_{k \geq 0}$ from Theorem~\ref{thm:outer-loop-12} is also
   simple since we indeed have $A_k  = (1-\sqrt{q})^{k}$. 
   Then, we remark that $\gamma_0=\mu (1-\sqrt{q})$ and thus, by strong convexity of~$f$,
$$ S_0 = (1-\sqrt{q})\left (f(x_0)-f^* + \frac{\mu}{2} \Vert x_0 - x^* \Vert^2 \right ) \leqslant 2 (1-\sqrt{q})(f(x_0) -f^*).$$
Therefore,
\begin{equation*}
   \begin{split}
      \sqrt{S_0}+3 \sum_{j=1}^{k} \sqrt {\frac{\epsilon_j }{A_{j-1}}}  
  & \leqslant  \sqrt{ 2(1-\sqrt{q})(f(x_0)   -f^*)}+ 3 \sum_{j=1}^{k} \sqrt {\frac{\epsilon_j }{A_{j-1}}} \\
 & =  \sqrt{ 2(1-\sqrt{q})(f(x_0)   -f^*)} \left[ 1+  \sum_{j=1}^{k}  {\underbrace{\left (\sqrt {\frac{1- \rho }{1- \sqrt{q}}}\right ) }_{\eta}}^{j}\right] \\
 & =  \sqrt{ 2(1-\sqrt{q})(f(x_0)   -f^*)} \,\, \frac{\eta^{k+1}-1}{\eta-1} \\
 & \leqslant  \sqrt{ 2(1-\sqrt{q})(f(x_0)   -f^*)} \,\, \frac{\eta^{k+1}}{\eta-1}.
   \end{split}
\end{equation*}

Therefore, Theorem~\ref{thm:outer-loop-12} combined with the previous inequality gives us
\begin{equation*}
   \begin{split}
      f(x_{k}) -f^*  & \leqslant  2A_{k-1}(1-\sqrt{q}) (f(x_0)   -f^*) \,\, \left (\frac{\eta^{k+1}}{\eta-1} \right )^2 \\
                     & =  2\left ( \frac{\eta}{\eta-1} \right )^2 (1-\rho)^{k}(f(x_0)  -f^*) \\
                     & =  2\left ( \frac{\sqrt{1-\rho}}{\sqrt{1-\rho} - \sqrt{1- \sqrt{q}}} \right )^2 (1-\rho)^{k}(f(x_0)  -f^*) \\
                     & =  2\left ( \frac{1}{\sqrt{1-\rho} - \sqrt{1- \sqrt{q}}} \right )^2 (1-\rho)^{k+1}(f(x_0)  -f^*).
   \end{split}
\end{equation*}
Since $\sqrt{1-x} + \frac{x}{2} $ is decreasing in $[0,1]$, we have $\sqrt{1-\rho}+\frac{\rho}{2} \geqslant \sqrt{1-\sqrt{q}} + \frac{\sqrt{q}}{2}$. Consequently,
$$ f(x_{k}) -f^* \leqslant \frac{8}{(\sqrt{q} -\rho)^2} (1-\rho)^{k+1} (f(x_0)  -f^*). $$
\end{proof}

\subsection{Proof of Proposition~\ref{prop:convergence cvx}}
\begin{proof}
 The initialization $\alpha_0= 1$ leads to $\gamma_0 =\kappa$ and $S_0 = \frac{\kappa}{2} \Vert x^* -x_0 \Vert^2$. Then,
\begin{equation*}
\begin{split}
         \sqrt{\frac{\gamma_0}{2} \Vert x_0-x^* \Vert^2 }+3 \sum_{j=1}^{k} \sqrt {\frac{\epsilon_j }{A_{j-1}}}  
 \leq  & \sqrt{\frac{\kappa}{2}\|x_0-x^\star\|^2}+ 3 \sum_{j=1}^{k} \sqrt {\frac{ (j+1)^2 \epsilon_j}{2}} ~~~\text{(from Lemma~\ref{lemma:Ak})}\\
        \leqslant &  \sqrt{\frac{\kappa}{2} \Vert x_0 - x^*\Vert^2}+ \sqrt{f(x_0)-f^*}\left (\sum_{j=1}^k \frac{1}{(j+1)^{1+\gamma/2}} \right),
        \end{split}
\end{equation*}
where the last inequality uses Lemma~\ref{lemma:Ak} to upper-bound the ratio $\varepsilon_j/A_j$.
Moreover,
$$  \sum_{j=1}^{k} \frac{1}{(j+1)^{1+\gamma/2}}  \leqslant \sum_{j=2}^{\infty} \frac{1}{j ^{1+\gamma/2}} \leqslant \int_1^{\infty} \frac{1}{x^{1+\gamma/2}} \, \text{d}x = \frac{2}{\gamma}.$$
Then applying Theorem~\ref{thm:outer-loop-12} yields
\begin{align*}
f(x_{k}) -f^* 
		    \leqslant \,\, & A_{k-1} \left ( \sqrt{\frac{\kappa}{2} \Vert x_0 - x^*\Vert^2}+ \frac{2}{\gamma} \sqrt{f(x_0)-f^*} \right )^2 \\
		    \leqslant \,\, & \frac{8}{(k+1)^2} \left ( \frac{\kappa}{2} \Vert x_0 - x^*\Vert^2 +\frac{4}{\gamma^2} (f(x_0) -f^*) \right ).  
\end{align*}
The last inequality uses $(a+b)^2 \leqslant 2(a^2+b^2)$.
\end{proof}

\subsection{Proof of Lemma~\ref{lemma:accuracy}} \label{appendix:accuracy}
\begin{proof}
   We abbreviate $\tau_\mtd$ by $\tau$ and $C = C_{\mtd}(h(z_0)-h^\star)$ to simplify the notation. Set 
$$T_0 =  \frac{1}{\tau}\log \left ( \frac{1}{1- e^{-\tau}} \frac{C}{\varepsilon} \right ). $$ 
For any $t \geq 0$, we have 
$$ \E[h(z_t) - h^*]  \leqslant C (1-\tau)^t  \leqslant C  \, e^{-t \tau}. $$
By Markov's inequality,
\begin{equation*}
   \p [h(z_t) - h^* > \epsilon] = \p [T(\epsilon) > t] \leqslant \frac{\E[h(z_t) - h^*]}{\epsilon} \leqslant \frac{C  \, e^{-t \tau}}{\epsilon}.
\end{equation*}
Together with the fact $\p \leqslant 1$ and $t \geq 0$. We have
$$ \p [T(\epsilon) \geqslant t+1] \leqslant \min \left \{ \frac{C}{\epsilon} e^{- t \tau } ,1 \right \}. $$
Therefore,
\begin{align*}
   \E[T(\epsilon)] & = \sum_{t=1}^{\infty} \p[T(\epsilon) \geqslant t] = \sum_{t=1}^{T_0} \p[T(\epsilon) \geqslant t] + \sum_{t=T_0+1}^{\infty} \p[T(\epsilon) \geqslant t] \\
                          & \leqslant T_0 + \sum_{t=T_0}^{\infty} \frac{C}{\epsilon} e^{-t \tau } = T_0  +  \frac{C}{\epsilon}  e^{- T_0 \tau} \sum_{t=0}^{\infty} e^{-t \tau} \\
                          &  = T_0  +  \frac{C}{\epsilon}  \frac{ e^{- \tau T_0}}{1- e^{-\tau}}  = T_0 +1.
\end{align*}
A simple calculation shows that for any $\tau \in (0,1)$, $\frac{\tau}{2} \leqslant 1-e^{-\tau}$ and then
$$ \E[T(\epsilon)]  \leqslant T_0 +1 =  \frac{1}{\tau}\log \left ( \frac{1}{1- e^{-\tau}} \frac{C}{\epsilon} \right )+1 \leqslant  \frac{1}{\tau}\log \left ( \frac{2 C}{ \tau \epsilon} \right )+1.$$
\end{proof}

\subsection{Proof of coerciveness property of the proximal operator} \label{appendix:nonexpansive}
\begin{lemma}
	Given a $\mu$-strongly convex function $f: \Real^p \rightarrow \Real$ and a positive parameter $\kappa>0$. For any $x$, $y \in \Real^p$, the following inequality holds,
	$$ \frac{\kappa}{\kappa+\mu} \langle y-x, p(y)-p(x) \rangle \geq \Vert p(y)-p(x) \Vert^2, $$
	where $\displaystyle p(x) = \argmin_{z \in \Real^p} \left \{ f(z) + \frac{\kappa}{2} \Vert z -x \Vert^2 \right \}$.
\end{lemma}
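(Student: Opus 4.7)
The plan is to use the first-order optimality conditions defining $p(x)$ and $p(y)$, together with the strong monotonicity of $\partial f$, which holds because $f$ is $\mu$-strongly convex. This is the standard route for proving firm nonexpansiveness-type inequalities for proximal mappings of strongly convex functions.

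First I would write down the optimality conditions. Since $p(x)$ minimizes $z \mapsto f(z) + \frac{\kappa}{2}\|z-x\|^2$, we have $0 \in \partial f(p(x)) + \kappa(p(x) - x)$, so there exists $u_x \in \partial f(p(x))$ with $u_x = \kappa(x - p(x))$. Analogously, $u_y \defin \kappa(y - p(y)) \in \partial f(p(y))$.

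Next I would invoke the strong monotonicity of the subdifferential of a $\mu$-strongly convex function, i.e.\ for any $u_x \in \partial f(p(x))$ and $u_y \in \partial f(p(y))$,
\begin{equation*}
\langle u_y - u_x,\, p(y) - p(x) \rangle \geq \mu \, \|p(y) - p(x)\|^2.
\end{equation*}
Substituting the expressions above gives
\begin{equation*}
\kappa \langle y - x,\, p(y) - p(x) \rangle - \kappa \|p(y) - p(x)\|^2 \geq \mu \|p(y) - p(x)\|^2,
\end{equation*}
and rearranging (dividing by $\kappa + \mu$) yields the claimed inequality.

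There is no real obstacle: the only nontrivial ingredient is the strong monotonicity estimate, which follows by adding the two strong convexity inequalities $f(p(x)) \geq f(p(y)) + \langle u_y, p(x)-p(y)\rangle + \tfrac{\mu}{2}\|p(x)-p(y)\|^2$ and its symmetric counterpart. The rest is bookkeeping. Note that as a corollary (Cauchy--Schwarz) one recovers the standard nonexpansiveness bound $\|p(y)-p(x)\| \leq \tfrac{\kappa}{\kappa+\mu}\|y-x\|$, which is precisely the form used in the proof of Proposition~\ref{prop:inner_loop:c1}.
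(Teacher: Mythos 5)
Your proof is correct and follows exactly the same route as the paper's: write the first-order optimality conditions $\kappa(x-p(x)) \in \partial f(p(x))$ and $\kappa(y-p(y)) \in \partial f(p(y))$, invoke strong monotonicity of $\partial f$, and rearrange. The only (inessential) quibble is your closing remark: in Proposition~\ref{prop:inner_loop:c1} the paper actually uses the lemma to bound $\bigl\Vert \frac{\kappa}{\kappa+\mu}(y_k-y_{k-1}) - (p(y_k)-p(y_{k-1})) \bigr\Vert^2 \leq \Vert y_k-y_{k-1}\Vert^2$ (by expanding the square and applying the coerciveness inequality), rather than the Cauchy--Schwarz nonexpansiveness bound you state.
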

\begin{proof}
By the definition of $p(x)$, we have $0 \in \partial f(p(x)) +\kappa(p(x)-x)$, meaning that $\kappa(x-p(x)) \in \partial f(p(x))$. By strong convexity of $f$,
$$ \langle \kappa(y-p(y)) - \kappa(x-p(x)), p(y)-p(x)\rangle \geq  \mu \Vert p(y)-p(x) \Vert^2. $$
Rearranging the terms yields the desired inequality. 
\end{proof}
As a consequence, 
\begin{align*}
& \hspace*{-1cm} \left \Vert \frac{\kappa}{\kappa+\mu} (y_k-y_{k-1}) - (p(y_k)-p(y_{k-1})) \right \Vert^2 \\
~~~~~~~~~~& =  \left \Vert \frac{\kappa}{\kappa+\mu} (y_k-y_{k-1})\right \Vert^2  - 2 \frac{\kappa}{\kappa+\mu} \langle y_k -y_{k-1}, p(y_k)-p(y_{k-1}) \rangle  + \Vert p(y_k)-p(y_{k-1}) \Vert^2 \\
& \leq  \left \Vert \frac{\kappa}{\kappa+\mu} (y_k-y_{k-1})\right \Vert^2 \\
& \leq  \left \Vert y_k-y_{k-1}\right \Vert^2.
\end{align*}

\section{Catalyst for MISO/Finito/SDCA}\label{appendix:miso}
In this section, we present the
application of Catalyst to MISO/Finito~\citep{miso,finito}, which may be
seen as a variant of SDCA~\citep{accsdca}. The reason why these algorithms require a specific treatment is due to the fact
that their linear convergence rates are given in a different
form than~(\ref{eq:assumption}); specifically, Theorem 4.1 of~\citet{catalyst}
tells us that MISO produces a sequence of iterates~$(z_t)_{t \geq 0}$ for
minimizing the auxiliary objective~$h(z) = f(z)+\frac{\kappa}{2}\|z-y\|^2$ such that
\begin{displaymath}
   \E[h(z_t)] - h^\star \leq C_{\mtd}(1-\tau_{\mtd})^{t+1}(h^*-d_0(z_0)),
\end{displaymath}
where $d_0$ is a lower-bound of $h$ defined as the sum of a simple quadratic function and the composite regularization $\psi$.
More precisely, these algorithms produce a sequence~$(d_t)_{t \geq 0}$ of such lower-bounds, and the iterate $z_t$ is obtained
by minimizing $d_t$ in closed form. 
In particular, $z_t$ is obtained from taking a proximal step at a well chosen point~$w_t$, providing the following expression, 
$$ z_t = \prox_{\psi/(\kappa+\mu)}(w_t).$$
Then, linear convergence is achieved for the duality gap 
\begin{displaymath}
  \E[h(z_t) - h^\star] \leq  \E[h(z_t) - d_t(z_t)] \leq C_{\mtd}(1-\tau_{\mtd})^{t}(h^\star -d_0(z_0)).
\end{displaymath}
Indeed, the quantity $h(z_t) - d_t(z_t)$ is a natural upper-bound on $h(z_t) -
h^\star$,  which is simple to compute, and which can be naturally used for
checking the criterions (\ref{C1}) and~(\ref{C2}). Consequently, the expected complexity of solving a given problem is slightly different compared to Lemma~\ref{lemma:accuracy}.
\begin{lemma}[\bfseries Accuracy vs. complexity]\label{lemma:accuracy2}
   Let us consider a strongly convex objective~$h$ and denote $(z_t)_{t\geq 0}$ the sequence of iterates generated by MISO/Finito/SDCA. Consider the complexity $T(\varepsilon) = \inf \{t \geq 0, h(z_t)- d_t(z_t) \leq \varepsilon \}$, where $\varepsilon>0$ is the target accuracy and $h^\star$ is the minimum value of~$h$. Then,
$$  \mathbb{E}[T(\varepsilon)] \leq \frac{1}{\tau_{\mtd}} \log \left ( \frac{2C_{\mtd} (h^* -d_0(z_0))}{\tau_{\mtd}  \varepsilon}  \right ) +1,$$
where $d_0$ is a lower bound of $f$ built by the algorithm.
\end{lemma}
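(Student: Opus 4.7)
The plan is to mimic closely the proof of the randomized part of Lemma~\ref{lemma:accuracy}, with the primal gap $h(z_t)-h^\star$ replaced by the duality gap $h(z_t)-d_t(z_t)$. The key structural ingredient, given just before the lemma, is the linear convergence of the duality gap in expectation:
\begin{equation*}
\E[h(z_t)-d_t(z_t)] \;\leq\; C_{\mtd}(1-\tau_{\mtd})^{t}\bigl(h^\star - d_0(z_0)\bigr) \;\leq\; C_{\mtd}\,e^{-t\tau_{\mtd}}\bigl(h^\star - d_0(z_0)\bigr).
\end{equation*}
From the definition $T(\varepsilon)=\inf\{t\geq 0\,:\, h(z_t)-d_t(z_t)\leq \varepsilon\}$, the event $\{T(\varepsilon)>t\}$ is contained in $\{h(z_t)-d_t(z_t)>\varepsilon\}$ (if the infimum has not yet been reached by iteration $t$, then in particular $z_t$ itself does not satisfy the criterion). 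Applying Markov's inequality then yields the tail bound
\begin{equation*}
\p[T(\varepsilon)\geq t+1] \;\leq\; \min\left\{ \frac{C_{\mtd}(h^\star-d_0(z_0))}{\varepsilon}\,e^{-t\tau_{\mtd}},\; 1\right\}.
\end{equation*}

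Next, I would split the summation $\E[T(\varepsilon)]=\sum_{t\geq 1}\p[T(\varepsilon)\geq t]$ at a cutoff
\begin{equation*}
T_0 \;=\; \frac{1}{\tau_{\mtd}}\log\!\left(\frac{1}{1-e^{-\tau_{\mtd}}}\cdot\frac{C_{\mtd}(h^\star-d_0(z_0))}{\varepsilon}\right),
\end{equation*}
using the trivial bound $\p\leq 1$ on the first $T_0$ terms and the exponential Markov bound on the tail. The geometric tail sums to exactly $1$ with this particular choice of $T_0$, giving $\E[T(\varepsilon)]\leq T_0+1$. Finally, the elementary inequality $\tau/2 \leq 1-e^{-\tau}$ valid for $\tau\in(0,1)$ replaces $1/(1-e^{-\tau_{\mtd}})$ by $2/\tau_{\mtd}$ inside the logarithm, producing exactly the stated bound.

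There is no genuine obstacle here: the argument is essentially a duality-gap rewriting of the proof in Appendix~\ref{appendix:accuracy}. The only point that deserves a line of justification is the inclusion $\{T(\varepsilon)>t\}\subseteq\{h(z_t)-d_t(z_t)>\varepsilon\}$, which is immediate from the definition of $T(\varepsilon)$ as an infimum and does \emph{not} require monotonicity of the duality-gap sequence. Everything else---Markov, splitting the sum at $T_0$, summing the geometric tail, and the $\tau/2\leq 1-e^{-\tau}$ simplification---is routine and identical to the proof already given for Lemma~\ref{lemma:accuracy}.
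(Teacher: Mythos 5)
Your proposal is correct and follows exactly the route the paper intends: the paper gives no separate proof of Lemma~\ref{lemma:accuracy2}, but derives it implicitly by repeating the argument of Lemma~\ref{lemma:accuracy} (Appendix~\ref{appendix:accuracy}) with the primal gap $h(z_t)-h^\star$ replaced by the duality gap $h(z_t)-d_t(z_t)$ and the initial gap $h(z_0)-h^\star$ replaced by $h^\star-d_0(z_0)$, which is precisely what you do. Your remark that only the inclusion $\{T(\varepsilon)>t\}\subseteq\{h(z_t)-d_t(z_t)>\varepsilon\}$ is needed (rather than the equality of events written in the paper's proof of Lemma~\ref{lemma:accuracy}) is a small but accurate refinement of the original argument.
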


For the convergence analysis, the outer-loop complexity does not change as long as the algorithm finds approximate proximal points satisfying criterions (\ref{C1}) and~(\ref{C2}). It is then sufficient to control the inner loop complexity. As we can see, we now need to bound the dual gap $h^*-d_0(z_0)$ instead of the primal gap $h(z_0)-h^*$, leading to slightly different warm start strategies. Here, we show how to restart MISO/Finito.

\begin{proposition}[\bfseries Warm start for criterion~(\ref{C1})] \label{prop:inner_loop:c1 miso}
   Consider applying Catalyst with the same parameter choices as in Proposition~\ref{prop:inner_loop:c1} to MISO/Finito. At iteration $k+1$ of Algorithm~\ref{alg:catalyst1}, assume that we are given the previous iterate $x_k$ in~$\pepsk(y_{k-1})$, the corresponding dual function $d(x)$ and its prox-center $w_k$ satisfying 
$ x_k = \prox_{\psi/(\kappa+\mu)}(w_k).$ 
   Then, initialize the sequence $(z_t)_{t \geq 0}$ for minimizing $h_{k+1} = f + \frac{\kappa}{2} \Vert \cdot - y_k \Vert^2$ with, 
$$ z_0 = \prox_{\psi/(\kappa+\mu)} \left (w_k + \frac{\kappa}{\kappa+\mu}(y_k-y_{k-1}) \right ),$$
and initialize the dual function as
$$ d_0(x) = d(x) + \frac{\kappa}{2} \Vert x-y_k\Vert^2 - \frac{\kappa}{2} \Vert x - y_{k-1} \Vert^2.$$
Then, 
   \begin{enumerate}
      \item when $f$ is $\mu$-strongly convex, we have $h_{k+1}^\star -d_0(z_0)\leq C \varepsilon_{k+1}$ with the same constant as in (\ref{eq:warmstart}) and (\ref{eq:warmstart3}), where $d_0$ is the dual function corresponding to $z_0$;
      \item when $f$ is convex with bounded level sets, there exists a constant $B > 0$ identical to the one of (\ref{eq:warmstart2}) such that
         \begin{equation*}
            h_{k+1}^\star-d_0(z_0) \leq B.
         \end{equation*}
   \end{enumerate}
\end{proposition}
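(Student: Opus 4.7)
The plan is to exploit the fact that the MISO lower bound $d$ is quadratic in $x$ up to the regularizer $\psi$, which makes the transformation from $d$ to $d_0$ transparent. My approach has three stages: first I would show that the proposed warm start is consistent with the MISO framework, then reduce the new duality gap to the previous iteration's duality gap, and finally bound the remaining term along the lines of Proposition~\ref{prop:inner_loop:c1}.

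For the first stage, I would verify that $d_0$ is a valid lower bound of $h_{k+1}$ and that $\argmin d_0 = z_0$. The lower bound property is immediate from the algebraic identity $d_0 - h_{k+1} = d - h_k \leq 0$, since the quadratic penalties in $y_k$ cancel. For the minimizer, I would use the canonical form $d(x) = c + \frac{\kappa+\mu}{2}\|x-w_k\|^2 + \psi(x)$ that characterizes iterates of MISO/Finito, and complete the square after adding $\frac{\kappa}{2}(\|x-y_k\|^2 - \|x-y_{k-1}\|^2)$. This rewrites $d_0$ as $c_0 + \frac{\kappa+\mu}{2}\|x - w_0\|^2 + \psi(x)$ with proximal center $w_0 = w_k + \frac{\kappa}{\kappa+\mu}(y_k-y_{k-1})$, whose minimizer is precisely $z_0 = \prox_{\psi/(\kappa+\mu)}(w_0)$.

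For the second stage, the same identity $h_{k+1} - d_0 = h_k - d$ yields $h_{k+1}(z_0) - d_0(z_0) = h_k(z_0) - d(z_0)$, which I would split as $[h_k(z_0) - h_k^\star] + [h_k^\star - d(z_0)]$. The second bracket is controlled by the previous stopping criterion: since $x_k = \argmin d$, we have $d(z_0) \geq d(x_k)$, so $h_k^\star - d(z_0) \leq h_k(x_k) - d(x_k) \leq \varepsilon_k$. Combined with $h_{k+1}^\star \leq h_{k+1}(z_0)$, this yields $h_{k+1}^\star - d_0(z_0) \leq [h_k(z_0) - h_k^\star] + \varepsilon_k$.

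For the third stage, I would bound $h_k(z_0) - h_k^\star$ by mimicking Proposition~\ref{prop:inner_loop:c1}. The key geometric fact is non-expansiveness of $\prox_{\psi/(\kappa+\mu)}$, giving $\|z_0 - x_k\| \leq \|w_0 - w_k\| = \frac{\kappa}{\kappa+\mu}\|y_k - y_{k-1}\|$; together with $\|x_k - p(y_{k-1})\|^2 \leq 2\varepsilon_k/(\kappa+\mu)$ from strong convexity of $h_k$, this reproduces the same bound on $\|z_0 - p(y_{k-1})\|^2$ used in Proposition~\ref{prop:inner_loop:c1}. In the smooth case, $(L+\kappa)$-smoothness of $h_k$ then yields the quadratic bound directly, and the control of $\|y_k-y_{k-1}\|^2$ via Propositions~\ref{prop:convergence} or~\ref{prop:convergence cvx} carries over verbatim. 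The main obstacle will be the composite case, where $h_k$ is not smooth and so we cannot directly upper-bound $h_k(z_0) - h_k^\star$ by a quadratic in $\|z_0 - p(y_{k-1})\|$. My plan is to work directly with $h_k - d$: since $\psi$ cancels, $h_k(z_0) - d(z_0) = h_{k,0}(z_0) - c - \frac{\kappa+\mu}{2}\|z_0 - w_k\|^2$, which involves only the smooth part $h_{k,0}$ and quadratics. Combining with the analogous identity at $x_k$, smoothness of $h_{k,0}$, and the subgradient relation $(\kappa+\mu)(w_k - x_k) \in \partial \psi(x_k)$ coming from the definition of $x_k$, one recovers the composite bound without invoking a separate gradient-mapping step. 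The only minor nuisance is the extra additive $\varepsilon_k$ produced by the decomposition in step two; since $\frac{L+\kappa}{\mu+\kappa} \geq 1$, it is absorbed into the prefactor $\frac{L+\kappa}{\mu+\kappa}\cdot\frac{2}{1-\rho}$ of the constants in~(\ref{eq:warmstart}) and~(\ref{eq:warmstart3}) up to an innocuous adjustment of the numerical factor, and the convex case with bounded level sets is handled identically using Proposition~\ref{prop:convergence cvx}.
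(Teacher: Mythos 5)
Your route is genuinely different from the paper's: the paper proves nothing from scratch here, it simply invokes Lemma~D.5 of \citet{catalyst}, which directly yields $h_{k+1}^\star - d_0(z_0) \leq \varepsilon_k + \frac{\kappa^2}{2(\kappa+\mu)}\Vert y_k-y_{k-1}\Vert^2$, and then remarks that this quantity is dominated by the right-hand side of~(\ref{eq:hk}), so the constants of Proposition~\ref{prop:inner_loop:c1} carry over verbatim. Your stages 1 and 2 are correct and constitute a self-contained substitute: the identity $d_0 - h_{k+1} = d - h_k$ gives both the lower-bound property and, after completing the square in the canonical form $d = c + \frac{\kappa+\mu}{2}\Vert \cdot - w_k\Vert^2 + \psi$, the fact that $z_0$ minimizes $d_0$; and the chain $h_{k+1}^\star - d_0(z_0) \leq h_k(z_0)-d(z_0) \leq [h_k(z_0)-h_k^\star]+\varepsilon_k$ is valid. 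Two remarks, though: (i) this uses the duality-gap form $h_k(x_k)-d(x_k)\leq \varepsilon_k$ of the stopping test, not the literal hypothesis $x_k \in \pepsk(y_{k-1})$; that is the correct reading in the MISO setting (it is how~(\ref{C1}) is checked there, cf.\ Lemma~\ref{lemma:accuracy2}) and the paper's citation relies on it too, but it must be stated, since with an arbitrarily loose lower bound $d$ the claim is false; (ii) the extra additive $\varepsilon_k$ produced by your stage-2 split makes your smooth-case constant $\frac{L+\kappa}{\kappa+\mu}\bigl(\frac{3}{1-\rho}+\cdots\bigr)$ rather than the $\frac{2}{1-\rho}$ of~(\ref{eq:warmstart}); harmless for the complexity, but the literal ``same constant'' claim then only holds up to this adjustment, whereas the paper's route gets it exactly.

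The genuine gap is the composite case. Write $\phi \defin h_k - d = h_{k,0} - c - \frac{\kappa+\mu}{2}\Vert\cdot - w_k\Vert^2$ (your ``$\psi$ cancels'' observation). Combining the identity at $x_k$, smoothness of $h_{k,0}$, and the inclusion $(\kappa+\mu)(w_k-x_k)\in\partial\psi(x_k)$, exactly as you propose, produces
\begin{equation*}
   h_k(z_0)-d(z_0) \;\leq\; \varepsilon_k + \bigl\langle \nabla h_{k,0}(x_k)+(\kappa+\mu)(w_k-x_k),\, z_0-x_k\bigr\rangle + \frac{L-\mu}{2}\Vert z_0-x_k\Vert^2 ,
\end{equation*}
and the middle term is an inner product with a subgradient of $h_k$ at $x_k$ whose norm none of your listed ingredients controls; the sketch stops precisely where the difficulty is. The missing idea is one more inequality: $\phi$ is non-negative everywhere (lower-bound property) and satisfies the descent-type bound $\phi(y)\leq\phi(x)+\langle\nabla\phi(x),y-x\rangle+\frac{L-\mu}{2}\Vert y-x\Vert^2$, so minimizing over $y$ gives $\Vert\nabla\phi(x_k)\Vert^2 \leq 2(L-\mu)\,\phi(x_k)\leq 2(L-\mu)\varepsilon_k$, where one recognizes $\nabla\phi(x_k)$ as exactly the subgradient appearing above. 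Young's inequality and $\Vert z_0-x_k\Vert\leq\Vert w_0-w_k\Vert\leq\Vert y_k-y_{k-1}\Vert$ then yield $h_{k+1}^\star - d_0(z_0) \leq \phi(z_0)\leq 2\varepsilon_k+(L+\kappa)\Vert y_k-y_{k-1}\Vert^2$, which is dominated by the right-hand side of~(\ref{eq:hk}); this closes the smooth and composite cases at once, recovers the constants of~(\ref{eq:warmstart}) and~(\ref{eq:warmstart3}) without the stage-2 slack, and the bounded-level-set case follows as in Proposition~\ref{prop:inner_loop:c1}. Note the irony: this inequality (the gradient of a non-negative smooth function is small at near-minimizers) is exactly the mechanism behind the gradient-mapping Lemma~\ref{prop:abs} that your plan explicitly tried to avoid.
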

 \begin{proof}
 	The proof is given in Lemma D.5 of \cite{catalyst}, which gives 
 	$$ h_{k+1}^\star -d_0(z_0) \leq \epsilon_k + \frac{\kappa^2}{2(\kappa+\mu)} \Vert y_k -y_{k-1}\Vert^2 .$$ 
 	This term is smaller than the quantity derived from (\ref{eq:hk}), leading to the same upper bound.
 \end{proof}

\begin{proposition}[{\bf Warm start for criterion (\ref{C2})}]\label{lemma:warm start}
Consider applying Catalyst with the same parameter choices  as in Proposition~\ref{prop:inner_loop:c2} to MISO/Finito. At iteration $k+1$ of Algorithm~\ref{alg:catalyst1}, we assume that we are given the previous iterate $x_k$ in~$\gdeltak(y_{k-1})$ and the corresponding dual function $d(x)$.
Then, initialize the sequence $(z_t)_{t \geq 0}$ for minimizing $h_{k+1} = f + \frac{\kappa}{2} \Vert \cdot -y_k \Vert^2$ by
$$ z_0 = \prox_{\psi/(\kappa+\mu)} \left ( y_k - \frac{1}{\kappa+\mu} \nabla f_0(y_k) \right ),$$ 
where $f=f_0+\psi$ and $f_0$ is the smooth part of $f$,
and set the dual function $d_0$ by
$$ d_0(x) = f_0(y_k)+\langle \nabla f_0(y_k), x- y_k \rangle + \frac{\kappa+\mu}{2} \Vert x-y_k\Vert^2 +\psi(x).$$
Then, 
\begin{equation}\label{eq:appendix dual gap2}
h_{k+1}^* - d_0(z_0) \leq \frac{(L+\kappa)^2}{2(\mu+\kappa)} \Vert p(y_k) - y_k \Vert^2.
\end{equation} 
\end{proposition}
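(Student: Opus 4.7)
The plan is to exploit three properties of $d_0$: it is a pointwise lower bound of $h_{k+1}$; its smooth part $\phi_0(x)=f_0(y_k)+\langle\nabla f_0(y_k),x-y_k\rangle+\tfrac{\mu+\kappa}{2}\|x-y_k\|^2$ is an exact quadratic with Hessian $(\mu+\kappa) I$, so $z_0=\argmin d_0$; and the pointwise gap $h_{k+1}-d_0$ is controlled by $\tfrac{L-\mu}{2}\|\cdot-y_k\|^2$ via $L$-smoothness. The lower bound follows by combining the $\mu$-strong convexity inequality $f_0(x)\ge f_0(y_k)+\langle\nabla f_0(y_k),x-y_k\rangle+\tfrac{\mu}{2}\|x-y_k\|^2$ with the remaining $\tfrac{\kappa}{2}\|x-y_k\|^2+\psi(x)$ terms; the identification of $z_0$ follows because one proximal-gradient step on $\phi_0+\psi$ with stepsize $1/(\mu+\kappa)$ from any starting point reaches $\argmin d_0$.

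To get the claimed constant, I will introduce the auxiliary proximal-gradient step $\bar z=\prox_{\psi/(L+\kappa)}(y_k-\tfrac{1}{L+\kappa}\nabla f_0(y_k))$, which is one iteration on $h_{k+1}$ at $y_k$ with stepsize $1/(L+\kappa)$. First, combining the descent lemma $h_{k+1}(y_k)-h_{k+1}^\star\ge\tfrac{L+\kappa}{2}\|y_k-\bar z\|^2$ with the upper estimate $h_{k+1}(y_k)-h_{k+1}^\star\le\tfrac{L+\kappa}{2}\|y_k-p\|^2$ (derived from $L$-smoothness of $f_0$ together with the optimality relation $-\nabla f_0(p)-\kappa(p-y_k)\in\partial\psi(p)$) gives $\|y_k-\bar z\|\le\|y_k-p\|$. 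Second, the prox optimality $(L+\kappa)(y_k-\bar z)-\nabla f_0(y_k)\in\partial\psi(\bar z)$ combined with $\nabla\phi_0(\bar z)=\nabla f_0(y_k)+(\mu+\kappa)(\bar z-y_k)$ yields the key identity
\begin{displaymath}
(L-\mu)(y_k-\bar z)\in\partial d_0(\bar z),
\end{displaymath}
an element of the subdifferential with norm at most $(L-\mu)\|y_k-p\|$.

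With these ingredients, the Polyak--\L{}ojasiewicz inequality for the $(\mu+\kappa)$-strongly convex function $d_0$, applied at $\bar z$, will give $d_0(\bar z)-d_0(z_0)\le\tfrac{(L-\mu)^2}{2(\mu+\kappa)}\|y_k-p\|^2$. Next I use the quadratic upper model $u(x)=f_0(y_k)+\langle\nabla f_0(y_k),x-y_k\rangle+\tfrac{L+\kappa}{2}\|x-y_k\|^2+\psi(x)$ of $h_{k+1}$ (which majorizes $h_{k+1}$ pointwise by $L$-smoothness) to obtain $h_{k+1}^\star\le u(\bar z)=d_0(\bar z)+\tfrac{L-\mu}{2}\|\bar z-y_k\|^2$. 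Putting everything together gives
\begin{displaymath}
h_{k+1}^\star-d_0(z_0)\le\tfrac{(L-\mu)^2}{2(\mu+\kappa)}\|y_k-p\|^2+\tfrac{L-\mu}{2}\|y_k-p\|^2=\tfrac{(L-\mu)(L+\kappa)}{2(\mu+\kappa)}\|y_k-p\|^2,
\end{displaymath}
and I will conclude by noting $L-\mu\le L+\kappa$.

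The hardest step will be the subgradient identification $(L-\mu)(y_k-\bar z)\in\partial d_0(\bar z)$: this is the algebraic cancellation that makes the PL inequality give the tight factor $(L-\mu)$ rather than, say, $L+\mu$. Everything hinges on evaluating the subdifferential at the auxiliary point $\bar z$ rather than at $y_k$ (where $\partial\psi(y_k)$ is not directly accessible) or at $p$ (where the cleanest available subgradient yields a looser constant); the proximal step $\bar z$ is precisely the ``transfer'' point that exposes a subgradient of $d_0$ whose norm is controlled by $\|y_k-p\|$.
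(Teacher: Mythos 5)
Your proposal contains several correct and genuinely nice ingredients: $z_0$ is indeed the minimizer of $d_0$; the subgradient identity $(L-\mu)(y_k-\bar{z})\in\partial d_0(\bar{z})$ is valid and clever (it follows from the prox optimality condition $(L+\kappa)(y_k-\bar{z})-\nabla f_0(y_k)\in\partial\psi(\bar{z})$); the Polyak--{\L}ojasiewicz step for the $(\mu+\kappa)$-strongly convex function $d_0$ and the upper-model step $h_{k+1}^\star\le d_0(\bar{z})+\frac{L-\mu}{2}\Vert\bar{z}-y_k\Vert^2$ are both correct; and your final algebra would give $h_{k+1}^\star-d_0(z_0)\le\frac{(L-\mu)(L+\kappa)}{2(\mu+\kappa)}\Vert y_k-\bar{z}\Vert^2$. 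The genuine gap is your very first step: the claimed upper estimate $h_{k+1}(y_k)-h_{k+1}^\star\le\frac{L+\kappa}{2}\Vert y_k-p(y_k)\Vert^2$ is \emph{false} for composite objectives, which is precisely the case this proposition addresses. The optimality relation $-\nabla f_0(p(y_k))-\kappa(p(y_k)-y_k)\in\partial\psi(p(y_k))$, used through convexity of $\psi$, only \emph{lower}-bounds $\psi(y_k)-\psi(p(y_k))$; nothing in the problem upper-bounds $\psi(y_k)$, which can even be $+\infty$ when $\psi$ is an indicator function and $y_k$ is infeasible. A finite one-dimensional counterexample: $f_0\equiv 0$ (viewed as $L$-smooth with $L=1$), $\kappa=1$, $\psi=\vert\cdot\vert$, $y_k=1/2$; then $p(y_k)=0$ and $h_{k+1}(y_k)-h_{k+1}^\star=3/8$, while $\frac{L+\kappa}{2}\Vert y_k-p(y_k)\Vert^2=1/4$.

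Consequently your inequality $\Vert y_k-\bar{z}\Vert\le\Vert y_k-p(y_k)\Vert$ is unsupported, and there is no obvious way to rescue it: what standard arguments give is $\Vert\bar{z}-p(y_k)\Vert\le\Vert y_k-p(y_k)\Vert$ (the proximal-gradient map with step $1/(L+\kappa)$ is nonexpansive and fixes $p(y_k)$), hence $\Vert y_k-\bar{z}\Vert\le 2\Vert y_k-p(y_k)\Vert$, or $\Vert y_k-\bar{z}\Vert^2\le 2\Vert y_k-p(y_k)\Vert^2$ via averagedness. Plugged into your chain, these yield at best $h_{k+1}^\star-d_0(z_0)\le\frac{2(L-\mu)(L+\kappa)}{2(\mu+\kappa)}\Vert y_k-p(y_k)\Vert^2$, which does \emph{not} imply~(\ref{eq:appendix dual gap2}) since $2(L-\mu)$ exceeds $L+\kappa$ for the typical Catalyst regime $\kappa\ll L$ (such a constant would be harmless for the global complexity, where it only enters a logarithm, but it does not prove the statement as written). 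The paper's own proof avoids the auxiliary point $\bar{z}$ entirely: it takes the optimality condition at $p(y_k)$, sums the convexity inequalities for $\psi$ (between $p(y_k)$ and $z_0$) and for $f_0+\frac{\kappa}{2}\Vert\cdot-y_k\Vert^2$ (between $p(y_k)$ and $y_k$) to bound $h_{k+1}^\star$ by an expression affine in $z_0$, absorbs the linear and quadratic terms of $d_0(z_0)$ via Young's inequality $\langle a,z_0-y_k\rangle-\frac{\kappa+\mu}{2}\Vert z_0-y_k\Vert^2\le\frac{1}{2(\kappa+\mu)}\Vert a\Vert^2$, and concludes by the $L$-Lipschitz continuity of $\nabla f_0$. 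That route never evaluates $h_{k+1}$ or $\psi$ at $y_k$, holds for an arbitrary point $z_0$ (not only the minimizer of $d_0$), and produces the exact constant $\frac{(L+\kappa)^2}{2(\mu+\kappa)}$; to salvage your $d_0$-centric argument you would need to replace your first step by a bound of this direct type.
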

\begin{proof}
Since $p(y_k)$ is the minimum of $h_{k+1}$, the optimality condition provides 
$$ - \nabla f_0(p(y_k)) -  \kappa(p(y_k) - y_k) \in  \partial \psi(p(y_k)).$$ 
Thus, by convexity, 
\begin{align*}
& \psi(p(y_k)) + \langle - \nabla f_0(p(y_k)) -  \kappa(p(y_k) - y_k), z_0 - p(y_k)  \rangle \leq \psi(z_0), \\
& f_0(p(y_k)) + \frac{\kappa}{2} \Vert p(y_k) - y_k \Vert^2 + \langle  \nabla f_0(p(y_k))+  \kappa(p(y_k) - y_k), y_k - p(y_k)  \rangle \leq f_0(y_k).
\end{align*}
Summing up gives 
$$ h_{k+1}^* \leq f_0(y_k) +  \psi(z_0) + \langle  \nabla f_0(p(y_k))+  \kappa(p(y_k) - y_k), z_0 - y_k \rangle.  $$
As a result,
\begin{align*}
h_{k+1}^* - d_0(z_0) & \leq f_0(y_k) +  \psi(z_0) + \langle  \nabla f_0(p(y_k))+  \kappa(p(y_k) - y_k), z_0 - y_k \rangle -d_0(z_0) \\
				& =  \langle  \nabla f_0(p(y_k))+  \kappa(p(y_k) - y_k) - \nabla f_0(y_k), z_0 - y_k \rangle - \frac{\kappa+\mu}{2} \Vert z_0-y_k \Vert^2 \\
				& \leq \frac{1}{2(\kappa+\mu)} \Vert \nabla \underbrace{f_0(p(y_k)) - \nabla f_0(y_k)}_{\Vert \cdot \Vert \leq L \Vert p(y_k) -y_k \Vert}+  \kappa(p(y_k) - y_k)  \Vert^2 \\
				& \leq \frac{(L+\kappa)^2}{2(\mu+\kappa)} \Vert p(y_k) -y_k \Vert^2.
\end{align*}
\end{proof}
The bound obtained from (\ref{eq:appendix dual gap2}) is similar to the one form Proposition~\ref{prop:inner_loop:c2}, and differs only in the constant factor. Thus, the inner loop complexity in Section~\ref{subsection:warm start c2} still holds for MISO/Finito up to a constant factor. As a consequence, the global complexity of MISO/Finito applied to Catalyst is similar to one obtained by SVRG, yielding an acceleration for ill-conditioned problems.

\vskip 0.2in
\bibliography{bib}

\begin{thebibliography}{51}
\providecommand{\natexlab}[1]{#1}
\providecommand{\url}[1]{\texttt{#1}}
\expandafter\ifx\csname urlstyle\endcsname\relax
  \providecommand{\doi}[1]{doi: #1}\else
  \providecommand{\doi}{doi: \begingroup \urlstyle{rm}\Url}\fi

\bibitem[Agarwal and Bottou(2015)]{agarwal}
A.~Agarwal and L.~Bottou.
\newblock A lower bound for the optimization of finite sums.
\newblock In \emph{Proceedings of the International Conferences on Machine
  Learning (ICML)}, 2015.

\bibitem[Allen-Zhu(2017)]{accsvrg}
Z.~Allen-Zhu.
\newblock Katyusha: The first direct acceleration of stochastic gradient
  methods.
\newblock In \emph{Proceedings of Symposium on Theory of Computing (STOC)},
  2017.

\bibitem[Arjevani and Shamir(2016)]{tightbound_yossi}
Y.~Arjevani and O.~Shamir.
\newblock Dimension-free iteration complexity of finite sum optimization
  problems.
\newblock In \emph{Advances in Neural Information Processing Systems (NIPS)},
  2016.

\bibitem[Auslender(1987)]{auslender1987numerical}
A.~Auslender.
\newblock Numerical methods for nondifferentiable convex optimization.
\newblock In \emph{Nonlinear Analysis and Optimization}, volume~30, pages
  102--126. Springer, 1987.

\bibitem[Bach et~al.(2012)Bach, Jenatton, Mairal, and
  Obozinski]{bach2012optimization}
F.~Bach, R.~Jenatton, J.~Mairal, and G.~Obozinski.
\newblock Optimization with sparsity-inducing penalties.
\newblock \emph{Foundations and Trends in Machine Learning}, 4\penalty0
  (1):\penalty0 1--106, 2012.

\bibitem[Beck and Teboulle(2009)]{fista}
A.~Beck and M.~Teboulle.
\newblock A fast iterative shrinkage-thresholding algorithm for linear inverse
  problems.
\newblock \emph{SIAM Journal on Imaging Sciences}, 2\penalty0 (1):\penalty0
  183--202, 2009.

\bibitem[Bertsekas(2015)]{bertsekas:2015}
D.~P. Bertsekas.
\newblock \emph{Convex Optimization Algorithms}.
\newblock Athena Scientific, 2015.

\bibitem[Chambolle and Pock(2015)]{chambolle2015remark}
A.~Chambolle and T.~Pock.
\newblock A remark on accelerated block coordinate descent for computing the
  proximity operators of a sum of convex functions.
\newblock \emph{SMAI Journal of Computational Mathematics}, 1:\penalty0 29--54,
  2015.

\bibitem[Correa and Lemar{\'e}chal(1993)]{correa1993convergence}
R.~Correa and C.~Lemar{\'e}chal.
\newblock Convergence of some algorithms for convex minimization.
\newblock \emph{Mathematical Programming}, 62\penalty0 (1):\penalty0 261--275,
  1993.

\bibitem[Defazio(2016)]{pointsaga}
A.~Defazio.
\newblock A simple practical accelerated method for finite sums.
\newblock In \emph{Advances in Neural Information Processing Systems (NIPS)},
  2016.

\bibitem[Defazio et~al.(2014{\natexlab{a}})Defazio, Bach, and
  Lacoste-Julien]{saga}
A.~Defazio, F.~Bach, and S.~Lacoste-Julien.
\newblock {SAGA}: A fast incremental gradient method with support for
  non-strongly convex composite objectives.
\newblock In \emph{Advances in Neural Information Processing Systems (NIPS)},
  2014{\natexlab{a}}.

\bibitem[Defazio et~al.(2014{\natexlab{b}})Defazio, Domke, and Caetano]{finito}
A.~Defazio, J.~Domke, and T.~S. Caetano.
\newblock Finito: A faster, permutable incremental gradient method for big data
  problems.
\newblock In \emph{Proceedings of the International Conferences on Machine
  Learning (ICML)}, 2014{\natexlab{b}}.

\bibitem[Devolder et~al.(2014)Devolder, Glineur, and Nesterov]{inexactnesterov}
O.~Devolder, F.~Glineur, and Y.~Nesterov.
\newblock First-order methods of smooth convex optimization with inexact
  oracle.
\newblock \emph{Mathematical Programming}, 146\penalty0 (1-2):\penalty0 37--75,
  2014.

\bibitem[Frostig et~al.(2015)Frostig, Ge, Kakade, and Sidford]{frostig}
R.~Frostig, R.~Ge, S.~M. Kakade, and A.~Sidford.
\newblock Un-regularizing: approximate proximal point and faster stochastic
  algorithms for empirical risk minimization.
\newblock In \emph{Proceedings of the International Conferences on Machine
  Learning (ICML)}, 2015.

\bibitem[Fuentes et~al.(2012)Fuentes, Malick, and Lemar{\'e}chal]{fuentes2012}
M.~Fuentes, J.~Malick, and C.~Lemar{\'e}chal.
\newblock {Descentwise inexact proximal algorithms for smooth optimization}.
\newblock \emph{{Computational Optimization and Applications}}, 53\penalty0
  (3):\penalty0 755--769, 2012.

\bibitem[Giselsson and F{\"a}lt(2016)]{giselsson2016nonsmooth}
P.~Giselsson and M.~F{\"a}lt.
\newblock Nonsmooth minimization using smooth envelope functions.
\newblock \emph{arXiv:1606.01327}, 2016.

\bibitem[G\"uler(1991)]{guler:1991}
O.~G\"uler.
\newblock On the convergence of the proximal point algorithm for convex
  minimization.
\newblock \emph{SIAM Journal on Control and Optimization}, 29\penalty0
  (2):\penalty0 403--419, 1991.

\bibitem[G{\"u}ler(1992)]{newppa}
O.~G{\"u}ler.
\newblock New proximal point algorithms for convex minimization.
\newblock \emph{SIAM Journal on Optimization}, 2\penalty0 (4):\penalty0
  649--664, 1992.

\bibitem[He and Yuan(2012)]{he2012accelerated}
B.~He and X.~Yuan.
\newblock An accelerated inexact proximal point algorithm for convex
  minimization.
\newblock \emph{Journal of Optimization Theory and Applications}, 154\penalty0
  (2):\penalty0 536--548, 2012.

\bibitem[Johnson and Zhang(2013)]{svrg}
R.~Johnson and T.~Zhang.
\newblock Accelerating stochastic gradient descent using predictive variance
  reduction.
\newblock In \emph{Advances in Neural Information Processing Systems (NIPS)},
  2013.

\bibitem[Lan and Zhou(2017)]{conjugategradient}
G.~Lan and Y.~Zhou.
\newblock An optimal randomized incremental gradient method.
\newblock \emph{Mathematical Programming}, 2017.

\bibitem[Lemar{\'e}chal and Sagastiz{\'a}bal(1997)]{lemarechal1997practical}
C.~Lemar{\'e}chal and C.~Sagastiz{\'a}bal.
\newblock Practical aspects of the {Moreau--Yosida} regularization: Theoretical
  preliminaries.
\newblock \emph{SIAM Journal on Optimization}, 7\penalty0 (2):\penalty0
  367--385, 1997.

\bibitem[Lin et~al.(2015{\natexlab{a}})Lin, Mairal, and Harchaoui]{catalyst}
H.~Lin, J.~Mairal, and Z.~Harchaoui.
\newblock A universal catalyst for first-order optimization.
\newblock In \emph{Advances in Neural Information Processing Systems (NIPS)},
  2015{\natexlab{a}}.

\bibitem[Lin et~al.(2015{\natexlab{b}})Lin, Lu, and Xiao]{apbcd}
Q.~Lin, Z.~Lu, and L.~Xiao.
\newblock An accelerated randomized proximal coordinate gradient method and its
  application to regularized empirical risk minimization.
\newblock \emph{SIAM Journal on Optimization}, 25\penalty0 (4):\penalty0
  2244--2273, 2015{\natexlab{b}}.

\bibitem[Mairal(2015)]{miso}
J.~Mairal.
\newblock Incremental majorization-minimization optimization with application
  to large-scale machine learning.
\newblock \emph{SIAM Journal on Optimization}, 25\penalty0 (2):\penalty0
  829--855, 2015.

\bibitem[Mairal(2016)]{mairal2016end}
J.~Mairal.
\newblock End-to-end kernel learning with supervised convolutional kernel
  networks.
\newblock In \emph{Advances in Neural Information Processing Systems (NIPS)},
  2016.

\bibitem[Martinet(1970)]{martinet1970}
B.~Martinet.
\newblock Br{\`e}ve communication. {R}{\'e}gularisation d'in{\'e}quations
  variationnelles par approximations successives.
\newblock \emph{Revue fran{\c{c}}aise d'informatique et de recherche
  op{\'e}rationnelle, s{\'e}rie rouge}, 4\penalty0 (3):\penalty0 154--158,
  1970.

\bibitem[Moreau(1962)]{moreau1962fonctions}
J.-J. Moreau.
\newblock Fonctions convexes duales et points proximaux dans un espace
  hilbertien.
\newblock \emph{CR Acad. Sci. Paris S{\'e}r. A Math}, 255:\penalty0 2897--2899,
  1962.

\bibitem[Nemirovskii and Yudin(1983)]{nemirovskii1983problem}
A.~Nemirovskii and D.~B. Yudin.
\newblock \emph{Problem complexity and method efficiency in optimization}.
\newblock John Wiley \& Sons, 1983.

\bibitem[Nesterov(1983)]{nesterov1983}
Y.~Nesterov.
\newblock A method of solving a convex programming problem with convergence
  rate {$O$(1/$k^2$)}.
\newblock \emph{Soviet Mathematics Doklady}, 27\penalty0 (2):\penalty0
  372--376, 1983.

\bibitem[Nesterov(2004)]{nesterov}
Y.~Nesterov.
\newblock \emph{Introductory Lectures on Convex Optimization: A Basic Course}.
\newblock Springer, 2004.

\bibitem[Nesterov(2012)]{nesterov2012}
Y.~Nesterov.
\newblock Efficiency of coordinate descent methods on huge-scale optimization
  problems.
\newblock \emph{SIAM Journal on Optimization}, 22\penalty0 (2):\penalty0
  341--362, 2012.

\bibitem[Nesterov(2013)]{nesterov2013gradient}
Y.~Nesterov.
\newblock Gradient methods for minimizing composite functions.
\newblock \emph{Mathematical Programming}, 140\penalty0 (1):\penalty0 125--161,
  2013.

\bibitem[Paquette et~al.(2018)Paquette, Lin, Drusvyatskiy, Mairal, and
  Harchaoui]{ncvxcatalyst}
C.~Paquette, H.~Lin, D.~Drusvyatskiy, J.~Mairal, and Z.~Harchaoui.
\newblock Catalyst acceleration for gradient-based non-convex optimization.
\newblock In \emph{Proceedings of the International Conference on Artificial
  Intelligence and Statistics (AISTATS)}, 2018.

\bibitem[Parikh and Boyd(2014)]{Parikh13}
N.~Parikh and S.~P. Boyd.
\newblock Proximal algorithms.
\newblock \emph{Foundations and Trends in Optimization}, 1\penalty0
  (3):\penalty0 123--231, 2014.

\bibitem[Richt{\'a}rik and Tak{\'a}{\v{c}}(2014)]{richtarik2014}
P.~Richt{\'a}rik and M.~Tak{\'a}{\v{c}}.
\newblock Iteration complexity of randomized block-coordinate descent methods
  for minimizing a composite function.
\newblock \emph{Mathematical Programming}, 144\penalty0 (1-2):\penalty0 1--38,
  2014.

\bibitem[Rockafellar(1976)]{rockafellarppa}
R.~T. Rockafellar.
\newblock Monotone operators and the proximal point algorithm.
\newblock \emph{SIAM Journal on Control and Optimization}, 14\penalty0
  (5):\penalty0 877--898, 1976.

\bibitem[Salzo and Villa(2012)]{salzo2012inexact}
S.~Salzo and S.~Villa.
\newblock Inexact and accelerated proximal point algorithms.
\newblock \emph{Journal of Convex Analysis}, 19\penalty0 (4):\penalty0
  1167--1192, 2012.

\bibitem[Schmidt et~al.(2011)Schmidt, Roux, and Bach]{proxinexact}
M.~Schmidt, N.~Le Roux, and F.~Bach.
\newblock Convergence rates of inexact proximal-gradient methods for convex
  optimization.
\newblock In \emph{Advances in Neural Information Processing Systems (NIPS)},
  2011.

\bibitem[Schmidt et~al.(2017)Schmidt, Roux, and Bach]{sag}
M.~Schmidt, N.~Le Roux, and F.~Bach.
\newblock Minimizing finite sums with the stochastic average gradient.
\newblock \emph{Mathematical Programming}, 160\penalty0 (1):\penalty0 83--112,
  2017.

\bibitem[Scieur et~al.(2016)Scieur, d'~Aspremont, and Bach]{scieur}
D.~Scieur, A.~d'~Aspremont, and F.~Bach.
\newblock Regularized nonlinear acceleration.
\newblock In \emph{Advances in Neural Information Processing Systems (NIPS)},
  2016.

\bibitem[Shalev-Shwartz and Zhang(2012)]{sdca}
S.~Shalev-Shwartz and T.~Zhang.
\newblock Proximal stochastic dual coordinate ascent.
\newblock \emph{preprint arXiv:1211.2717}, 2012.

\bibitem[Shalev-Shwartz and Zhang(2016)]{accsdca}
S.~Shalev-Shwartz and T.~Zhang.
\newblock Accelerated proximal stochastic dual coordinate ascent for
  regularized loss minimization.
\newblock \emph{Mathematical Programming}, 155\penalty0 (1):\penalty0 105--145,
  2016.

\bibitem[Sidi(2017)]{sidi}
A.~Sidi.
\newblock \emph{Vector Extrapolation Methods with Applications}.
\newblock Society for Industrial and Applied Mathematics, 2017.

\bibitem[Solodov and Svaiter(2001)]{unifiedppa}
M.~V. Solodov and B.~F. Svaiter.
\newblock A unified framework for some inexact proximal point algorithms.
\newblock \emph{Numerical Functional Analysis and Optimization}, 22\penalty0
  (7-8):\penalty0 1013--1035, 2001.

\bibitem[Themelis et~al.(2016)Themelis, Stella, and
  Patrinos]{themelis2016forward}
A.~Themelis, L.~Stella, and P.~Patrinos.
\newblock Forward-backward envelope for the sum of two nonconvex functions:
  Further properties and nonmonotone line-search algorithms.
\newblock \emph{arXiv:1606.06256}, 2016.

\bibitem[Woodworth and Srebro(2016)]{tightbound_blake}
B.~E. Woodworth and N.~Srebro.
\newblock Tight complexity bounds for optimizing composite objectives.
\newblock In \emph{Advances in Neural Information Processing Systems (NIPS)},
  2016.

\bibitem[Xiao and Zhang(2014)]{proxsvrg}
L.~Xiao and T.~Zhang.
\newblock A proximal stochastic gradient method with progressive variance
  reduction.
\newblock \emph{{SIAM} Journal on Optimization}, 24\penalty0 (4):\penalty0
  2057--2075, 2014.

\bibitem[Yosida(1980)]{yosida}
K.~Yosida.
\newblock Functional analysis.
\newblock \emph{Berlin-Heidelberg}, 1980.

\bibitem[Zhang and Xiao(2015)]{zhangxiao}
Y.~Zhang and L.~Xiao.
\newblock Stochastic primal-dual coordinate method for regularized empirical
  risk minimization.
\newblock In \emph{Proceedings of the International Conferences on Machine
  Learning (ICML)}, 2015.

\bibitem[Zou and Hastie(2005)]{zou2005regularization}
H.~Zou and T.~Hastie.
\newblock Regularization and variable selection via the elastic net.
\newblock \emph{Journal of the Royal Statistical Society: Series B (Statistical
  Methodology)}, 67\penalty0 (2):\penalty0 301--320, 2005.

\end{thebibliography}

\end{document}